\renewcommand{\eqref}[1]{\textup{{\ref{#1}}}} 
\newtheorem{theorem}{Theorem}[section]
\newtheorem*{namedtheorem}{\theoremname}
\newcommand{\theoremname}{testing}
\newtheorem{lemma}[theorem]{Lemma}
\newtheorem{proposition}[theorem]{Proposition}
\newtheorem{corollary}[theorem]{Corollary}
\theoremstyle{definition}
\newtheorem{definition}[theorem]{Definition}
\newtheorem{remark}[theorem]{Remark}
\newtheorem{assumption}[theorem]{Assumption}
\newcommand{\flatten}{flt} 
\newcommand{\paren}[1]{\left(#1\right)}
\newcommand{\abs}[1]{\left| {#1}\right|}
\newcommand{\infnorm}[1]{{\left\lVert {#1} \right\rVert}_\infty}
\newcommand{\B}{\mathcal{B}}
\newcommand{\BBS}{\B\B^*}
\newcommand{\vc}{\mathbf{c}}
\newcommand{\ve}{\mathbf{e}}
\newcommand{\vm}{\mathbf{m}}
\newcommand{\vs}{\mathbf{s}}
\newcommand{\vt}{\mathbf{t}}
\newcommand{\vu}{\mathbf{u}}
\newcommand{\vx}{\mathbf{x}}
\newcommand{\vz}{\mathbf{z}}
\newcommand{\vB}{\mathbf{B}}
\newcommand{\vD}{\mathbf{D}}
\newcommand{\vI}{\mathbf{I}}
\newcommand{\vV}{\mathbf{V}}
\newcommand{\AC}{\mathcal{C}}
\newcommand{\R}{\mathbb{R}}
\newcommand{\F}{\mathbb{F}}
\newcommand{\Z}{\mathbb{Z}}
\newcommand{\E}{\mathbb{E}}
\newcommand{\D}{\Delta}
\newcommand{\BC}{\textsc{BaseConv}}
\newcommand{\tbf}[1]{{\bf #1}}
\def\coeff{\mathrm{coeff}}
\newcommand{\arrows}[1]{\leftarrow #1 \rightarrow}
\newcommand{\calB}{\mathcal{B}}
\newcommand{\calC}{\mathcal{C}}
\newcommand{\calO}{\mathcal{O}}
\newcommand{\calP}{\mathcal{P}}
\newcommand{\seqLen}{N}
\newcommand{\hiddenDim}{D}
\newcommand{\size}{{\seqLen\times\hiddenDim}}
\newcommand{\innersize}{\hiddenDim\times\hiddenDim}
\newcommand{\degree}{d}
\newcommand{\layer}{\ell}
\newcommand{\LipConst}{L} 
\newcommand{\inputDim}{\seqLen \times \hiddenDim}
\newcommand{\shiftN}{\texttt{shift}}
\newcommand{\copyPrimitiveN}{\texttt{remember}}
\newcommand{\modelTuple}[6]{\paren{#1,#2,#3,#4,#5}- \text{#6}}
\newcommand{\BCtupleN}{\left({\seqLen},{\ell},{\hiddenDim},{\seqLen'},{\hiddenDim'}\right) - \BC}
\newcommand{\BCtuple}[5]{\modelTuple{#1}{#2}{#3}{#4}{#5}{\BC}}
\newcommand{\poly}{\operatorname{poly}}
\def\mA{{\bm{A}}}
\def\mB{{\bm{B}}}
\def\mC{{\bm{C}}}
\def\mD{{\bm{D}}}
\def\mH{{\bm{H}}}
\def\mI{{\bm{I}}}
\def\mK{{\bm{K}}}
\def\mM{{\bm{M}}}
\def\mT{{\bm{T}}}
\def\mW{{\bm{W}}}
\def\mZ{{\bm{Z}}}
\def\mConv{{\bm{H}}} 
\newcommand{\shiftDown}{\texttt{shift\_down}}
\newcommand{\primLength}{N'}
\newcommand{\primDim}{d'}
\newcommand{\ip}{\textsc{Input: }}
\newcommand{\op}{\textsc{Output: }}
\newcommand{\vSize}{r}
\newcommand{\basedn}{n}
\newcommand{\baseconv}{\textsc{BaseConv}}
\newcommand{\hyenaInput}{\bm{y}}
\newcommand{\baseOutput}{\bm{z}}
\newcommand{\remStart}{r}
\newcommand{\remEnd}{t}
\newcommand{\primRem}{\texttt{remember}}
\newcommand{\entry}[3]{{#1}_{{#2},{#3}}}
\newcommand{\curlE}{\mathscr{E}}
\newcommand{\modeldim}{d}
\newcommand{\ang}[1]{\left<{#1}\right>}
\newcommand{\brac}[1]{\left[{#1}\right]}
\newcommand{\gap}{\,\,\,\,}
\title{Towards Learning High-Precision Least Squares Algorithms with Sequence Models}
\author{
\hspace{-.15cm} Jerry Liu\thanks{Corresponding author: \texttt{jwl50@stanford.edu}.} \,\thanks{Equal contribution.} \\
Institute of Computational \& Mathematical Engineering \\
Stanford University \\
Stanford, CA, USA \\
\And
Jessica Grogan$^\dagger$, Atri Rudra \\
Department of Computer Science \& Engineering \\
University at Buffalo \\
Buffalo, NY, USA \\
\And
Owen Dugan, Ashish Rao, Simran Arora, Chris R\'{e} \\
Department of Computer Science \\
Stanford University \\
Stanford, CA, USA
}
\begin{document}

\maketitle

\begin{abstract}
This paper investigates whether sequence models can learn to perform numerical algorithms, e.g. gradient descent, on the fundamental problem of least squares. Our goal is to inherit two properties of standard algorithms from numerical analysis: (1) \textit{machine precision}, i.e. we want to obtain solutions that are accurate to near floating point error, and (2) \textit{numerical generality}, i.e. we want them to apply broadly across problem instances.
We find that prior approaches using Transformers fail to meet these criteria, and identify limitations present in existing architectures and training procedures.
First, we show that softmax Transformers struggle to perform high-precision multiplications,
which prevents them from precisely learning numerical algorithms.
Second, we identify an alternate class of architectures, comprised entirely of polynomials, that can efficiently represent high-precision gradient descent iterates.
Finally, we investigate precision bottlenecks during training and address them via a high-precision training recipe that reduces stochastic gradient noise.
Our recipe enables us to train two polynomial architectures, gated convolutions and linear attention, to perform gradient descent iterates on least squares problems.
For the first time, we demonstrate the ability to train to near \textit{machine precision}.
Applied iteratively, our models obtain $100,000\times$ lower MSE than standard Transformers trained end-to-end
and they incur a $10,000\times$ smaller generalization gap on out-of-distribution problems.
We make progress towards end-to-end learning of numerical algorithms for least squares.
\end{abstract}
\vspace{-0.5\baselineskip}

\section{Introduction}\label{sec:intro}
\vspace{-0.5\baselineskip}
Least squares is the workhorse of modern numerics: it is well understood theoretically~\citep{boyd2004convex, trefethen2022nla} and has important downstream applications in science and engineering, including solving regression problems and differential equations~\citep{orszag1972comparison, trefethen2000spectral}. Thus, least squares has gained interest as a natural testbed for investigating how well ML models can learn to implement algorithms~\citep{garg2022can, von2023transformers}.

A surge of recent work suggests that Transformers~\citep{vaswani2017attention} can learn to solve least squares using \textit{optimization algorithms} like gradient descent and Newton's method~\citep{akyurek2022learning, fu2023transformers, ahn2024transformers, bai2024transformers, zhang2023trained}.
These arguments rest on two observations:
(1) simplified Transformer architectures (e.g. non-causal linear attention) can exactly implement such algorithms;
(2) standard (softmax attention) Transformers learn solutions with similar properties (e.g. convergence rates) as iterative algorithms.
Crucially, these works focus on \textit{statistical} least squares: they evaluate Transformer solutions in underdetermined/noisy settings and compare to Bayes-optimal estimators.
However, scientific applications like climate or fluids modeling require \textit{numerically precise} solutions to least squares, e.g. to accurately model turbulence or to maintain stable temporal rollouts~\citep{frisch1995turbulence, wilcox2006turbulence}.
Prior works do not engage with the issue of high precision, so it is still unclear how well Transformers can solve least squares from this perspective.

In this work, we thus study whether existing approaches can solve \textit{numerical} least squares.
Specifically, numerical analysis requires that algorithms exhibit
(1) \textit{machine precision}, i.e. they should obtain solutions that are accurate to near floating point error,
and (2) \textit{numerical generality}, i.e. they are computational procedures that should apply broadly across problem instances. (See Section~\ref{sec:algo_desiderata} for details.)
Since traditional least squares algorithms (e.g. gradient descent and conjugate gradients) provably meet these criteria~\citep{trefethen2022nla}, it is crucial to evaluate machine learning methods against these same standards to determine their ability to learn numerical algorithms.

\begin{figure*}[t]
    \centering
    \includegraphics[width=\linewidth]{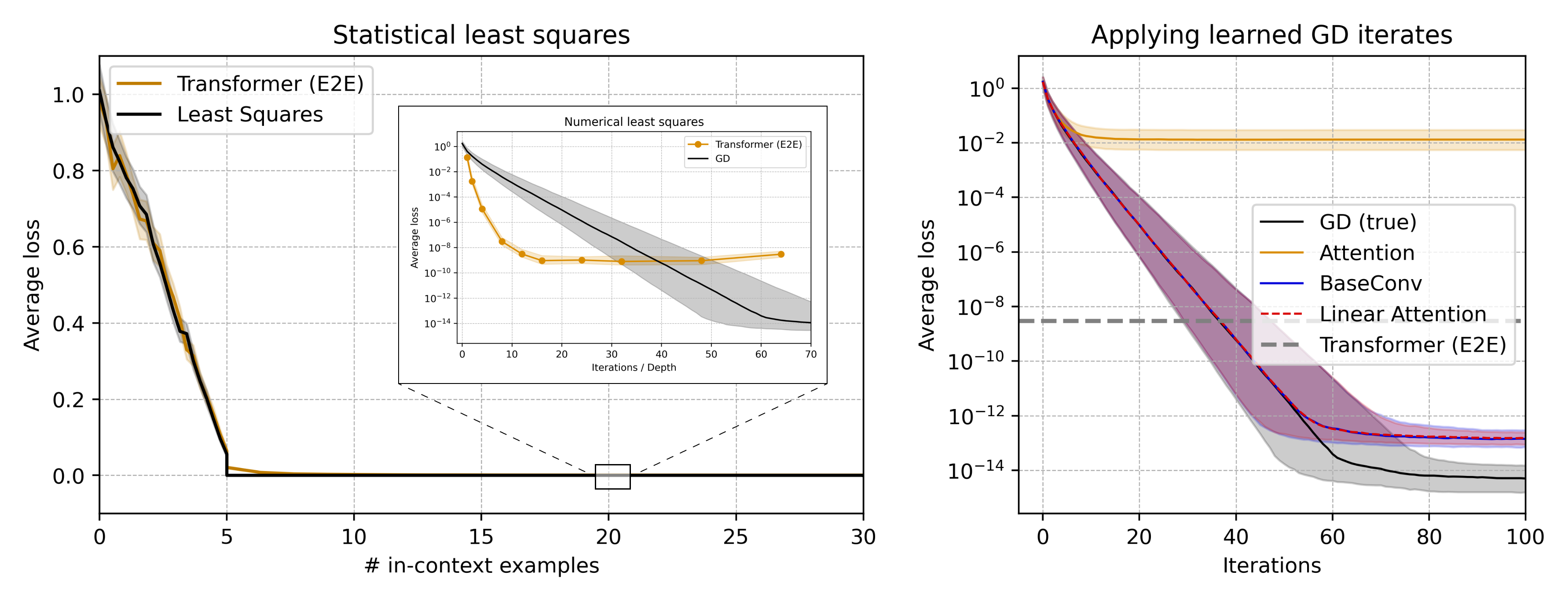}
    \caption{
    Prior work focuses on \textit{statistical} least squares: Transformers approximate Bayes-optimal estimators (left, adapted from~\cite{garg2022can}). In this work, we focus on \textit{numerical} least squares: Transformers struggle to obtain precise solutions (inset). Using a high-precision training recipe, we train two polynomial architectures, \BC~and linear attention, to perform high-precision gradient descent iterates on least squares (right): applied iteratively, they reach $\approx 10^{-13}$ MSE.
    }
    \label{fig:transformers_precision}
    \vspace{-10pt}
\end{figure*}

We focus on learning the gradient descent (GD) algorithm for least squares. Our study has three parts:
\begin{itemize}[itemsep=0.1pt,topsep=0pt,leftmargin=*]
    \item \textbf{We benchmark standard Transformers for precision/generality and
    identify an expressivity gap
    on least squares.}
    When we replicate the standard end-to-end training setup for least squares with Transformers, we find that solutions do not exhibit machine precision and numerical generality (Figure~\ref{fig:transformers_precision}a,~\ref{fig:transformer_ood}).
    We identify high-precision multiplications as a fundamental challenge for softmax Transformers.
    Empirically, on a synthetic element-wise multiplication task, we find precision scales poorly with larger Transformers: an 8-layer model trains to an MSE that is still \textit{10 million} times worse than machine epsilon (Figure~\ref{fig:synthetics_transformer_layer_scaling}).
    Theoretically, we argue that a single layer of softmax attention is unable to exactly express element-wise multiplications.
    Since implementing GD involves high-precision multiplications, this observation suggests standard Transformers are unable to even precisely \textit{express} GD, much less precisely \textit{learn} the algorithm.
    
    \item \textbf{We identify an alternate architecture class which does not suffer from expressivity problems.}
    Motivated by the expressivity limitations of softmax attention, we investigate alternate sequence mixer architectures.
    Prior work notes that non-causal linear attention is able to exactly implement algorithms like GD and Newton's method~\citep{von2023transformers, giannou2024well} because it consists entirely of \textit{polynomials}.
    We provide a unified framework to understand existing expressivity results from the lens of arithmetic circuits. In our work, we focus on \BC, a gated convolutional architecture, as a case study, since it is provably equivalent to the entire class of polynomial architectures~\citep{zoology, based}.
    We demonstrate that gated convolutions can express a high-precision GD algorithm ($\approx 10^{-13}$ MSE when implemented in practice, Figure~\ref{fig:baseconv_gd_construction}).
    
    \item \textbf{We identify an optimization precision bottleneck and propose a high-precision training recipe.}
    Although polynomial architectures can precisely express the GD algorithm, we find that standard training procedures struggle to find a solution with sufficiently high precision ($10^{-5}$ MSE, Figure~\ref{fig:attn_bc_gd_endtoend}).
    Therefore, towards disentangling precision bottlenecks during training, we first focus on the intermediate task of explicitly learning GD iterates.
    We identify stochastic gradient noise from minibatching as the main optimization bottleneck, and we find that a simple metric, cosine similarity of minibatch gradients~\citep{liu2023dropoutreducesunderfitting}, is diagnostic of precision saturation.
    Towards reducing stochasticity, we propose (1) a learning rate (LR) scheduler that adaptively adjusts LR based on the cosine similarity metric, and (2) to apply EMA over optimizer updates to maintain strong gradient signal.
    Our high-precision training recipe allows us to train ML architectures to near \textit{machine precision} for the first time.
    We successfully train two 3-layer models, with \BC~and linear attention, that learn to perform a single high-precision iteration of GD (Figure~\ref{fig:transformers_precision}b).
    Excitingly, we can also learn multiple GD iterates at once, scaling up to $4$ iterations with $10^{-10}$ MSE.
\end{itemize}

Overall, our work makes the following contributions:
(1) we specify the desiderata of learning numerical algorithms, \textit{machine precision} and \textit{numerical generality}, and we demonstrate that standard Transformers fall short because of expressivity limitations of softmax attention;
(2) we provide a unified framework using arithmetic circuits to investigate the expressivity of the class of polynomial architectures;
(3) we address additional precision bottlenecks that emerge \textit{during training}, even when using expressive polynomial architectures.
Although we do not achieve end-to-end learning of GD, we make significant headway: we propose a high-precision training recipe, which, for the first time, allows us to learn iterates of the GD algorithm to near \textit{machine precision}.

\section{Learning numerical algorithms for least squares}\label{sec:background}
\vspace{-0.5\baselineskip}
In this section, we distinguish between statistical vs. numerical least squares and discuss the two properties we want our models to inherit from numerical algorithms: \textit{machine precision} and \textit{numerical generality}.
We then briefly discuss prior work and, in doing so, tease apart two increasingly end-to-end notions of performing algorithms with ML: \textit{expressing an algorithm in-weights} and \textit{learning algorithm iterates}.

\subsection{Problem formulation and related work}\label{sec:algo_desiderata}

In this work, our goal is to train a model that solves least squares problems: find $\bm{x} \in \mathbb{R}^\hiddenDim$ given $\bm{A} \in \mathbb{R}^{\seqLen \times \hiddenDim}$ and $\bm{b} \in \mathbb{R}^\seqLen$ such that $\bm{A}\bm{x} = \bm{b}$. Here, we briefly discuss two different perspectives on least squares: statistical (in the form of \textit{in-context learning}) and numerical.

\paragraph{Statistical least squares.}

Originally motivated by applications in language modeling, prior works on solving least squares with Transformers typically take a \textit{statistical} perspective.
Transformers are trained using an \textit{in-context learning} setup~\citep{garg2022can, akyurek2022learning}: problem instances $\bm{A} \bm{x} = \bm{b}$ are sampled from a pre-specified distribution $\mathcal{D}_{train}$, and the model is trained to minimize mean squared error (MSE) over $\mathcal{D}_{train}$.
Trained models are then evaluated on unseen problem instances, both in and out-of-distribution, and their performance is compared to Bayes-optimal estimators~\citep{garg2022can, akyurek2022learning}.
We define the in-context least squares training setup in Appendix~\ref{app:experiment_details} and leave a more detailed discussion of related in-context learning work to Appendix~\ref{app:related_work}.

\paragraph{Numerical least squares.}
In this work, we instead take a \textit{numerical} perspective on least squares.
A prototypical numerical algorithm for least squares is GD.
For a problem instance $\bm{A} \bm{x} = \bm{b}$, we initialize $\bm{x}_0$, an estimate of $\bm{x}$,  and iteratively improve our estimate via
\begin{equation}\label{eqn:gd_iterate_mainpaper}
    \bm{x}_{i+1} = \bm{x}_i - \eta \nabla \mathcal{L}(\bm{x}_i),
\end{equation}
where $\mathcal{L}(\hat{\bm{x}}) := \frac{1}{2} ||\bm{A} \hat{\bm{x}} - \bm{b}||_2^2$ is the squared residual error.
GD exhibits two properties of numerical algorithms that we want our models to inherit:

\begin{itemize}[itemsep=0.1pt,topsep=0pt,leftmargin=*]
    \item \textit{Machine precision.}
    Numerical algorithms provably obtain high-precision solutions.
    For GD, obtaining higher precision simply requires performing more iterations until convergence to machine precision (i.e. the smallest achievable error with floating-point arithmetic) (see Chapter 11 of~\cite{trefethen2022nla}).
    In this work, we use \texttt{float32} throughout, where machine precision is $2^{-23} \approx 1.19 \times 10^{-7}$, so we hope for MSEs around $2^{-46} \approx 1.42 \times 10^{-14}$.

    \item \textit{Numerical generality.}
    Although the convergence rate of GD depends on the spectrum of $\bm{A}$ (see Chapter 9 of~\cite{boyd2004convex}),
    the computational procedure comprising GD is general and can be applied broadly to problem instances.
    This is unlike statistical generalization and notions of in vs. out-of-distribution.
    In this work, we are interested to study how closely ML models can emulate the numerical generality of algorithms despite training on a data distribution.
\end{itemize}

\subsection{Outline of this work}

\vspace{-0.4\baselineskip}

A recent line of work probes the estimators learned by Transformers on in-context least squares, and suggests that Transformers learn to solve least squares by mimicking iterative algorithms like gradient descent and Newton's method~\citep{von2023transformers, ahn2024transformers, fu2023transformers, giannou2024well}.
These works typically analyze simplified models theoretically and extrapolate to standard training regimes, backed by empirical observations:
\begin{itemize}[itemsep=0.1pt,topsep=0pt,leftmargin=*]
    \item \textbf{Theoretical results for simplified models}, e.g. \textit{non-causal linear attention} can implement GD using a specific choice of model weights.
    \item \textbf{Empirical experiments training standard Transformers}, e.g. decoder-only softmax attention Transformers trained \textit{end-to-end} on in-context least squares display convergence rates reminiscent of iterative algorithms.
\end{itemize}
Although prior works suggest that trained Transformers learn to solve least squares with algorithms,
it is still unclear whether statements about learning algorithms in simplified settings transfer to standard Transformers trained end-to-end.
We note two significant gaps between previously-analyzed settings and standard in-context least squares:
\begin{itemize}[itemsep=0.1pt,topsep=0pt,leftmargin=*]
    \item \textbf{Architectural differences.} Standard Transformers use softmax instead of linear attention, causal instead of non-causal sequence mixers, and include MLPs and LayerNorms~\citep{ba2016layernormalization}.
    \item \textbf{Optimization.} Even if a model can express a precise and general algorithm, it is unclear whether the model can learn the algorithm from data.
\end{itemize}

In this work, we tease apart bottlenecks caused by architecture expressivity limitations (Sections~\ref{sec:attn_expressivity},~\ref{sec:architecture_expressivity}) and optimization difficulties (Section~\ref{sec:step-by-step}) by investigating two increasingly sophisticated notions of performing GD for least squares with ML: \textit{expressing GD in-weights} and \textit{learning GD iterates}.
\vspace{-0.6\baselineskip}

\section{Transformers do not learn numerical algorithms in-context}\label{sec:transformer_algorithms}
\vspace{-0.6\baselineskip}
In this section, we evaluate standard Transformers, trained end-to-end, on the criteria of \textit{machine precision} and \textit{numerical generality}. Surprisingly, we demonstrate that existing approaches fail to exhibit these properties: the precision of Transformer solutions (in MSE) saturates a \textit{million} times worse than machine precision (Section~\ref{sec:3.1_transformers_ls_precision}), and
their performance further degrades as problem instances deviate from the model's training distribution (Section~\ref{sec:3.2_transformers_ood}).
These results suggest that Transformers are not learning proper algorithms as numerical analysis defines them.

Towards identifying expressivity bottlenecks, we identify three linear algebra primitives that comprise standard algorithms including GD and Newton's method (Section~\ref{sec:3.3_linalg_primitives}). We find empirically that Transformers struggle to implement high-precision multiplication, and theoretically we argue that softmax attention faces an expressivity gap when trying to exactly express multiplications.

\vspace{-0.5\baselineskip}

\subsection{Transformers struggle to reach machine precision}\label{sec:3.1_transformers_ls_precision}

\vspace{-0.4\baselineskip}

Recent work~\citep{von2023transformers, ahn2024transformers, fu2023transformers, giannou2024well} studying in-context least squares suggests that Transformers learn to mimic iterative algorithms like GD and Newton's method. Note that if Transformers are able to implement iterative algorithms, the depth of the model should correspond to the number of iterations performed. We thus focus on the simplest case of fully determined least squares problems with fixed size design matrices and investigate whether precision improves as we scale to larger and deeper models.

In Figure~\ref{fig:transformers_precision}b, following prior work~\citep{ahn2024transformers},
we fix the size of $\bm{A} \in \mathbb{R}^{20 \times 5}$ and train Transformers end-to-end on least squares, scaling up to $L = 64$ layers. We compare their precision to the convergence rate of the full-batch gradient descent algorithm on least squares. For more details about the training setup, refer to Appendix~\ref{app:task_icl_lr_nfixed}.

At first, Transformer precision scaling exceeds the convergence rate of gradient descent: this finding mirrors similar results reported by~\cite{fu2023transformers}, who suggest Transformers may instead be learning higher-order algorithms like Newton's method. However, we further observe that the precision gains for Transformers \textit{rapidly diminish}, such that we observe very little difference in precision between $L = 32$ and $L = 64$ layers. The deepest Transformer models we are able to train achieve an MSE around $10^{-8}$. In contrast, gradient descent converges linearly to machine precision, almost $1,000,000 \times$ better precision.
The diminishing returns of the Transformer precision scaling imply that Transformers are not learning standard numerical algorithms like GD.

\subsection{Transformers do not exhibit the generality of gradient descent}\label{sec:3.2_transformers_ood}

\setlength{\intextsep}{0pt}
\begin{wrapfigure}{r}{0.45\textwidth}
    \centering
    \vspace{-20pt}
    \includegraphics[width=\linewidth]{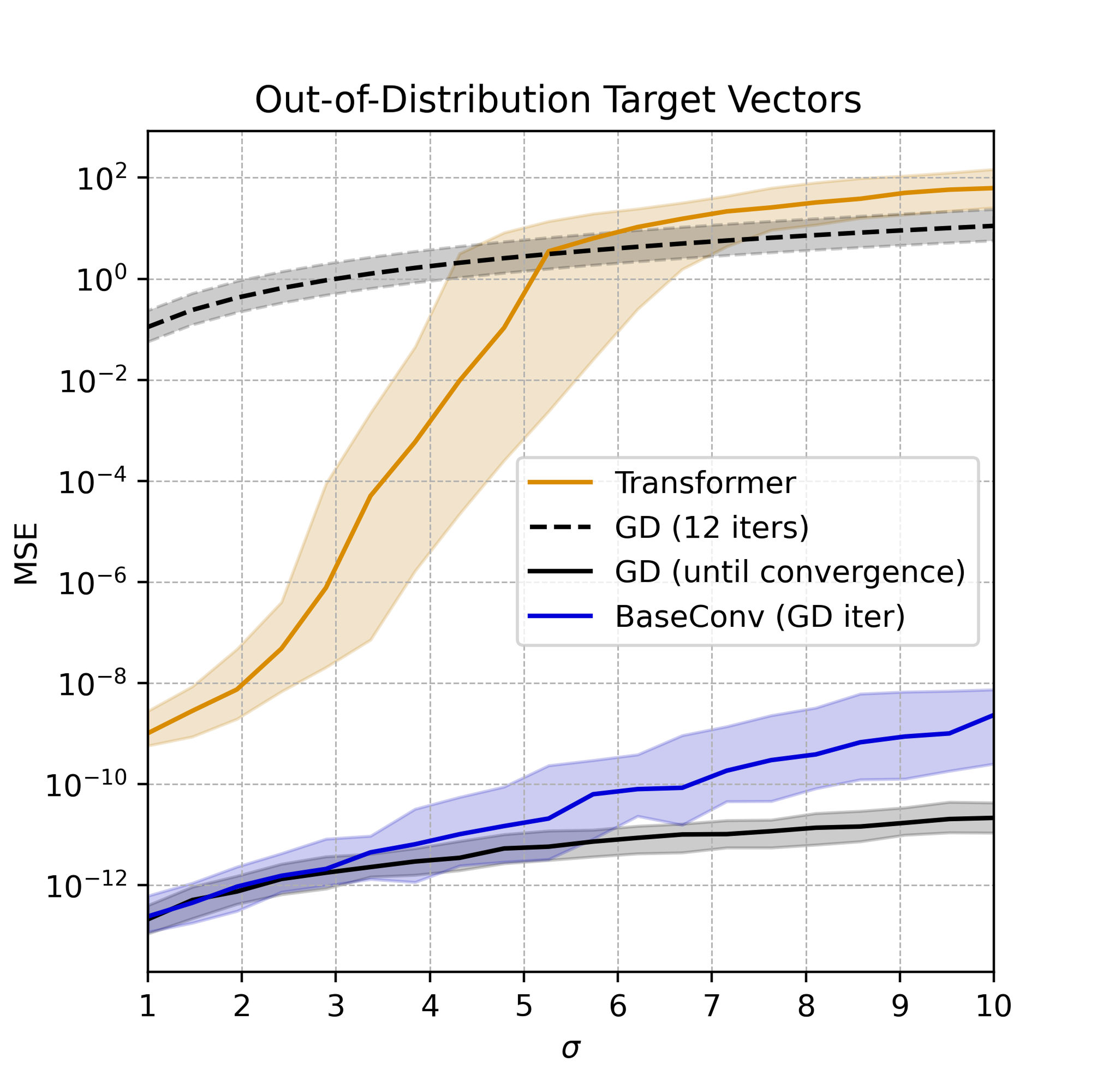}
    \caption{
    Transformers generalize poorly to out-of-distribution regression targets.
    In contrast, using our training recipe, we train a \BC~model to perform high-precision GD iterates. Applied iteratively, our \BC~model incurs $10,000 \times$ less generalization error on out-of-distribution target vectors than the Transformer.
    }
    \label{fig:transformer_ood}
\end{wrapfigure}
\setlength{\intextsep}{\baselineskip}

We further investigate whether Transformers learn solutions to least squares that exhibit numerical generality. Recall that models are trained on a predefined distribution of least squares problems, $\mathcal{D}_{train}$. If Transformers learn to solve least squares using a standard numerical algorithm like GD, then we expect the performance of the model should be robust to out-of-distribution inputs.

For GD specifically, the convergence criterion \newline
($0 < \eta < 2 / \sigma_{max}^2$) depends on $\sigma_{max}$, the maximum singular value of $\bm{A}$,
and the optimal rate of convergence depends on the condition number of $\bm{A}$, $\kappa = \sigma_{max} / \sigma_{min}$~\citep{boyd2004convex}.
Thus we specify our training distribution $\mathcal{D}_{train}$ over least squares problems ($\bm{A} \in \mathbb{R}^{20 \times 5}$, $\bm{b}=\bm{A}\bm{x} \in \mathbb{R}^{20}$) as follows.
First, as in prior work~\citep{garg2022can}, we sample the entries of $\bm{A}$ and $\bm{x}$ i.i.d from a standard Gaussian $N(0, 1)$. We then shift and rescale the singular values of $\bm{A}$ so that $\sigma_{max} = \kappa = 5$.
After training a 12-layer Transformer model on the in-context objective, we evaluate our model on out-of-distribution regression targets $\bm{b}$.

We define $\mathcal{D}_{OOD}^{b}(\sigma)$ by sampling each entry of $\bm{x}$ i.i.d. from $N(0, \sigma)$ and computing $\bm{b} = \bm{A}\bm{x}$. Although the distribution of $\bm{b}$'s and $\bm{x}$'s changes with $\sigma$, because the spectra of the $\bm{A}$'s is consistent, we know that GD with fixed choice of $\eta$ will provably converge to high precision.

We find that compared to GD, the Transformer solutions are brittle to unseen regression target distributions. Simply scaling the inputs by a factor of $10\times$, the MSE of the trained Transformer degrades by a factor of $10^8$. In contrast, GD is robust: a fixed number of GD iterations consistently converges to the same order of magnitude of precision (Figure~\ref{fig:transformer_ood}).
The brittleness of the learned Transformer solution compared to GD again suggests that Transformers are not performing standard numerical algorithms.

\subsection{Identifying an expressivity gap with standard Transformers}\label{sec:attn_expressivity}

Toward understanding the limitations of the Transformer architecture, we start with GD and Newton's method, two algorithms used to solve least squares, and look into primitives that comprise them.

\vspace{-0.5\baselineskip}
\paragraph{Linear algebra primitives.}\label{sec:3.3_linalg_primitives}

We observe that GD and Newton's method can be expressed as compositions of three simple linear algebra operations: sequence-wise read/write (\textsc{Read}), affine transformations (\textsc{Linear}), and element-wise multiplications (\textsc{Multiply}).
For input $\bm{u} \in \mathbb{R}^{\seqLen \times \hiddenDim}$:
\begin{align*}
    \textsc{Read}(i, j, a, b)(\mathbf{u}) &= 
    \begin{cases}
        \mathbf{u}[k, a{:}b] & k \neq j \\
        \mathbf{u}[i, a{:}b] & k = j
    \end{cases}, \\
    \textsc{Linear}(\bm{H})(\mathbf{u}) &= \mathbf{u}\bm{H}, 
    \quad \text{where } \bm{H} : \mathbb{R}^{D} \to \mathbb{R}^{d_{out}} \text{ is linear}, \\
    \textsc{Multiply}(a, b, d_{out})(\mathbf{u}) &= \mathbf{u}[:, a{:}a{+}d_{out}] \odot \mathbf{u}[:, b{:}b{+}d_{out}]
\end{align*}
\vspace{-\baselineskip}

In Appendix~\ref{app:linalg_algos_construction}, we define these primitives formally and describe how GD and Newton's method iterates can each be expressed as a composition of these primitives. Intuitively, \textsc{Read} is required to transfer information across the sequence dimension, \textsc{Linear} to transfer information across the hidden dimension, and \textsc{Multiply} to compute high-degree interaction terms (like dot products or element-wise squaring).
\vspace{-0.5\baselineskip}

\paragraph{Empirical analysis: standard Transformers struggle with multiplication.}
We train Transformers on synthetic formulations of these tasks to investigate how precision scales with model size. Details about our training setups are in Appendix~\ref{app:task_primitives}.

\begin{figure*}[t]
    \centering
    \includegraphics[width=\linewidth]{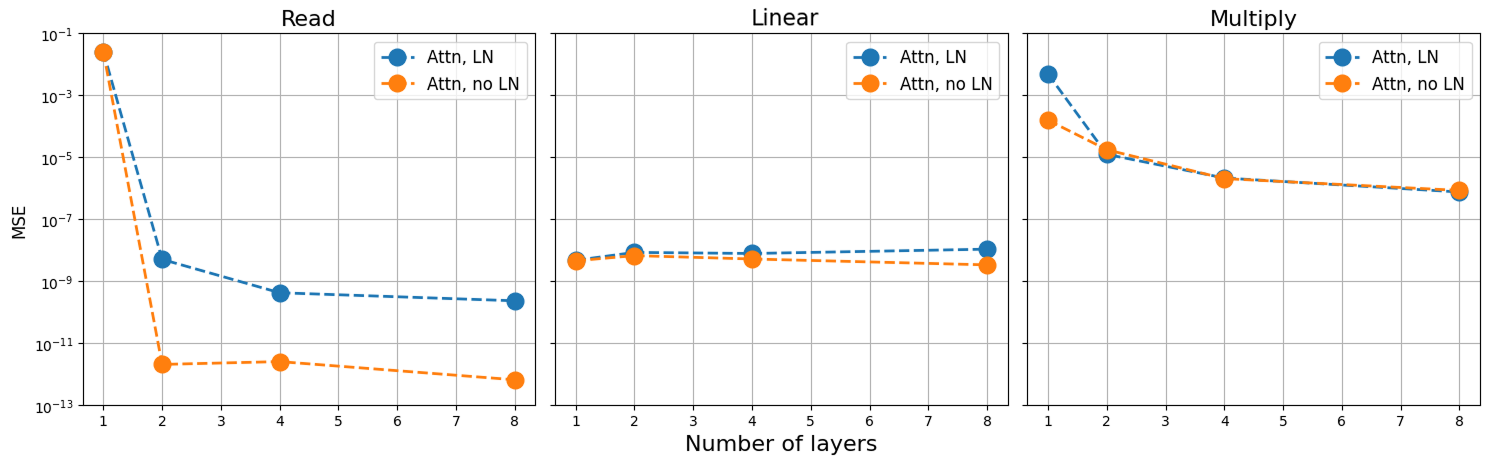}
    \caption{Precision vs. Transformer depth, with and without LayerNorms (LN), on synthetic tasks. While shallow Transformers are able to learn the \textsc{Read} and \textsc{Linear} tasks to high precision ($< 10^{-8}$ with 2-layer models), precision on the \textsc{Multiply} task scales poorly with depth (only $10^{-6}$ with 8-layer models).}
    \label{fig:synthetics_transformer_layer_scaling}
    \vspace{-15pt}
\end{figure*}

In Figure~\ref{fig:synthetics_transformer_layer_scaling}, we show that even 2-layer Transformers are able to achieve $10^{-8}$ MSE on the \textsc{Read} and \textsc{Linear} tasks.
However, we find that Transformers struggle with the \textsc{Multiply} task: precision scales poorly with model depth, such that an 8-layer Transformer is only able to achieve $10^{-6}$ MSE. In Appendix~\ref{app:primitives_ablations_experiments}, we further show that precision on the \textsc{Multiply} task also scales poorly with increased attention dimension, number of attention heads, and MLP upscaling factor.
\vspace{-0.5\baselineskip}

\paragraph{Theoretical analysis: softmax attention struggles to exactly express multiplication.}
In Appendix~\ref{app:softmax_attn_square}, we provide a proof that a single layer of softmax attention cannot exactly express the simple element-wise squaring function $\textsc{Square}(\bm{u})[i, j] = \bm{u}[i, j]^2$ (intuitively, because softmax cannot implement polynomials). Crucially, we note that element-wise squaring is a special case of element-wise multiply, so softmax attention cannot exactly implement $\textsc{Multiply}$ either:

\begin{theorem}[Informal statement of Theorem~\ref{thm:softmax_attn_square_app} and Corollary~\ref{cor:softmax-attn-cant-multiply}]
    One-layer single-headed (causal) softmax attention cannot exactly represent $\textsc{Square}$ and $\textsc{Multiply}$ for all possible inputs.
\end{theorem}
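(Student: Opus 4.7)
The plan is to rule out a single softmax-attention layer expressing $\textsc{Square}$ by analyzing the map at the origin $u=\vzero$. A one-layer single-headed softmax attention defines an analytic function $F : \mathbb{R}^{N\times D} \to \mathbb{R}^{N\times D}$ of the form $F(u)[i,:] = \sum_{k} \alpha_{ik}(u)\,(uW_V)[k,:]$, where $\alpha_{ik}(u) = \mathrm{softmax}_k((uW_Q)(uW_K)^\top / \sqrt{d_k})_{ik}$ (masked if causal), whereas $\textsc{Square}(u)[i,j] = u[i,j]^2$ is a polynomial of degree exactly $2$. The key observation is that $F$ must be \emph{jointly linear} in the value pathway, i.e. bilinear only in the attention weights times the values; near the origin, the derivative of the softmax weights is multiplied by $V=\vzero$ and contributes nothing. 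So if $F \equiv \textsc{Square}$ on all of $\mathbb{R}^{N\times D}$, the Taylor coefficients at $u=\vzero$ must agree to all orders, and this forces a degenerate parameter choice.

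Concretely, I would carry out three steps. \textbf{Step 1 (value at $0$):} Evaluate $F(\vzero)$ and compare with $\textsc{Square}(\vzero) = \vzero$. If biases $b_V$ are allowed on the value stream, the attention-weighted mean of the constant row $b_V$ equals $b_V$, so matching at $\vzero$ forces $b_V = \vzero$. \textbf{Step 2 (Jacobian at $0$):} Compute $\partial F(u)[i,j]/\partial u[l,m]$ at $u=\vzero$ using the product rule. Because $V=\vzero$ there, the term involving $\partial \alpha_{ik}/\partial u$ vanishes, leaving exactly $\alpha_{il}(\vzero)\cdot W_V[m,j]$. The Jacobian of $\textsc{Square}$ at $\vzero$ is identically zero (the gradient of $u[i,j]^2$ is $2u[i,j]$). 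Since $\alpha_{il}(\vzero) > 0$ (softmax outputs are strictly positive), agreement forces $W_V = \vzero$. \textbf{Step 3 (contradiction):} With $W_V=\vzero$ and $b_V=\vzero$, the output $F(u)$ is identically $\vzero$, yet $\textsc{Square}(u)$ is not the zero function, e.g.\ $\textsc{Square}(u)[i,j] = 1$ when $u[i,j]=1$. Hence $F \ne \textsc{Square}$ on all inputs. For Corollary~\ref{cor:softmax-attn-cant-multiply}, I would simply observe that $\textsc{Square}$ is the restriction of $\textsc{Multiply}$ to the case where the two operand slices coincide, so the impossibility for $\textsc{Square}$ immediately implies impossibility for $\textsc{Multiply}$.

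The main obstacle is making sure the argument is robust to every standard dressing of the architecture, since the theorem must exclude \emph{any} valid parameter choice. In particular, optional output projections $W_O$, a key/query scaling $1/\sqrt{d_k}$, and $b_Q,b_K$ biases must be allowed. The scaling factor is irrelevant at $u=\vzero$; a trailing linear map $W_O$ just replaces $W_V$ by $W_V W_O$ in the Jacobian computation, so the argument still forces $W_V W_O = \vzero$ and hence $F \equiv \vzero$; and $Q$/$K$ biases only modify the values of $\alpha_{ik}(\vzero)$, which remain strictly positive by the positivity of softmax. If one additionally wants to permit a residual connection (not required by the theorem statement), the same argument identifies the Jacobian of $F$ at $\vzero$ with $\alpha_{il}(\vzero) W_V[m,j] + \delta_{il}\delta_{mj}$, whose pattern cannot match the zero Jacobian of $\textsc{Square}$ no matter how $W_V$ is chosen, yielding the contradiction in that extended setting as well.
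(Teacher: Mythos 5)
Your proposal is correct, but it takes a genuinely different route from the paper. The paper argues by contradiction on a specific family of sparse test inputs: it feeds in matrices with exactly one non-zero entry $\bm{u}_{ab}$, computes the attention pattern explicitly (uniform weights $1/i$ on all rows except the $a$-th, where the diagonal entry gets weight $\exp(C)/(\exp(C)+a-1)$ with $C = \bm{u}_{ab}^2(\bm{W_Q W_K}^T)[b,b]$), uses the off-row constraints to pin down $\bm{V}+\bm{B}$ row by row, and finally reduces the $a$-th row constraint to the transcendental equation $-\exp(A\bm{u}_{ab}^2) = a-1$, which has no solution in $A$. Your argument instead works locally at the origin: matching $F(\vzero)=\vzero$ kills the value bias, matching the (identically zero) Jacobian of $\textsc{Square}$ at $\vzero$ kills $\bm{W_V}$ because the softmax weights at $\vzero$ are strictly positive and the softmax-derivative term is annihilated by the vanishing value matrix, and then $F\equiv\vzero$ gives the contradiction. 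Your approach is shorter, avoids any explicit computation of the softmax derivative, and is manifestly robust to output projections, $Q/K$ biases, and temperature scaling (as you note); the paper's approach yields more explicit quantitative information about what attention actually computes on sparse inputs. One small repair: the paper's bias $\bm{B}$ is a full $\seqLen\times\hiddenDim$ matrix, not a single broadcast row, so your Step 1 should be phrased as a prefix-mean induction --- $F(\vzero)[i,:] = \frac{1}{i}\sum_{k\le i}\bm{B}[k,:] = \vzero$ for all $i$ forces $\bm{B}[i,:]=\vzero$ row by row --- rather than as ``the weighted mean of the constant row equals that row.'' This is exactly the cumulative argument the paper itself uses when zeroing out rows of $\bm{V}+\bm{B}$, and it does not affect the validity of your conclusion. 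Your derivation of the $\textsc{Multiply}$ corollary matches the paper's verbatim.
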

\vspace{-0.5\baselineskip}

Since precisely implementing numerical algorithms like GD hinges on performing high-precision multiplications, this result suggests that the standard Transformer architecture struggles to precisely implement these algorithms because of a fundamental expressivity gap.

We mention briefly that these findings do not conflict with prior results~\citep{yun2020n} proving universal approximation theorems for Transformers, because they typically require parameter count to scale exponentially with dimension: see Appendix~\ref{app:related_work}.
\vspace{-0.5\baselineskip}

\section{Alternate architectures close the expressivity gap}\label{sec:architecture_expressivity}

\vspace{-0.5\baselineskip}
Motivated by the finding that softmax attention struggles to precisely express multiplications, we next investigate alternate sequence mixer architectures.
We are inspired by prior results~\citep{von2023transformers, giannou2024well} that show non-causal linear attention is able to exactly implement algorithms like GD and Newton's method.
Thus, we focus on the class of \textit{polynomial architectures}, i.e. sequence mixers comprised entirely of polynomial operations, in order to explicitly bake in multiplications.
In this section, we present a unified framework that integrates previous findings through the perspective of arithmetic circuits.
Specifically, we focus on \BC, a gated convolutional model that combines element-wise multiplications (gating) with long convolutions.
We work with \BC~for two reasons:
\begin{itemize}[itemsep=0.1pt,topsep=0pt,leftmargin=*]
    \item
    Recent work~\citep{zoology, based} has shown that \BC~is equivalent to general arithmetic circuits,
    including all polynomial architectures. Thus, existing results with other polynomial architectures, e.g. linear attention, transfer directly to \BC.
    \item Empirically, gated convolutional models have been shown to perform comparably to attention-based architectures on tasks like language, audio, and DNA modeling~\citep{based, Nguyen2024_Evo, zhang2023effectivelymodelingtimeseries}.
\end{itemize}
We emphasize that although we find gated convolutions are convenient to work with theoretically and empirically, we believe that other sequence mixer architectures may also be able to alleviate the expressivity issues we highlight in Section~\ref{sec:attn_expressivity}.
In particular, we show promising empirical results for non-causal linear attention in Section~\ref{sec:5.2_machine_precision_gd}.

\subsection{Gated convolutions are equivalent to arithmetic circuits}\label{sec:convs_theory}

\paragraph{\BC~definition.}
In this work, we focus on a variant of the \BC~operator from~\cite{zoology}. Given an input $\bm{u} \in \R^{\seqLen \times \hiddenDim}$, $\BC(\bm{u})$ is defined as:
\begin{equation}
    \label{eqn:baseconv_parameterization_mainpaper}
    \begin{aligned} 
        (
    \underbrace{\paren{\bm{u} \bm{W}_{gate}+\bm{b}_{gate}}}_{\mathclap{\textbf{Linear Projection}}}
    \odot \underbrace{\paren{\bm{h} \ast (\bm{u} \bm{W}_{in} + \bm{b}_{in}) +\bm{b}_{conv}}}_{\mathclap{\textbf{Convolution}}} 
    ) \bm{W}_{out} + \bm{b}_{out}
    \quad 
    \end{aligned}
\end{equation}
where the layer is parameterized by learnable filters $\bm{h}\in\R^{\seqLen \times \hiddenDim}$, linear projections $\bm{W}_{in}, \bm{W}_{gate}, \bm{W}_{out} \in\R^{\hiddenDim \times \hiddenDim}$ , and bias matrices $\bm{b}_{conv},\bm{b}_{in}, \bm{b}_{gate}, \bm{b}_{out} \in\R^{\seqLen \times \hiddenDim}$. Here, $\odot$ represents the Hadamard product, and convolution of two matrices is computed as convolution of the corresponding columns.

\setlength{\intextsep}{0pt}
\begin{wrapfigure}{r}{0.40\textwidth}
    \centering
    \vspace{-20pt}
    \includegraphics[width=\linewidth]{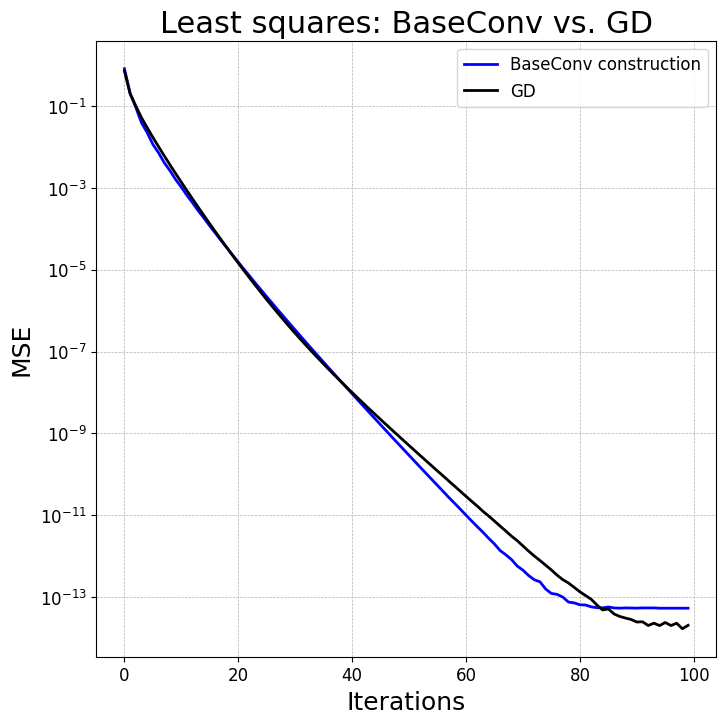}
    \caption{\BC~ can express high-precision gradient descent: our implementation of the weight construction reaches $10^{-13}$ MSE in practice.}
    \label{fig:baseconv_gd_construction}
    \vspace{-5pt}
\end{wrapfigure}
\setlength{\intextsep}{\baselineskip}

\paragraph{\BC s~can exactly express linear algebra primitives.}
In Appendix~\ref{app:baseconv_constructions_primitives}, we provide explicit constructions of single-layer \BC~models that exactly implement the 
\textsc{Read}, \textsc{Linear}, and \textsc{Multiply} primitives from Section~\ref{sec:attn_expressivity}.

We note that this result is stronger than prior \BC~expressivity results (e.g. Theorem H.21 from~\cite{zoology}), which imply a poly-log-factor increase in parameters (specifically layers) translating from arbitrary arithmetic circuits. Here, we show by construction that these specific primitives, and any circuits that are compositions of them, incur only a constant factor loss.

\vspace{-0.5\baselineskip}
\paragraph{\BC s can perfectly recover linear algebra primitives from data.}
In Appendix~\ref{app: zero_gradients}, for \textsc{Square} and \textsc{Linear}, we further show the following under mild assumptions, which our input distribution satisfies (see details in Assumptions~\ref{assume-D-assignment},~\ref{assume-training-data},~\ref{assume: unique solution},~\ref{assume-linear-wandwbar-neq-0}):
\begin{theorem}[Informal statement of Theorems~\ref{thm: exactly-multiply},~\ref{thm: exactly-linear}]
    \BC~perfectly recovers \textsc{Square} and \textsc{Linear} when it achieves zero population gradient w.r.t. MSE loss.
\end{theorem}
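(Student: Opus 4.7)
The plan is to exploit first-order optimality of the population MSE. The BaseConv output from \eqref{eqn:baseconv_parameterization_mainpaper} is a polynomial of degree at most $2$ in the entries of $\bm{u}$ (one multiplication from the Hadamard product, the rest linear), and the target primitives $\textsc{Square}$ and $\textsc{Linear}$ are themselves polynomials in $\bm{u}$ of degree $2$ and $1$ respectively. Consequently, the population loss is a polynomial in the parameters $\theta := (\bm{W}_{gate}, \bm{W}_{in}, \bm{W}_{out}, \bm{h}, \bm{b}_{gate}, \bm{b}_{in}, \bm{b}_{conv}, \bm{b}_{out})$ whose coefficients are moments of the input distribution. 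Assumption~\ref{assume-training-data} guarantees those moments are sufficiently non-degenerate that monomials of different multidegree in $\bm{u}$ remain linearly independent, so setting the population gradient to zero gives a tractable polynomial system in $\theta$.

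For $\textsc{Square}$, I would first exploit the gradients that only touch the bias/output parameters. Setting $\partial_{\bm{b}_{out}} L = 0$ and $\partial_{\bm{W}_{out}} L = 0$ pins down the output projection and output bias in terms of residual lower-order contributions. The $\partial_{\bm{b}_{gate}}$, $\partial_{\bm{b}_{in}}$, $\partial_{\bm{b}_{conv}}$ equations then force the four bias contributions to cancel, collapsing the model to the pure degree-$2$ form $\big((\bm{u}\bm{W}_{gate}) \odot (\bm{h} \ast \bm{u}\bm{W}_{in})\big)\bm{W}_{out}$. Matching this quadratic against $\bm{u} \odot \bm{u}$ via the remaining equations $\partial_{\bm{W}_{gate}}, \partial_{\bm{W}_{in}}, \partial_{\bm{h}}$ forces the gate and convolution branches to each carry a diagonal identity component along the block layout fixed by Assumption~\ref{assume-D-assignment}, and $\bm{h}$ to equal the impulse filter, modulo the scalar symmetry that can be absorbed by $\bm{W}_{out}$. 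Assumption~\ref{assume: unique solution} then rules out the trivial branch-zero solutions, so the output equals $\textsc{Square}(\bm{u})$ on the support of the training distribution.

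For $\textsc{Linear}$, the argument is cleaner because the target is itself linear while the model is generically quadratic. Gradients with respect to $\bm{W}_{gate}$, $\bm{h}$, and $\bm{W}_{in}$ force every degree-$2$ coefficient of $f_\theta(\bm{u})$ to vanish in expectation; by Assumption~\ref{assume-training-data} this implies the degree-$2$ coefficients vanish identically. Under Assumption~\ref{assume-linear-wandwbar-neq-0}, which forbids both branches from collapsing, the only way for the quadratic part to vanish is for the gate branch to be a constant vector, i.e. $\bm{W}_{gate} = 0$ with $\bm{b}_{gate} \neq 0$. The Hadamard then degenerates into a diagonal rescaling, and the remaining linear-part equations uniquely determine $\bm{W}_{in}, \bm{W}_{out}, \bm{h}$ and the biases so that $\diag(\bm{b}_{gate}) \cdot (\bm{h}\ast \bm{u}\bm{W}_{in}) \cdot \bm{W}_{out} = \bm{u}\bm{H}$ on the support, again up to the absorbable scalar symmetry.

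The main obstacle will be ruling out spurious critical points: the loss is non-convex in $\theta$ because of the gating, so in principle the gradient system could admit solutions distinct from the intended construction of Appendix~\ref{app:baseconv_constructions_primitives}. The core technical content is a case analysis over which of the branches vanish (or become constant) at a proposed critical point, and showing that every non-target case contradicts one of the assumptions: non-trivial null moments are forbidden by Assumption~\ref{assume-training-data}, branch collapse by Assumption~\ref{assume-linear-wandwbar-neq-0}, the block partition and matching of hidden indices by Assumption~\ref{assume-D-assignment}, and residual multiplicity of minima by Assumption~\ref{assume: unique solution}. Once these are ruled out, the zero-gradient parameters must realize the target construction up to symmetry, which is exactly the perfect-recovery statement.
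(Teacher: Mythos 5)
Your overall strategy---use first-order optimality of the population MSE, exploit the moment structure of the input distribution to decouple monomials of different degree, and then invoke the assumptions to exclude degenerate critical points---is the same route the paper takes. However, there are concrete gaps in the execution. First, you analyze the full parameterization of Equation~\eqref{eqn:baseconv_parameterization_mainpaper} (with $\bm{W}_{in},\bm{W}_{out},\bm{b}_{out}$, etc.), whereas Theorems~\ref{thm: exactly-multiply} and~\ref{thm: exactly-linear} are proved for the restricted single layer of Equation~\eqref{eq:gradient-notation-BC-layer} with parameters $(\mW,\mK,\mB^{(1)},\mB^{(2)})$ and with $\mB^{(1)}=\mathbf{0}$ imposed by Assumption~\ref{assume: unique solution}. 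Your plan to ``pin down'' the output projection and then show the four bias contributions cancel is solving a strictly harder problem, and you give no argument for why those extra gradient equations would have a unique solution; the paper sidesteps this entirely by not including those parameters.

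Second, your key structural claim for \textsc{Linear}---that vanishing of the quadratic part forces the \emph{gate} branch to be constant ($\bm{W}_{gate}=0$, $\bm{b}_{gate}\neq 0$)---is both unjustified and the opposite of what the gradient conditions actually yield under the paper's assumptions. Since $\mB^{(1)}=\mathbf{0}$, the gate branch cannot degenerate to a nonzero constant; the paper's Lemmas~\ref{lem: exp-val-of-zij-times-partialdiv} and~\ref{lem: exp-val-of-linear-times-partialdiv} show instead that $\E[\partial\overline{L}/\partial\entry{\mK}{k}{j}]$ reduces to $\entry{\mK}{k}{j}$ times a strictly positive sum, forcing $\mK=\mathbf{0}$, so it is the \emph{convolution} branch that collapses to the constant $\mB^{(2)}$ while $\vu\mW$ carries the linear map (with $\entry{\mW}{\ell}{j}=\entry{\overline{\mW}}{\ell}{j}/b_j$ and $\entry{\mB}{i}{j}^{(2)}=b_j$). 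More generally, ``zero population gradient w.r.t.\ the quadratic parameters implies the quadratic coefficients vanish'' is not a generic linear-independence fact; it comes out of the explicit moment computations, which are the actual technical content here and which your sketch does not carry out. You also misattribute the assumptions: the moment non-degeneracy is Assumption~\ref{assume-D-assignment} (odd moments vanish, even moments positive), not Assumption~\ref{assume-training-data} (which only concerns the independent zero-mean noise), and Assumption~\ref{assume-linear-wandwbar-neq-0} is the condition $\ang{\entry{\mW}{:}{j},\entry{\overline{\mW}}{:}{j}}\neq 0$ that makes the derived $b_j$ nonzero, not a generic ``no branch collapse'' condition. Your instinct about spurious critical points is correct and important---the paper confirms via Definition~\ref{def: not-assumption} that without Assumption~\ref{assume: unique solution} there is a large family of zero-gradient points that do not recover the target---but the case analysis you describe is exactly the part that remains to be done.
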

\vspace{-0.5\baselineskip}

We note that although results of the form ``exact solution implies zero population gradient'' exist in the literature~\citep{ahn2024transformers, mahankali2023one}, to the best of our knowledge, we are the first to show the \textit{converse} (``zero population gradient implies recovery of exact solution'') for sequence model architectures.
In Appendix~\ref{app:primitives_ablations_experiments}, we show that \BC~models can learn the 
\textsc{Read}, \textsc{Linear}, and \textsc{Multiply} primitives to high precision in practice (Figure~\ref{fig:synthetics_both_layer_scaling}).

\paragraph{\BC s are universal approximators.}
Finally, we show in Appendix~\ref{app:jackson's} that \BC~can efficiently approximate \textit{smooth functions} by implementing polynomials:
\begin{theorem}[Informal statement of Theorem~\ref{prop:BC-approx-univar-func}]
    Given a $k$-times differentiable function $\bar{f} : [-1, 1] \to \mathbb{R}$, define $f : [-1, 1]^{\seqLen \to \hiddenDim} \to \mathbb{R}^{\seqLen \times \hiddenDim}$, which applies $\bar{f}$ element-wise to all inputs. Then $\forall \epsilon > 0$, there exists a \BC~model approximates $f$ to within error $\epsilon$,
    with $O\left(\sqrt[k]{\frac{\LipConst}{\epsilon}}\right) + k$ depth and $O(ND)$ parameters,
    where
    $||f^{(k)}||_\infty \leq \LipConst$.
\end{theorem}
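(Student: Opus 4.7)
The plan is to combine a classical approximation-theoretic result, Jackson's theorem, with the primitive expressivity of $\BC$ layers established earlier in the paper. At a high level, Jackson's theorem supplies a polynomial approximant of the right degree, and the $\textsc{Multiply}$ and $\textsc{Linear}$ constructions from Section~\ref{sec:convs_theory} let us realize this polynomial as a shallow $\BC$ network applied entrywise.

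First, I would apply Jackson's theorem to $\bar{f}$. Since $\bar{f}$ is $k$-times differentiable on $[-1,1]$ with $\|\bar{f}^{(k)}\|_\infty \le \LipConst$, there exists a polynomial $p$ of degree $n = O\bigl((\LipConst/\epsilon)^{1/k}\bigr)$ with $\|p - \bar{f}\|_\infty \le \epsilon$ on $[-1,1]$. Because $f$ is the entrywise application of $\bar{f}$, approximating $f$ by the entrywise application $P$ of $p$ preserves the same uniform entrywise error of $\epsilon$, so it suffices to build a $\BC$ circuit computing $P$.

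Next, I would realize $P$ as a $\BC$ circuit by erecting a monomial tower. Using the constructions from Appendix~\ref{app:baseconv_constructions_primitives}, a single $\BC$ layer can exactly implement $\textsc{Multiply}$ and $\textsc{Linear}$ on reserved channel blocks of the hidden dimension. I would maintain the input $\bm{u}$ and iteratively produce powers $\bm{u}^{\odot 2}, \bm{u}^{\odot 3},\ldots,\bm{u}^{\odot n}$ in distinct channel blocks, where each new $\bm{u}^{\odot (i+1)}$ is obtained from the previous one by a single Hadamard product with $\bm{u}$. This tower uses $n$ $\BC$ layers, and a final $\textsc{Linear}$ layer combines the resulting channels with the coefficients of $p$ to output $P(\bm{u})$. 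The overall depth is $n + O(1) = O\bigl(\sqrt[k]{\LipConst/\epsilon}\bigr) + O(1)$, matching the leading term of the bound. The additive $+k$ term I would attribute to the specific form of the Jackson construction: to attain the sharp $1/n^k$ rate, a standard proof applies a smoothing convolution or integrates by parts $k$ times against a Chebyshev-type kernel, which in circuit terms corresponds to $O(k)$ preprocessing layers (e.g., recursively forming $T_0,\ldots,T_k$ or $k$ successive antiderivative channels) layered on top of the monomial tower.

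The main obstacle will be the parameter accounting. Naively instantiating $n$ layers with dense $D\times D$ projections and length-$N$ filters per channel gives $O(NDn)$ parameters rather than the advertised $O(ND)$. The argument must exploit that every layer in the monomial tower reuses an identical sparse template: a pass-through near-identity projection on previously computed channels, a trivial filter (a delta, so the convolution acts as the identity), and exactly one new multiplicative channel; under this schema each layer uses only a constant number of nontrivial weight slots on top of the underlying $N\times D$ tensor, so the parameter budget is dominated by a single $\textsc{Linear}$ layer carrying the Jackson coefficients of $p$. A secondary subtlety is verifying that the Chebyshev-based $+k$ preprocessing does not inflate either the parameter count or the $[-1,1]$ domain assumption, and that the $\BC$ bias and projection structure is expressive enough to encode the Jackson coefficients directly without an auxiliary normalization layer.
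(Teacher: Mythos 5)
Your high-level route is the same as the paper's: invoke Jackson's theorem to get a degree-bounded polynomial approximant of $\bar f$ (Corollary~\ref{cor:kl-smooth-error} and Lemma~\ref{lem:pointwise-error}), then evaluate that polynomial entrywise with a stack of $\BC$ layers whose depth is linear in the degree (Lemma~\ref{lem:AC-construction-univar-poly} and Theorem~\ref{thm:BC-polys-const-layers}). Two points in your write-up do not survive contact with the details, though.

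First, your explanation of the additive $+k$ is wrong. There is no Chebyshev-kernel or antiderivative preprocessing anywhere in the construction. The $+k$ comes entirely from the degree of the Jackson approximant: to make the error bound $\left(\tfrac{\pi}{2}\right)^k \tfrac{\LipConst}{(d+1)d\cdots(d-k+2)} \leq \left(\tfrac{\pi}{2}\right)^k \tfrac{\LipConst}{(d-k)^k}$ equal $\epsilon$, the paper picks $\deg(P_{\bar f}) = \big\lceil \tfrac{\pi}{2}(\LipConst/\epsilon)^{1/k} + k\big\rceil$, and the circuit depth is $O(\deg)$. So you should take $n = O\big(\sqrt[k]{\LipConst/\epsilon}\big) + k$ at the Jackson step, after which the depth bound is immediate; your claimed degree $n = O\big((\LipConst/\epsilon)^{1/k}\big)$ with $k$ extra ``preprocessing layers'' describes a construction that does not exist and would not obviously achieve the $1/n^k$ rate anyway.

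Second, and more substantively, your monomial tower retains all powers $\bm{u}^{\odot 2},\dots,\bm{u}^{\odot n}$ in distinct channel blocks before a final \textsc{Linear} collapse. That forces the internal width to grow to $O(nD)$, so the state (and hence the parameter count of the intermediate projections, even under the structured-matrix \Cref{weight-matrix-assumption}) scales with the degree $n$, not with $ND$. Making the weight templates sparse does not fix this, because the blow-up is in the \emph{width of the state}, not in the density of the weights. The paper avoids it with a Horner-style running accumulation (Theorem~\ref{thm:BC-polys-const-layers}): at every layer the state is a fixed $4\seqLen\times\hiddenDim$ block holding only $\bm{u}$, the current power $\vm_j=\bm{u}^{\odot j}$, the current term $\vt_j=\mC_j\odot\vm_j$, and the partial sum $\vs_j$, updated in place via the \shiftDown{} kernel and the \primRem{} primitive. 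You need that (or an equivalent constant-width recurrence) to get the advertised parameter bound; with it, the rest of your argument goes through.
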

\vspace{-0.5\baselineskip}

We additionally prove a universal approximation theorem for general smooth multivariate functions in Appendix~\ref{app:jackson's} (Theorem~\ref{app:thm_baseconv_multivariate_jacksons}).

\subsection{\BC~can precisely express gradient descent for least squares}\label{sec:4.3_convs_learning_algorithms}

We now focus on the gradient descent algorithm for least squares. Explicitly, given a least squares problem instance $\bm{A} \bm{x} = \bm{b}$
and an initial iterate $\bm{x}_0$, a single iteration of gradient descent computes
\begin{equation}\label{eqn:gd_iterate}
    \bm{x}_1 := \bm{x}_0 - \eta \nabla \mathcal{L}(\bm{x}_0), \, \text{ where } \nabla_{\bm{x}} \mathcal{L} = \bm{A}^T (\bm{A} \bm{x} - \bm{b}).
\end{equation}

We provide two explicit $O(1)$-layer weight constructions to express a GD iterate using \BC~in Appendix~\ref{app:baseconv_construction_ls_gd}. One requires a $O(\hiddenDim)$ state size using a \textit{non-causal} model (i.e. each entry can access any other entry of the sequence) and one requires a $O(\hiddenDim^2)$ state size using a \textit{causal} model (i.e. entries cannot access later entries of the sequence).
In Appendix~\ref{app:baseconv_construction_ls_gd_lowerbound}, we prove that both constructions are asymptotically optimal with respect to state size.

In Figure~\ref{fig:baseconv_gd_construction}, we implement our non-causal weight construction into a deep \BC~model as a proof of concept. We confirm that gated convolutions can empirically implement high-precision gradient descent -- notably, roundoff errors due to machine precision do not significantly accumulate in practice, despite scaling up to a depth-$100$ \BC~model.

\section{Towards training models to machine precision}\label{sec:step-by-step}

Although \BC s are expressive enough to solve least squares precisely, we find that simply swapping out softmax attention with \BC~and training end-to-end is insufficient for high precision: our \BC~models perform as poorly as standard Transformers (Figure~\ref{fig:attn_bc_gd_endtoend}).
This suggests that additional precision bottlenecks are present \textit{during high-precision training}. In this section, we thus investigate what it takes to train polynomial architectures to machine precision.

Recent works~\citep{rodionov2023neuralalgorithmicreasoningintermediate, rodionov2024discreteneuralalgorithmicreasoning} on algorithm learning find that intermediate supervision is crucial for learning long computation trajectories. We hypothesize that end-to-end least squares faces a similar challenge. Thus, to study high-precision optimization, we first investigate a simplified setting: learning to perform \textit{explicit GD updates} for least squares.

Using this task as a benchmark, we identify a fundamental bottleneck in high-precision regimes, \textit{gradient variance from minibatching}, and we identify a metric based on cosine similarity of successive gradients that is diagnostic of precision saturation during training.
We then propose a high-precision training recipe, which for the first time allows us to train ML models to near \textit{machine precision}.
Using our training recipe, we learn to perform explicit GD updates to $10^{-13}$ average MSE (Figure~\ref{fig:transformers_precision}b), and we can also learn up to 4 iterates of GD at once with an MSE of $10^{-10}$ (Table~\ref{tab:kiter_gd}).

\paragraph{Simplifying the training setup.}
We first define a sequence of \textit{$k$-th iterate} tasks, where the goal is to explicitly produce the $k$-th iterate of GD given a least squares problem instance ($\bm{A}$, $\bm{b}$), an initial iterate $\bm{x}_0$, and a step size $\eta$:
\begin{equation}\label{eqn:gd_kiter_task}
    \{(\bm{a}_1, b_1), \hdots, (\bm{a}_\seqLen, b_\seqLen), \bm{x}_0\} \to \bm{x}_k, \, \text{where } \bm{x}_{i+1} = \bm{x}_i - \eta \nabla \mathcal{L}(\bm{x}_i), \, i \in [k-1].
\end{equation}
We then define the \textit{explicit gradient} task, where the goal is to produce the GD update vector:
\begin{equation}\label{eqn:gd_explicit_gradient_task}
    \{(\bm{a}_1, b_1), \hdots, (\bm{a}_\seqLen, b_\seqLen), \bm{x}_0\} \to \nabla \mathcal{L}(\bm{x}_0).
\end{equation}
Note that (up to a residual connection), the explicit gradient task is equivalent to $1$-step GD, and standard in-context least squares is equivalent to taking $k \to \infty$. Thus, the explicit gradient task is a natural simplification of standard in-context least squares, and the $k$-th iterate task allows us to smoothly interpolate between the two extremes of difficulty. Refer to Appendix~\ref{app:experiment_details} for more details.

\subsection{Towards a high-precision training recipe}
Our theoretical results in Section~\ref{sec:architecture_expressivity} imply that a $3$-layer \BC~is expressive enough to solve the explicit gradient task, so we use training a $3$-layer \BC~on this task as our benchmark for studying the challenges of high-precision learning.

\paragraph{Precision saturates with standard training procedures.}
Motivated by prior work~\citep{garg2022can, von2023transformers, ahn2024transformers},
we start by investigating two basic optimization procedures: Adam with constant learning rate (LR) and with exponentially decaying LR.

In Appendix~\ref{app:precision_ablations_experiments} (Figure~\ref{fig:ablate_opt_constdecay}), we sweep initial LR and LR steprate across 2-3 orders of magnitude for constant and decaying LR schedules.
We find:
\begin{itemize}[itemsep=0.1pt,topsep=0pt,leftmargin=*]

    \item \textit{Precision saturation occurs with both constant and decaying LR schedules.}
    After a number of training iterations, the average loss saturates and is unable to improve. We note that this occurs even while gradients magnitudes and LR are non-zero.
    
    \item \textit{Slower-decaying LR schedules perform better but require exponentially more training iterations.}
    In Figure~\ref{fig:elementwisemultiply_scaling_length}, we further analyze this phenomenon in the simpler case of 1-layer Transformers/\BC s on the \textsc{Multiply} synthetic.
    We observe a power-law relation between precision and number of training iterations as we sweep steprate; although it may be possible to train to high precision in theory, this approach seems infeasible in practice.
    
    \item \textit{With aggressively-decaying LR schedules, higher initial LR is better.}
    For a fixed scheduler step rate, increasing initial LR leads to significant improvements in final MSE, e.g. in Figure~\ref{fig:ablate_opt_constdecay}, an improvement of $1000\times$ simply by increasing initial LR from $10^{-3}$ to $10^{-2}$.
    Choosing a LR that is too large causes training instability, so in practice we set it to the \textit{largest value that trains stably}.
        
\end{itemize}

Our analysis suggests that Adam with an exponentially decreasing LR scheduler gets us only part of the way to a machine precision training recipe.
We next address the issue of precision saturation.

\paragraph{Stochastic gradients bottleneck precision.}
We identify \textit{minibatch gradient variance} as the main source of precision saturation. Although our goal is to minimize the expected loss over problem instances from $\mathcal{D}_{train}$, in practice we minimize over finite minibatch samples instead. Minibatch training is the standard in ML, but interestingly we find that the variance in minibatch gradients can dominate the population gradient signal in high-precision regimes, causing the loss to stagnate.

To demonstrate this, we define a simple metric to assess the strength of the gradient signal during training. At a given training step, we take the current model weights, sample $n$ different minibatches of least squares problems, and compute the minibatch model gradients $\{g_1, \hdots, g_{n}\}$. We then compute the \textit{average cosine similarity} between all pairs, as in~\cite{liu2023dropoutreducesunderfitting}:
\begin{equation}\label{eqn:gradient_variance_metric}
    \sigma_g := \frac{2}{n(n-1)} \sum_{i \neq j} \frac{g_i^T g_j}{||g_i||_2 ||g_j||_2}
\end{equation}
We observe that this cosine similarity metric is predictive of precision saturation across MSE scales and optimizer hyperparameters (Figure~\ref{fig:gradient_variance}).

\begin{figure*}
    \centering
    \includegraphics[width=0.95\linewidth]{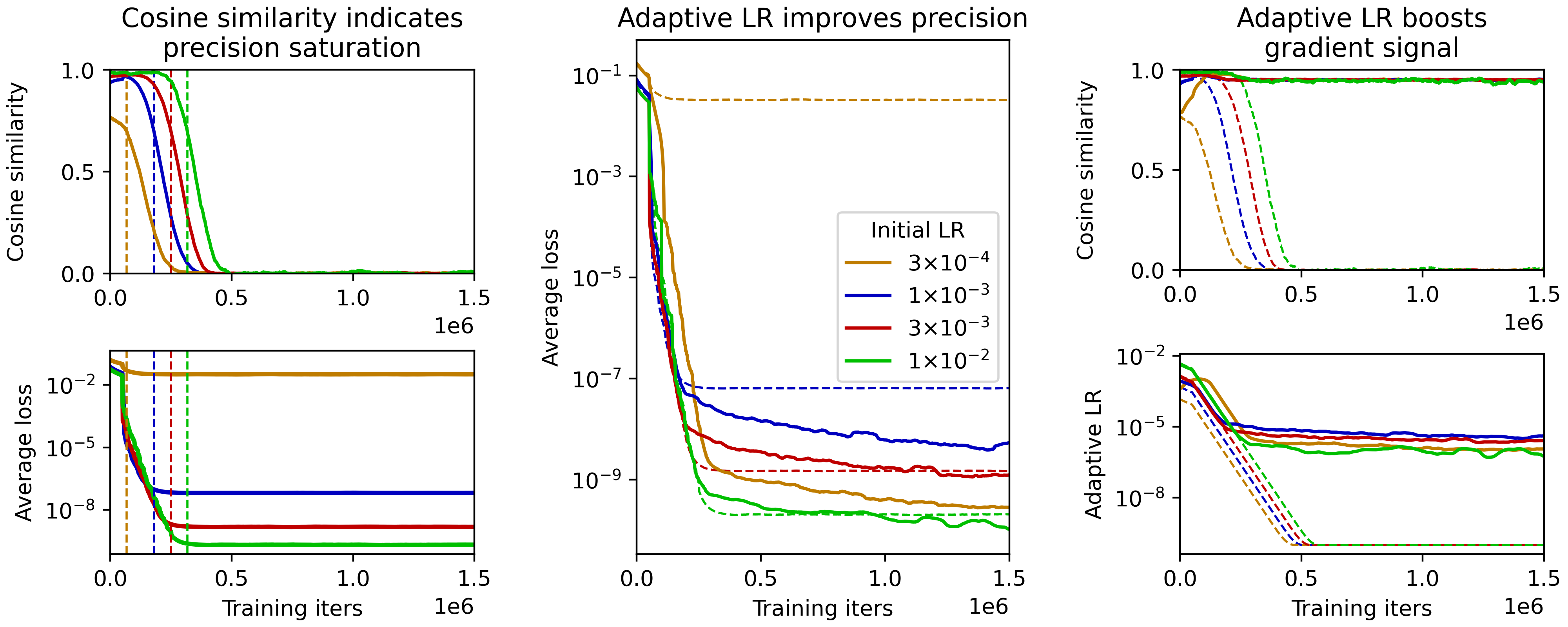}
    \caption{Gradient metric is predictive of precision saturation (left). We propose a simple adaptive LR scheduler that alleviates precision saturation (middle). Adaptive LR effectively boosts gradient signal during training (right).}
    \label{fig:gradient_variance}
    \vspace{-10pt}
\end{figure*}

\paragraph{An adaptive LR scheduler boosts gradient signal beyond precision saturation.}

We thus propose an \textit{adaptive} LR scheduler based on the gradient variance. Our scheduler is motivated by two intuitions:
\begin{itemize}[itemsep=0.1pt,topsep=0pt,leftmargin=*]
    \item Whenever the cosine similarity metric is \textit{high}, gradient signal is strong. In order to refine the highest-precision bits of the model weights, we need to slowly \textit{decrease} the LR.
    \item Whenever the cosine similarity metric is \textit{low}, the model weights are stuck in a local region of the loss landscape. To allow the model to escape this region, we need to \textit{increase} the LR.
\end{itemize}
The basic scheduler we use in this work simply decreases the LR exponentially while the metric is above a threshold $\sigma_{th}$ and increases the LR instead if the metric is below $\sigma_{th}$.
In Figure~\ref{fig:gradient_variance}, we show that this simple approach alleviates the loss saturation phenomenon:
we see a boost in population loss across our LR settings, and we observe the models consistently improve as we continue training.
We note that proper choice of LR hyperparameters is still crucial for \textit{efficient} convergence to machine precision -- we leave speeding up the convergence rate via better adaptive schedulers to future work.

\paragraph{Exponential Moving Average (EMA) over optimizer updates.}
Finally, motivated by our observation that gradient variance bottlenecks precision and inspired by recent works~\citep{lee2024grokfastacceleratedgrokkingamplifying, pagliardini2024ademamixoptimizerbetterfaster}, we apply an additional EMA over Adam's update vectors to help smooth out minibatch noise.
Empirically, we find this boosts the final MSE by as much as $100,000\times$ on the explicit gradient task: see Appendix~\ref{app:precision_ablations_experiments} (Figure~\ref{fig:ablate_opt_ema}).

Our training recipe for efficient high-precision convergence thus involves two techniques:
(1) an adaptive LR scheduler that exponentially increases or decays LR according to the cosine similarity metric; and
(2) applying EMA over optimizer updates.

\subsection{Learning high-precision gradient descent with polynomial architectures}\label{sec:5.2_machine_precision_gd}

Using our training recipe, we successfully train two 3-layer models with polynomial architectures, \BC~and non-causal linear attention, on the explicit gradient task. For the first time, we are able to train to near \textit{machine precision}: we achieve an average loss of $10^{-13}$ MSE.

In Figure~\ref{fig:transformers_precision}b, we slot our trained models into the standard GD algorithm, using their predictions in place of the true least squares gradients $\nabla \mathcal{L}$. Specifically, for a least squares problem $\bm{A} \bm{x} = \bm{b}$ and initial iterate $\bm{x}_0$, we repeatedly compute
$\bm{x}_{i+1} := \bm{x}_{i} - \eta \bm{\Delta}_i$,
where $\bm{\Delta}_i := T_\theta (\bm{A}, \bm{b}, \bm{x}_i)$ is the prediction of the model. We iteratively apply the model 
until convergence to a fixed point $\bm{x}_{\infty}$.

We find that both our models achieve high precision. In this setting, we reach an average MSE of $10^{-12}$ (Figure~\ref{fig:transformers_precision}, right): this is $100,000\times$ better MSE than the biggest Transformers we are able to train end-to-end.
Moreover, our \BC~model exhibits better numerical generality than the Transformer, incurring a $10,000\times$ smaller generalization gap on problems outside its training distribution (Figure~\ref{fig:transformer_ood}).
Interestingly, we find that our linear attention model exhibits markedly worse generality: its out-of-distribution performance nearly matches the Transformer's, and the model iterates eventually diverge: see Figure~\ref{fig:app_attn_bc_linattn_gd_ood_b}.

\vspace{-0.5\baselineskip}
\paragraph{Learning $k$-iterates of GD for larger $k$.}
We find that our training recipe also allows us to learn up to $k=4$ iterates of GD at once with $10^{-10}$ MSE: see Table~\ref{tab:kiter_gd} and Figure~\ref{fig:ablate_multistep_gd_lns} for results.
We are not able to stably train deeper models without reintroducing non-polynomial normalization techniques like LayerNorms, which causes precision bottlenecks. For small $k$, we observe that LayerNorms worsen precision by over $1,000\times$. See Appendix~\ref{app:k_iterate_experiments} for details.

\vspace{-0.5\baselineskip}
\paragraph{Experiments with in-context ODE solving.}
Finally, towards high-precision ML for more realistic tasks, we provide preliminary results on in-context ODE solving. We find that our proposed techniques outperform standard Transformers by up to $1,000,000 \times$ in MSE (up to $\approx 10^{-10}$ with iterative \BC s vs. $\approx 10^{-4}$ with 12-layer Transformers). See Appendix~\ref{app:icl_odes_experiments} for details.
\vspace{-0.5\baselineskip}

\section{Discussion and Limitations}
\vspace{-0.5\baselineskip}

In this work, we investigate learning to solve least squares from a numerical perspective.
We find that Transformers fail to learn solutions that exhibit the properties of machine precision and numerical generality.
Disentangling effects from the model architecture and optimizer, we find that standard design choices perform surprisingly poorly from the lens of numerics.
We identify expressivity limitations with softmax attention, and find surprisingly that even MLPs and LayerNorms significantly affect precision
(up to $1,000,000\times$ worse MSE on the explicit gradients task).
On the optimization front, we find stochastic gradient noise from minibatch training becomes a precision bottleneck in high-precision regimes. We propose an adaptive LR scheduler that alleviates this issue on a simplified task, but we suspect that this issue remains a fundamental challenge on harder problems. Crucially, although we make progress toward learning to solve numerical least squares end-to-end, our techniques struggle to maintain stable and precise training with deep networks.

We note that the \textit{numerical} criteria we consider in this work represent a fundamentally different type of learning and generalization from \textit{statistical} notions that are prevalent in ML.
We believe these numerical perspectives may be relevant to the wider scientific ML community.
For example, existing approaches to solving PDEs have shown promise but are known to be brittle outside their training distributions~\citep{wang2023multistageneuralnetworksfunction, rathore2024challengestrainingpinnsloss}.
This inhibits their usefulness in high-impact applications like climate or fluids modeling, where high precision and robustness are crucial.
We believe learning to implement precise numerical algorithms directly from data is an exciting prospect that has the potential to unlock new capabilities across science and engineering.

\newpage

\subsubsection*{Reproducibility Statement}
We provide all the code and configuration files necessary to reproduce our experiments at \href{https://github.com/HazyResearch/precision-ls}{\texttt{https://github.com/HazyResearch/precision-ls}}. In this work, all experiments are done using synthetic data and tasks. All experiments were conducted using PyTorch on NVIDIA A100/H100 GPUs. Detailed hyperparameters (learning rate, batch size, and optimizer settings) and proofs of all theoretical claims are provided in the supplementary materials.

\subsubsection*{Acknowledgments}
We thank Yasa Baig, Mayee Chen, Rajat Dwaraknath, Sabri Eyuboglu, Chris Fifty, Neel Guha, Hermann Kumbong, Benjamin Spector, Aman Timalsina, Alyssa Unell, Ben Viggiano, Michael Zhang, and Dylan Zinsley for their helpful feedback and discussion during this work.

We gratefully acknowledge the support of NIH under No. U54EB020405 (Mobilize); NSF under Nos. CCF2247015 (Hardware-Aware), CCF1763315 (Beyond Sparsity), CCF1563078 (Volume to Velocity), 1937301 (RTML), DGE-2146755 (GRFP), and PHY-2019786 (IAIFI); US DEVCOM ARL under Nos. W911NF-23-2-0184 (Long-context) and W911NF-21-2-0251 (Interactive Human-AI Teaming); ONR under Nos. N000142312633 (Deep Signal Processing); Stanford HAI under No. 247183; NXP, Xilinx, LETI-CEA, Intel, IBM, Microsoft, NEC, Toshiba, TSMC, ARM, Hitachi, BASF, Accenture, Ericsson, Qualcomm, Analog Devices, Google Cloud, Salesforce, Total, the HAI-GCP Cloud Credits for Research program,  the Stanford Data Science Initiative (SDSI), and members of the Stanford DAWN project: Meta, Google, and VMWare. The U.S. Government is authorized to reproduce and distribute reprints for Governmental purposes notwithstanding any copyright notation thereon. Any opinions, findings, and conclusions or recommendations expressed in this material are those of the authors and do not necessarily reflect the views, policies, or endorsements, either expressed or implied, of NIH, NSF, ONR, or the U.S. Government.
JL is supported by the Department of Energy Computational Science Graduate Fellowship under Award Number DE-SC0023112.
JG and AR's research is supported by NSF grant CCF\#2247014. OD is supported by the Hertz Foundation Fellowship, the Stanford Knight-Hennessy Scholarship, and the NSF GRFP.

\bibliography{iclr2025_conference}
\bibliographystyle{iclr2025_conference}

\newpage
\appendix
\section*{Appendix}
The appendix is organized as follows:
\begin{itemize}
    \item Appendix~\ref{app:related_work} provides a more detailed overview of related work.
    \item Appendix~\ref{app:experiment_details} provides details about our experimental setup.
    \item Appendix~\ref{app:ablation_studies} provides additional experiments and ablation studies.
    \item Appendix~\ref{app:theory} provides details about our main theoretical results.
\end{itemize}

\section{Extended background}\label{app:related_work}
\subsection{Least squares}

Least squares, $\bm{A} \bm{x} = \bm{b}$, is well-understood theoretically, and we know of simple numerical algorithms for solving least squares to high precision~\citep{weisberg2005applied, boyd2004convex}. We focus on two algorithms: gradient descent and Newton's method.

\paragraph{Gradient descent}
Given a guess for $\bm{x}^*$, we minimize the least squares loss
\begin{equation}\label{eqn:ls_loss}
    \mathcal{L}(\bm{x}) = \frac{1}{2} \sum_{i=1}^\seqLen (\bm{a}_i^T \bm{x} - b_i)^2
\end{equation}
via gradient descent on $\bm{x}$:
\begin{equation}
    \nabla_{\bm{x}} \mathcal{L}_\seqLen = \sum_{i=1}^\seqLen (\bm{x}^T \bm{a}_i - b_i) \bm{a}_i
\end{equation}
\begin{equation}
    \bm{x}_{t+1} = \bm{x}_t - \eta \nabla \mathcal{L}_\seqLen(\bm{x}_t)
\end{equation}

\paragraph{Ordinary Least Squares and Newton's method}
In the noiseless, full determined regime, the Bayes-optimal estimator is ordinary least squares (OLS)~\citep{weisberg2005applied}:
\begin{equation}
    \bm{x}^{OLS} = (\bm{A}^T \bm{A})^{-1} \bm{A}^T \bm{b},
\end{equation}
where
\begin{equation}
    \bm{A} =
    \begin{pmatrix}
        \leftarrow \bm{a}_1 \rightarrow \\
        \vdots \\
        \leftarrow \bm{a}_\seqLen \rightarrow
    \end{pmatrix}
    , \quad
    \bm{b} =
    \begin{pmatrix}
        b_1 \\
        \vdots \\
        b_\seqLen
    \end{pmatrix}
\end{equation}
Note that this estimator requires a matrix inverse, which is expensive to compute exactly. An alternative is to use Newton's method to approximate the matrix inverse term~\citep{schulz1933iterative}. To estimate $(\bm{A}^T \bm{A})^{-1}$, we can perform the following iterative algorithm:
\begin{equation}
    \bm{M}_{t+1} = \bm{M}_t (2\bm{I} - (\bm{A}^T \bm{A}) \bm{M}_t)
\end{equation}
where $\bm{M}_t$ converges to $(\bm{A}^T \bm{A})^{-1}$.

\subsection{Related work}
In this section, we detail prior work on in-context learning, Transformer expressivity, gated convolutional architectures, and algorithm learning.

\paragraph{In-context learning.}
The capability of Transformers to perform in-context learning on language and pattern matching tasks has been well-documented~\citep{brown2020language, dasgupta2022language, wei2022emergent}. More recently, a flurry of work has investigated in-context learning for regression-style tasks. \cite{garg2022can} first formulated the mathematical framework to analyze the estimators Transformers implement in-context, focusing on linear regression and other least squares problems. A number of works further observed empirically that Transformers seem to approximate Bayes-optimal estimators on distributional problems. For example, based on the task distribution, the performance of in-context Transformers mimics optimally-tuned LASSO on sparse linear regression, ridge regression on noisy dense linear regression, and Bayes-optimal priors for task mixtures~\citep{akyurek2024context, raventos2024pretraining, yadlowsky2023pretraining, ahuja2023context, bai2024transformers}. Beyond standard least squares problems, other works have investigated the ability of Transformers to in-context solve broader problems of scientific interest like differential equations~\citep{yang2023context, chen2024data, liu2023does}.

Towards explaining these observations, recent works have focused on understanding the expressivity and optimziation landscapes of Transformer variants (typically non-causal linear attention) on linear regression. Linear attention has been shown to be expressive enough to implement numerical algorithms for solving linear regression, including gradient descent~\citep{akyurek2022learning, von2023transformers} and Newton's method~\citep{fu2023transformers, giannou2024well}. Recent work~\citep{ahn2024transformers, mahankali2023one, zhang2023trained} has also begun to investigate the optimization dynamics for linear attention on least squares.
Finally, we highlight that recent work~\citep{bai2024transformers, huang2023context, collins2024incontextlearningtransformerssoftmax, cheng2024transformersimplementfunctionalgradient} makes progress on theoretically understanding non-linear attention, e.g. with softmax or ReLU activations.

Unlike prior work, we investigate the capabilities of standard Transformers, focusing on exploring their capability to perform \textit{high-precision} optimization algorithms. Noting a gap between empirical performance and theoretical claims regarding in-context least squares as gradient descent, we further investigate alternative architectures to softmax attention.

\paragraph{Expressivity and approximation ability of Transformers.}
Although Transformers were initially designed for discrete tasks like language modeling, recent works have investigated the ability of the Transformer architecture to express general \textit{continuous-valued} sequence-to-sequence maps. We briefly mention three classes of prior work:
\begin{itemize}
    \item \textbf{Constructive arguments.} We highlight \cite{giannou2023looped}, which proposes a looped-Transformer weight construction that implements a basic mathematical instruction set. Using compositions of these instructions, the authors demonstrate that Transformers are expressive enough to implement numerical algorithms, including matrix inversion and SGD on linear models.
    \item \textbf{Universal approximation results.} Several works, such as~\cite{Yun2020ICLR, yun2020n}, provide bounds on the number of parameters and layers required to approximate smooth sequence-to-sequence functions to arbitrary precision using Transformers. However, these results typically require parameters to scale exponentially with respect to problem size, which quickly becomes impractical in practice.
    \item \textbf{Complexity theory results.} Recent works~\citep{chiang2023tighter, merrill-sabharwal-2023-parallelism, merrill2024logic} prove that \textit{log-precision} Transformers lie in $TC^0$, a limited complexity class of circuits.
\end{itemize}

\paragraph{Gated convolutions.}
Gated convolutional models are a class of architectures that serve as an efficient alternative to attention. These models, consisting of gating (element-wise multiplication) and long convolutions (filter size equal to sequence length), stem from earlier work~\citep{gu2021efficiently} inspired by the signal processing literature. In this work we focus on the \BC~model from~\cite{zoology}, but a recent surge of interest in efficient attention replacements has led to a flood of gated convolutional architectures~\citep{poli2023hyena, peng2023rwkv, gu2023mamba}. 

Recent architectural innovations within the class of gated convolutional models have been largely motivated by language modeling tasks~\citep{fu2022hungry, zoology}. Unlike these prior works, which focus on matching attention's performance on \textit{discrete} tasks, we observe that the connection between gated convolutions and arithmetic circuits implies they are able to exactly express a range of important numerical algorithms for \textit{continuous-valued} tasks. We further investigate their ability to learn these algorithms in-context.

\paragraph{Algorithm learning.}
We mention two lines of work related to learning algorithms using ML:
\begin{itemize}
    \item \textbf{Grokking.} Several works~\citep{power2022grokkinggeneralizationoverfittingsmall, nanda2023progressmeasuresgrokkingmechanistic, lee2024grokfastacceleratedgrokkingamplifying} have observed the ability of Transformers to learn to perfectly perform small discrete algorithmic tasks, e.g. modular arithmetic.
    \item \textbf{Neural Algorithmic Reasoning.} Recent work~\citep{rodionov2023neuralalgorithmicreasoningintermediate, rodionov2024discreteneuralalgorithmicreasoning} investigates the ability of graph neural networks to learn fundamental algorithms like breadth-first search~\citep{veličković2022clrsalgorithmicreasoningbenchmark}.
\end{itemize}
Crucially, we note that these previous works focus on learning \textit{discrete} algorithmic tasks, which Transformers excel at. As far as we know, we are the first to investigate whether Transformers are able to learn \textit{numerical} algorithms, which rely on addressing key challenges with high-precision floating-point arithmetic.

\paragraph{Precision and scientific ML.}

The importance and difficulty of high-precision ML for scientific settings is well-established: although the scientific ML community has made exciting progress in recent years, numerical methods are still known to outperform existing ML methods in precision even on simple PDE benchmarks~\citep{McGreivy_2024}. Despite this, we are aware of only a few works which directly focus on investigating high precision for ML. We highlight~\citep{Michaud_2023, wang2023multistageneuralnetworksfunction}, which focus on small MLPs for regression tasks and propose alternate training recipes.

As far as we are aware, we are the first to investigate and isolate effects of model architectures and optimizers on precision in a controlled setting: in-context least squares. We find that typical training recipes for sequence models (e.g. softmax attention, Adam, and standard LR schedulers) encounter surprising precision barriers when applied to numerical tasks.
\newpage

\section{Experimental setup}\label{app:experiment_details}
Here, we provide additional details about our experimental setup.

\subsection{Model architecture}\label{app:model_configs}
We base our Transformer and \BC~models off the GPT2 family~\citep{radford2019language}. Unless otherwise specified, we use the following default settings for Transformers:

\begin{table}[h!]
\centering
\begin{tabular}{|c|c|}
\hline
Config & Setting \\
\hline
\hline
Embedding size & 64 \\
Number of layers & 12 \\
Number of heads & 8 \\
MLPs & True \\
MLP hidden size & $4\times$ embedding size \\
MLP activation & ReLU \\
LayerNorms & True \\
\hline
Input dim & 5 \\
Sequence length & 20 \\
\hline
\end{tabular}
\caption{Standard Transformer architecture details.}
\end{table}

and the following settings for \BC s:

\begin{table}[h!]
\centering
\begin{tabular}{|c|c|}
\hline
Config & Setting \\
\hline
\hline
Embedding size & 64 \\
Number of layers & 3 \\
MLPs & False \\
LayerNorms & False \\
\hline
Input dim & 5 \\
Sequence length & 20 \\
\hline
\end{tabular}
\caption{\BC~architecture details.}
\end{table}

Finally, we describe the settings we use for our linear attention experiment (Figure~\ref{fig:transformer_ood}):

\begin{table}[h!]
\centering
\begin{tabular}{|c|c|}
\hline
Config & Setting \\
\hline
\hline
Embedding size & 256 \\
Number of layers & 3 \\
Number of heads & 16 \\
MLPs & False \\
LayerNorms & False \\
\hline
Input dim & 5 \\
Sequence length & 20 \\
\hline
\end{tabular}
\caption{Linear attention architecture details.}
\end{table}

\subsection{Optimizer}

We describe two sets of optimizer settings we use throughout this work.

The first, representative of standard training procedures, is inspired by prior in-context learning setups~\citep{garg2022can, von2023transformers}.

\begin{table}[h!]
\centering
\begin{tabular}{|c|c|}
\hline
Config & Setting \\
\hline
\hline
Batch size & 256 \\
Optimizer & Adam \\
Learning rate & $10^{-3}$ \\
Scheduler & StepLR \\
Training iterations & $10^6$ \\
Step rate & $10^4$ \\
Decay rate & 0.9 \\
\hline
\end{tabular}
\caption{Standard optimizer settings.}
\end{table}

The second, our training recipe, is for our high-precision experiments, where we find a more aggressive learning rate scheduler is essential. Note we use the adaptive learning rate scheduler and EMA described in Section~\ref{sec:step-by-step}.

\begin{table}[h!]
\centering
\begin{tabular}{|c|c|}
\hline
Config & Setting \\
\hline
\hline
Batch size & 1024 \\
Optimizer & Adam \\
Learning rate & $10^{-2}$ \\
Scheduler & AdaptiveLR \\
Training iterations & $2.5 \times 10^6$ \\
Step rate & $3 \times 10^3$ \\
Decay rate & 0.9 \\
EMA decay & 0.98 \\
EMA lambda & 2 \\
\hline
\end{tabular}
\caption{High-precision training recipe settings for \BC.}
\end{table}

Finally, we describe the optimization settings we used for high-precision linear attention, which we found needed a slightly different learning rate scheduler.

\begin{table}[h!]
\centering
\begin{tabular}{|c|c|}
\hline
Config & Setting \\
\hline
\hline
Batch size & 1024 \\
Optimizer & Adam \\
Learning rate & $10^{-2}$ \\
Scheduler & AdaptiveLR \\
Training iterations & $2.5 \times 10^6$ \\
Step rate & $3 \times 10^3$ \\
Decay rate & 0.9 \\
EMA decay & 0.98 \\
EMA lambda & 2 \\
\hline
\end{tabular}
\caption{High-precision training recipe settings for linear attention.}
\end{table}

\newpage

\subsection{Tasks}

Each of our in-context learning tasks can be viewed as a sequence-to-sequence map
$$\mathcal{M} : \mathbb{R}^{\seqLen_{in} \times \hiddenDim_{in}} \to \mathbb{R}^{\seqLen_{out} \times \hiddenDim_{out}}$$
In this subsection, we provide details about task implementations, specifying the input/output formats for each of the synthetic tasks and in-context least squares variants we implement.

\subsubsection{In-context least squares.}\label{app:task_icl_lr_nfixed}
We consider
$\mathcal{M}_{LS} : \mathbb{R}^{\seqLen \times (\hiddenDim+1)} \to \mathbb{R}^\hiddenDim$,
where as above the inputs are formatted as
$$\bm{u}_{in} := \begin{bmatrix}
    \bm{a}_1 & \hdots & \bm{a}_\seqLen \\
    b_1 & \hdots & b_\seqLen
\end{bmatrix}$$
and the expected output is
$$T_\theta(\bm{u}_{in})\textsc{[:-1, -1:]} := \bm{x}.$$

\subsubsection{Primitives.}\label{app:task_primitives}
For each of the following linear algebra primitives, we increase the task size, setting $\hiddenDim = 20$ and $\seqLen = 40$.

\begin{itemize}
    \item \textsc{Read} is defined as
    $\mathcal{M}_{Read} : \mathbb{R}^{\seqLen \times \hiddenDim} \to \mathbb{R}^{\seqLen \times \hiddenDim}$, where the inputs are formatted as
    $$\bm{u}_{in} \in \mathbb{R}^{\seqLen \times \hiddenDim} := \begin{bmatrix}
    \bm{x}_1 & \hdots & \bm{x}_\seqLen
    \end{bmatrix}$$
    and the expected outputs are
    $T_\theta(\bm{u}_{in}) \in \mathbb{R}^{\seqLen \times \hiddenDim}$ such that
    $$T_\theta(\bm{u}_{in})[k, :] :=
    \begin{cases}
        \bm{u}_{in}[i, :] & k = j \\
        \bm{u}_{in}[k, :] & k \neq j
    \end{cases}$$
    for task parameters $i \neq j \in [\seqLen]$.
    
    \item \textsc{Linear} is defined as
    $\mathcal{M}_{Linear} : \mathbb{R}^{\seqLen \times \hiddenDim} \to \mathbb{R}^{\seqLen \times 1}$,
    where the inputs are formatted as
    $$\bm{u}_{in} \in \mathbb{R}^{\seqLen \times \hiddenDim} := \begin{bmatrix}
    \bm{x}_1 & \hdots & \bm{x}_\seqLen
    \end{bmatrix}$$
    and the expected outputs are
    $$T_\theta(\bm{u}_{in}) := \begin{bmatrix}
        \bm{x}_1^T \bm{h} & \hdots & \bm{x}_\seqLen^T \bm{h}
    \end{bmatrix}
    $$
    where $\bm{h} \in \mathbb{R}^{\hiddenDim}$ is a task parameter.
    
    \item \textsc{Multiply} is defined as
    $\mathcal{M}_{Multiply} : \mathbb{R}^{\seqLen \times \hiddenDim} \to \mathbb{R}^{\seqLen \times \hiddenDim/2}$,
    where the inputs are formatted as
    $$\bm{u}_{in} \in \mathbb{R}^{\seqLen \times \hiddenDim} := \begin{bmatrix}
    \bm{x}_1 & \hdots & \bm{x}_\seqLen
    \end{bmatrix}$$
    and the expected outputs are
    $$T_\theta(\bm{u}_{in}) := \begin{pmatrix}
        \bm{x}_1[:, :\hiddenDim/2] \odot \bm{x}_1[:, \hiddenDim/2:] & \hdots & \bm{x}_\seqLen[:, :\hiddenDim/2] \odot \bm{x}_\seqLen[:, \hiddenDim/2:]
    \end{pmatrix}.
    $$
    
\end{itemize}

\subsubsection{Explicit gradient updates.}\label{app:task_explicit_gradient}

In Section~\ref{sec:step-by-step}, we investigate a simple training setting, in which the model is explicitly trained to predict the gradient of the least squares loss. We proceed to define the task
$\mathcal{M}_{gradient} : \mathbb{R}^{(\seqLen+1) \times (\hiddenDim^2 + 2\hiddenDim + 1)} \to \mathbb{R}^{\hiddenDim}$.

The inputs are formatted as
$$\bm{u}_{in} := \begin{bmatrix}
    \bm{a}_1 & \hdots & \bm{a}_\seqLen & \bm{x}_0 \\
    b_1 & \hdots & b_\seqLen & 0
\end{bmatrix}.$$

The expected outputs are
$$T_\theta(\bm{u}_{in})\textsc{[-1:, :\hiddenDim]} := \nabla_{\bm{w}} \mathcal{L}(\bm{x}_0).$$

\subsubsection{$k$-th gradient descent iterate.}\label{app:task_kiter_gd}

Finally, toward end-to-end least squares, we investigate a series of increasingly end-to-end tasks in which the model is explicitly trained to predict the $k$-th gradient descent iterate. We proceed to define the task
$\mathcal{M}_{iter}^k : \mathbb{R}^{(\seqLen+1) \times (\hiddenDim^2 + 2\hiddenDim + 1)} \to \mathbb{R}^{\hiddenDim}$.

The inputs are formatted as
$$\bm{u}_{in} := \begin{bmatrix}
    \bm{a}_1 & \hdots & \bm{a}_\seqLen & \bm{x}_0 \\
    b_1 & \hdots & b_\seqLen & 0
\end{bmatrix}.$$

The expected outputs are
$$T_\theta(\bm{u}_{in})\textsc{[-1:, :\hiddenDim]} := \bm{x}_k.$$

\subsection{Data generation}\label{app:data_generation}
At each training step, we produce a random training prompt $\bm{u}_{in}$ by sampling each variable randomly: from the isotropic Gaussian distribution $N(\bm{0}, \bm{I})$ for continuous-valued parameters, and from the uniform distribution for discrete parameters. Concretely:
\begin{itemize}
    \item For the in-context linear regression tasks, input vectors $\bm{x}_1, \hdots, \bm{x}_\seqLen$ are sampled from $N(\bm{0}^\hiddenDim, \bm{I}^\hiddenDim)$, and the unknown linear function is determined by $\bm{w}^*$, also drawn from $N(\bm{0}^\hiddenDim, \bm{I}^\hiddenDim)$.
    \item For the synthetic tasks \textsc{Read}, \textsc{Linear}, \textsc{Multiply} (Section~\ref{sec:3.3_linalg_primitives}), \textit{each column} of the inputs $\bm{u}_{in} \in \mathbb{R}^{\seqLen \times \hiddenDim}$ is sampled from the isotropic Gaussian distribution $N(\bm{0}^{\hiddenDim}, \bm{I}^{\hiddenDim})$. The tasks \textsc{Read} and \textsc{Linear} require specifying additional parameters as follows:
    \begin{itemize}
        \item For \textsc{Read}, at each iteration, $i \neq j \in [\seqLen]$ are sampled uniformly.
        \item For \textsc{Linear}, at each iteration, the affine transformation $\bm{h}$ is sampled from $N(\bm{0}^\hiddenDim, 3\bm{I}^\hiddenDim)$.
    \end{itemize}
    \item For the explicit gradient task and the $k$-th gradient descent iterate task, the random initialization $\bm{w}_0$ is also drawn from $N(\bm{0}^\hiddenDim, \bm{I}^\hiddenDim)$.
\end{itemize}

The model is trained to minimize mean squared error over the distribution of prompts.

\newpage

\section{Additional experimental results}\label{app:ablation_studies}
\subsection{Ablations: linear algebra primitives}\label{app:primitives_ablations_experiments}

In Figure~\ref{fig:synthetics_both_layer_scaling}, we train Transformers and \BC s, \textit{with} MLPs, with and without LayerNorms (LN), on the \textsc{Read}, \textsc{Linear}, and \textsc{Multiply} primitives from Section~\ref{app:task_primitives}. We vary the model depth $L \in \{1, 2, 4, 8\}$ and investigate how precision scales with number of layers.
In these experiments, we use a standard exponentially decaying LR schedule for Adam.

We show that Transformers and \BC s both achieve high precision ($< O(10^{-9})$) on the \textsc{Read} and \textsc{Linear} tasks. However, the Transformers struggle to implement \textsc{Multiply} to high precision, and performance scales poorly with model depth. We observe that \BC~without LayerNorm generally performs the best across all three primitives, consistently outperforming \BC~with LayerNorm by $2$-$4$ orders of magnitude.

\begin{figure*}[h]
    \centering
    \begin{subfigure}{0.3\linewidth}
        \centering
        \includegraphics[width=\linewidth]{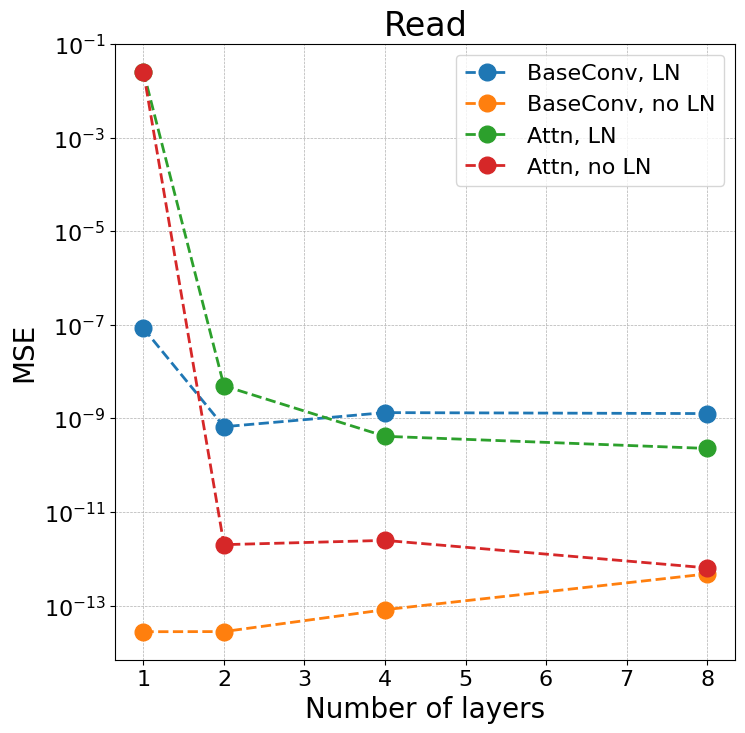}
    \end{subfigure}
    \begin{subfigure}{0.3\linewidth}
        \centering
        \includegraphics[width=\linewidth]{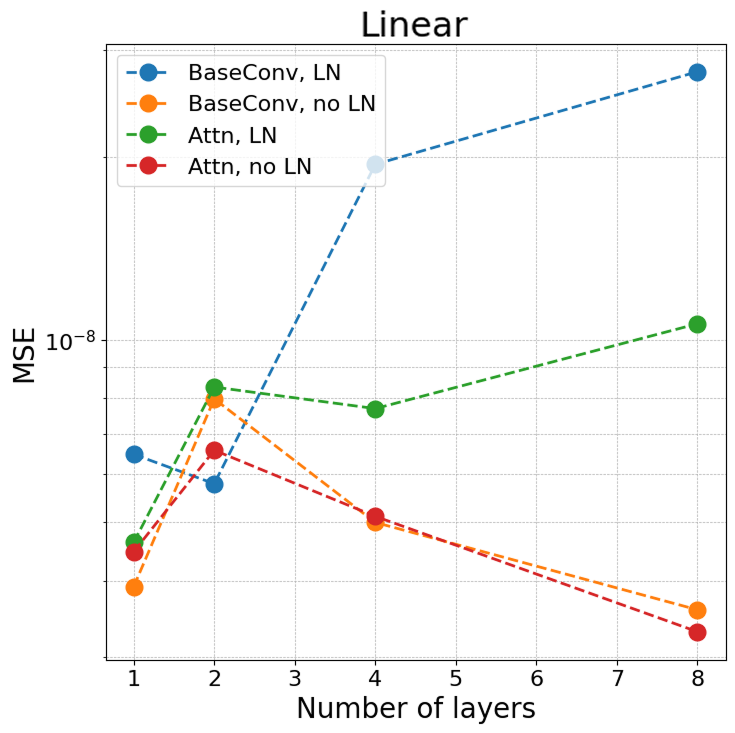}
    \end{subfigure}
    \begin{subfigure}{0.3\linewidth}
        \centering
        \includegraphics[width=\linewidth]{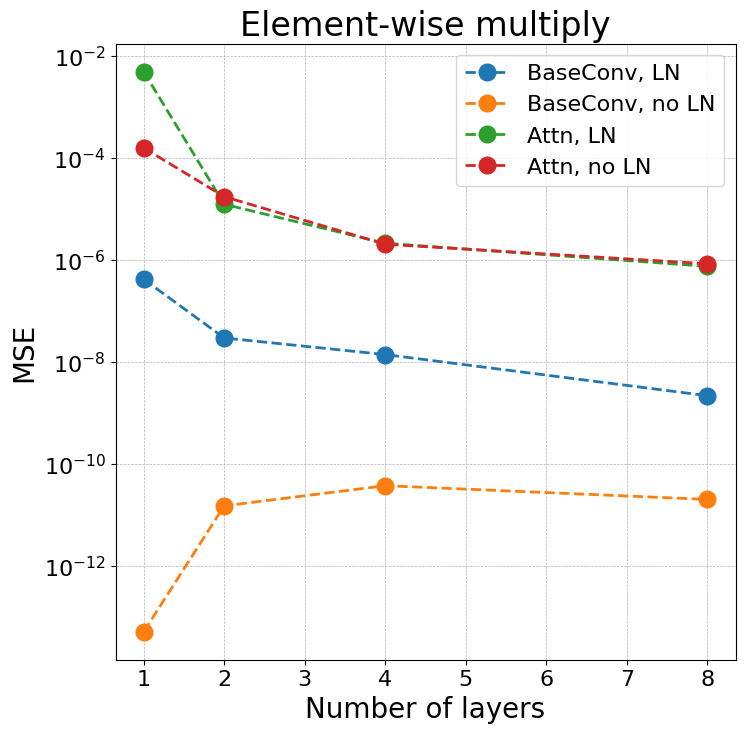}
    \end{subfigure}
    
    \caption{Attention vs. \BC, with and without LayerNorms, on synthetic tasks. Precision consistently scales better with depth for \BC~models than for Transformers. While both models solve \textsc{Read} and \textsc{Linear} tasks to at least $10^{-8}$ MSE, the precision of Transformers scales poorly for the \textsc{Multiply} task.}
    \label{fig:synthetics_both_layer_scaling}
\end{figure*}

Focusing on 2-layer Transformers and the \textsc{Multiply} task, we additionally find that precision scales poorly with multiple scaling axes, including hidden dimension, number of heads, and MLP upscaling factor (Figure~\ref{fig:mult_ablations}).

\begin{figure*}[h!]
    \centering
    \includegraphics[width=\linewidth]{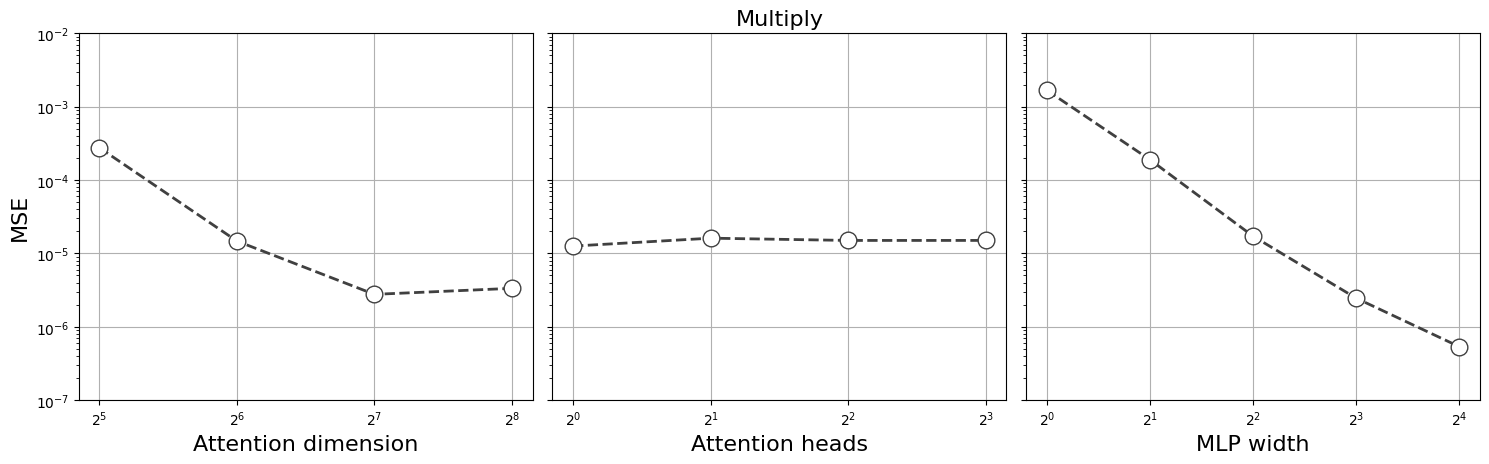}
    \caption{Precision of (2-layer) Transformers on \textsc{Multiply} task scales poorly with attention dimension (left), number of heads (middle), and MLP width (right, where MLP hidden dimension = width $\times$ attention dimension).}
    \label{fig:mult_ablations}
\end{figure*}

\newpage

Finally, we investigate the effect of training duration on precision.
In Figure~\ref{fig:elementwisemultiply_scaling_length}, we train $1$-layer Transformers and \BC s, with MLPs and LayerNorms, on the \textsc{Multiply} primitive and vary the number of iterations for which the model is trained. Recall that since new data is sampled at each iteration, we also effectively scale the dataset size proportionally. To keep the learning rates consistent across runs, we scale back the scheduler step size accordingly:
\begin{align*}
    num\_iters &\in \{10^5, 10^6, 10^7, 10^8\} \\
    step\_size &\in \{10^3, 10^4, 10^5, 10^6\}
\end{align*}
We observe a power law, particularly clearly for \BC, as we scale from $10^5$ to $10^8$ iterations. Both models achieve a $2$-$3$ order of magnitude improvement in precision, but this requires also increasing training duration by $3$ orders of magnitude.

\begin{figure*}
    \centering
    \includegraphics[width=0.5\linewidth]{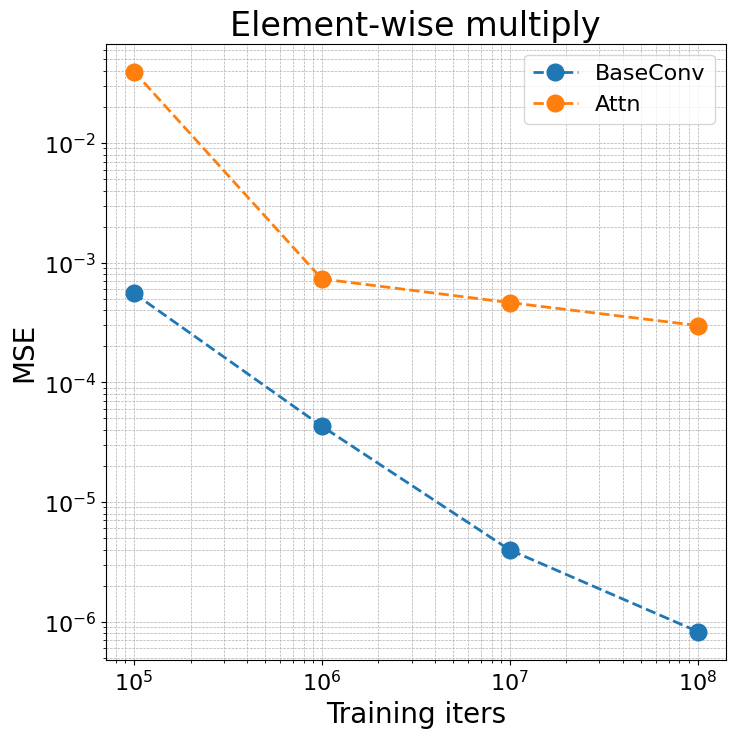}
    \caption{Scaling number of training iterations for 1-layer Transformer vs. \BC~on the \textsc{Multiply} task. Both models improve precision by 2-3 orders of magnitude as training duration increases by 3 orders of magnitude.}
    \label{fig:elementwisemultiply_scaling_length}
\end{figure*}

\newpage

\subsection{Ablations: high-precision optimization}\label{app:precision_ablations_experiments}

In Figure~\ref{fig:attn_bc_gd_endtoend}, we try directly training on the end-to-end least squares task, simply replacing softmax attention with \BC~in the standard Transformer architecture. We find we are unable to reach high precision using this training procedure.

\begin{figure*}[h!]
    \centering
    \includegraphics[width=0.5\linewidth]{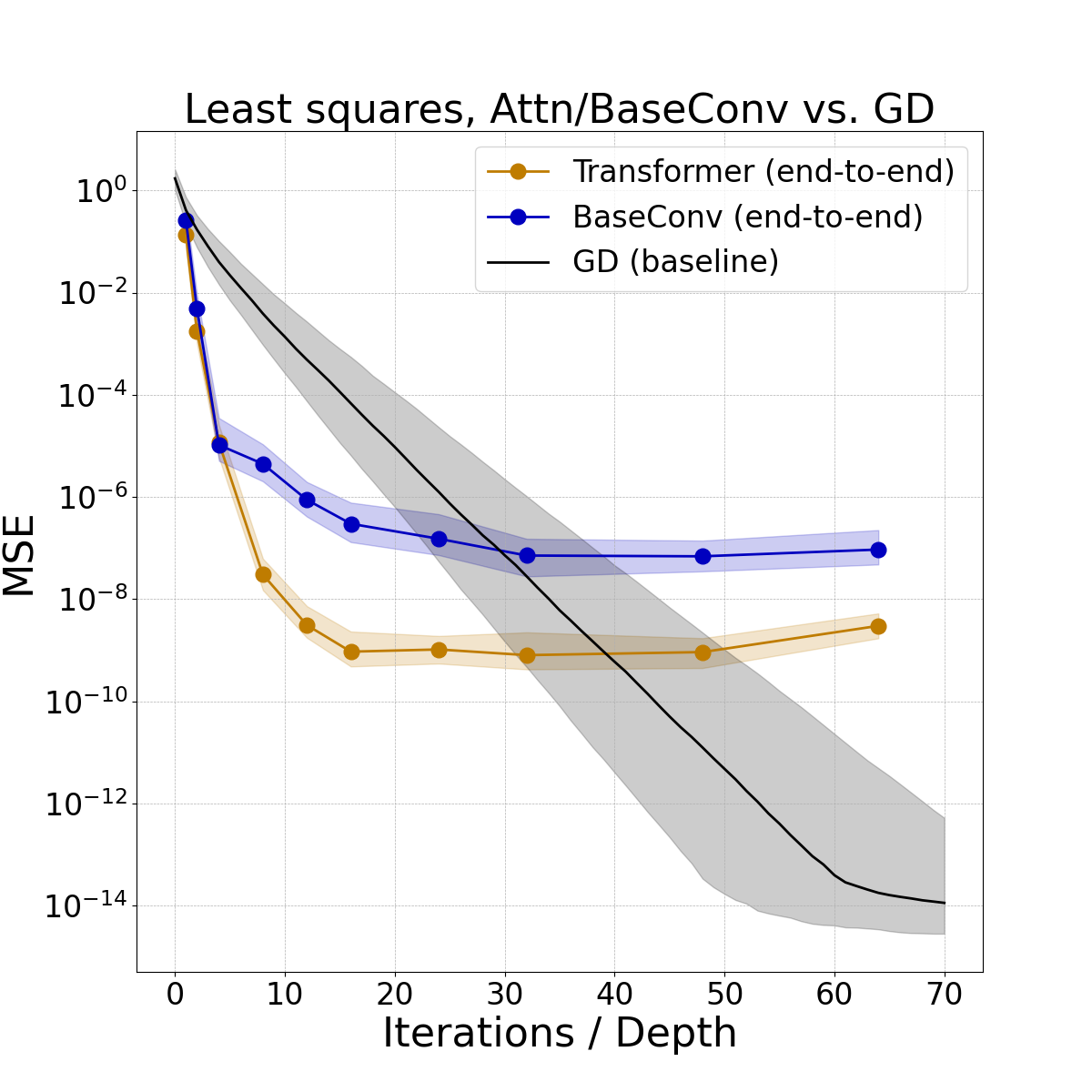}
    \caption{
    Replacing softmax attention with \BC~in the standard Transformer architecture and training end-to-end on least squares is not enough to achieve high-precision solutions. \BC~models trained end-to-end perform as badly as Transformers at small scale, and our largest models perform $100\times$ \textit{worse} than parameter-matched Transformers.
    }
    \label{fig:attn_bc_gd_endtoend}
\end{figure*}

In Figure~\ref{fig:ablate_opt_constdecay}, we ablate the effects of constant and exponentially decaying LR schedulers with Adam (cutting off training after $10^6$ iterations). We find that neither are able to efficiently train to machine precision on the explicit gradients task. For exponentially decaying LR schedule, we find that the LR steprate is a crucial parameter: on the explicit gradient task, a difference of $10,000\times$ between precision saturation thresholds using $1\times10^3$ vs $3\times10^3$ for example.

\begin{figure*}[h!]
    \centering
    \includegraphics[width=\linewidth]{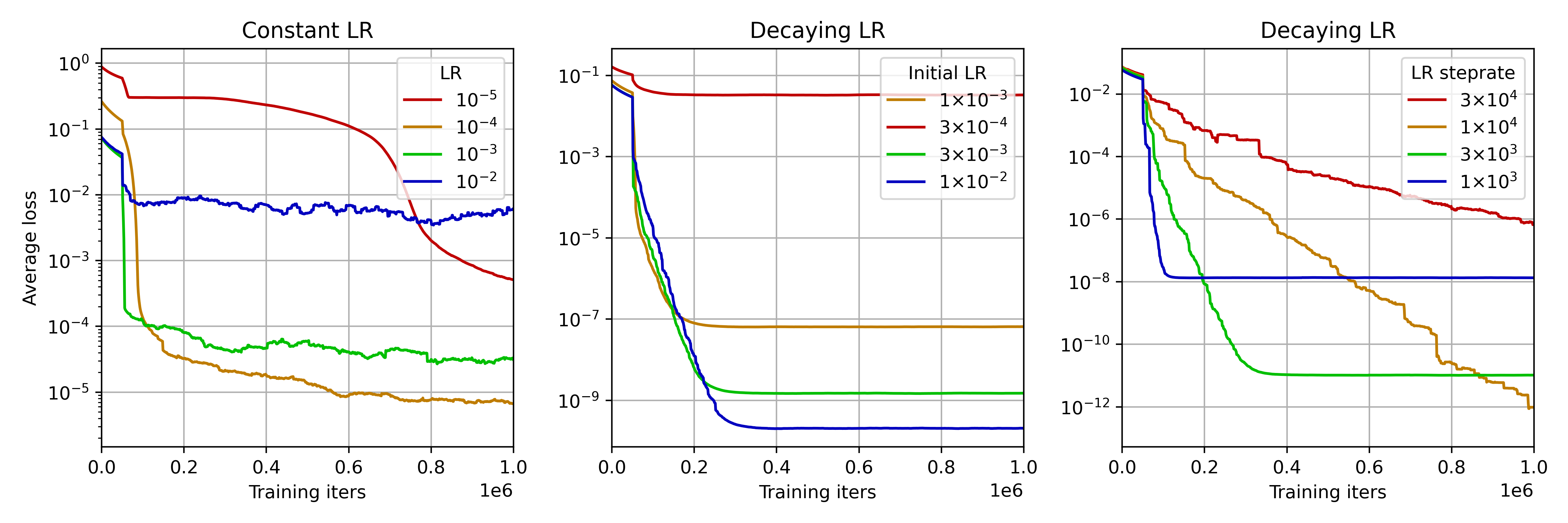}
    \caption{
    Training with Adam on the explicit gradient task, we ablate LR for constant scheduler (left), initial LR (middle) and LR steprate (right) for decaying scheduler.
    }
    \label{fig:ablate_opt_constdecay}
\end{figure*}

\newpage

In Figure~\ref{fig:ablate_opt_ema}, we ablate the effect of applying an EMA over the update vectors from the Adam optimizer. Empirically, we find that this boosts the final MSE by as much as $100,000\times$ on the explicit gradient task.

\begin{figure*}[h!]
    \centering
    \includegraphics[width=0.5\linewidth]{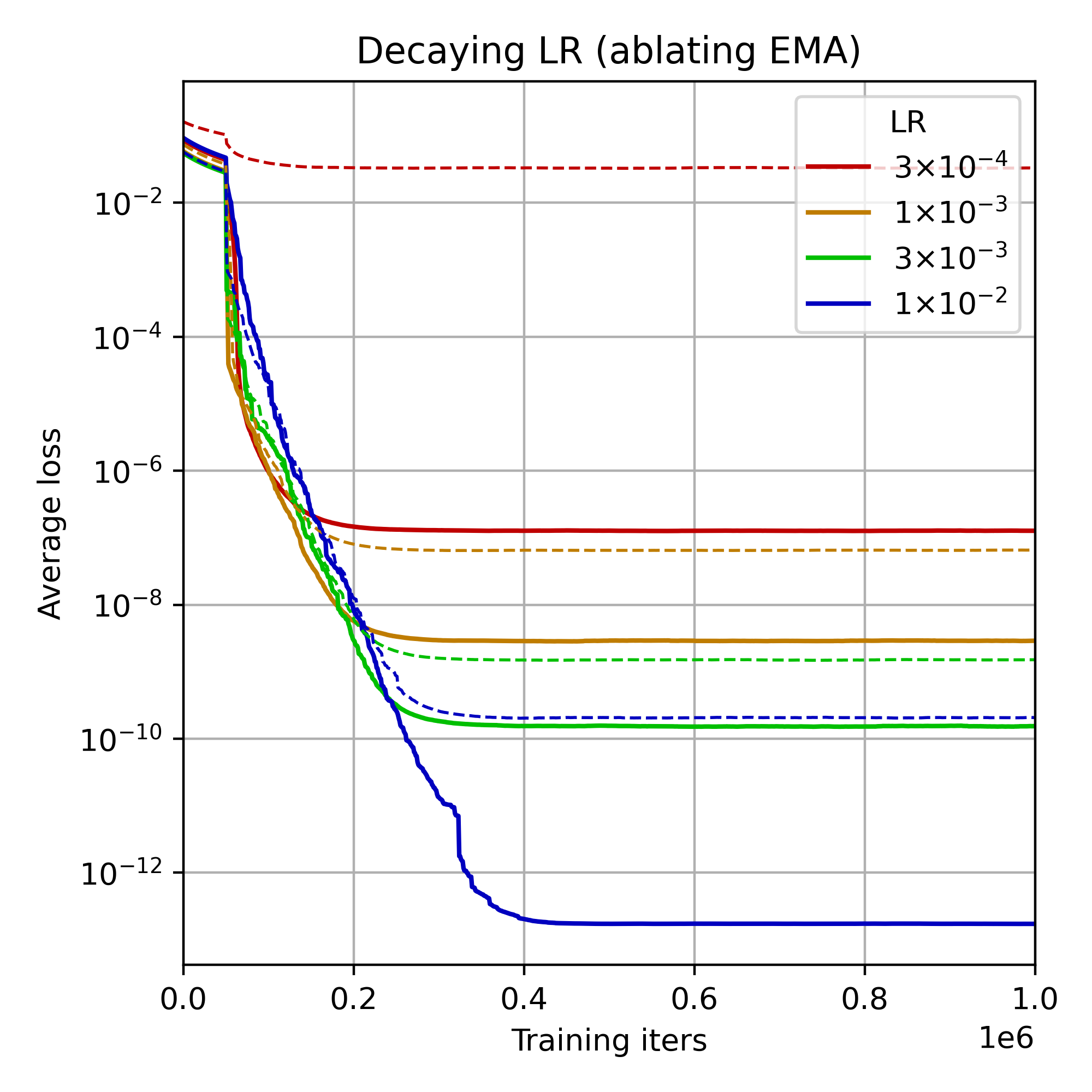}
    \caption{
    Training on the explicit gradient task, applying EMA over Adam's update vectors consistently boosts final MSE, by up to 5 orders of magnitude.
    }
    \label{fig:ablate_opt_ema}
\end{figure*}

In Figure~\ref{fig:ablate_opt_mlps_lns}, we ablate the effect of restoring the MLPs and LayerNorms to \BC~models. Surprisingly, we find that even these architectural components worsen the model's precision: on the explicit gradient task, by a factor of up to $1,000,000\times$ MSE. We note that due to training instability with the \BC+MLP model, we used a less aggressive LR schedule with initial LR $10^{-3}$ and LR steprate $10^4$.

\begin{figure*}[h!]
    \centering
    \includegraphics[width=0.9\linewidth]{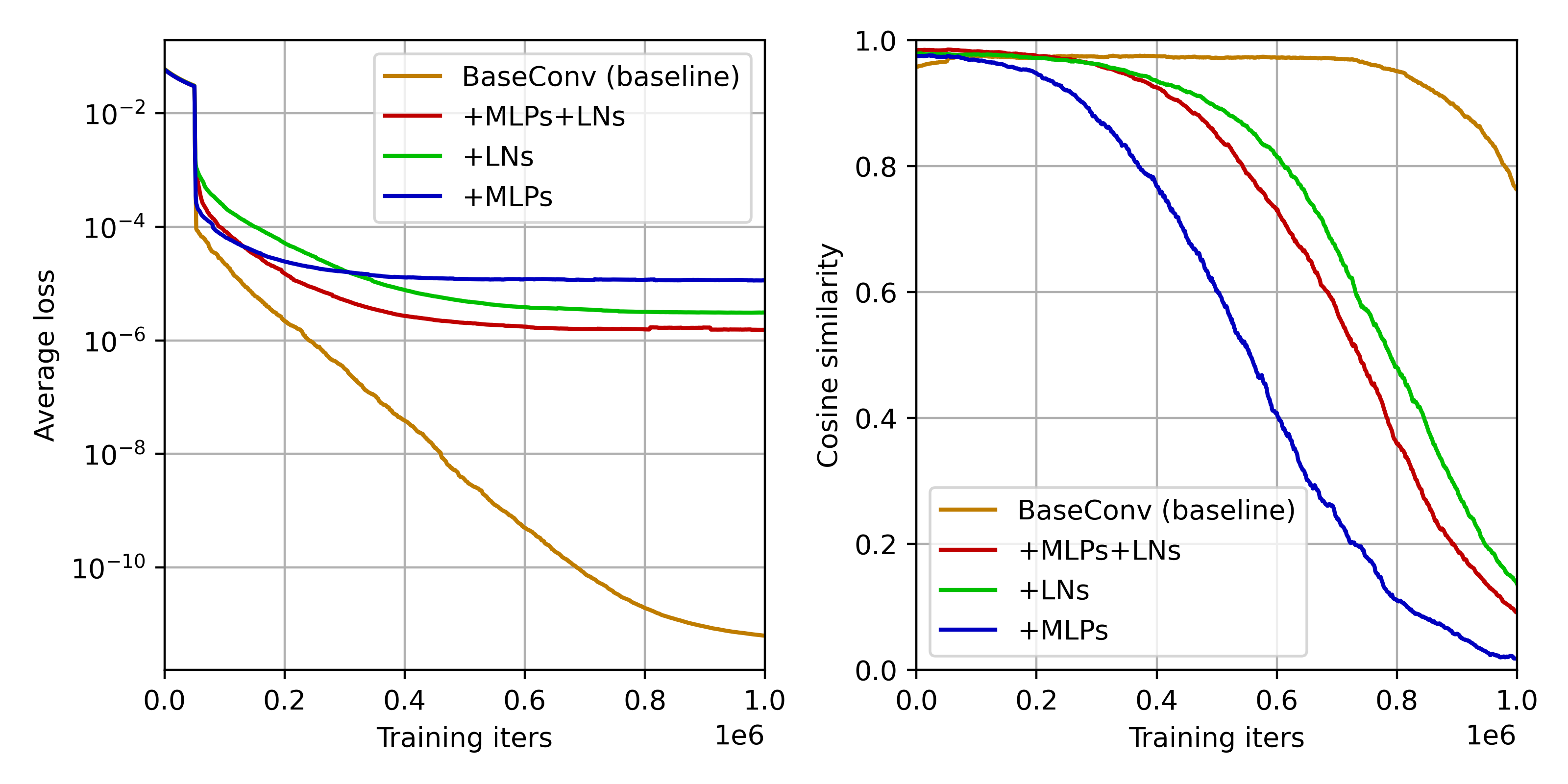}
    \caption{
    Training on the explicit gradient task, adding MLPs and LNs consistently bottlenecks precision: here, by up to 6 orders of magnitude.
    }
    \label{fig:ablate_opt_mlps_lns}
\end{figure*}

\newpage

In Figure~\ref{fig:app_attn_bc_linattn_gd_ood_b}, we evaluate 3-layer \BC~and linear attention models trained on the explicit gradient task. As in Section~\ref{sec:5.2_machine_precision_gd}, we apply them iteratively until convergence. We then evaluate on out-of-distribution regression targets, as in Section~\ref{sec:3.2_transformers_ood}.

We surprisingly find that linear attention demonstrates poor numerical generality, despite training to near machine precision on the training distribution. Beyond $\sigma = 4$, the iterations of linear attention diverge.

This result suggests that although different polynomial architectures may equally be able to \textit{express} algorithms, they may \textit{learn} solutions that exhibit vastly different numerical properties.

\begin{figure*}[h!]
    \centering
    \includegraphics[width=0.5\linewidth]{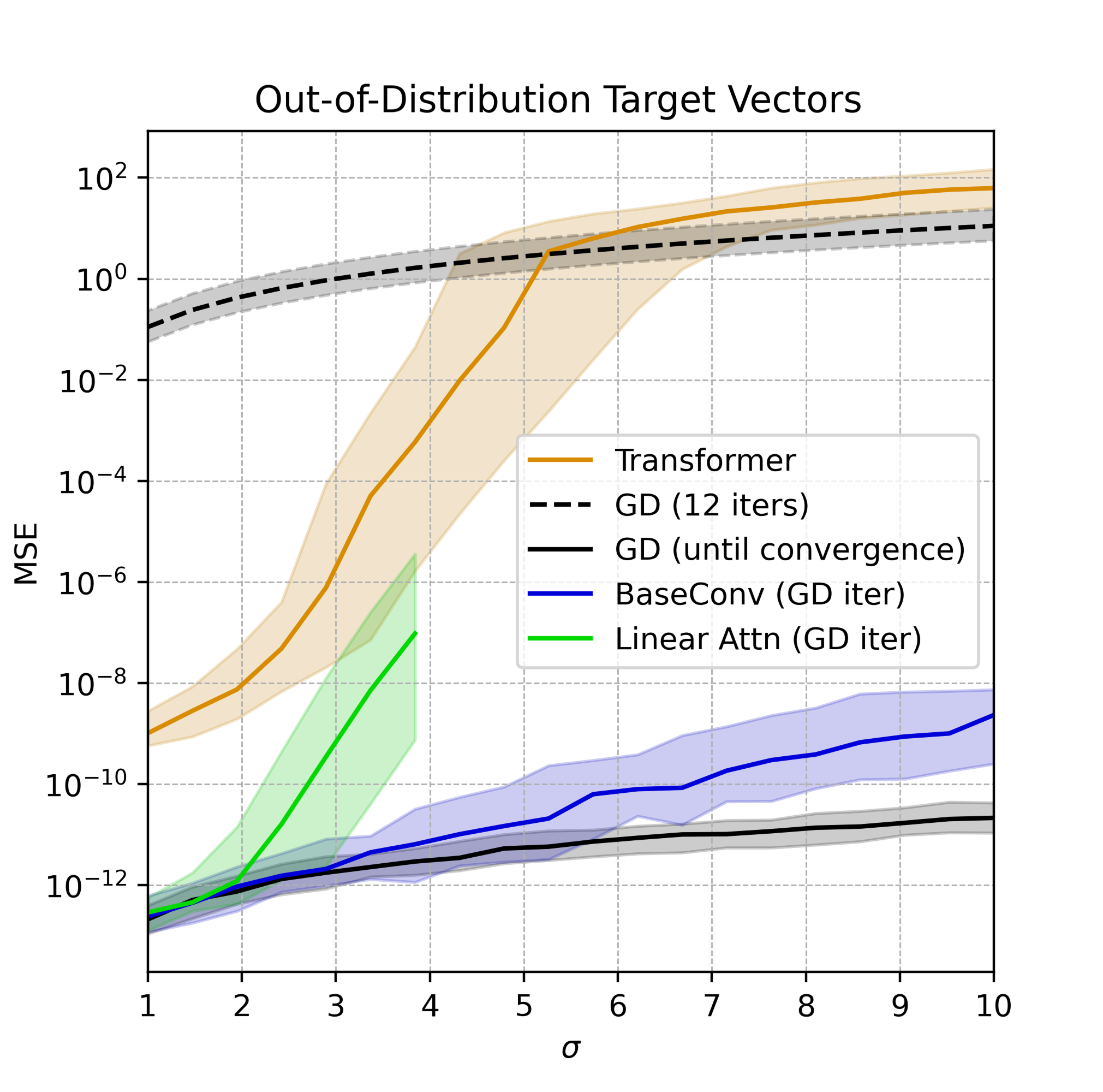}
    \caption{
    While \BC~demonstrates improved numerical generality compared to end-to-end trained Transformers, the generalization gap for linear attention is as bad as the Transformer.
    }
    \label{fig:app_attn_bc_linattn_gd_ood_b}
\end{figure*}

\newpage
\subsection{$k$-th iterate GD}\label{app:k_iterate_experiments}

In this section, we investigate how well our proposed techniques can learn the $k$-th GD iterate tasks as defined in Section~\ref{sec:step-by-step}:
\begin{equation}
    \{(\bm{a}_1, b_1), \hdots, (\bm{a}_\seqLen, b_\seqLen), \bm{x}_0\} \to \bm{x}_k, \, \text{where } \bm{x}_{i+1} = \bm{x}_i - \eta \nabla \mathcal{L}(\bm{x}_i), \, i \in [k-1].
\end{equation}
Recall that $k=1$ is equivalent to the explicit gradient task, while taking $k \to \infty$ is equivalent to the standard in-context least squares task. Here, we are interested in understanding how well our techniques extend to larger $k$, towards learning end-to-end least squares. See Appendix~\ref{app:experiment_details} for a more detailed description of the training setup.

Our theoretical results in Section~\ref{sec:architecture_expressivity} imply that a $k+2$-layer \BC~is expressive enough to solve the $k$-th iterate task to machine precision. Thus we train $k+2$-layer \BC~models on the $k$-th iterate task for $k \geq 1$ using our training recipe.

\begin{table}[h!]
    \centering
    \begin{tabular}{|c||c|c|c|c|}
        \hline
        $k$ & 1 & 2 & 3 & 4 \\ \hline
        MSE & $5.0 \times 10^{-13}$ & $2.5 \times 10^{-11}$ & $2.5 \times 10^{-11}$ & $3.1 \times 10^{-10}$ \\
        \hline
    \end{tabular}
    \caption{We can learn up to $4$ iterations of GD at once with our current training techniques. Model stability becomes a bottleneck with harder tasks.}
    \label{tab:kiter_gd}
\end{table}

Training on the $k$-iter GD task, we find our training recipe scales to $k=4$ before training instability occurs. Adding LayerNorms, we are able to train deeper models, but we find MSE worsens by at least $1,000\times$ for small $k$: see Figure~\ref{fig:ablate_multistep_gd_lns}.

\begin{figure*}[h!]
    \centering
    \includegraphics[width=0.5\linewidth]{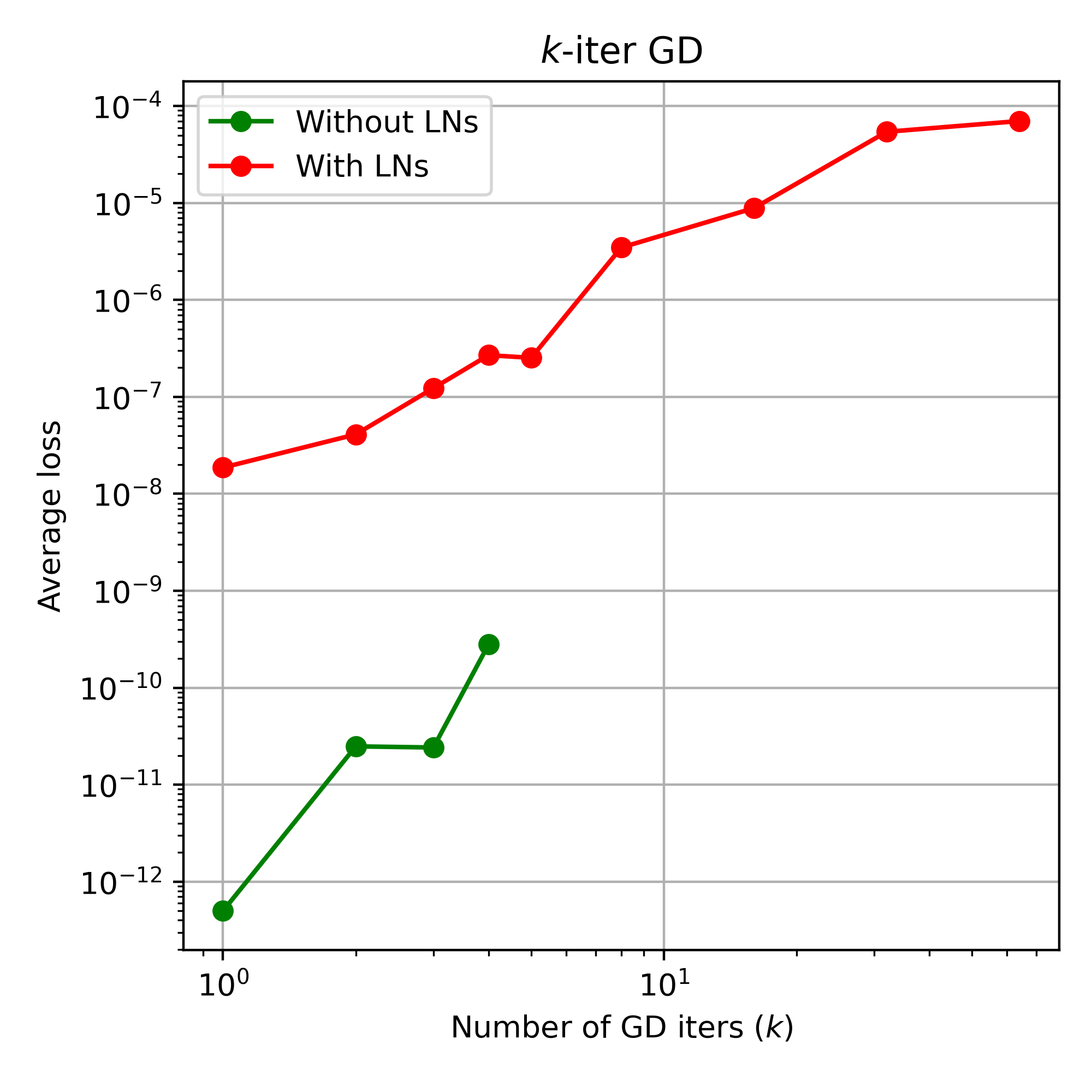}
    \caption{
    \BC~with LayerNorms are able to stably scale to deeper models, but LayerNorms present a precision bottleneck: even on small $k$, MSE degrades by over $1,000\times$.
    }
    \label{fig:ablate_multistep_gd_lns}
\end{figure*}

\newpage

\subsection{In-context ODE solving}\label{app:icl_odes_experiments}

In this section, we demonstrate the generality of our insights on the more practical setting of in-context ODE solving. We note that solving differential equations in-context with Transformers is a framework that has been explored in recent papers~\citep{Yang_2023, herde2024poseidonefficientfoundationmodels, liu2023does}, and thus represents a natural first step towards extending our techniques to realistic scientific ML problems.

\paragraph{Experimental setup.}
We follow the setup from~\cite{liu2023does}:
\begin{itemize}
    \item We train on a distribution of 1D ODEs over $t \in [-1, 1]$, defined by
    \begin{equation}\label{app:eqn_icl_odes_distribution}
        u'(t) = \alpha_1 c(t) + \alpha_2 u(t) + \alpha_3.
    \end{equation}
    For each operator, we provide $25$ in-context examples of forcing functions, initial conditions, and their corresponding solution values at a fixed time $t_{query} \in [-1, 1]$. We then give the model a query forcing function and initial condition, and the goal is to predict the corresponding solution at $t_{query}$.
    \item We sample our parameters $\alpha_1 \sim \text{Unif}([0.5, 1.5])$, $\alpha_2 \sim \text{Unif}([-1, 1])$, $\alpha_3 \sim \text{Unif}([-1, 1])$.
    \item Initial conditions are sampled from $u(0) \sim \text{Unif}([-1, 1])$.
    \item Forcing functions $c(t)$ are sampled from a Gaussian process with RBF kernel $K(x, x') = \exp\left( -\frac{(x-x')^2}{2 \ell^2} \right)$, with length-scale parameter $\ell = 1$. We sample each forcing function on $21$ equispaced points over $[-1, 1]$.
    \item ODEs are solved pseudospectrally on $N=41$ nodes: we find this is sufficient for machine-precision solutions with \textsc{float32} datatype.
\end{itemize}

We find that our observations from least squares transfer to the setting of in-context ODEs:

\paragraph{Transformers struggle to learn precise solutions.}
We find that a 12-layer, 9M parameter Transformer model only achieves $\approx 10^{-4}$ MSE, almost $10^{10} \times$ worse than the threshold \textsc{float32} machine epsilon implies. Furthermore, as with least squares, we observe precision saturation with model size. In Figure~\ref{fig:odes_transformer_depth}, we find that scaling the depth of the model by up to $2 \times$ does not improve precision. We further note that precision saturation already seems to occur with 4-layer Transformers. We hypothesize that the depth at which precision saturation begins is dependent on the task difficulty.

\begin{figure*}[h!]
    \centering
    \includegraphics[width=0.5\linewidth]{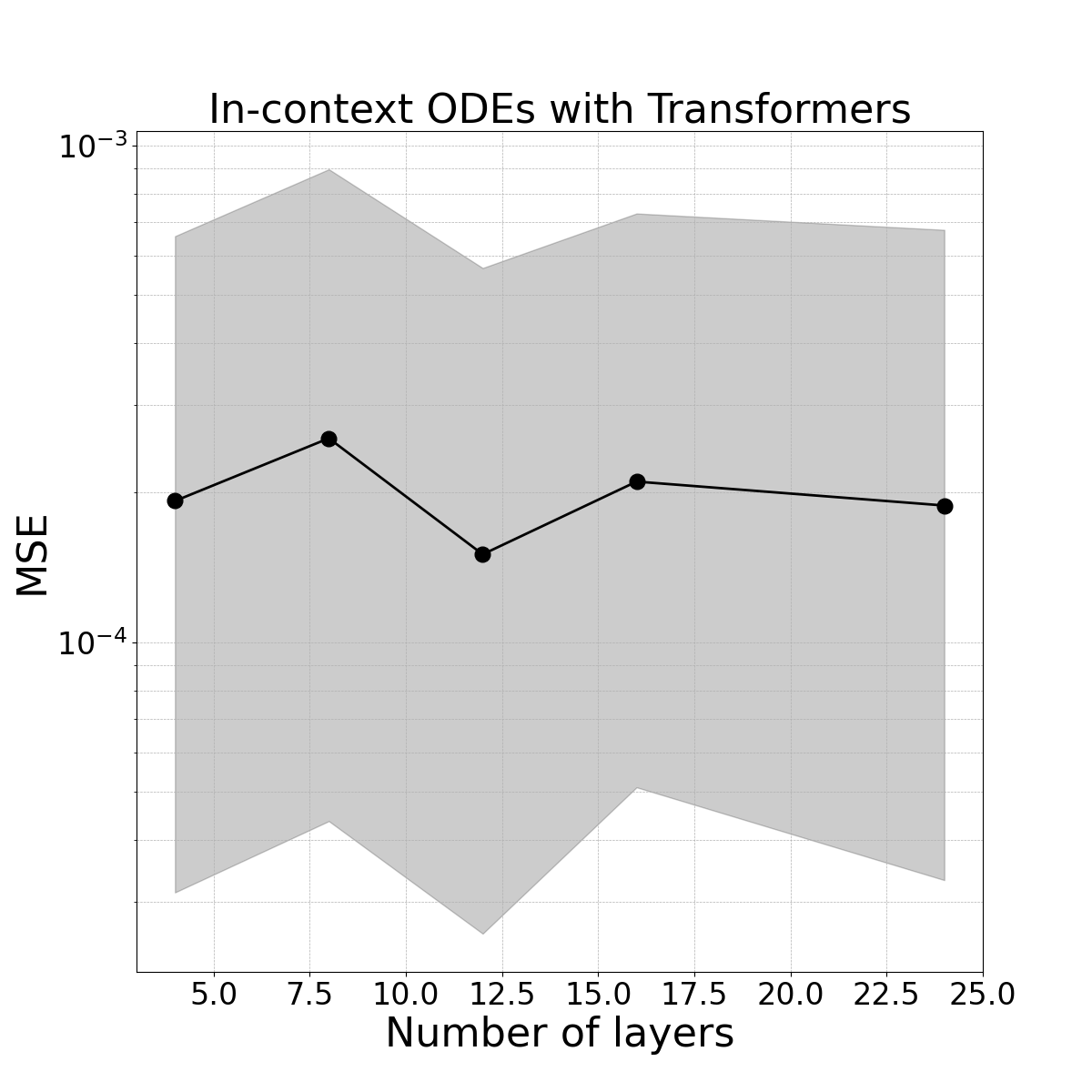}
    \caption{
    Transformers fail to learn precise algorithms for solving ODEs in-context. As with least squares, precision saturates with deeper models: in our experiments, we observe no significant performance boost between $4$-layer and $24$-layer Transformers.
    }
    \label{fig:odes_transformer_depth}
\end{figure*}

\newpage

\paragraph{Transformers exhibit brittle generalization.}
We observe that Transformers are not robust to changes to the distributions of ODE parameters, forcing functions, and initial conditions. We describe our experimental setup below, mirroring~\cite{liu2023does}:
\begin{itemize}
    \item Out-of-distribution ODE parameters. We parameterize out-of-distribution ODEs via a scale parameter $\sigma_{op}$, where $\alpha_1 \sim \text{Unif}([1 - \frac12 \sigma_{op}, 1 + \frac12 \sigma_{op}])$ and $\alpha_2, \alpha_3 \sim \text{Unif}([-\sigma_{op}, \sigma_{op}])$. As we increase $\sigma_{op}$, we sample from a wider distribution of ODE solution operators, including those with larger operator norms and worse-conditioned design matrices.
    \item Out-of-distribution forcing functions. We vary $\ell$, the length parameter of the Gaussian process from which we sample our forcing functions, which effectively controls their smoothness.
    \item Out-of-distribution initial conditions. We sample out-of-distribution initial conditions as $u(0) \sim \text{Unif}([-\sigma_{IC}, \sigma_{IC}])$. As we vary $\sigma_{IC}$, we widen the distribution of the solution values at $t=0$, which increases the overall magnitudes of the solutions.
\end{itemize}
We note that in all out-of-distribution experiments, the Transformer's MSE explodes to near $O(1)$: refer to Figure~\ref{fig:odes_transformer_generalization}.

\begin{figure}[t]
    \centering
    \begin{subfigure}[b]{0.32\textwidth}
        \centering
        \includegraphics[width=\textwidth]{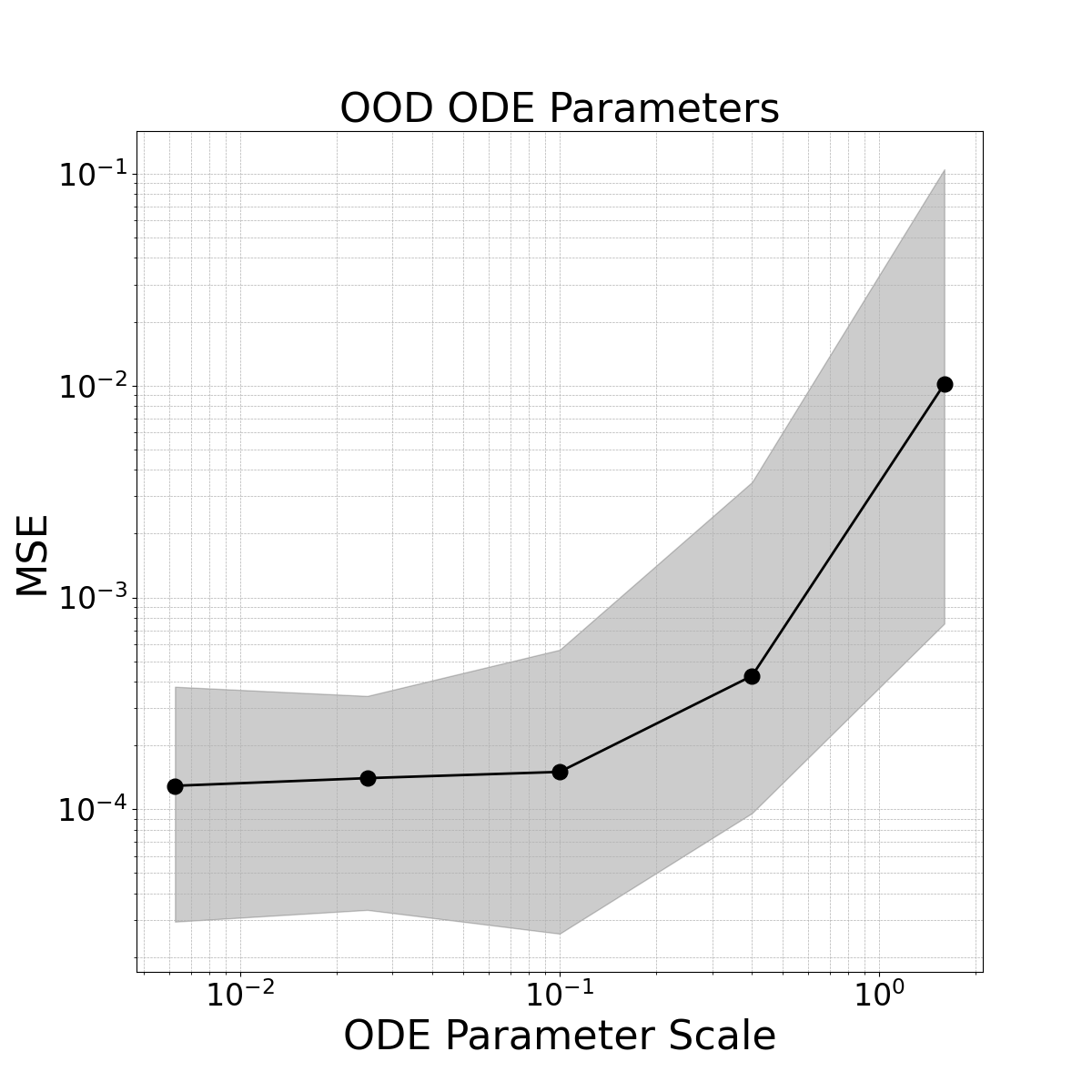}
    \end{subfigure}
    \hfill
    \begin{subfigure}[b]{0.32\textwidth}
        \centering
        \includegraphics[width=\textwidth]{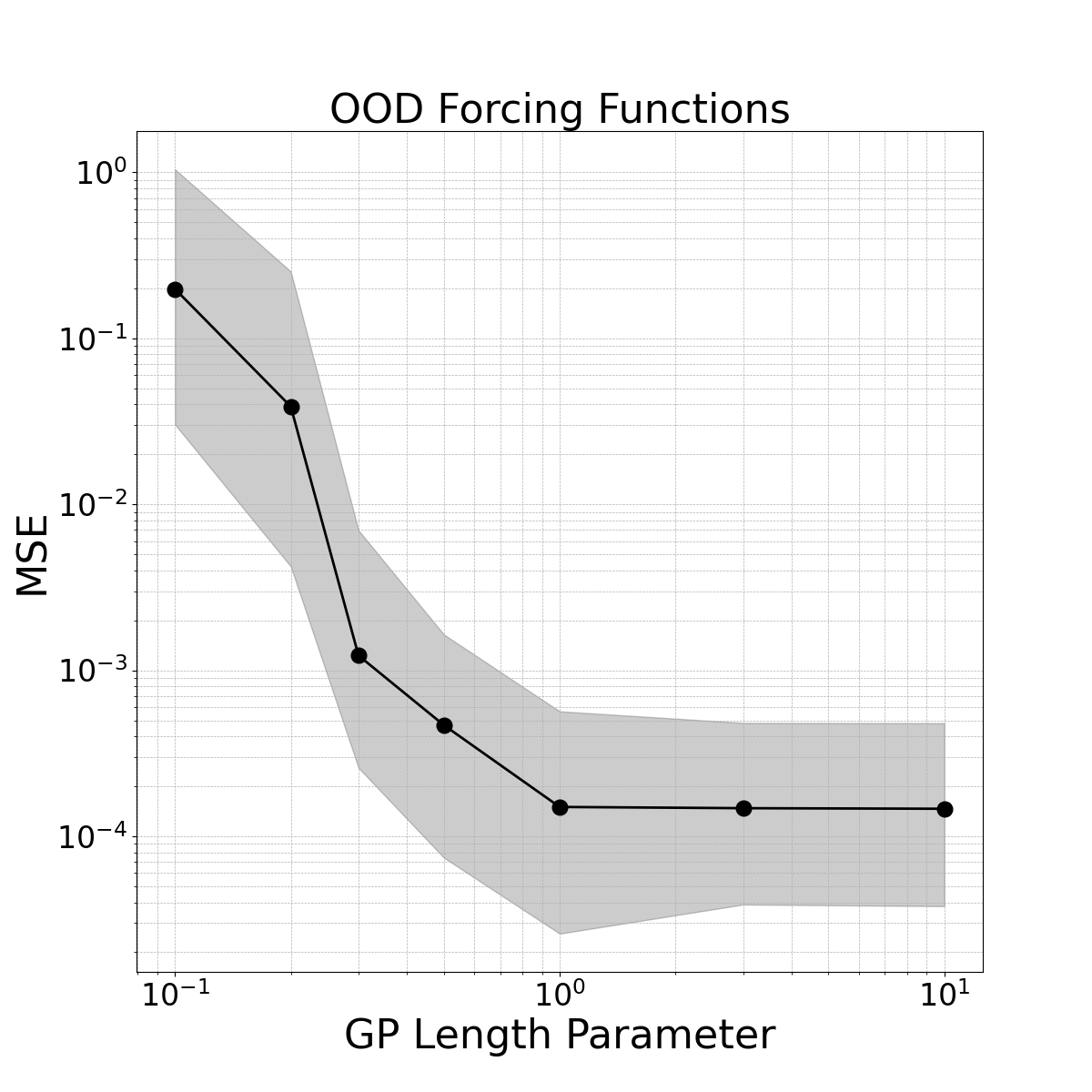}
    \end{subfigure}
    \hfill
    \begin{subfigure}[b]{0.32\textwidth}
        \centering
        \includegraphics[width=\textwidth]{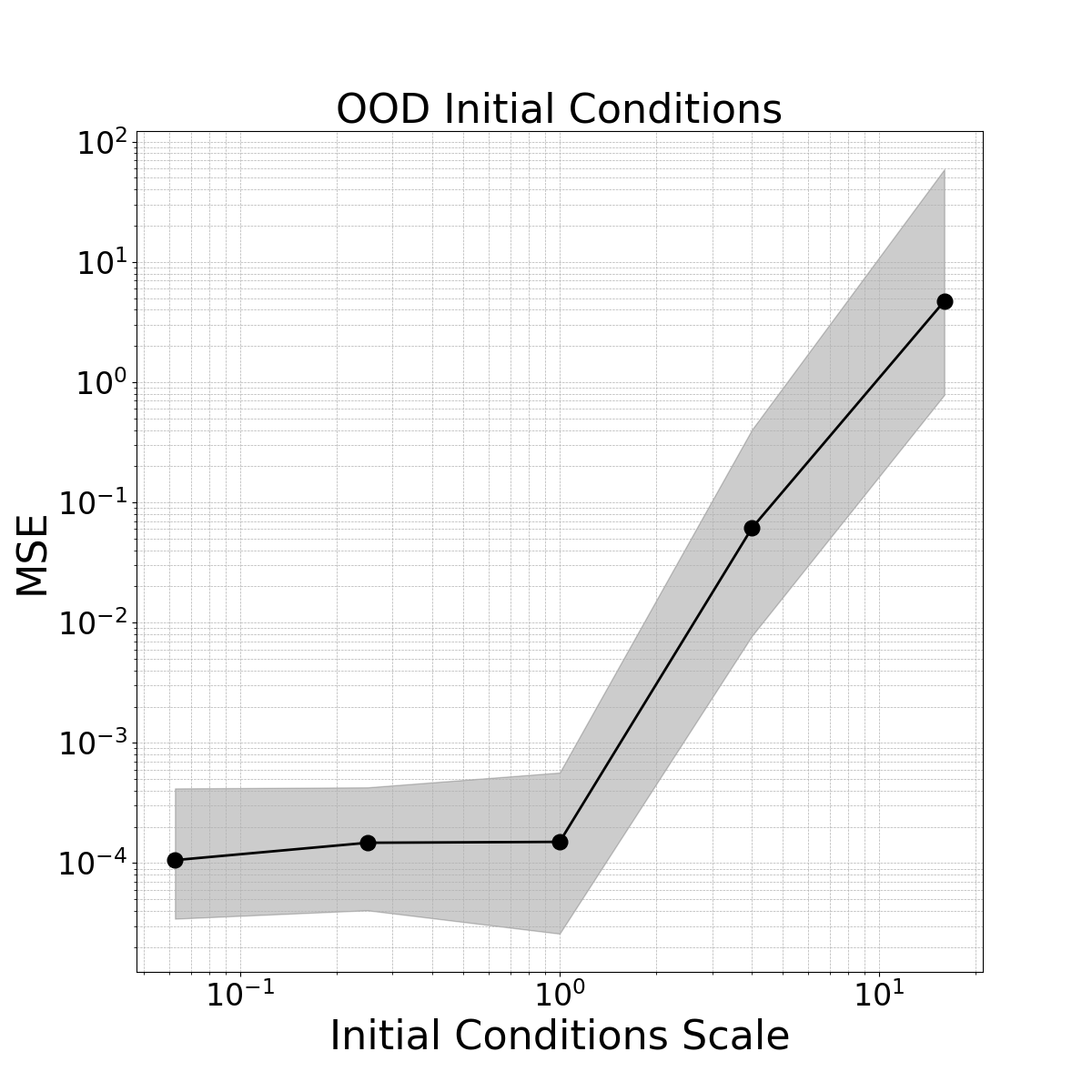}
    \end{subfigure}
    \caption{Transformers fail to learn numerically general solutions: performance is brittle to out-of-distribution ODE parameters (left), forcing function smoothness (middle), and initial condition distribution (right).}
    \label{fig:odes_transformer_generalization}
\end{figure}

\paragraph{Our proposed techniques obtain precise and general solutions.}
\cite{liu2023does} shows that in-context ODEs can be reduced to solving least squares problems. Thus, we train a 3-layer \BC~architecture on the explicit gradient task for the equivalent least squares problem, and apply our model iteratively, as in Section~\ref{sec:step-by-step}. We compare the performance of our iterative model with end-to-end Transformers, least squares solvers, and standard gradient descent applied to the equivalent least squares problem.

We note that our ODEs reduce to least squares problems that are ill-conditioned. In this set of experiments, we find the condition numbers of our design matrices are $O(10^8)$. Since the theoretical convergence rate of gradient descent on least squares depends inversely on the condition number~\citep{boyd2004convex}, we expect our iterative models and standard gradient descent will require orders of magnitude more iterations than in the least squares problems of Section~\ref{sec:step-by-step}. As such, we limit the number of iterations for our \BC~model and standard gradient descent to $10,000$. Nonetheless, we find that our \BC~model learns to high enough precision that we are able to maintain the stability of the iterative algorithm for up to $O(10^5)$ steps. In our experiments, we iteratively apply our \BC~model until convergence to a fixed point and report final MSEs.

We find that \BC~learns a precise and general algorithm for in-context ODEs:
\begin{itemize}
    \item \textbf{Precision.} In Figure~\ref{fig:odes_indistribution}, we show that our \BC~model, applied iteratively, converges to about $10^{-10}$ MSE, $1,000,000\times$ higher precision than our best Transformers.
    \item \textbf{Generality.} In Figure~\ref{fig:odes_all_generalization}, we find that our \BC~model exhibits more robust generalization than the Transformer model: in all the out-of-distribution settings we test, our \BC~model achieves higher precision than Transformers in-distribution. Like above, we evaluate on out-of-distribution ODE parameters, forcing functions, and initial conditions. In particular, we note that the performance of our \BC~model almost exactly matches proper gradient descent, even in out-of-distribution settings. Additionally, we find the generalization behavior of our \BC~model matches the generalization of a proper least squares solver with preconditioning, except for out-of-distribution initial conditions, where we note that the iterative procedure suffers from slow convergence and times out at $10,000$ iterations.
\end{itemize}
We believe these preliminary results show the promise of our techniques towards learning numerical algorithms for more complex tasks, such as solving PDEs, directly from data.

\begin{figure*}[h!]
    \centering
    \includegraphics[width=0.7\linewidth]{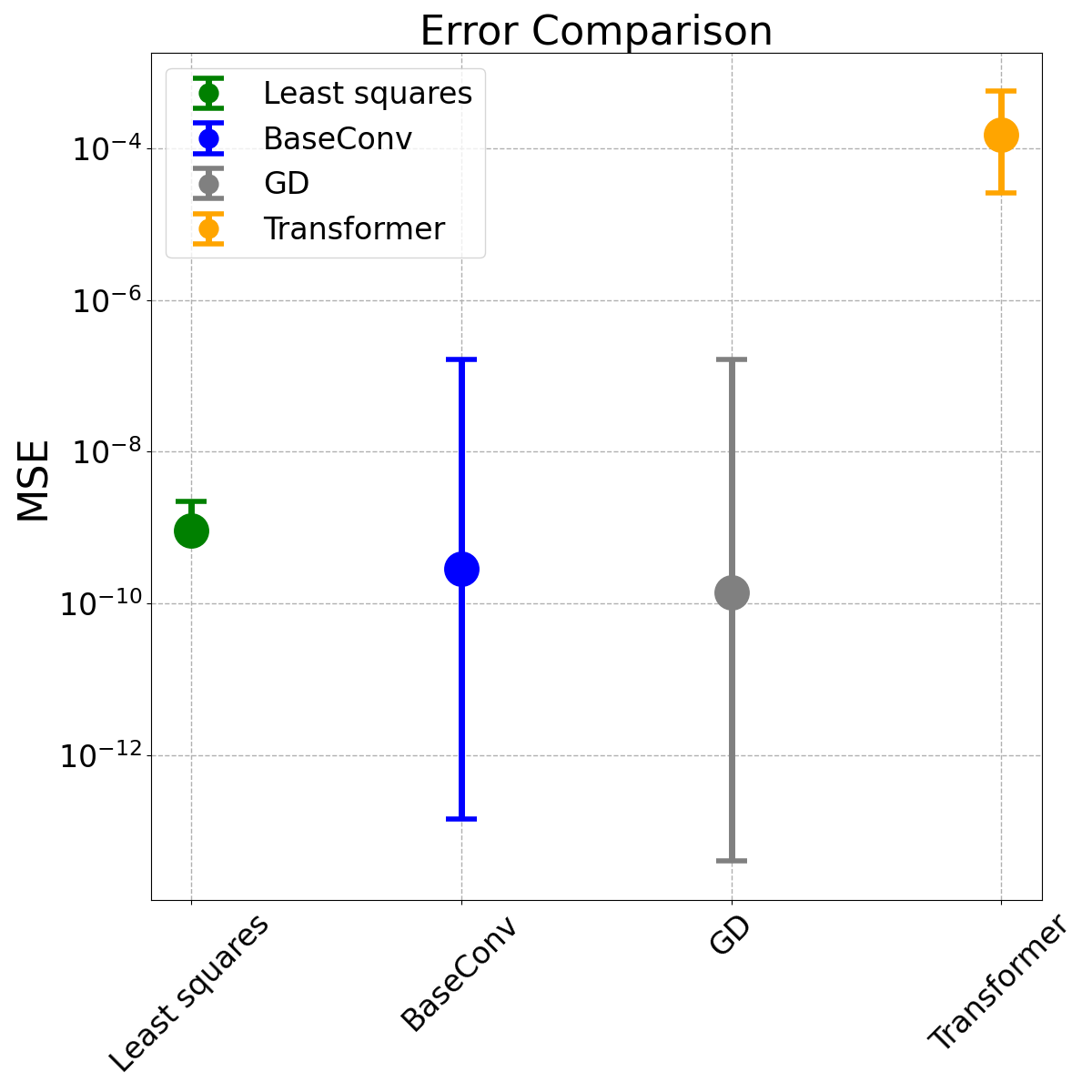}
    \caption{
    In-distribution error comparison between Transformer, \BC, gradient descent, and least squares.
    }
    \label{fig:odes_indistribution}
\end{figure*}

\begin{figure}[h!]
    \centering
    \begin{subfigure}[b]{0.32\textwidth}
        \centering
        \includegraphics[width=\textwidth]{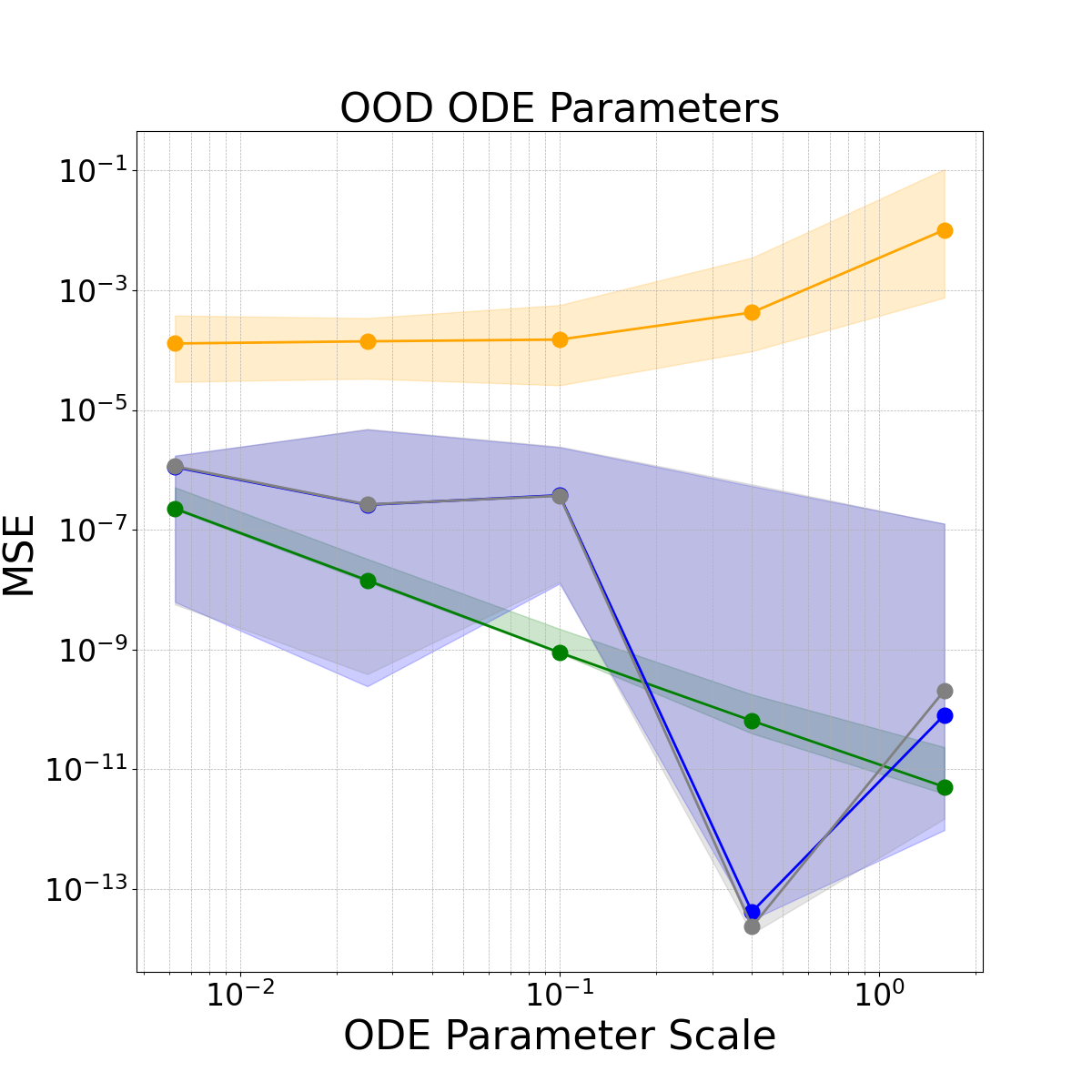}
    \end{subfigure}
    \hfill
    \begin{subfigure}[b]{0.32\textwidth}
        \centering
        \includegraphics[width=\textwidth]{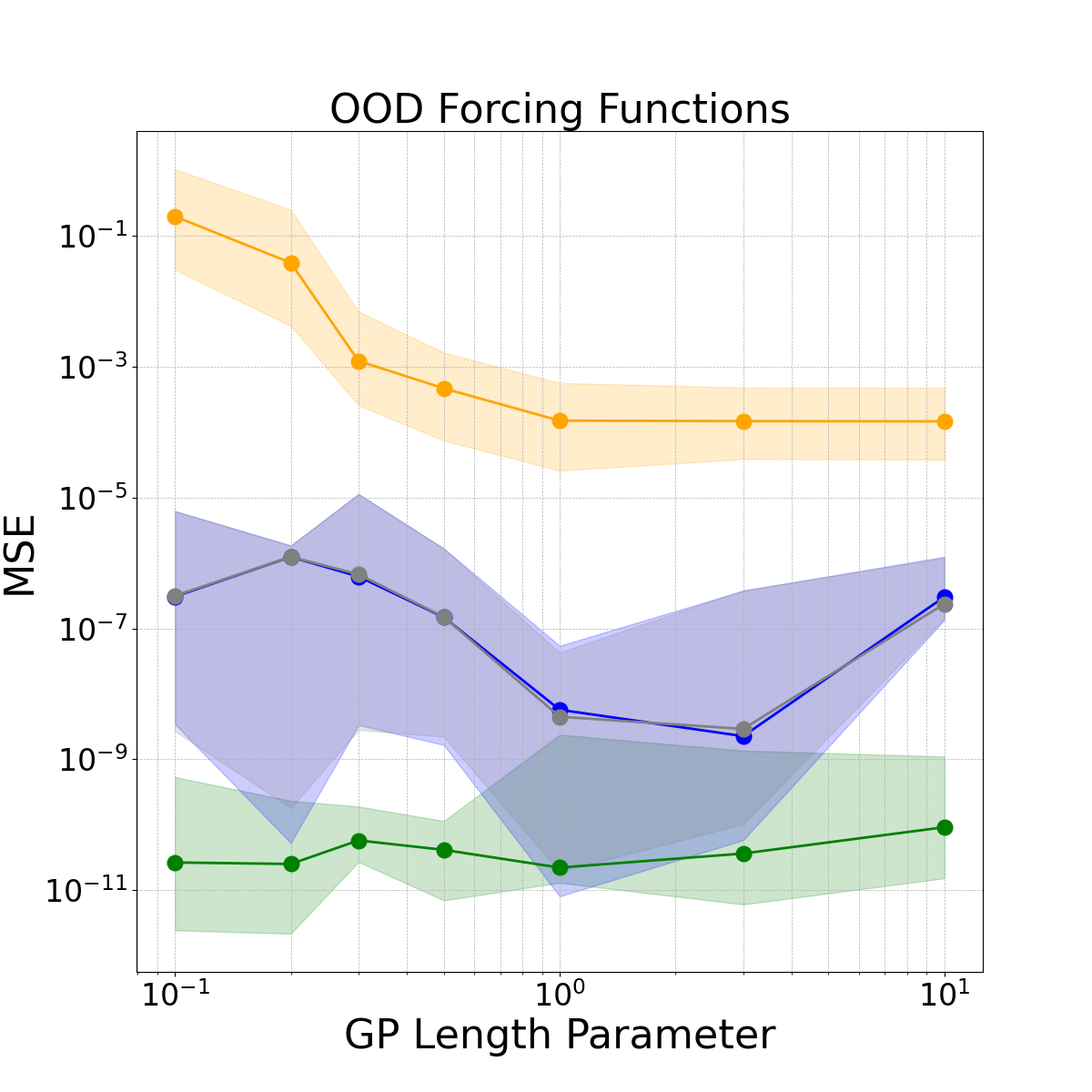}
    \end{subfigure}
    \hfill
    \begin{subfigure}[b]{0.32\textwidth}
        \centering
        \includegraphics[width=\textwidth]{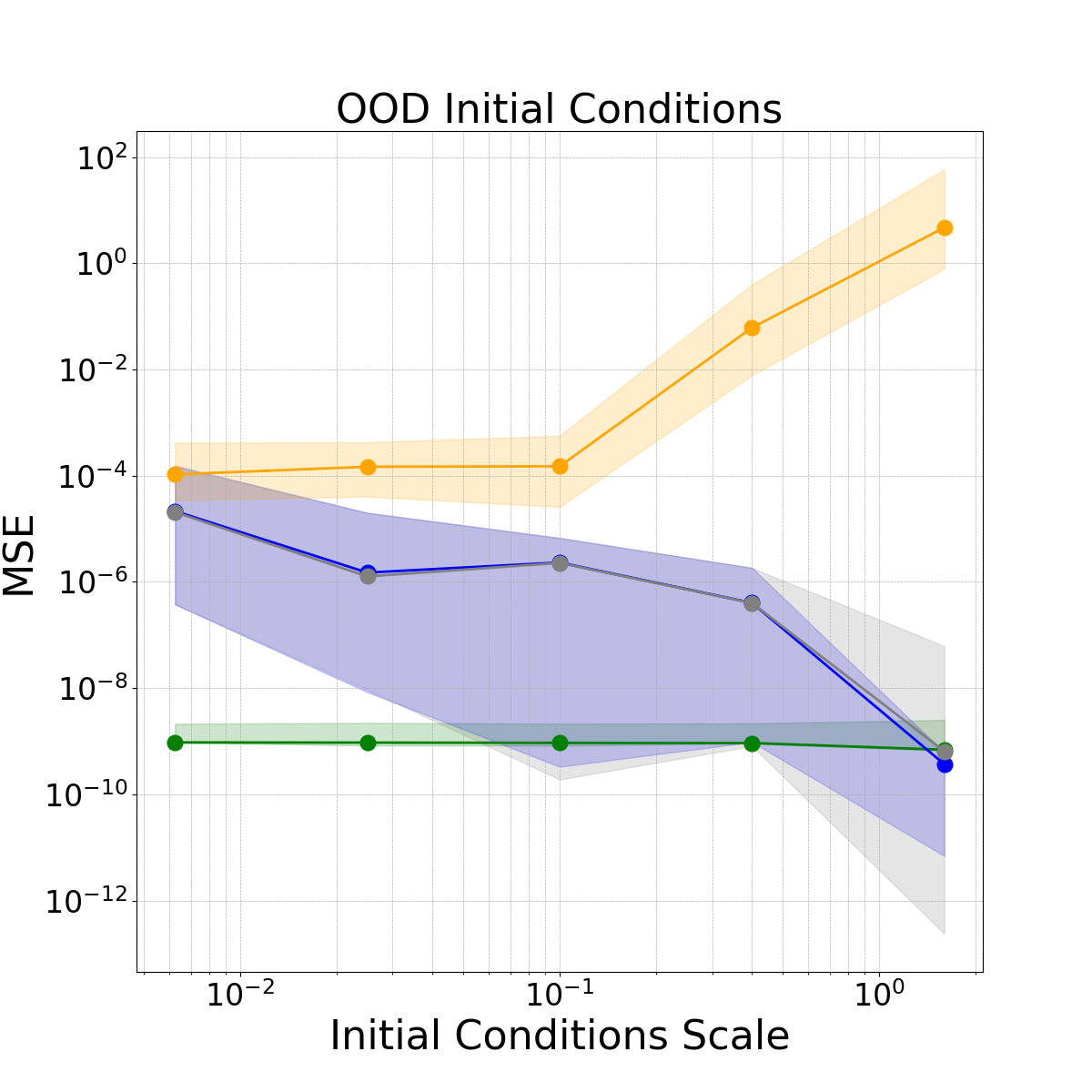}
    \end{subfigure}
    \caption{Out-of-distribution error comparison between Transformer (orange), \BC~(blue), gradient descent (gray), and least squares (green): we evaluate out-of-distribution ODE parameters (left), forcing function smoothness (middle), and initial condition distribution (right). \BC~learns a numerically general algorithm that closely matches proper gradient descent and least squares.}
    \label{fig:odes_all_generalization}
\end{figure}

\newpage

\newpage 
\section{Theoretical results}
\label{app:theory}
This section is organized as follows:
\begin{itemize}
    \item We detail notation and definitions in Appendix~\ref{app:theory_notation}.
    \item In Appendix~\ref{app:linalg_algos_construction}, we include theoretical results regarding the primitives from \Cref{sec:3.3_linalg_primitives}: expressivity results with \BC~and attention, and iterative algorithms as compositions of primitives.
    \item In Appendix~\ref{app:baseconv_constructions}, we discuss upper and lower bounds for implementing gradient descent on least squares using \BC, supplementing \Cref{sec:convs_theory}.
    \item In Appendix~\ref{app:jackson's}, we provide theoretical details regarding the universal function approximation properties of \BC.
    \item Finally, in Appendix~\ref{app: zero_gradients}, we provide technical details about the claims from Section~\ref{sec:convs_theory} that \BC~can perfectly recover \textsc{Square} and \textsc{Linear}.
\end{itemize}

\subsection{Notation}\label{app:theory_notation}

We heavily borrow notation from Appendix H of \cite{zoology}, which we recollect below. 
We denote the all \(1\) row vector of size $k$, given by \(\begin{bmatrix}1&1&\ldots&1&1\end{bmatrix}\),  and the all \(0\) row vector of size $k$, given by \(\begin{bmatrix}0&0&\ldots&0& 0\end{bmatrix}\), as \(\bm{1}^{k}\) and \(\bm{0}^{k}\), respectively. 
We also construe the standard basis vector $\mathbf{e}_i$ as a column vector in this appendix, and adhere to the following matrix indexing convention: $\tbf{M}[i,j]$ is the entry in the $i$th row and the $j$th column,  $\tbf{M}[i,:] \in \F^{1 \times n}$ denotes the $i$th row, and $\tbf{M}[:,j] \in \F^{m \times 1}$ denotes the $j$th column of $\tbf{M} \in \F^{m \times n}$, where $\F$ is a field (the reader can assume that $\F$ is the field of real numbers i.e. $\F=\R$). We then use $\bm{1}^{m \times n}, \mathbf{0}^{m \times n} \in \F^{m \times n}$ to denote the matrix of all $1$s and $0$s, respectively. We note that some notation differs from those used in earlier sections.

Next, we denote the {\em Hadamard product} of vectors $\tbf{u}, \tbf{v} \in \F^n$ as $\tbf{u}\odot \tbf{v}$; the operation can be extended to matrices by applying the Hadamard product column-wise across the matrices. This is commonly referred to as {\em(element-wise) gating}. For vectors $\tbf{u}, \tbf{v} \in \F^n$, we also denote their {\em linear (or acyclic) convolution} as $\tbf{u} \ast \tbf{v}$ and {\em cyclic convolution} as $\tbf{u} \circledast \tbf{v}$.

\paragraph{Polynomial Notation.} 

Since convolution is equivalent to operations on polynomials, it is convenient to use them to discuss the inputs and outputs of gated convolution models. Let us define maps $\poly : \F^n \to \F[X]/(X^{n}) $ such that
\[
\begin{aligned}
\poly(\bm{u}) &= \sum_{i=0}^{n-1} \bm{u}[i]X^{i}.
\end{aligned}
\]

This allows us to map between vectors and polynomial.
Accordingly, we also define $\coeff:\F[X]/(X^{n+1}) \to \F^n$ as the map converting polynomials back to vectors:
\(
 \coeff(\bm{u}(X)) = \bm{u} 
\)
with $\bm{u}[i]$ defined as the coefficient in $\bm{u}(X)$ at degree $i$.

These operations allow us to interpret the convolution of vectors  in terms of polynomial multiplication~\citep{heideman}. More specifically, we have 
\[
\bm{u} \ast \bm{v} = \coeff\paren{\bm{u}(X) \cdot \bm{v}(X) \mod{X^{n}}}
\]

The following notation for a polynomial will be used in this section:
\begin{definition}
    A polynomial $P(X)$ with degree $\degree$ and some coefficients $\vc \in \R^{\degree+1}$ is defined as, \[
    P(X) = \sum_{i=0}^{\degree}c_iX^i.
    \]
\end{definition}
Further, the degree of $P(X)$ will be denoted as $\deg(P)$.

\paragraph{Function Approximation.} In this part, we collect notation and known results about function approximation. We will reference some definitions from \cite{multivar-jackson-inequality,jacksons-theorem-notes}. 

The following notation is to denote the $k$th derivative of a function:
\begin{definition}
    For some function $f:\R\to\R$, $f^{(k)}:=\frac{d^{k}}{dx^{k}}f(x)$ is the $k$th derivative of $f$.
\end{definition}

Define a set of univariate functions with a notion of continuity:
\begin{definition}
    We denote $C^k[a,b]$ for $k=1,2,\dots$ the space of univariate functions $f:[a,b]\to\R$, which have derivatives $f^{(1)},\dots,f^{(k)}$ that are continuous on the closed interval $[a,b]$.
\end{definition}

Next we define a set of multivariate functions with a notion of continuity:
\begin{definition}
    A function $f:[a,b]^{n} \to \R$ is in $C^{k}[a,b]^n$ for $k={1,2,\dots}$ if all partial derivatives\[
    \frac{\partial ^{\alpha}}{\partial x_1^{\alpha_1} \partial x_2^{\alpha_2} \cdots\partial x_n^{\alpha_n}}f(y_1,y_2,\dots,y_n)
    \] exist and are continuous, for every $\alpha_1, \alpha_2,\dots,\alpha_n \in\Z_{\geq0}$, such that \(\alpha_1+\alpha_2+\cdots+\alpha_n \leq k\) and every $\paren{y_1,\dots y_n}\in[a,b]^n$.
\end{definition}
We use the following notation for the set of all univariate polynomials:
\begin{definition}
    For any integer $d\geq0$,  we define \[
    \calP_\degree(X) = \{c_0 + c_1X + \cdots +c_\degree X^\degree | c_k\in\R\}.
    \]
    In other words, $P_\degree(X)$ is the space of univariate polynomials of degree less or equal to $\degree$.
\end{definition}

We use the following notation for multivariate polynomials:
\begin{definition}
\label{def:multivar-polynomial}
    For any integers $n,d\geq0$ ,  we define
    \[
    \calP^{n}_\degree(X_1,\dots ,X_n) = 
    \Biggl\{
    \sum_{{\bm\alpha = \paren{\alpha_1,\dots,\alpha_n}\in\Z_{\geq0}^n}}c_\alpha  X_1^{\alpha_1} X_2^{\alpha_2} \cdots X_n^{\alpha_n}
    \Bigg\vert c_{\alpha} \in\R , \sum_{i=0}^{n} \alpha_i \leq \degree
    \Biggl\} .
    \]
    
     Then $\calP^{n}_\degree(X_1,\dots X_n)$ is the space of $n$-variate polynomials of degree less or equal to $\degree$.
\end{definition}

The following notation is for considering the pointwise absolute value of a matrix:
\begin{definition}
\label{def:pointwise-error-matrices}
    For $\mM\in\R^{\size}$ define, \[
    \infnorm{\mM} = \substack{\max \\ 0\leq i < \seqLen \\ 0 \leq j < \hiddenDim} \,\abs{M[i,j]}.
    \]
\end{definition}

Now lets define the corresponding $\infty-$norm for functions:
\begin{definition}
    For $g:[-1,1]^{\size}\to\R^{\size}$, define
    \[
    \infnorm{g} = \substack{\max \\ \vx\in [-1,1]^{\size}} \abs{g(\vx)}.
    \]
\end{definition}

We will use the following version of Jackson's theorem for univariate inputs:
\begin{theorem}[\cite{jackson} Jackson's Theorem for $C^k{[-1,1]}$.]
\label{thm:jacksonsTheorem}
    Let $\degree,k$ be integers with $\degree + 1 \geq k \geq 0$ and $f \in C^k[-1,1]$. 
    Then
    \begin{equation}
    \label{eq:univar_jacksons_theorem}
    \substack{\inf \\ P\in \calP_\degree}\infnorm{f-P} \leq \paren{\frac{\pi}{2}}^k \frac{1}{(\degree+1)\degree \cdots (\degree-k+2)}\infnorm{f^{(k)}} .     
    \end{equation}
\end{theorem}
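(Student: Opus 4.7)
The plan is to prove Jackson's theorem by reducing polynomial approximation on $[-1,1]$ to trigonometric approximation on the circle, where Fourier-analytic tools are available, and then to induct on $k$ in the trigonometric setting. Setting $g(\theta) := f(\cos\theta)$, approximation of $f$ in $\calP_\degree$ corresponds to approximation of $g$ by even trigonometric polynomials of degree at most $\degree$, since $\cos(j\theta)$ is the degree-$j$ Chebyshev polynomial in $\cos\theta$; moreover, supremum norms on $[-1,1]$ and on the $\theta$-image interval coincide. So it suffices to prove the trigonometric Jackson inequality $\infnorm{g - T} \leq (\pi/2)^k [(\degree{+}1)\degree \cdots (\degree{-}k{+}2)]^{-1} \infnorm{g^{(k)}}$ for some trigonometric $T$ of degree at most $\degree$, and translate back.

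For the trigonometric bound I would induct on $k$. The base case $k=0$ is vacuous (take $T \equiv 0$; the empty product is $1$). The case $k=1$ is the heart of the argument: construct the approximant by convolving $g$ with a degree-$\degree$ kernel whose Fourier coefficients are tuned so that a single integration by parts against $g'$ yields exactly the sharp Favard constant $\pi / 2$ in the numerator and $\degree+1$ in the denominator. For the inductive step from $k-1$ to $k$, apply the $(k{-}1)$-th case of the theorem to $g'$ to obtain a degree-$(\degree{-}1)$ trigonometric polynomial $Q_{\degree-1}$ with $\infnorm{g' - Q_{\degree-1}}$ controlled by $(\pi/2)^{k-1}[(\degree)(\degree{-}1)\cdots(\degree{-}k{+}2)]^{-1}\infnorm{g^{(k)}}$; then antidifferentiate $Q_{\degree - 1}$ coefficient-wise to obtain a degree-$\degree$ trigonometric polynomial approximating $g$, and combine with the $k=1$ estimate. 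Each antidifferentiation step contributes exactly one additional factor of $\pi/2$ and one factor of $1/(\degree - j + 1)$, which multiply into the stated product.

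The main obstacle is twofold. First, extracting the \emph{sharp} constant $\pi/2$ at $k=1$ (rather than a weaker $C/\degree$ from, say, a Fejér-kernel estimate) requires identifying the extremal function for the problem — a sawtooth-type periodic function whose best degree-$\degree$ trigonometric approximation can be computed explicitly from its Fourier expansion — and matching the upper-bound construction to this extremal behavior; this is the technical core of Jackson's original argument. Second, the reduction via $g(\theta) = f(\cos\theta)$ is more subtle than a naive change of variables, because while $\infnorm{f} = \infnorm{g}$, higher derivatives do not match: $g^{(k)}$ is a sum of terms $f^{(j)}(\cos\theta)$ weighted by polynomials in $\sin\theta,\cos\theta$. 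Handling this cleanly requires either working directly in the algebraic setting with Chebyshev-weighted analogues of the Jackson kernel on $[-1,1]$, or carefully bounding $\infnorm{g^{(k)}}$ in terms of $\infnorm{f^{(k)}}$ and absorbing the extra combinatorial factors into the constants; once this is done, tracking degrees through the induction and collecting constants is bookkeeping.
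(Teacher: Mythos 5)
The paper does not actually prove this statement; it is imported verbatim from \cite{jackson} as a classical black box, so there is no internal proof to compare against. Judged on its own terms, your proposal has the right family of ideas (transfer to the circle via $x=\cos\theta$, a sharp Favard-type $C^1$ estimate, induction on $k$) but sets up the induction in the wrong place, and this is a genuine gap rather than a presentational one.

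The problem is that you induct on $k$ on the trigonometric side and transfer to $[-1,1]$ at the end. That final transfer needs $\infnorm{g^{(k)}}\leq\infnorm{f^{(k)}}$ for $g(\theta)=f(\cos\theta)$, which holds for $k=1$ (since $g'=-f'(\cos\theta)\sin\theta$) but fails for $k\geq 2$: already $g''=f''(\cos\theta)\sin^2\theta-f'(\cos\theta)\cos\theta$ drags in $\infnorm{f'}$. You flag this yourself, but neither of your proposed fixes closes the gap. ``Absorbing the extra combinatorial factors into the constants'' is not available here, because the claimed bound has the exact constant $(\pi/2)^k$ times the exact falling product $(\degree+1)\degree\cdots(\degree-k+2)$ — there is no slack. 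And the trigonometric iteration itself would naturally produce $(\degree+1)^{-k}$ rather than the falling product, besides requiring a mean-zero condition to antidifferentiate periodically. The standard repair is to invert the order of operations: use the circle only once, to prove the one-derivative algebraic lemma $\inf_{P\in\calP_\degree}\infnorm{h-P}\leq\frac{\pi}{2(\degree+1)}\infnorm{h'}$, and then run the induction entirely on the algebraic side via
\begin{equation*}
E_\degree(f)\;\leq\;\frac{\pi}{2(\degree+1)}\,E_{\degree-1}(f'),
\end{equation*}
obtained by taking the best approximant $p\in\calP_{\degree-1}$ to $f'$, antidifferentiating it to $P\in\calP_\degree$, and applying the $C^1$ lemma to $f-P$. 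Iterating $k$ times drops the degree by one at each step — which is exactly where the decreasing factors $(\degree+1),\degree,\dots,(\degree-k+2)$ come from — and the final step $E_{\degree-k}(f^{(k)})\leq\infnorm{f^{(k)}}$ (approximate by zero) finishes the bound. Your sketch as written does not recover this structure, so the constant you would actually obtain for $k\geq 2$ is not the one in the statement.
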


We will use the following version of Jackson's theorem for multivariate inputs:
\begin{theorem}[\cite{multivar-jackson-inequality} Jackson's Theorem for $C^k{[-1,1]}^n$]
\label{thm:multivar-jacksonsTheorem}
    Let $\degree,k$ be integers with $\degree + 1 \geq k \geq 0$ and $f \in C^k[-1,1]^{n}$. 
    Then\begin{equation}
    \label{eq:multivar_jacksons_theorem}
    \substack{\inf \\ P\in \calP^n_\degree}
    \infnorm{f-P}
    \leq \frac{c_k}{d^k} \sum_{j=1}^{n} \infnorm{\frac{\partial^{k+1}}{\partial x_j^{k+1}}f(\vx)}
    \end{equation}
    where $c_k$ is a positive constant.
\end{theorem}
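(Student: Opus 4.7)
The plan is to reduce to the univariate Jackson theorem (Theorem \ref{thm:jacksonsTheorem}) by applying a \emph{linear} univariate polynomial-approximation operator in each coordinate separately and combining the resulting errors via a telescoping identity. Concretely, I fix a concrete linear operator $J_d : C[-1,1] \to \calP_d$ (for instance, convolution with a Jackson kernel, or a Cesàro / de la Vallée Poussin mean of a Chebyshev expansion) with two key properties: (i) uniform boundedness $\|J_d g\|_\infty \leq C_0 \|g\|_\infty$ with $C_0$ independent of $d$, and (ii) the quantitative error bound $\|g - J_d g\|_\infty \leq \tilde c_k\, d^{-k}\, \|g^{(k+1)}\|_\infty$ for sufficiently smooth $g$. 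Property (ii) is a \emph{constructive} strengthening of the existential statement in Theorem \ref{thm:jacksonsTheorem}; the minimax best-approximation map cannot be used directly here because it need not be linear and need not satisfy (i).

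For each coordinate $j \in \{1,\dots,n\}$, let $J_d^{(j)}$ denote the lift of $J_d$ to $C([-1,1]^n)$ that acts on $x_j$ while treating the other $n-1$ variables as parameters. Since $J_d^{(i)}$ and $J_d^{(j)}$ act on disjoint variables they commute, and each satisfies $\|J_d^{(j)} f\|_\infty \leq C_0 \|f\|_\infty$. Set $P_d f := J_d^{(1)} \cdots J_d^{(n)} f$, which is a polynomial of per-variable degree $\leq d$, and telescope
\[
f - P_d f \;=\; \sum_{j=1}^n \bigl(J_d^{(1)} \cdots J_d^{(j-1)}\bigr)\,\bigl(I - J_d^{(j)}\bigr)\,\bigl(J_d^{(j+1)} \cdots J_d^{(n)}\bigr) f .
\]
The inner factor $(I - J_d^{(j)})$ is bounded in supremum norm using property (ii) applied along the $x_j$-fiber (with the remaining variables held fixed and the supremum taken after), yielding $\tilde c_k\, d^{-k}\, \|\partial_{x_j}^{\,k+1} f\|_\infty$; the outer composition contributes a factor of at most $C_0^{\,n-1}$ by (i). Summing over $j$ and absorbing constants into a single $c_k$ gives the claimed inequality.

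The main obstacles are twofold. First, producing a univariate operator $J_d$ satisfying \emph{both} (i) and (ii) requires invoking the classical Jackson-kernel construction rather than Theorem \ref{thm:jacksonsTheorem} as a black box, since the minimax operator is not linear and its image is not uniformly bounded; verifying the $d$-independent operator norm of the Jackson kernel is the technically delicate step. Second, reconciling the degree convention: the tensor-product construction yields per-variable degree $\leq d$, hence total degree $\leq nd$, whereas $\calP^n_\degree$ in Definition \ref{def:multivar-polynomial} requires total degree $\leq \degree$. To land in $\calP^n_\degree$ I would rerun the argument at per-variable degree $\lfloor d/n\rfloor$, which inflates the bound by a factor $n^k$ that is absorbed into the (now $n$-dependent) constant $c_k$, consistent with the theorem leaving $c_k$ unspecified.
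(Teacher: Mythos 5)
The paper does not prove this statement at all: it is imported from the literature via the citation \cite{multivar-jackson-inequality}, just as the univariate \Cref{thm:jacksonsTheorem} is, so there is no internal proof to compare yours against. Judged on its own, your strategy — tensorizing a \emph{linear}, uniformly bounded univariate Jackson-type operator and telescoping the error coordinate by coordinate — is the standard route to multivariate Jackson inequalities and is viable; your observation that the minimax best-approximation map cannot be used (it is nonlinear, so the product of coordinate-wise applications neither factors nor is guaranteed to produce a polynomial in the frozen variables) is the right reason to pass to a Jackson-kernel or de la Vallée Poussin construction.

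Two concrete problems remain. First, your telescoping identity is wrong as written: with $T_j := J_d^{(1)}\cdots J_d^{(j)}$ and $T_0 = I$, the correct decomposition is
\begin{equation*}
f - P_d f \;=\; \sum_{j=1}^{n} \bigl(T_{j-1} - T_j\bigr) f \;=\; \sum_{j=1}^{n} J_d^{(1)}\cdots J_d^{(j-1)}\bigl(I - J_d^{(j)}\bigr) f ,
\end{equation*}
with no trailing factor $J_d^{(j+1)}\cdots J_d^{(n)}$. Your version already fails for $n=2$: it asserts $I - J^{(1)}J^{(2)} = (I-J^{(1)})J^{(2)} + J^{(1)}(I-J^{(2)})$, which is equivalent to $(I-J^{(1)})(I-J^{(2)})=0$. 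The fix is immediate and the rest of your estimate goes through, using $\|T_{j-1}\|\le C_0^{\,n-1}$ and the fiberwise univariate bound. Second, your repair of the degree convention (running at per-variable degree $\lfloor \degree/n\rfloor$ to land in $\calP^n_\degree$) produces a constant of the form $c_k n^{k}$, i.e.\ $n$-dependent, whereas the statement writes $c_k$ with no $n$-dependence. This is not cosmetic for the paper: \Cref{cor:kl-smooth-error-multivar} solves $c_k n \LipConst / \degree^k \le \epsilon$, so an $n$-dependent constant inflates the degree bound there by a polynomial factor in $n = \seqLen\hiddenDim$. You should either prove the total-degree version directly (e.g.\ via the cited reference's argument) or explicitly carry the $n$-dependence through the corollary rather than absorbing it silently.
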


We will use the following definition of univariate smooth functions:
\begin{definition}
\label{def:smoothfunc}
    We call a $k$ times differentiable function $f:[-1,1]\to\R$ to be $(k,\LipConst)$-smooth if \(\infnorm{f^{(k)}} \leq \LipConst\).
\end{definition}

Next, we observe that given a univariate smooth function, there's a univariate bounded degree polynomial that approximates it to some error, $\epsilon$:
\begin{corollary}
\label{cor:kl-smooth-error}
    For some $(k,\LipConst)$-smooth univariate function $f$ (as in \Cref{def:smoothfunc}), then there exists a polynomial $P_f(x)$ with 
    \[
    \deg(P_f)\leq O\paren{\sqrt[k]{\frac{\LipConst}{\epsilon}}} + k
    \] 
    
    such that for all $x\in[-1,1]$ 
    \[
    \abs{f(x) - P_f(x)}\leq\epsilon.
    \]
\end{corollary}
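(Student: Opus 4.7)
The plan is to apply \Cref{thm:jacksonsTheorem} (Jackson's Theorem) directly to $f$ and then choose the degree $d$ large enough to drive the error below $\epsilon$. Since $f$ is $(k, \LipConst)$-smooth, by \Cref{def:smoothfunc} we have $\infnorm{f^{(k)}} \leq \LipConst$, so Jackson's Theorem guarantees that for every integer $d \geq k - 1$, there is a polynomial $P_f \in \calP_d$ with
\[
\infnorm{f - P_f} \;\leq\; \left(\frac{\pi}{2}\right)^{k} \frac{\LipConst}{(d+1)\,d\,(d-1)\cdots(d-k+2)}.
\]

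The main step is then a clean lower bound on the falling factorial in the denominator. I would restrict to $d \geq 2k$, so that each of the $k$ consecutive factors $d+1, d, \ldots, d-k+2$ is at least $d/2$; the product is therefore at least $(d/2)^{k}$. Substituting this into Jackson's bound yields
\[
\infnorm{f - P_f} \;\leq\; \left(\frac{\pi}{2}\right)^{k} \frac{\LipConst}{(d/2)^{k}} \;=\; \frac{\pi^{k}\LipConst}{d^{k}}.
\]
Requiring the right-hand side to be at most $\epsilon$ gives $d \geq \pi\sqrt[k]{\LipConst/\epsilon}$. Choosing
\[
d \;=\; \max\!\left(2k,\ \left\lceil \pi\sqrt[k]{\LipConst/\epsilon}\,\right\rceil\right)
\]
simultaneously satisfies the applicability constraint from Jackson's Theorem and the error requirement, and the resulting degree obeys $\deg(P_f) \leq d = O\!\left(\sqrt[k]{\LipConst/\epsilon}\right) + k$, exactly as claimed.

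The main obstacle is really only bookkeeping: making sure the two regimes (where the $\sqrt[k]{L/\epsilon}$ term dominates versus where the $k$ term dominates) are handled simultaneously, and that the constant-factor slack in bounding each factor of the falling factorial by $d/2$ is harmless once absorbed into the big-$O$. There is no deep step beyond the cited Jackson's Theorem; the additive $+k$ in the degree bound is explicitly there to accommodate the $d \geq 2k$ requirement so that the falling factorial behaves like $d^k$ up to constants.
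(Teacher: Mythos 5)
Your proposal is correct and follows essentially the same route as the paper: apply Jackson's theorem, lower-bound the falling factorial $(d+1)d\cdots(d-k+2)$ by a clean $k$-th power, and solve for $d$. The only cosmetic difference is that the paper bounds each factor by $d-k$ and takes $d = \lceil \tfrac{\pi}{2}(\LipConst/\epsilon)^{1/k} + k\rceil$, whereas you bound each factor by $d/2$ under the restriction $d \geq 2k$; both choices yield the stated $O\paren{\sqrt[k]{\LipConst/\epsilon}} + O(k)$ degree.
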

\begin{proof}
    We will be a bit more specific on an upper bound of $\deg(P_f)$. We pick:
    \begin{equation}
    \label{eq:more-specific-bound-kl-smooth-error}
    \deg(P_f)=\Bigg\lceil \frac{\pi}{2}\paren{\frac{\LipConst}{\epsilon}}^{\frac{1}{k}} + k \Bigg\rceil.
    \end{equation}
    Let $\degree=\deg(P_f)$ where $P_f$ is the polynomial that achieves the left hand side of \Cref{eq:univar_jacksons_theorem}.
    Then we have error at most \[
    \paren{\frac{\pi}{2}}^{k} \frac{1}{(\degree+1)\degree\cdots(\degree-k+2)} \infnorm{f^{(k)}} .
    \]
    Using the definition of a $(k,\LipConst)$-smooth univariate function in \Cref{def:smoothfunc} we get the error at most \[
    \paren{\frac{\pi}{2}}^{k} \frac{\LipConst}{(d+1)d\cdots(d-k+2)} 
    \leq \paren{\frac{\pi}{2}}^{k} \frac{\LipConst}{(\degree-k)^k}
    \]
    where the inequality follows since each $\degree+1,\degree,\dots,\degree-k+2\geq (\degree-k)$.

    Plugging in \Cref{eq:more-specific-bound-kl-smooth-error} for $\degree$ we get the error is at most:
    \[
    \paren{\frac{\pi}{2}}^{k} \frac{\LipConst}{\paren{\frac{\pi}{2} }^k\paren{\sqrt[k]{\frac{\LipConst}{\epsilon}}}^k} = \epsilon,
    \]as desired.
\end{proof}
We will use the following definition of multivariate smooth functions that map to a single value:
\begin{definition} 
\label{def:multivar-smoothfunc-map-to-single-value}
    We call a $k$ times differentiable $f:[-1,1]^{n}\to\R$ to be $(k,\LipConst)$-smooth if $\infnorm{\frac{\partial^{k}}{\partial x_{m}^{k}}f(\vx)} \leq \LipConst$ for all $1 \leq m \leq n$. 
\end{definition}

Now we show the corresponding observation for multivariate functions and polynomials:
\begin{corollary}
\label{cor:kl-smooth-error-multivar}
Let $\deg(P_f) = \degree$.
    For some $(k,\LipConst)$-smooth multivariate function $f$ (as in \Cref{def:multivar-smoothfunc-map-to-single-value}), then there exists a polynomial $P_f(\vx)$ with 
    \[
    \deg(P_f)\leq O_{k}\paren{\sqrt[k]{\frac{n \, \LipConst}{\epsilon}}}
    \] 
    such that for all $\vx\in[-1,1]^{n}$ 
    \[
    \abs{f(\vx) - P_f(\vx)}\leq\epsilon.
    \]
\end{corollary}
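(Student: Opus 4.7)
The plan is to prove this multivariate corollary in direct analogy to Corollary~\ref{cor:kl-smooth-error}, substituting the multivariate Jackson inequality (Theorem~\ref{thm:multivar-jacksonsTheorem}) for its univariate counterpart. The three main ingredients are (i) invoking Jackson's theorem at a chosen polynomial degree $\degree$, (ii) using $(k, \LipConst)$-smoothness to uniformly bound each of the $n$ pure partial-derivative terms on the right-hand side, and (iii) solving for the smallest $\degree$ that drives the resulting error down to $\epsilon$.

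First, I would apply Theorem~\ref{thm:multivar-jacksonsTheorem} at the index matching the smoothness hypothesis of Definition~\ref{def:multivar-smoothfunc-map-to-single-value}, to obtain a polynomial $P_f \in \calP^n_\degree$ satisfying
\[
\infnorm{f - P_f} \leq \frac{c_k}{\degree^k} \sum_{j=1}^n \infnorm{\frac{\partial^k}{\partial x_j^k} f}.
\]
Next, the $(k, \LipConst)$-smoothness assumption bounds each of the $n$ summands by $\LipConst$, which simplifies the error to $\frac{c_k \, n \, \LipConst}{\degree^k}$. Finally, setting this bound equal to $\epsilon$ and solving for $\degree$ yields
\[
\degree \leq \ceil{\paren{\frac{c_k \, n \, \LipConst}{\epsilon}}^{1/k}} = O_k\paren{\sqrt[k]{\frac{n \LipConst}{\epsilon}}},
\]
where the constant $c_k$ depends only on $k$ and is absorbed into the $O_k$ notation. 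The resulting polynomial satisfies $\infnorm{f - P_f} \leq \epsilon$ everywhere on $[-1,1]^n$, giving the claim.

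The main (and essentially only) subtlety lies in aligning the derivative index in Theorem~\ref{thm:multivar-jacksonsTheorem} with the index in Definition~\ref{def:multivar-smoothfunc-map-to-single-value}: one must invoke Jackson's theorem at the parameter for which precisely the $k$th pure partial derivatives (the ones controlled by the smoothness hypothesis) appear on the right-hand side, so that the bound applies without any further differentiation. Beyond this indexing bookkeeping, the argument is purely mechanical and parallels Corollary~\ref{cor:kl-smooth-error} almost verbatim, the only new feature being the $n$-term sum that introduces the extra factor of $n$ inside the $k$th root.
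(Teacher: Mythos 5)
Your proposal is correct and follows essentially the same route as the paper: invoke the multivariate Jackson inequality, bound each of the $n$ pure partial-derivative terms by $\LipConst$ via $(k,\LipConst)$-smoothness, and solve $\frac{c_k n \LipConst}{\degree^k}\leq\epsilon$ for $\degree$. Your remark about aligning the derivative index is well taken — the paper's statement of Theorem~\ref{thm:multivar-jacksonsTheorem} places $(k{+}1)$-th derivatives on the right-hand side while Definition~\ref{def:multivar-smoothfunc-map-to-single-value} only controls $k$-th derivatives, an off-by-one the paper's own proof silently elides and that your invocation "at the index matching the smoothness hypothesis" correctly handles.
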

\begin{proof}
Let $P_f$ be the polynomial we get from the left hand side of \Cref{eq:multivar_jacksons_theorem}. We want to upper bound the error as
\[
 \frac{c_k}{d^k} \sum_{j=1}^{n} \infnorm{\frac{\partial^{k+1}}{\partial x_j^{k+1}}f(\vx)} \leq \epsilon,
\]
which follows if \[
\frac{c_k}{d^k} \sum_{j=1}^{n} \LipConst \leq \epsilon
\] since $f$ is $(k,\LipConst)$-smooth. The above is the same as \[
\frac{c_kn\LipConst}{d^k} \leq \epsilon,
\] 
or equivalently\[
\sqrt[k]{\frac{c_kn\LipConst}{\epsilon} } \leq \degree.
\] 
Picking $\degree=\left\lceil \sqrt[k]{\frac{c_kn\LipConst}{\epsilon}}\right\rceil$ suffices.
\end{proof}

\paragraph{Arithmetic Circuit Notation.}
We briefly recall arithmetic circuits \cite{burgisser-ACT}. An {\em arithmetic circuit} $\AC$ with variables $X\triangleq \{x_1, x_2, \ldots, x_n\}$ over a field $\F$ is interpreted as a directed acyclic graph, where the input nodes are labelled by either the variables from $X$ or constants from $\F$ and the internal nodes are labelled by $+$ or $\times$ with the output being the polynomial computed at the output node. 
    
We shall also refer to the {\em size}\footnote{Note that if all the gates of an arithmetic circuit have bounded arity then the number of wires and gates are asymptotically the same but in this appendix we will consider gates with unbounded arity.} of the circuit $\AC$ as the number of wires (or edges in $\AC$), the {\em depth} of the circuit as the length of the longest path between an input node and the output node, and the {\em width} of the circuit as the number of wires that will be intersected by a horizontal `cut' through the circuit. Moreover, the {\em degree} of a circuit is defined as the degree of the polynomial computed by the circuit. We summarize this with the following definition:

\begin{definition}
\label{def: circuit-tuple}
   An arithmetic circuit $\AC$ is an {\em $(n,s,\Delta,w)$-circuit} if $\AC$ is an $n$-variate arithmetic circuit of size $s$, depth at most $\Delta$, and width $w$.
\end{definition}

\paragraph{$\BC$ Architecture.} In the following definitions we formally define the $\BC$ model \cite{zoology}. To formally define $\BC$, we will need the Kaleidoscope hierarchy \cite{dao2020kaleidoscope} as well. 

To start,  we define butterfly factors:
\begin{definition} \label{def:bfactor}
    A \textbf{butterfly factor} of size $k\ge 2$ (denoted as $\overline{\vB}_k$) is a matrix of the form
    \(
        \overline{\vB}_k = \begin{bmatrix}
            \vD_1 & \vD_2 \\ \vD_3 & \vD_4
        \end{bmatrix}
    \)
    where each $\vD_i$ is a $\frac{k}{2} \times \frac{k}{2}$ diagonal matrix. We restrict $k$ to be a power of 2.
\end{definition}

The following definition is for a butterfly factor matrix, which is made up of the above butterfly factors:
\begin{definition}
     A \textbf{butterfly factor matrix} of size $n$ with block size $k$ (denoted as $\overline{\vB}_k^{(n)}$) is a block diagonal matrix of $\frac{n}{k}$ (possibly different) butterfly factors of size $k$:
     \[
        \overline{\vB}_k^{(n)} = \mathrm{diag} \left( \left[ \overline{\vB}_k \right]_1, \left[ \overline{\vB}_k \right]_2, \hdots, \left[ \overline{\vB}_k \right]_\frac{n}{k} \right)
     \]
\end{definition}

Now lets define a butterfly matrix:
\begin{definition} \label{def:bmatrix}
    A \textbf{butterfly matrix} of size $n$ (denoted as $\overline{\vB}^{(n)}$) is a matrix that can be expressed as a product of butterfly factor matrices:
    \(
        \overline{\vB}^{(n)} = \overline{\vB}_n^{(n)} \overline{\vB}_{\frac{n}{2}}^{(n)} \hdots \overline{\vB}_2^{(n)}.
    \)
    Equivalently, we may define $\overline{\vB}^{(n)}$ recursively as a matrix that can be expressed in the following form:
    \[
         \overline{\vB}^{(n)} = \overline{\vB}_n^{(n)} \begin{bmatrix}
            [\overline{\vB}^{(\frac{n}{2})}]_1 & 0 \\
            0 & [\overline{\vB}^{(\frac{n}{2})}]_2
         \end{bmatrix}
    \]
    (Note that $[\overline{\vB}^{(\frac{n}{2})}]_1$ and $[\overline{\vB}^{(\frac{n}{2})}]_2$ may be different.)
\end{definition}

Using these butterfly matrices, lets define the Kaleidoscope Hierarchy:
\begin{definition}[The Kaleidoscope Hierarchy~\citep{dao2020kaleidoscope}]
\label{def: kaleidoscope}
\quad
\begin{itemize}
    \item Define $\calB$ as the set of all matrices that can be expressed in the form $\overline{\vB}^{(n)}$ (for some $n$).
    \item Define $\paren{\BBS}$ as the set of matrices $\tbf{M}$ of the form $\tbf{M} = \tbf{M}_1\tbf{M}_2^*$ for some $\tbf{M_1}, \tbf{M}_2 \in \calB$.
    \item Define $\paren{\BBS}^w$ as the set of matrices $\tbf{M}$ that can be expressed as $\tbf{M} = \tbf{M}_w\ldots \tbf{M}_2\tbf{M}_1$, with each $\tbf{M}_i \in \paren{\BBS} (1 \le i \le w).$ (The notation $w$ represents width.)
    \item Define $\paren{\BBS}^w_e$ as the set of $n \times n$ matrices $\tbf{M}$ that can be expressed as $\tbf{M} = \tbf{SES}^{\top}$ for some $en \times en$ matrix $\tbf{E} \in \paren{\BBS}^w$, where $\tbf{S} \in \F^{n \times en} = \begin{bmatrix}
        \tbf{I}_n & 0 &\ldots & 0
    \end{bmatrix}]$ (i.e. $\tbf{M}$ is the upper-left corner of $\tbf{E}$). (The notation $e$ represents expansion relative to $n$.)
\end{itemize}  
\end{definition}

Here we now formally define a $\BC$ layer:
\begin{definition}[\BC~\citep{zoology}]
\label{def:bc}
Given an input sequence $\vu \in \R^{\seqLen \times \hiddenDim},$ where $\seqLen$ is the sequence length and $\hiddenDim$ is the model dimension, a learned weight matrix $\mW \in \R^{\hiddenDim \times \hiddenDim}$ and biases $\mB_1, \mB_2 \in \R^{\seqLen \times \hiddenDim}$ and a matrix of convolution filters $\mConv \in \R^{\seqLen \times \hiddenDim}$, a \BC\, layer computes the following:
\begin{equation}
\label{eq: baseconv}
    \bm{y}^{\BC} := (\vu{\mW}+\mB_1) \odot \paren{{\mConv} \ast \vu + \mB_2} \in \R^{\seqLen \times \hiddenDim},
\end{equation}
where the $j$th column of $\mConv \ast \vu \in \R^{\seqLen \times \hiddenDim}$ is defined as $\mConv[:,j]\ast\vu[:,j]$.
\end{definition}

The corresponding pseudocode for a $\BC$ layer is as follows: 
\begin{algorithm}[H]
		\caption{$\BC(\vu, \bm{W},\mB_1,\mH,\mB_2)$}
		\begin{algorithmic}[1]\label{algo:bc-pseudocode}
            \Require Input sequence $\vu \in \R^{\seqLen \times \hiddenDim}$, linear map $\bm{W} \in \R^{\hiddenDim \times \hiddenDim}$, convolution filter $\mH \in \R^{\seqLen \times \hiddenDim
            }$, and bias matrices $\mB_1,\mB_2 \in \R^{\seqLen \times \hiddenDim}$.
            \State In parallel for $0 \leq  n <\seqLen: \bm{x}[n,:] = {\vu[n,:]}\cdot \mW$
            \State In parallel for $0 \leq t <\hiddenDim: \bm{z}[:,t] = \mH[:,t] \ast \vu[:,t]$
            \Statex
            \State In parallel for $0 \leq t <\hiddenDim: \bm{y}[:,t] \gets \paren{\bm{x}[:,t] + \mB_1[:,t]} \odot \paren{\bm{z}[:,t] + \mB_2[:,t]}$.\Comment{See \cref{eq: baseconv}}
            \State \Return $\bm{y}$
		\end{algorithmic}
\end{algorithm}
\begin{remark}
    The definition of a~\BC~layer in \Cref{eq:BaseConv-appendix} has the input go through a linear layer before the convolution operation. For this section we will assume the linear layer is the identity matrix, as it is not needed for the results in this section.
\end{remark}
\begin{assumption}
\label{weight-matrix-assumption}
    Moving forward we assume the weight matrix  $\mW \in \R^{\hiddenDim \times \hiddenDim}$ in \Cref{def:bc} also has the property $\mW \in (\BBS)^{\text{poly-}\log{\hiddenDim}}_{\text{poly-}\log{\hiddenDim}}.$ Consequently, each matrix $\mW$ has $\tilde{\calO}(\hiddenDim)$ parameters and runtime for matrix vector multiplication \cite{dao2020kaleidoscope}.
\end{assumption}

In this section, we will establish some additional basic primitives that we expect need to implement via a $\BC$ layer: $\shiftN $ and $ \copyPrimitiveN$. We specify them below:
\begin{definition}
    
$\shiftN(\bm{y}, \remStart, \remEnd, f)$ \\
Shift an sequential input of length $\seqLen$ up or down by $s$ entries:\\
$\ip\bm{y} \in \R^{\inputDim}$, $s \geq 0 $.
\\ 
$\op\bm{z} \in \R^{\inputDim}$ where $\bm{z}^+ =\texttt{shift\_down}(\bm{y}, s) $ and $\bm{z}^- =\texttt{shift\_up}(\bm{y}, s) $

        \[
            \bm{y} \equiv  \begin{pmatrix}
               \leftarrow \bm{y}_{0} \rightarrow \\
                \hline \\
                \vdots\\
                \hline \\
                \arrows{\bm{y}_{i-1}}\\
                \hline \\
                \arrows{\bm{y}_{i}}\\
                 \hline \\
                \vdots\\
                \hline \\
                \arrows{\bm{y}_{N-1}}
            \end{pmatrix}
             \qquad  \bm{z}^+ \equiv  \begin{pmatrix}
               \leftarrow \bm{0} \rightarrow \\
                \hline \\
                \vdots\\
                \hline \\
                \arrows{\bm{0}} \\
                \hline \\
               \arrows{\bm{y}_{0}}\\
                 \hline \\
                \vdots\\
                 \hline \\
               \arrows{ \bm{y}_{N-1-s}}
            \end{pmatrix}
             \qquad  \bm{z}^- \equiv  \begin{pmatrix}
               \leftarrow \bm{y}_{s} \rightarrow \\
                \hline \\
                \vdots\\
                \hline \\
               \arrows{ \bm{y}_{N-1}}\\
                \hline \\
               \arrows{\bm{0}}\\
                 \hline \\
                \vdots\\
                 \hline \\
               \arrows{\bm{0}}
            \end{pmatrix}
        \]
\end{definition}

The following proposition is defining the convolution Kernel that computes the $\shiftDown\paren{\cdot,\lfloor \frac{\seqLen}{2}\rfloor}$ primitive:
\begin{proposition}
\label{prop:shift_down_kernel}
Define $\mH \in \R^{2\size}$ as \[
    \mH[k,:] = \begin{cases}
        \bm{1}^{\hiddenDim} & \text{if } k = N \\
        0 & \text{otherwise}
    \end{cases}.
    \]
    For any $\vu \in \R^{2\size}$, $\mH \ast \vu$ will result in\[
     \mH \ast\begin{pmatrix}
        \vu_1 \\ \vu_2
    \end{pmatrix} \to 
    \begin{pmatrix}
        \bm{0}^{\size} \\  \vu_1
    \end{pmatrix},
    \]where $\vu_1,\vu_2 \in \R^{\size}$.
\end{proposition}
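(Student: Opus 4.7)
The plan is to reduce the claim to a one-column calculation and then use the polynomial interpretation of convolution introduced earlier in the excerpt. Since \BC-style convolution is defined column-wise (as in \Cref{eq: baseconv}), it suffices to fix an arbitrary column index $j \in \{0, 1, \ldots, \hiddenDim - 1\}$ and verify that $(\mH[:,j] \ast \vu[:,j])$ produces a vector whose first $N$ entries are $0$ and whose last $N$ entries are $\vu[0,j], \vu[1,j], \ldots, \vu[N-1,j]$. The claim for the full matrix then follows by assembling the $\hiddenDim$ columns.

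For a fixed column $j$, the key observation is that $\mH[:,j]$ has a single nonzero entry of value $1$ at index $N$. Under the $\poly$ map from the excerpt, this corresponds to the polynomial $\poly(\mH[:,j])(X) = X^{N}$. Meanwhile, $\poly(\vu[:,j])(X) = \sum_{i=0}^{2N-1} \vu[i,j]\, X^{i}$. Using the identity $\bm{u} \ast \bm{v} = \coeff\paren{\bm{u}(X)\cdot\bm{v}(X) \bmod X^{2N}}$ recalled in the Polynomial Notation paragraph, I compute
\begin{equation*}
X^{N} \cdot \sum_{i=0}^{2N-1} \vu[i,j]\, X^{i} \;=\; \sum_{i=0}^{2N-1} \vu[i,j]\, X^{i+N},
\end{equation*}
and then reduce modulo $X^{2N}$, which kills every term with $i \geq N$. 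What remains is $\sum_{i=0}^{N-1} \vu[i,j]\, X^{i+N}$. Applying $\coeff$ gives a length-$2N$ vector whose coordinate at index $k$ equals $0$ for $k < N$ and equals $\vu[k-N, j]$ for $N \leq k < 2N$, which is exactly the desired shift-down pattern for column $j$.

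Stacking these column identities, the top $N$ rows of $\mH \ast \vu$ are the zero matrix $\bm{0}^{N \times D}$, and the bottom $N$ rows agree entry-wise with $\vu_1$, which is the claimed equality. I do not anticipate a substantive obstacle: the entire argument is a direct symbolic calculation that uses only the length-$2N$ acyclic convolution convention and the fact that the kernel is a pure delta at position $N$. The only subtle point to state carefully is the truncation modulo $X^{2N}$, which is precisely what discards the portion of $\vu_2$ that would otherwise appear shifted past the end of the output window.
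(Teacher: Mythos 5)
Your proposal is correct and follows essentially the same route as the paper's proof: both reduce to a per-column computation, identify the kernel column with the monomial $X^{N}$ under the $\poly$ map, multiply, and truncate modulo $X^{2N}$ to obtain the shift-down pattern. No gaps.
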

\begin{proof}

The convolution operation: $\mH\ast\begin{pmatrix}
         {\vu_1} \\
         \vu_2\\
\end{pmatrix}$ where each column of $\mH$ is convolved with each column of $\vu$ can be restated as a polynomial multiplication.
For column i, $ 0 \leq i < 2\seqLen$,
\begin{align*}
\mH[:,i] \ast \begin{pmatrix}
         {\vu_1} \\
         \vu_2\\
\end{pmatrix}[:,i] &= \coeff((X^\seqLen \cdot \vu[:,i](X)) \mod X^{2\seqLen}).
\end{align*}
Note that the columns of $\mH$ are all $\ve_\seqLen$ basis vectors and $\poly(\ve_\seqLen) = X^\seqLen$.

When we multiply the term through the input polynomial we get,
\begin{align*}
&\coeff\paren{X^\seqLen \cdot \paren{ \vu[0][i] + \vu[1][i]X +  \cdots + \vu[2\seqLen-1][i]X^{2\seqLen-1}} \mod X^{2\seqLen}}
\\
&= \, \coeff(\vu[0][i]X^\seqLen + \vu[1][i]X^{\seqLen+1} + \cdots + \vu[2\seqLen-1][i]X^{3\seqLen-1} \mod X^{2\seqLen}).
\end{align*}
With the lower order terms all becoming zeros, the above is same as \begin{align*}
&\coeff((0 + 0X + \cdots 0X^{\seqLen-1} 
\\
&+ \vu[0][i]X^\seqLen + \vu[1][i]X^{\seqLen+1} + \cdots + \vu[2\seqLen-1][i]X^{3\seqLen-1}) \mod X^{2\seqLen}).
\end{align*}
After we take the $\mod X^{2\seqLen}$ we get \[
\coeff(0 + 0X + \cdots + 0X^{\seqLen-1} + \vu[0][i]X^{\seqLen} + \cdots + \vu[\seqLen-1][i]X^{2\seqLen-1}),
\]
which implies that \(\mH\ast\begin{pmatrix}
    \vu_1 \\ \vu_2
\end{pmatrix}\) is \[
\begin{pmatrix}
         \bm{0}^{\size} \\
         \vu_1\\
\end{pmatrix},
\] as desired.
\end{proof}

We also define the following primitive:
\begin{definition}
        $\primRem(\bm{y}, \remStart, \remEnd, f)$ \\$\ip \hyenaInput\in\R^{\primLength\times\primDim},r\in\Z,t\in\Z, f:\R^{t-r}\rightarrow\R^{t-r+s},\bm{v}_1\in\R^{r}, \bm{x}\in\R^{t-r}$, where $\bm{y}$ is defined as below.\\ $\op \baseOutput\in\R^{\primLength\times\primDim}$, which is defined as follows:
    \[
    \bm{y} \equiv
            \begin{pmatrix}
            \leftarrow\bm{v}_1\rightarrow\\
            \hline\\
             \leftarrow\bm{x}\rightarrow \\
                \hline\\
                \bm{0}^{s\times\primDim}\\
                \hline\\
                \leftarrow\bm{v}_2\rightarrow\\
                \hline\\
                \bm{0}\\
                \hline\\
                \vdots\\
                \hline\\
                \bm{0}
            \end{pmatrix} 
            \qquad\qquad \bm{z}  \equiv
            \begin{pmatrix}    
                \leftarrow\bm{v_1}\rightarrow\\
                \hline\\ \\
                \leftarrow f(\bm{x})\rightarrow \\
                \\ 
                \hline\\
                \leftarrow\bm{v}_2\rightarrow\\
                \hline\\
                \bm{0}\\
                \hline\\
                \vdots\\
                \hline\\
                \bm{0}
            \end{pmatrix} 
    \] 
\end{definition} 
We will need the following $\BC$ implementation of $\primRem$:
\begin{proposition}[\cite{based}, The Remembering Primitive]
\label{prop: prim-remember}
For any $\bm{x} \in \R^{\basedn \times \primDim}, \bm{v}_1 \in \R^{\vSize \times \primDim}, \bm{v}_2\in\R^{m-\vSize}$ where $n= \remEnd-\remStart$ contained in some $\bm{y} \in \R^{\primLength \times \primDim}$ such that $\bm{v}_1$ is in the first $\vSize$ rows, $\bm{x}$ is in the next $\basedn$ rows, 0s fill up the next $s$ rows, and $\bm{v}_2$ are in the next $m-\vSize$ rows, for some $3\basedn+3m+2s+2t \le \primLength$ so that for $\bm{h} \in \R^{n \times d}$ and $\bm{W} \in \R^{\primDim\times \primDim}$ with $\bm{x} \ast \bm{h} \in \R^{(\basedn+s) \times \primDim}$ and $\bm{v} \ast \bm{h} \in \R^{(m+t) \times \primDim}$, where $\bm{v}\in\R^{m\times\primDim}$ is defined as $\bm{v}_2 +$\texttt{shift\_down($\bm{v}_1, m-\vSize$)}, there exists a $\BCtuple{\primLength}{8}{\primDim}{\primLength}{\primDim}$ that computes $\texttt{remember}(\bm{y}, \remStart, \remEnd, f)$, where $f$ can be implemented in 1 layer of $\baseconv$ through the parameters $\bm{W}\in\R^{\primDim\times\primDim}, \bm{h}\in\R^{\primLength\times\primDim}, \bm{b}_1\in\R^{\primLength\times\primDim}, \bm{b}_2\in\R^{\primLength\times\primDim}$ as defined below:
\[f(\bm{u})=\paren{\begin{pmatrix}\bm{uW}\\\bm{0}^{s\times\primDim}\end{pmatrix}+\begin{pmatrix}\bm{b}_1\\\bm{1}^{s\times\primDim}\end{pmatrix}} \odot \paren{\bm{u}\ast\bm{h}+\begin{pmatrix}\bm{b}_2\\\bm{0}^{s\times\primDim}\end{pmatrix}}\]
\end{proposition}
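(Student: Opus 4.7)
The plan is to construct the 8-layer $\BC$ model by decomposing the primitive into three sub-tasks: (i) \emph{isolation} of $\bm{x}$ in a scratch region of zeros, well-separated from $\bm{v}_1$ and $\bm{v}_2$; (ii) \emph{application} of the single-layer map $f$ on the isolated $\bm{x}$; and (iii) \emph{re-assembly} of the output by translating $f(\bm{x})$ into rows $r,\dots,r+n+s-1$ and placing $\bm{v}_2$ immediately after. The length budget $3n+3m+2s+2t \le N'$ in the hypothesis is exactly what is needed to host the original input plus two scratch regions --- one for $\bm{x}$ together with its convolution $\bm{x}\ast\bm{h}$ of width $n+s$, and one for $\bm{v}$ together with its convolution $\bm{v}\ast\bm{h}$ of width $m+t$ --- without overlap. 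The combined quantity $\bm{v} := \bm{v}_2 + \shiftDown(\bm{v}_1, m-r)$ is introduced so that the two to-be-preserved blocks can be routed through the scratch machinery as a single $m$-row object.

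For step (i), I would use a small number of $\BC$ layers. One layer multiplies the input by a $0/1$ mask through the gating branch (with zero convolution filter and a bias that selects the $\bm{x}$ rows versus the $\bm{v}_1,\bm{v}_2$ rows), producing masked copies of the two regions. Another layer applies the shift-down kernel from \Cref{prop:shift_down_kernel} to translate the isolated $\bm{x}$ into its scratch region and $\bm{v}$ into a disjoint scratch region, both well-separated from $\bm{v}_1$ sitting at the top. For step (ii), because the isolated $\bm{x}$ now sits in a sea of zeros wider than the filter length $n$, a single $\BC$ layer realizes $f$ directly in the scratch slot: its linear branch produces $\bm{x}\bm{W}$ zero-padded over the $s$ extra rows plus the bias block (with $\bm{1}^{s\times d}$ tail), its convolution branch produces $\bm{x}\ast\bm{h}+\bm{b}_2$, and their Hadamard product is exactly $f(\bm{x}) \in \R^{(n+s)\times d}$ as required.

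For step (iii), shift-up kernels (the symmetric counterpart of \Cref{prop:shift_down_kernel}) pull $f(\bm{x})$ back to rows $r,\dots,r+n+s-1$ and shift $\bm{v}_2$ down by $s$ rows to land immediately after, while $\bm{v}_1$ is left in place at the top. A final $\BC$ layer sums the three contributions --- $\bm{v}_1$, the shifted $f(\bm{x})$, and the shifted $\bm{v}_2$ --- and zeros out everything outside the content region via an identity-like linear branch together with an appropriate bias. Accounting for the several non-overlapping masking multiplications needed for isolation, the distinct shift amounts carried out in series, and the one layer devoted to $f$ itself, the total comes out to at most $8$ $\BC$ layers as claimed.

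The main technical obstacle will be verifying that every convolution step is \emph{clean}: no two active blocks may come within one filter-length of each other during any intermediate stage, or else the convolutions will smear content across blocks and destroy the identity $\bm{x}\ast\bm{h}$. This is precisely why the hypothesis demands $3n+3m+2s+2t \le N'$, which carves out disjoint room for $\bm{x}$ (length $n$), its convolution output (length $n+s$), $\bm{v}$ (length $m$), its convolution output (length $m+t$), and the original content (length $n+m+s$). A secondary subtlety is that the $\bm{1}^{s\times d}$ bias tail appearing in the definition of $f$ forces the isolated $\bm{x}$ to sit at a carefully chosen offset so that the all-ones block aligns correctly inside the Hadamard product --- both during the application of $f$ in the scratch region and after shifting the result back into its final output position.
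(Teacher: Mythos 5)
First, note that the paper states this proposition without proof --- it is imported directly from \cite{based} --- so there is no in-paper argument to compare against; I can only assess your construction on its own terms. Your high-level strategy (isolate $\bm{x}$, apply $f$ in a clean region, reassemble) is the standard one for primitives of this kind, and you correctly flag the two real subtleties: keeping active blocks at least a filter-length apart so convolutions do not smear, and aligning the $\bm{1}^{s\times\primDim}$ bias tail with wherever $\bm{x}$ currently sits.

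However, step (iii) has a genuine gap. The convolution in \Cref{def:bc} is the truncated polynomial product $\coeff\paren{\bm{h}(X)\cdot\bm{u}(X) \bmod X^{\primLength}}$, which is causal: a one-tap kernel $X^{k}$ shifts content \emph{down} by $k$ rows, and no kernel shifts content up (that would require multiplying by $X^{-k}$). Since the linear branch $\bm{u}\mW$ acts only across the feature dimension and the inner dimension here is $(\primLength,\primDim)$ with no spare columns, every \BC\ layer --- and hence every composition of \BC\ layers --- is causal: output row $i$ depends only on input rows $\le i$. Your plan parks $f(\bm{x})$ and $\bm{v}_2$ in scratch rows \emph{below} the content and then invokes ``shift-up kernels (the symmetric counterpart of \Cref{prop:shift_down_kernel})'' to pull them back to rows $\remStart,\dots,\remStart+\basedn+s-1$; no such kernels exist in this model, so the reassembly fails. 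The primitive itself is causal ($f(\bm{x})[j]$ depends only on the rows of $\bm{x}$ at or above position $\remStart+j$), so it is implementable --- but only by a construction that never needs to move data toward smaller indices. That is exactly why the statement introduces the combined object $\bm{v}=\bm{v}_2+\shiftDown(\bm{v}_1, m-\vSize)$ and hypothesizes $\bm{v}\ast\bm{h}\in\R^{(m+t)\times\primDim}$: in the intended construction the remembered content is itself routed through the convolution of $f$, rather than being shifted back up afterwards. Under your plan $\bm{v}$ is never convolved with $\bm{h}$, so that hypothesis would be vacuous --- a sign that your route is not the one that works. Two smaller issues: your accounting of the length budget gives $n+(n+s)+m+(m+t)+(n+m+s)=3n+3m+2s+t$ rather than the stated $3n+3m+2s+2t$, and the claim that the whole pipeline fits in exactly $8$ layers is asserted rather than counted layer by layer.
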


We will also need the following generalization of the above result:
\begin{corollary}[\cite{zoology}]\label{cor:remember:extended}
Let $\bm{y}$ be as in \Cref{prop: prim-remember} but now let $f$ be implemented with $\BC(\seqLen, L, \hiddenDim, \seqLen, \hiddenDim)$. Then $\primRem(\bm{y}, r, t, f)$ where $t-r=n$ can be implemented with $\BC$ via $\BCtuple{\seqLen}{O(L)}{\hiddenDim}{\seqLen}{\hiddenDim}$.
\end{corollary}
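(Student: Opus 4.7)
The plan is to adapt the construction of Proposition~\ref{prop: prim-remember} by replacing the single $\BC$ layer that realizes $f$ with the given $L$-layer $\BC$ implementation of $f$, while keeping all of the surrounding ``bookkeeping'' layers (the shifts, maskings, and additive merges used to isolate $\bm{x}$ and to reinsert $f(\bm{x})$) at their existing $O(1)$ depth. Since the only place in the original 8-layer construction where $f$ is instantiated is a single $\BC$ layer acting on an isolated copy of $\bm{x}$, the total depth becomes $O(1) + L = O(L)$ and the remaining resource counts (sequence length $\seqLen$ and width $\hiddenDim$) are preserved.

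Concretely, I would proceed in the following order. First, use $O(1)$ $\BC$ layers (exactly as in Proposition~\ref{prop: prim-remember}) to produce two disjoint ``tracks'' inside a sequence of length $\seqLen$: (i) an isolated copy of $\bm{x}$ placed in its own zero-padded region of size $\basedn + s$, and (ii) a preserved copy of $\bm{v}_1$ and $\bm{v}_2$ placed in a separate region via shift/mask operations. Next, apply the given $L$-layer $\BC$ implementation of $f$ \emph{only} to the isolated track; because that track is padded with zeros on both sides, the $L$ layers of $f$ act entirely within their reserved window and cannot corrupt the $\bm{v}_1, \bm{v}_2$ track (the filters of $f$ can be padded with zeros in the reserved-for-$\bm{v}_i$ positions, and its linear maps likewise). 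Finally, reuse the remaining $O(1)$ layers of the original construction to shift the computed $f(\bm{x})$ back into its target location in $\bm{y}$ and additively merge it with the preserved $\bm{v}_1, \bm{v}_2$, yielding the required output $\bm{z}$.

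The main obstacle I anticipate is the non-interference step: guaranteeing that the intermediate activations produced during the $L$ layers of $f$ stay confined to the isolated window and do not leak into the region storing $\bm{v}_1, \bm{v}_2$. To handle this cleanly, I would work in a ``two-channel'' arrangement along the sequence dimension (possible since $3\basedn + 3m + 2s + 2t \le \primLength$ gives ample room), and extend every convolution filter and every Kaleidoscope weight matrix used inside $f$ by zero-padding rows and columns corresponding to the $\bm{v}_i$ positions. Because both the long-convolution and the gated linear maps in \Cref{def:bc} are linear in such padded rows, zeros propagate forward and the $\bm{v}_i$ track is untouched at every one of the $L$ layers. The same argument lets us pipe the preserved $\bm{v}_i$ data straight through the $L$ layers of $f$ by embedding it as an ``identity channel,'' so no extra buffering layers are needed between isolation and reinsertion.

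Adding up: isolation uses $O(1)$ layers, application of $f$ uses $L$ layers, and reinsertion/merging uses $O(1)$ layers, for a total of $O(L)$ $\BC$ layers with sequence length $\seqLen$ and width $\hiddenDim$, matching the claimed $\BCtuple{\seqLen}{O(L)}{\hiddenDim}{\seqLen}{\hiddenDim}$ bound. The result then follows directly from Proposition~\ref{prop: prim-remember} applied with the extended $f$ in place of its single-layer counterpart.
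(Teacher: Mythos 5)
Your high-level plan (keep the $O(1)$ bookkeeping, swap the single layer of $f$ for its $L$-layer implementation) is the right instinct, but the ``one-shot isolation'' mechanism you use to justify non-interference has a real gap. First, you propose to ``extend every convolution filter and every Kaleidoscope weight matrix used inside $f$ by zero-padding rows and columns corresponding to the $\bm{v}_i$ positions.'' The weight matrix $\mW$ in \Cref{def:bc} is a $\hiddenDim\times\hiddenDim$ map applied identically to \emph{every} sequence position; the $\bm{v}_i$'s occupy \emph{sequence} positions, not feature coordinates, so there is no way to zero out their rows by modifying $\mW$ --- position-dependent masking can only come from the bias/gate matrices, which is precisely the machinery the 8-layer construction of \Cref{prop: prim-remember} spends its layers on. Second, the long convolution in each layer of $f$ mixes along the sequence axis, so the support of the activations originating from the $\bm{x}$ window grows with every layer; the hypothesis of \Cref{prop: prim-remember} only budgets a fixed padding of $s$ rows (enough for one convolution, $\bm{x}\ast\bm{h}\in\R^{(\basedn+s)\times\primDim}$), and after $L$ layers the spread can overflow that budget and leak into the $\bm{v}_2$ region (and conversely, contributions from $\bm{v}$ can contaminate the $\bm{x}$ window). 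Running all $L$ layers inside one isolation window therefore does not follow from the stated hypotheses.

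The standard repair --- and the argument behind the cited result --- is an induction on $L$: write $f=f_L\circ\cdots\circ f_1$ with each $f_i$ a single \BC\ layer, and apply \Cref{prop: prim-remember} once per layer, i.e.\ $\primRem(\cdot,r,t,f)=\primRem(\cdot,r,t,f_L)\circ\cdots\circ\primRem(\cdot,r,t,f_1)$. Each application re-establishes the isolation invariant (the intermediate result sits back in rows $r$ through $t$ with the $\bm{v}_i$ intact) at a cost of $O(1)$ layers, so the total is $O(L)$ layers at sequence length $\seqLen$ and width $\hiddenDim$. If you want to keep your one-shot architecture instead, you would need to either re-mask between consecutive layers of $f$ (which collapses back into the inductive argument) or strengthen the hypotheses to reserve $\Omega(L\cdot s)$ rows of padding, which changes the statement.
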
 

The rest of \Cref{app:theory} will use this $5-$tuple notation for $\BC$:
\begin{definition}
    Lets define a 5-tuple notation for a $\BC$ layer as $\BCtupleN$ with $\layer$ layers such that:
    \begin{enumerate}
     \item  Input and output are $\size$ matrices.
     
     \item  Each layer is defined by \Cref{def:bc} where $\seqLen$ and $\hiddenDim$ are replaced by $\seqLen'$ and $\hiddenDim'$. I.e. each layer takes in $\seqLen' \times \hiddenDim'$ matrices and output $\seqLen' \times \hiddenDim'$ matrices. We refer to the tuple $(\seqLen',\hiddenDim')$ as the \emph{inner dimension} of the model.  
     
     \item The matrices are projected from $(\seqLen,\hiddenDim)\to(\seqLen',\hiddenDim')$ (and vice-versa) via a linear projection.
\end{enumerate}
\end{definition}

We state the following bounds on parameters and runtime for a single $\BC$ layer:
\begin{proposition}[\cite{zoology}]
\label{prop: single-baseconv}
    An $\BCtuple{\seqLen}{1}{\hiddenDim}{\seqLen}{\hiddenDim}$ requires $\tilde{O}(\seqLen\hiddenDim)$ parameters and runtime.
\end{proposition}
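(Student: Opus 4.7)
The plan is to simply enumerate the components of a single $\BC$ layer, applying \Cref{weight-matrix-assumption} to the linear projection and the FFT to the convolution, and then summing parameter and runtime costs.

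First, I would account for parameters. A single layer, per \Cref{def:bc}, is parameterized by $\mW \in \R^{\hiddenDim \times \hiddenDim}$, biases $\mB_1, \mB_2 \in \R^{\seqLen \times \hiddenDim}$, and convolution filters $\mConv \in \R^{\seqLen \times \hiddenDim}$. By \Cref{weight-matrix-assumption}, the kaleidoscope structure of $\mW$ forces its parameter count to $\tilde{O}(\hiddenDim)$ rather than $\Theta(\hiddenDim^2)$. The biases and filters each contribute $O(\seqLen \hiddenDim)$ parameters. Summing gives a total of $O(\seqLen \hiddenDim) + \tilde{O}(\hiddenDim) = \tilde{O}(\seqLen \hiddenDim)$.

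Next, I would account for runtime. The linear map $\vu \mapsto \vu \mW$ can be computed in $\tilde{O}(\hiddenDim)$ time per row, again by \Cref{weight-matrix-assumption}, for a total of $\tilde{O}(\seqLen \hiddenDim)$ across the $\seqLen$ rows. The column-wise convolution $\mConv \ast \vu$ amounts to $\hiddenDim$ independent length-$\seqLen$ linear convolutions, each of which can be carried out in $O(\seqLen \log \seqLen) = \tilde{O}(\seqLen)$ time via the FFT, for a total of $\tilde{O}(\seqLen \hiddenDim)$. Adding $\mB_1$ and $\mB_2$ and then taking the Hadamard product are each $O(\seqLen \hiddenDim)$ elementwise operations. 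Summing all contributions gives the claimed $\tilde{O}(\seqLen \hiddenDim)$ runtime.

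There is no real obstacle here — the statement is essentially an accounting exercise that leverages two ingredients: the kaleidoscope structure of $\mW$ (which avoids the $\hiddenDim^2$ blowup that a dense linear map would incur) and the FFT (which avoids the $\seqLen^2$ blowup of naive convolution). The only subtlety worth flagging is that the $\tilde{O}$ notation is hiding $\polylog(\hiddenDim)$ factors arising from the kaleidoscope representation and a $\log \seqLen$ factor from the FFT, both of which are standard.
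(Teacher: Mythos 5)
Your proof is correct and is essentially the intended justification: the paper imports this result from \cite{zoology} without reproving it, and the accounting you give --- $\tilde{O}(\hiddenDim)$ parameters and matrix--vector time for $\mW$ via Assumption~\ref{weight-matrix-assumption}, FFT-based column-wise convolution in $\tilde{O}(\seqLen)$ per column, and $O(\seqLen\hiddenDim)$ for the biases, filters, and Hadamard product --- is exactly the standard argument behind the cited bound. No gaps.
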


We state the following result that says arithmetic circuit can be represented as a $\BC$ model:
\begin{theorem}[\cite{zoology}, Theorem H.21]
\label{thm:AC-BC-equiv-zoology}
    For any {\em $(\seqLen\hiddenDim,s,\Delta,w)$-arithmetic circuit} $\calC$, there exists an equivalent $\BCtuple{\seqLen}{\Delta'}{\hiddenDim}{\seqLen'}{\hiddenDim'}$ with $\Delta'=\calO(\Delta\log{w})$, $\seqLen'=\calO(w), \hiddenDim'= \hiddenDim$ that simulates $\calC$.
\end{theorem}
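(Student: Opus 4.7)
The plan is to simulate $\calC$ depth by depth, translating each circuit depth into $O(\log w)$ $\BC$ layers. First, I partition the gates of $\calC$ by their depth into $\Delta$ ``super-layers,'' each containing at most $w$ gates by the width bound. The intermediate state of the simulation after super-layer $i$ holds the values computed by all gates at depth $\le i$ that are still needed downstream, laid out as an array of size $O(w) \times \hiddenDim$; this fixes the inner dimensions $\seqLen' = O(w)$ and $\hiddenDim' = \hiddenDim$. The initial state is simply the input assignment, arranged into a $\seqLen \times \hiddenDim$ block inside the larger $O(w) \times \hiddenDim$ grid.

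Within each super-layer I implement the three kinds of operations a $\BC$ layer natively supports. An addition gate of arbitrary fan-in is a sparse linear combination of the incoming state and so collapses into a single multiplication by $\mW$, fitting in one $\BC$ layer under Assumption~\ref{weight-matrix-assumption} (any sparse or permutation map lies in the Kaleidoscope hierarchy with $\polylog$ overhead). A multiplication gate of fan-in $2$ is implemented directly by the Hadamard product $\odot$, after shifting its two operands into aligned positions using convolution filters analogous to the one built in Proposition~\ref{prop:shift_down_kernel}. A multiplication gate of fan-in $k > 2$ is decomposed into a balanced binary tree of pairwise products, costing $\lceil \log_2 k \rceil \le \lceil \log_2 w \rceil$ additional $\BC$ layers. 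Correctness follows by induction on the super-layer index $i$: if the state correctly encodes the values at depth $< i$, the above construction produces the values at depth $i$. Summing over all $\Delta$ super-layers yields $\Delta' = O(\Delta \log w)$.

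The main obstacle is bookkeeping: at each super-layer the operands of different gates live at different positions in the state, and a single operand may be consumed by several gates at different future depths. I would handle this by invoking Corollary~\ref{cor:remember:extended} (the remember primitive) to selectively preserve ``carry-over'' values while the active operands are routed by shift convolutions and combined via $\odot$ or $\mW$. A secondary check is that every routing filter and linear map introduced satisfies the Kaleidoscope structure of Assumption~\ref{weight-matrix-assumption}; this holds because each such map is either a permutation, a constant shift, or a sparse selector, all of which admit $\polylog$-cost Kaleidoscope factorizations. Put together, this yields a $\BCtuple{\seqLen}{O(\Delta\log w)}{\hiddenDim}{O(w)}{\hiddenDim}$ model that simulates $\calC$, as claimed.
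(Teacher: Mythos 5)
This statement is imported verbatim from \cite{zoology} (Theorem H.21); the paper you are reading does not prove it, so the only fair comparison is with the cited source, whose argument is exactly the layer-by-layer simulation you describe: stratify the circuit by depth, keep the $O(w)$ live wire values as the inner state, realize multiplication gates with the Hadamard gating and additions/routing with the linear map and convolution, and pay $O(\log w)$ \BC~layers per circuit level for fan-in reduction and data movement. So your proposal is essentially the standard proof of the cited result.

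One imprecision worth fixing: an addition gate of arbitrary fan-in does \emph{not} in general ``collapse into a single multiplication by $\mW$,'' because in \Cref{def:bc} the matrix $\mW$ acts only along the hidden dimension (it maps $\vu[n,:]$ to $\vu[n,:]\mW$ row by row), whereas the operands of a gate may be scattered across different sequence positions; mixing along the sequence axis is only available through the convolution, whose filter is a fixed Toeplitz action per column rather than an arbitrary linear map. Consequently the routing of operands into aligned positions is not free --- it is precisely this data movement (together with your binary-tree reduction of high fan-in products) that consumes the $O(\log w)$ layers per level, via shift kernels as in \Cref{prop:shift_down_kernel} and the $\primRem$ machinery of \Cref{cor:remember:extended}. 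Your sketch already invokes these tools for the ``bookkeeping,'' so the overall accounting $\Delta' = O(\Delta \log w)$, $\seqLen' = O(w)$, $\hiddenDim' = \hiddenDim$ stands; just do not attribute the sequence-wise summation to $\mW$.
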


\newpage

\subsection{Primitives}\label{app:linalg_algos_construction}
In this section, we provide theoretical results about primitives.
\begin{itemize}
    \item In Appendix~\ref{app:baseconv_constructions_primitives}, we implement the three primitives (\textsc{Read}, \textsc{Linear}, and \textsc{Multiply}) from Section~\ref{sec:3.3_linalg_primitives} using \BC, each using a single layer.
    \item Next, in Appendix~\ref{app:primitives_gd} and~\ref{app:primitives_newtons}, we briefly sketch how the three primitives \textsc{Read}, \textsc{Linear}, and \textsc{Multiply} can be used in composition to exactly express gradient descent and Newton's method iterations on least squares (see Appendix~\ref{app:related_work}).
    \item Finally, in Appendix~\ref{app:softmax_attn_square}, we provide a proof that a single layer of causal softmax attention cannot exactly represent the entry-wise squaring function. As a corollary, since entry-wise square is a special case of \textsc{Multiply}, this implies that attention cannot exactly express the \textsc{Multiply} task for all arguments.
\end{itemize}

\paragraph{\BC~parameterization}

We recount the parameterization of \BC~from Equation~\ref{eqn:baseconv_parameterization_mainpaper}:
\begin{equation}
\begin{aligned}
\label{eq:BaseConv-appendix}
        \bm{y}
        &:= \left(
    \underbrace{\paren{\bm{u} \cdot \bm{W}_{gate}+\bm{b}_{gate}}}_{\mathclap{\textbf{Linear Projection}}}
    \odot \underbrace{\paren{\bm{h} \ast (\bm{u} \cdot \bm{W}_{in} + \bm{b}_{in}) +\bm{b}_{conv}}}_{\mathclap{\textbf{Convolution}}} 
    \right) \cdot \bm{W}_{out} + \bm{b}_{out}
    \quad \\
    &:=
    W_{out}(W_{gate}(\mathbf{u}) \odot Conv(W_{in}(\mathbf{u}))) 
\end{aligned}
\end{equation}

where $W_{in}, W_{gate}, W_{out}$ are linear projections $\mathbb{R}^\hiddenDim \to \mathbb{R}^\hiddenDim$.

\subsubsection{1-layer \BC~can implement linear algebra primitives}\label{app:baseconv_constructions_primitives}

Below, we formally define the linear algebra primitives we discuss in Section~\ref{sec:3.3_linalg_primitives}, and we describe our \BC~weight constructions.

\paragraph{Read}\label{app:baseconv_construction_read}
The \textsc{Read} operator, which maps inputs $\mathbf{u} \in \mathbb{R}^{\seqLen \times d}$ to outputs $\mathbf{y} \in \mathbb{R}^{\seqLen \times d}$, is:
\begin{equation}
    \textsc{Read}(i, j, a, b)(\mathbf{u}) =
    \begin{cases}
        \mathbf{u}[k, a:b] & k \neq j \\
        \mathbf{u}[i, a:b] & k = j
    \end{cases}.
\end{equation}

Our implementation requires the use of the positional encodings and residual connections within the \BC~architecture. Concretely, consider the input
$$\bm{u}_{in} =
\begin{pmatrix}
    \bm{e}_1 & \bm{e}_2 & \hdots & \bm{e}_\seqLen \\
    \hline
    \bm{u}[1, :] & \bm{u}[2, :] & \hdots & \bm{u}[\seqLen, :]
\end{pmatrix},$$
where the basis vector $\bm{e}_k$ represents the positional encoding for the $k$-th entry of the sequence. Define the output of the \BC~layer \textit{with residual connection}:
$$\bm{y} := W_{out}(W_{gate}(\mathbf{u}) \odot Conv(W_{in}(\mathbf{u})) + \bm{u}).$$
Then the following weight construction is equivalent to $\textsc{Read}(i, j, a, b)$:
\begin{itemize}
    \item $W_{gate}(\bm{u}[k, :]) := \bm{u}[k, j]\bm{1}^\hiddenDim$
    \item $Conv(W_{in}(\bm{u}))[k, :] := \bm{u}[k+i-j, :] - \bm{u}[k, :]$
    \item $W_{out} := proj(a:b)$.
\end{itemize}
In particular, $W_{gate}$ is defined such that
$$W_{gate}(\bm{u}[k, :]) =
\begin{cases}
    \bm{1}^\hiddenDim & k = j \\
    \bm{0}^\hiddenDim & k \neq j
\end{cases}.$$
Thus
$$W_{gate}(\mathbf{u}) \odot Conv(W_{in}(\mathbf{u})) =
\begin{cases}
    \bm{u}[k+i-j, :] - \bm{u}[k, :] = \bm{u}[i, :] - \bm{u}[j, :] & k = j \\
    \bm{0}^\hiddenDim & k \neq j
\end{cases}.$$
Finally,
$$W_{gate}(\mathbf{u}) \odot Conv(W_{in}(\mathbf{u})) + \bm{u} =
\begin{cases}
    \bm{u}[i, :] & k = j \\
    \bm{u}[k, :] & k \neq j
\end{cases}$$
so the final output of this layer will be exactly equivalent to $\textsc{Read}(i, j, a, b)$.
    
\paragraph{Linear transformation}\label{app:baseconv_construction_affine}
The \textsc{Linear} operator, which maps inputs $\mathbf{u} \in \mathbb{R}^{\seqLen \times d_{in}}$ to outputs $\mathbf{y} \in \mathbb{R}^{\seqLen \times d_{out}}$, is:
\begin{equation}
    \textsc{Linear}(\bm{H})(\mathbf{u}) =
    \mathbf{u}\bm{H}
\end{equation}
where $\bm{H} : \mathbb{R}^{d_{in}} \to \mathbb{R}^{d_{out}}$ is a linear map.

Define $Conv(W_{in}(\mathbf{u})) = \mathbf{1}_\hiddenDim$, $W_{gate} = I$, and $W_{out} = \bm{H}$. Then
$$W_{gate}(\mathbf{u}) \odot Conv(W_{in}(\mathbf{u})) = \bm{u}$$
so
$$W_{out}(W_{gate}(\mathbf{u}) \odot Conv(W_{in}(\mathbf{u}))) = \bm{u} \bm{H}.$$
Thus the output of this layer is exactly equivalent to $\textsc{Linear}(\bm{H})$.

\paragraph{Element-wise multiply}\label{app:baseconv_construction_multiply}
The \textsc{Multiply} operator, which maps inputs $\mathbf{u} \in \mathbb{R}^{\seqLen \times d_{in}}$ to outputs $\mathbf{y} \in \mathbb{R}^{\seqLen \times d_{out}}$, is:
\begin{equation}
    \textsc{Multiply}(a, b, d_{out})(\mathbf{u}) =
    \mathbf{u}[:, a:a+d_{out}] \odot \mathbf{u}[:, b:b+d_{out}]
\end{equation}

Define $Conv = \text{Identity}$, $W_{in} = proj(a:a+d_{out})$, $W_{gate} = proj(b:b+d_{out})$, and $W_{out} = \bm{I}$.

Then
$$W_{gate}(\mathbf{u}) \odot Conv(W_{in}(\mathbf{u})) = \mathbf{u}[:, a:a+d_{out}] \odot \mathbf{u}[:, b:b+d_{out}].$$
Since $W_{out} = \bm{I}$, the output of this layer will be equivalent to $\textsc{Multiply}(a, b, d_{out})$.

\subsubsection{Gradient descent}\label{app:primitives_gd}
We assume our input is of the form
$$\bm{u} = 
\begin{pmatrix}
    \bm{a}_1 & \hdots & \bm{a}_\seqLen & \bm{x}_0 \\
    b_1 & \hdots & b_\seqLen & 0
\end{pmatrix}.$$
Our goal is to compute the gradient update
\begin{equation}
    \bm{x}_1 := \bm{x}_0 - \eta \sum_{i=1}^\seqLen (\bm{x}_0^T \bm{a}_i - b_i) \bm{a}_i.
\end{equation}
Intuitively, our argument proceeds similarly to the causal gradient descent construction from Appendix~\ref{app:baseconv_construction_ls_gd_causal}:
\begin{itemize}
    \item First, we repeatedly apply \textsc{Read} and \textsc{Linear} to move the information $\{\bm{a}_i, b_i\} \, \forall i$ into e.g. the final entry of the sequence. Without loss of generality, we omit the rest of the sequence, and assume we have access to a large enough embedding dimension that we can make use of arbitrary amounts of memory.

    After this phase, our $\bm{u}$ is of the form
    $$\hdots
    \begin{pmatrix}
        \bm{x}_0 & 0 & \bm{a}_1 & \hdots & \bm{a}_\seqLen & b_1 & \hdots & b_\seqLen & \hdots
    \end{pmatrix}^T.$$

    \item Next, we use \textsc{Multiply} and \textsc{Linear} to compute and store $\{\bm{x}_0^T \bm{a}_i\}$ for all $i$. We will end up with
    $$\bm{u} = \hdots
    \begin{pmatrix}
        \bm{x}_0 & 0 & \{\bm{a}_i\}_i & \{b_i\}_i & \{\bm{x}_0^T \bm{a}_i\}_i & \hdots 
    \end{pmatrix}.$$

    \item We use \textsc{Linear} to compute and store $\{\bm{x}_0^T \bm{a}_i - b_i\}$ for all $i$:
    $$\bm{u} = \hdots
    \begin{pmatrix}
        \bm{x}_0 & 0 & \{\bm{a}_i\}_i & \{b_i\}_i & \{\bm{x}_0^T \bm{a}_i\}_i & \{\bm{x}_0^T \bm{a}_i - b_i \}_i & \hdots 
    \end{pmatrix}.$$

    \item We use \textsc{Multiply} and \textsc{Linear} to compute and store $\{(\bm{x}_0^T \bm{a}_i - b_i) \bm{a}_i\}$ for all $i$:
    $$\bm{u} = \hdots
    \begin{pmatrix}
        \bm{x}_0 & 0 & \{\bm{a}_i\}_i & \{b_i\}_i & \{\bm{x}_0^T \bm{a}_i\}_i & \{(\bm{x}_0^T \bm{a}_i - b_i) \bm{a}_i \}_i & \hdots 
    \end{pmatrix}.$$

    \item Finally, we can use \textsc{Linear} to compute the gradient update:
    $$\bm{u} = \hdots
    \begin{pmatrix}
        \bm{x}_0 - \eta \sum_{i=1}^\seqLen (\bm{x}_0^T \bm{a}_i - b_i) \bm{a}_i & 0 & \{\bm{a}_i\}_i & \{b_i\}_i & \{\bm{x}_0^T \bm{a}_i\}_i & \{(\bm{x}_0^T \bm{a}_i - b_i) \bm{a}_i \}_i & \hdots 
    \end{pmatrix}.$$
\end{itemize}

\subsubsection{Newton's method}\label{app:primitives_newtons}
We assume our input is of the form
$$\bm{u} = 
\begin{pmatrix}
    \bm{a}_1 & \hdots & \bm{a}_\seqLen & \bm{M}_0[1, :] & \hdots & \bm{M}_0[\hiddenDim, :] \\
    b_1 & \hdots & b_\seqLen & 0 & \hdots & 0
\end{pmatrix}.$$
Our goal is to compute the Newton's iterate:
\begin{equation}
    \bm{M}_{1} := \bm{M}_0 (2\bm{I} - (\bm{a}^T \bm{a}) \bm{M}_0),
\end{equation}
where
\begin{equation}
    \bm{a} =
    \begin{pmatrix}
        \leftarrow \bm{a}_1 \rightarrow \\
        \vdots \\
        \leftarrow \bm{a}_\seqLen \rightarrow
    \end{pmatrix}
    , \quad
    \bm{b} =
    \begin{pmatrix}
        b_1 \\
        \vdots \\
        b_\seqLen
    \end{pmatrix}.
\end{equation}
For any matrix $\bm{M} \in \mathbb{R}^{n \times p}$, let $\flatten$ denote the \texttt{flatten} operation, so that $\flatten(\bm{M})$ represent a vectorized version of $\bm{M}$: $\flatten(\bm{M}) \in \mathbb{R}^{np}$.

We proceed similarly to the argument from Appendix~\ref{app:primitives_gd}. 
\begin{itemize}
    \item First, we repeatedly apply \textsc{Read} and \textsc{Linear} to move all information $\{\bm{a}_i\}_i \, \forall i$ and $\flatten(\bm{M})$ to e.g. the final entry of the sequence. We omit the rest of the sequence for notational ease, and we assume we have access to a large enough embedding dimension that we can make use of arbitrary amounts of memory.

    After this phase, we have
    $$\bm{u} = \hdots
    \begin{pmatrix}
        \flatten(\bm{M}_0) & \{\bm{a}_i\}_i & \hdots
    \end{pmatrix}.$$

    \item Using \textsc{Linear}, we can copy and rearrange the $\bm{a}_i$'s to construct copies of $\flatten(\bm{a})$ and $\flatten(\bm{a}^T)$:
    $$\bm{u} = \hdots
    \begin{pmatrix}
        \flatten(\bm{M}_0) & \{\bm{a}_i\}_i & \flatten(\bm{a}^T) & \flatten(\bm{a}) & \hdots
    \end{pmatrix}.$$

    \item Now, note that we can represent the matrix multiplication $\bm{a}^T \bm{a}$ as a linear combination of the entries of the element-wise multiplication $\flatten(\bm{a}^T) \odot \flatten(\bm{a})$. This means that we can obtain $\flatten(\bm{a}^T \bm{a})$ using a single application of \textsc{Multiply} and \textsc{Linear}:
    $$\bm{u} = \hdots
    \begin{pmatrix}
        \flatten(\bm{M}_0) & \{\bm{a}_i\}_i & \flatten(\bm{a}^T) & \flatten(\bm{a}) & \flatten(\bm{a}^T \bm{a}) \hdots
    \end{pmatrix}.$$

    \item By the same argument, we can obtain $\flatten((\bm{a}^T \bm{a}) \bm{M}_0)$ using another application of \textsc{Multiply} and \textsc{Linear}:
    $$\bm{u} = \hdots
    \begin{pmatrix}
        \flatten(\bm{M}_0) & \{\bm{a}_i\}_i & \flatten(\bm{a}^T) & \flatten(\bm{a}) & \flatten((\bm{a}^T \bm{a}) \bm{M}_0) \hdots
    \end{pmatrix}.$$

    \item Finally, we have that $\flatten(\bm{M}_1) := 2\flatten(\bm{M}_0) - \flatten((\bm{a}^T \bm{a}) \bm{M}_0)$ can be obtained using \textsc{Linear} once more:
    $$\bm{u} = \hdots
    \begin{pmatrix}
        \flatten(\bm{M}_1) & \{\bm{a}_i\}_i & \flatten(\bm{a}^T) & \flatten(\bm{a}) & \flatten((\bm{a}^T \bm{a}) \bm{M}_0) \hdots
    \end{pmatrix}.$$
\end{itemize}

\newpage

\subsubsection{Softmax attention can't implement element-wise squaring.}\label{app:softmax_attn_square}
In this section, we consider the following parameterization of \textit{softmax attention}:
\begin{equation}
    \text{Attn}(\bm{u}) = \text{softmax}\left((\bm{u} \bm{W}_{\bm{Q}}) (\bm{u} \bm{W}_{\bm{K}})^T + \bm{M} \right) (\bm{u} \bm{W}_{\bm{V}} + \bm{B}),
\end{equation}
where $\bm{u} \in \mathbb{R}^{\seqLen \times \hiddenDim}$, $\bm{W}_{\bm{Q}}, \bm{W}_{\bm{K}}, \bm{W}_{\bm{V}} \in \mathbb{R}^{\hiddenDim \times \hiddenDim}$, $\bm{B} \in \mathbb{R}^{\seqLen \times \hiddenDim}$, and $\bm{M} \in \mathbb{R}^{\seqLen \times \seqLen}$ is the causal attention mask:
\begin{equation}
\bm{M}_{ij} =
\begin{cases}
    -\infty & i < j \\
    0 & \text{ otherwise}
\end{cases}
\end{equation}
\theorem{
    One-layer single-headed causal softmax attention cannot exactly represent the entry-wise squaring function $\textsc{Square} : \mathbb{R}^{\seqLen \times \hiddenDim} \to \mathbb{R}^{\seqLen \times \hiddenDim}$ s.t.
    $$\textsc{Square}(\bm{u})_{ij} = \bm{u}_{ij}^2$$
    for all $\bm{u} \in \mathbb{R}^{\seqLen \times \hiddenDim}$.
}\label{thm:softmax_attn_square_app}
\proof{
    We proceed by contradiction. Let's assume there exists $\bm{W}_{\bm{Q}}, \bm{W}_{\bm{K}}, \bm{W}_{\bm{V}}, \bm{B} \in \mathbb{R}^{\hiddenDim \times \hiddenDim}$ and $\bm{B} \in \mathbb{R}^{\seqLen \times \hiddenDim}$ such that $\forall \bm{u} \in \mathbb{R}^{\seqLen \times \hiddenDim}$,
    \begin{equation}
        \text{softmax}\left((\bm{u} \bm{W}_{\bm{Q}}) (\bm{u} \bm{W}_{\bm{K}})^T + \bm{M} \right) (\bm{u} \bm{W}_{V} + \bm{B}) = \textsc{Square}(\bm{u}).
    \end{equation}
    Consider the set of inputs $\bm{u} \in \mathbb{R}^{\seqLen \times \hiddenDim}$ with at most one non-zero entry, defined as
    \begin{equation}
        \bm{u}[i, j] =
        \begin{cases}
            \bm{u}_{ab} & (i, j) = (a, b) \\
            0 & \text{else}
        \end{cases}
    \end{equation}
    for an arbitrary choice of $a \in [\seqLen]$, $b \in [\hiddenDim]$. Then:
    \begin{equation}
        \bm{Q} := \bm{u} \bm{W}_{\bm{Q}} =
        \begin{pmatrix}
            \bm{0}^\seqLen \\
            \hline \\
            \vdots \\
            \hline \\
            \bm{0}^\seqLen \\
            \hline \\
            \bm{u}_{ab} \bm{W}_{\bm{Q}}[b, :] \\
            \hline \\
            \vdots \\
            \hline \\
            \bm{0}^\seqLen
        \end{pmatrix}
    \end{equation}
    where $\bm{Q}$'s rows are all $\bm{0}^N$ except for the $a$-th, which is $\bm{u}_{ab} \bm{W}_{\bm{Q}}[b, :]$.

    Similarly:
    \begin{equation}
        \bm{K} := \bm{u} \bm{W}_{\bm{K}} =
        \begin{pmatrix}
            \bm{0}^\seqLen \\
            \hline \\
            \vdots \\
            \hline \\
            \bm{0}^\seqLen \\
            \hline \\
            \bm{u}_{ab} \bm{W}_{\bm{K}}[b, :] \\
            \hline \\
            \vdots \\
            \hline \\
            \bm{0}^\seqLen
        \end{pmatrix}
    \end{equation}
    and
    \begin{equation}
        \bm{V} := \bm{u} \bm{W}_{\bm{V}} =
        \begin{pmatrix}
            \bm{0}^\seqLen \\
            \hline \\
            \vdots \\
            \hline \\
            \bm{0}^\seqLen \\
            \hline \\
            \bm{u}_{ab} \bm{W}_{\bm{V}}[b, :] \\
            \hline \\
            \vdots \\
            \hline \\
            \bm{0}^\seqLen
        \end{pmatrix}
    \end{equation}

    Then the pre-softmax attention matrix, $\bm{A}' = \bm{Q} \bm{K}^T$, satisfies
    \begin{equation}
         \bm{A}'_{ij} =
        \begin{cases}
            \bm{u}_{ab}^2 (\bm{W}_{\bm{Q}} \bm{W}_{\bm{K}}^T)[b, b] & (i, j) = (a, a) \\
            0 & \text{otherwise}
        \end{cases}.
    \end{equation}
    Define
    \begin{equation}
        C := \bm{u}_{ab}^2 (\bm{W}_{\bm{Q}} \bm{W}_{\bm{K}}^T)[b, b].
    \end{equation}
    Now consider what happens after we apply the softmax operator. Recall that the softmax operator is defined as
    \begin{equation}
        \text{softmax}(\bm{z})[i] = \frac{\exp(\bm{z}[i])}{\sum_{j=1}^\hiddenDim \exp(\bm{z}[j])}
    \end{equation}
    for $\bm{z} \in \mathbb{R}^\hiddenDim$.
    Then $\bm{A} := \text{softmax}(\bm{A}' + \bm{M})$ satisfies
    \begin{equation}
        \bm{A}_{ij} =
        \begin{cases}
            \frac{1}{i} & i \neq a \\
            \frac{1}{\exp(C) + a-1} & i = a, \, j \neq a \\
            \frac{\exp(C)}{\exp(C) + a-1} & (i, j) = (a, a)
        \end{cases}
    \end{equation}

    Now let's consider the output of softmax attention:
    \begin{equation}
        \bm{O} = \bm{A} (\bm{V} + \bm{B})
    \end{equation}
    such that $\bm{O} = \textsc{Square}(\bm{u})$.

    Note that for $i \neq a$:
    \begin{equation}
        \bm{O}[i, :] = \frac{1}{i} \sum_{k=1}^i (\bm{V} + \bm{B})[k, :]
    \end{equation}
    and this must also be equal to $\bm{0}^\seqLen = \textsc{Square}(\bm{u})[i, :]$. We consider three cases:
    \begin{itemize}
        \item First, consider $i < a$ in order from $i=1,\hdots,a-1$. Since this equality is true for all $i < a$, we can verify that $(\bm{V} + \bm{B})[i, :]$ must equal $\bm{0}^\seqLen$ for all $i < a$.
        \item Next, looking at $i = a+1$, we have
        \begin{equation}
            \frac{1}{a+1} \left( (\bm{V} + \bm{B})[a, :] + (\bm{V} + \bm{B})[a+1, :] \right) = \bm{0}^\seqLen
        \end{equation}
        so we must have
        \begin{equation}
            (\bm{V} + \bm{B})[a, :] = -(\bm{V} + \bm{B})[a+1, :]
        \end{equation}
        \item Finally, from $i \geq a+1$, we can again conclude that $(\bm{V} + \bm{B})[i, :]$ must equal $\bm{0}^\seqLen$ for all $i > a+1$.
    \end{itemize}

    This means the only rows of $\bm{V} + \bm{B}$ that might not be zero are $(\bm{V} + \bm{B})[a, :]$ and $(\bm{V} + \bm{B})[a+1, :]$. Thus looking at the $a$-th row:
    \begin{align*}
        \frac{\exp(C)}{\exp(C) + a-1} (\bm{V} + \bm{B})[a, :] &= \textsc{Square}(\bm{u})[a, :] \\
        &=
        \begin{bmatrix}
            0 &\hdots &\bm{u}_{ab}^2 &\hdots &0
        \end{bmatrix}
    \end{align*}
    Recall that from above,
    \begin{equation}
        \bm{V}[a, :] = \bm{u}_{ab} \bm{W}_{\bm{V}}[b, :]
    \end{equation}

    Then analyzing entry-wise, we have:
    \begin{equation}
        \frac{\exp(C)}{\exp(C) + a-1} \left( \bm{u}_{ab} \bm{W}_{\bm{V}}[b, j] + \bm{B}[a, j] \right) = 0
    \end{equation}
    for all $j \neq b$, and
    \begin{equation}
        \frac{\exp(C)}{\exp(C) + a-1} \left( \bm{u}_{ab} \bm{W}_{\bm{V}}[b, b] + \bm{B}[a, b] \right) = \bm{u}_{ab}^2.
    \end{equation}

    We now plug back in our expression for $C$ and simplifying the latter equation. For ease of notation, denote $A := (\bm{W}_{\bm{Q}} \bm{W}_{\bm{K}}^T)_{bb}$, $V := \bm{W}_{\bm{V}}[b, b]$, and $B := \bm{B}[a, b]$. Then the expression simplifies to:
    \begin{align*}
        V \exp(A \bm{u}_{ab}^2) \bm{u}_{ab} + B \exp(A \bm{u}_{ab}^2) &= \exp(A \bm{u}_{ab}^2) \bm{u}_{ab}^2 + (a-1) \bm{u}_{ab}^2
    \end{align*}
    This must hold for all non-zero values of $\bm{u}_{ab}$. We can take $V = B = 0$, but we are still left with
    \begin{align*}
        - \exp(A \bm{u}_{ab}^2) \bm{u}_{ab}^2 &= (a-1) \bm{u}_{ab}^2 \\
        - \exp(A \bm{u}_{ab}^2) &= a-1
    \end{align*}
    However, there is no choice of $A$ such that this statement holds. This completes the proof by contradiction.
}

    As a corollary, we have

    \begin{corollary}
    \label{cor:softmax-attn-cant-multiply}
        One-layer single-headed causal softmax attention cannot exactly represent the entry-wise multiply function $\textsc{Multiply} : \mathbb{R}^{\seqLen \times \hiddenDim} \to \mathbb{R}^{\seqLen \times d_{out}}$ s.t.
        \begin{equation}
            \textsc{Multiply}(a, b, d_{out})(\mathbf{u}) =
            \mathbf{u}[:, a:a+d_{out}] \odot \mathbf{u}[:, b:b+d_{out}]
        \end{equation}
        for all $\bm{u} \in \mathbb{R}^{\seqLen \times \hiddenDim}$ and all choices of $a$, $b$, $d_{out}$.
    \end{corollary}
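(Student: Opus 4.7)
The plan is to obtain this as an immediate consequence of Theorem~\ref{thm:softmax_attn_square_app} by noting that $\textsc{Square}$ is literally a special case of $\textsc{Multiply}$. Taking $a = b = 0$ and $d_{out} = \hiddenDim$ in the definition of the multiply primitive, we have
\[
\textsc{Multiply}(0, 0, \hiddenDim)(\mathbf{u}) = \mathbf{u}[:, 0{:}\hiddenDim] \odot \mathbf{u}[:, 0{:}\hiddenDim] = \mathbf{u} \odot \mathbf{u} = \textsc{Square}(\mathbf{u})
\]
for every $\mathbf{u} \in \mathbb{R}^{\seqLen \times \hiddenDim}$. Consequently, a single-layer single-headed causal softmax attention that exactly represented $\textsc{Multiply}$ with this particular choice of parameters would, by definitional unfolding, exactly represent $\textsc{Square}$ on all inputs, contradicting Theorem~\ref{thm:softmax_attn_square_app}. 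This gives the corollary.

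The one bit of care worth flagging is the quantifier pattern in the statement (``and all choices of $a, b, d_{out}$''). Under the natural reading (no softmax attention expresses the full parameterized family $\textsc{Multiply}$), the reduction above is already sufficient, since we have exhibited a single $(a, b, d_{out})$ for which softmax attention cannot do the job. Under the stronger reading (for every fixed triple $(a, b, d_{out})$, no softmax attention expresses $\textsc{Multiply}(a, b, d_{out})$), I would handle a general triple by restricting to inputs supported only on columns $[a, a{+}d_{out}) \cup [b, b{+}d_{out})$ with the two slices forced to be equal, so that $\textsc{Multiply}$ again reduces to entry-wise squaring of the common slice.

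The argument of Theorem~\ref{thm:softmax_attn_square_app} then replays almost verbatim after grouping the relevant pairs of columns of $\bm{W}_{\bm{Q}}, \bm{W}_{\bm{K}}, \bm{W}_{\bm{V}}$ into effective single columns indexed by $k \in \{0, \ldots, d_{out}-1\}$. I expect the only obstacle here to be indexing bookkeeping: the terminal contradiction $-\exp(A \mathbf{u}_{ab}^2) = a - 1$ in the theorem's proof comes purely from the mismatch between an exponential in $\mathbf{u}_{ab}^2$ and a polynomial in $\mathbf{u}_{ab}^2$, and neither the column identification nor the restriction to a sub-family of inputs changes that structural mismatch.
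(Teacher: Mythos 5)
Your argument is exactly the paper's proof: take $a=b=0$ and $d_{out}=\hiddenDim$ so that $\textsc{Multiply}$ specializes to $\textsc{Square}$, then invoke Theorem~\ref{thm:softmax_attn_square_app}. The extra discussion of the stronger quantifier reading goes beyond what the paper does, but the core reduction is identical and correct.
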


    \begin{proof}
        Note that for $a=0$, $b=0$, and $d_{out} = \hiddenDim$,
        $$\textsc{Square}(\bm{u}) = \textsc{Multiply}(a, b, d_{out})(\bm{u}).$$
        Since softmax attention cannot exactly represent $\textsc{Square}$ for all $\bm{u}$, it also cannot represent $\textsc{Multiply}$ for all $\bm{u}$.
    \end{proof}

\newpage

\subsection{Upper and lower bounds with \BC~for gradient descent}\label{app:baseconv_constructions}
In this section, we detail upper and lower bounds for implementing gradient descent using \BC, as discussed in Section~\ref{sec:convs_theory}.
\begin{itemize}
    \item \textbf{Upper bounds.} We provide two explicit constructions for implementing iterations gradient descent on linear regression: one for \textit{non-causal} \BC~
    requiring $O(1)$ layers and $O(\hiddenDim)$ state size, and one for \textit{causal} \BC~
    requiring $O(1)$ layers and $O(\hiddenDim^2)$ state size.
    \item \textbf{Lower bounds.} In Appendix~\ref{app:baseconv_construction_ls_gd_lowerbound}, we prove that our constructions are asymptotically optimal with respect to layers and state size.
\end{itemize}

\subsubsection{Upper bounds: \BC~can implement gradient descent for linear regression}\label{app:baseconv_construction_ls_gd}

In this section, we provide weight constructions for exactly implementing gradient descent on linear regression. Recall:
\begin{equation}
    \mathcal{L}_\seqLen = \frac{1}{2\seqLen} \sum_{i=1}^\seqLen (\bm{x}^T \bm{a}_i - \bm{b}_i)^2
\end{equation}
so
\begin{align}
    \nabla_{\bm{x}} \mathcal{L}_\seqLen &= \frac{1}{\seqLen} \sum_{i=1}^\seqLen (\bm{x}^T \bm{a}_i - \bm{b}_i) \bm{a}_i \label{eqn:ls_gradient_noncausal} \\
    &= \frac{1}{\seqLen} \left( \sum_{i=1}^\seqLen \bm{b}_i \bm{a}_i - \left( \sum_{i=1}^\seqLen \bm{a}_i \bm{a}_i^T\right) \bm{x}\right) \label{eqn:ls_gradient_causal}
\end{align}

\paragraph{Non-causal \BC}\label{app:baseconv_construction_ls_gd_noncausal}

This weight construction uses Equation~\ref{eqn:ls_gradient_noncausal} to compute the gradient descent update.

We note that non-causal constructions for in-context linear regression are standard in the literature: e.g.~\cite{von2023transformers, ahn2024transformers}.

We start with input:
\[
\bm{b} \equiv
\begin{pmatrix}
    \bm{a}_1 & \hdots & \bm{a}_\seqLen & \bm{a}_q \\
    \hline \\
    
    \bm{b}_1 & \hdots & \bm{b}_\seqLen & 0
\end{pmatrix}
\]

We define the initial embedding:
\[
\begin{pmatrix}
    \bm{a}_1 & \hdots & \bm{a}_\seqLen & \bm{0}^\hiddenDim \\
    \hline \\
    \bm{b}_1 & \hdots & \bm{b}_\seqLen & 0 \\
    \hline \\
    \bm{x}_0 & \hdots & \bm{x}_0 & \bm{x}_0 \\
    \hline \\
    \bm{0}^\hiddenDim & \hdots & \bm{0}^\hiddenDim & \bm{0}^\hiddenDim \\
    \hline \\
    \bm{0}^\hiddenDim & \hdots & \bm{0}^\hiddenDim & \bm{0}^\hiddenDim \\
    \hline
    \hline \\
    \bm{0}^\hiddenDim & \hdots & \bm{0}^\hiddenDim & \bm{a}_q \\
    \hline \\
    0 & \hdots & 0 & 0
\end{pmatrix}
\]

We drop the bottom two rows of the block matrix representation for now and show how to perform the gradient descent update with the rest of the embedding.

Layer 1:
\[
\underbrace{
\begin{pmatrix}
    \leftarrow \bm{a}_i \rightarrow \\
    \hline \\
    \leftarrow \bm{b}_i \rightarrow \\
    \hline \\
    \leftarrow \bm{x}_0 \rightarrow \\
    \hline \\
    \leftarrow \bm{a}_i \rightarrow \\
    \hline \\
    \leftarrow \bm{0}^\hiddenDim \rightarrow \\
\end{pmatrix}}_{conv(in\_proj(\cdot))}
\odot
\underbrace{
\begin{pmatrix}
    \leftarrow \bm{1}^\hiddenDim \rightarrow \\
    \hline \\
    \leftarrow \bm{1} \rightarrow \\
    \hline \\
    \leftarrow \bm{1}^\hiddenDim \rightarrow \\
    \hline \\
    \leftarrow \bm{x}_0 \rightarrow \\
    \hline \\
    \leftarrow \bm{0}^\hiddenDim \rightarrow \\
\end{pmatrix}}_{gate\_proj(\cdot)}
=
\begin{pmatrix}
    \leftarrow \bm{a}_i \rightarrow \\
    \hline \\
    \leftarrow \bm{b}_i \rightarrow \\
    \hline \\
    \leftarrow \bm{x}_0 \rightarrow \\
    \hline \\
    \leftarrow \bm{a}_i \odot \bm{x}_0 \rightarrow \\
    \hline \\
    \leftarrow \bm{0}^\hiddenDim \rightarrow
\end{pmatrix}
\]
\[
\begin{pmatrix}
    \leftarrow \bm{a}_i \rightarrow \\
    \hline \\
    \leftarrow \bm{b}_i \rightarrow \\
    \hline \\
    \leftarrow \bm{x}_0 \rightarrow \\
    \hline \\
    \leftarrow \bm{a}_i \odot \bm{x}_0 \rightarrow \\
    \hline \\
    \leftarrow \bm{0}^\hiddenDim \rightarrow
\end{pmatrix}
\underbrace{\to}_{out\_proj(\cdot)}
\begin{pmatrix}
    \leftarrow \bm{a}_i \rightarrow \\
    \hline \\
    \leftarrow \bm{b}_i \rightarrow \\
    \hline \\
    \leftarrow \bm{x}_0 \rightarrow \\
    \hline \\
    \leftarrow \bm{a}_i \odot \bm{x}_0 \rightarrow \\
    \hline \\
    \leftarrow (\bm{x}_0^T \bm{a}_i - \bm{b}_i) \bm{1}^\hiddenDim \rightarrow
\end{pmatrix}
\]

Layer 2:
\[
\underbrace{
\begin{pmatrix}
    \leftarrow \bm{a}_i \rightarrow \\
    \hline \\
    \leftarrow \bm{b}_i \rightarrow \\
    \hline \\
    \leftarrow \bm{x}_0 \rightarrow \\
    \hline \\
    \leftarrow \bm{a}_i \odot \bm{x}_0 \rightarrow \\
    \hline \\
    \leftarrow (\bm{x}_0^T \bm{a}_i - \bm{b}_i) \bm{1}^\hiddenDim \rightarrow
\end{pmatrix}
}_{conv(in\_proj(\cdot))}
\odot
\underbrace{
\begin{pmatrix}
    \leftarrow \bm{1}^\hiddenDim \rightarrow \\
    \hline \\
    \leftarrow \bm{1} \rightarrow \\
    \hline \\
    \leftarrow \bm{1}^\hiddenDim \rightarrow \\
    \hline \\
    \leftarrow \bm{1}^\hiddenDim \rightarrow \\
    \hline \\
    \leftarrow \bm{a}_i \rightarrow
\end{pmatrix}
}_{gate\_proj(\cdot)}
=
\begin{pmatrix}
    \leftarrow \bm{a}_i \rightarrow \\
    \hline \\
    \leftarrow \bm{b}_i \rightarrow \\
    \hline \\
    \leftarrow \bm{x}_0 \rightarrow \\
    \hline \\
    \leftarrow \bm{a}_i \odot \bm{x}_0 \rightarrow \\
    \hline \\
    \leftarrow (\bm{x}_0^T \bm{a}_i - \bm{b}_i) \bm{a}_i \rightarrow
\end{pmatrix}
\]
\[
\begin{pmatrix}
    \leftarrow \bm{a}_i \rightarrow \\
    \hline \\
    \leftarrow \bm{b}_i \rightarrow \\
    \hline \\
    \leftarrow \bm{x}_0 \rightarrow \\
    \hline \\
    \leftarrow \bm{a}_i \odot \bm{x}_0 \rightarrow \\
    \hline \\
    \leftarrow (\bm{x}_0^T \bm{a}_i - \bm{b}_i) \bm{a}_i \rightarrow
\end{pmatrix}
\underbrace{\to}_{out\_proj(\cdot) = Identity}
\begin{pmatrix}
    \leftarrow \bm{a}_i \rightarrow \\
    \hline \\
    \leftarrow \bm{b}_i \rightarrow \\
    \hline \\
    \leftarrow \bm{x}_0 \rightarrow \\
    \hline \\
    \leftarrow \bm{a}_i \odot \bm{x}_0 \rightarrow \\
    \hline \\
    \leftarrow (\bm{x}_0^T \bm{a}_i - \bm{b}_i) \bm{a}_i \rightarrow
\end{pmatrix}
\]

Layer 3:
\[
\begin{pmatrix}
    \leftarrow \bm{a}_i \rightarrow \\
    \hline \\
    \leftarrow \bm{b}_i \rightarrow \\
    \hline \\
    \leftarrow \bm{x}_0 \rightarrow \\
    \hline \\
    \leftarrow \bm{a}_i \odot \bm{x}_0 \rightarrow \\
    \hline \\
    \leftarrow (\bm{x}_0^T \bm{a}_i - \bm{b}_i) \bm{a}_i \rightarrow
\end{pmatrix}
\underbrace{\to}_{conv(in\_proj(\cdot))}
\begin{pmatrix}
    \leftarrow \bm{a}_i \rightarrow \\
    \hline \\
    \leftarrow \bm{b}_i \rightarrow \\
    \hline \\
    \leftarrow \bm{x}_0 \rightarrow \\
    \hline \\
    \leftarrow \bm{a}_i \odot \bm{x}_0 \rightarrow \\
    \hline \\
    \leftarrow \sum_{i=1}^\seqLen (\bm{x}_0^T \bm{a}_i - \bm{b}_i) \bm{a}_i \rightarrow
\end{pmatrix}
\]
\[
\begin{pmatrix}
    \leftarrow \bm{a}_i \rightarrow \\
    \hline \\
    \leftarrow \bm{b}_i \rightarrow \\
    \hline \\
    \leftarrow \bm{x}_0 \rightarrow \\
    \hline \\
    \leftarrow \bm{a}_i \odot \bm{x}_0 \rightarrow \\
    \hline \\
    \leftarrow \underbrace{\sum_{i=1}^\seqLen (\bm{x}_0^T \bm{a}_i - \bm{b}_i) \bm{a}_i}_{= \nabla_{\bm{x}} \mathcal{L}(\bm{x}_0)} \rightarrow
\end{pmatrix}
\underbrace{\to}_{out\_proj(\cdot)}
\begin{pmatrix}
    \leftarrow \bm{a}_i \rightarrow \\
    \hline \\
    \leftarrow \bm{b}_i \rightarrow \\
    \hline \\
    \leftarrow \bm{x}_0 - \eta \nabla_{\bm{x}} \mathcal{L}(\bf{w}_0) \rightarrow \\
    \hline \\
    \leftarrow \bm{0}^\hiddenDim \rightarrow \\
    \hline \\
    \leftarrow \bm{0}^\hiddenDim \rightarrow
\end{pmatrix}
\]

\paragraph{Causal \BC}\label{app:baseconv_construction_ls_gd_causal}

This weight construction uses Equation~\ref{eqn:ls_gradient_causal} to compute the gradient descent update.

We start with input:
\[
\bm{b} \equiv
\begin{pmatrix}
    \bm{a}_1 & \hdots & \bm{a}_\seqLen & \bm{0}^\hiddenDim \\
    \hline \\
    \bm{b}_1 & \hdots & \bm{b}_\seqLen & 0 \\
    \hline \\
    \bm{0}^\hiddenDim & \hdots & \bm{0}^\hiddenDim & \bm{x}_0
\end{pmatrix}
\]

We use two \BC~layers to construct an initial embedding, after which each gradient descent update step will only require a single \BC~layer.

In the following construction, we use $\flatten$ to denote the \texttt{flatten} operation, which maps an $M \times N$ matrix to a $MN$-entry vector with the same elements.

Layer 1:
\[
\underbrace{
\begin{pmatrix}
    \bm{a}_1 & \hdots & 
    \bm{a}_\seqLen & \bm{0}^\hiddenDim \\
    \hline \\
    \bm{b}_1 & \hdots & 
    \bm{b}_\seqLen & 0 \\
    \hline \\
    \bm{0}^\hiddenDim & \hdots & \bm{0}^\hiddenDim & \bm{x}_0 \\
    \hline
    \hline \\
    \bm{a}_1 & \hdots & \bm{a}_\seqLen & \bm{0}^\hiddenDim \\
    \hline \\
    \flatten(\bm{a}_1 (\bm{1}^\hiddenDim)^T) & \hdots & \flatten(\bm{a}_\seqLen (\bm{1}^\hiddenDim)^T) & \flatten(\bm{0}^\hiddenDim (\bm{0}^\hiddenDim)^T)
\end{pmatrix}
}_{conv(in\_proj(\cdot))}
\odot
\underbrace{
\begin{pmatrix}
    \leftarrow \bm{1}^\hiddenDim \rightarrow \\
    \hline \\
    \leftarrow \bm{1} \rightarrow \\
    \hline \\
    \leftarrow \bm{1}^\hiddenDim \rightarrow \\
    \hline
    \hline \\
    \bm{b}_1 \bm{1}^\hiddenDim \quad \hdots \quad \bm{b}_\seqLen \bm{1}^\hiddenDim \quad \bm{0}^\hiddenDim \\
    \hline \\
    \flatten(\bm{1}^\hiddenDim \bm{a}_1^T) \quad \hdots \quad \flatten(\bm{1}^\hiddenDim \bm{a}_\seqLen^T) \quad \flatten(\bm{0}^\hiddenDim (\bm{0}^\hiddenDim)^T)
\end{pmatrix}
}_{gate\_proj(\cdot)}
=
\]
\[
\begin{pmatrix}
    \bm{a}_1 & \hdots & 
    \bm{a}_\seqLen & \bm{0}^\hiddenDim \\
    \hline \\
    \bm{b}_1 & \hdots & 
    \bm{b}_\seqLen & 0 \\
    \hline \\
    \bm{0}^\hiddenDim & \hdots & \bm{0}^\hiddenDim & \bm{x}_0 \\
    \hline
    \hline \\
    \bm{b}_1 \bm{a}_1 & \hdots & \bm{b}_1 \bm{a}_\seqLen & \bm{0}^\hiddenDim \\
    \hline \\
    \flatten(\bm{a}_1 \bm{a}_1^T) & \hdots & \flatten(\bm{a}_\seqLen \bm{a}_\seqLen^T) & \flatten(\bm{0}^\hiddenDim (\bm{0}^\hiddenDim)^T)
\end{pmatrix}
\underbrace{\to}_{
out\_proj = Identity}
\begin{pmatrix}
    \bm{a}_1 & \hdots & 
    \bm{a}_\seqLen & \bm{0}^\hiddenDim \\
    \hline \\
    \bm{b}_1 & \hdots & 
    \bm{b}_\seqLen & 0 \\
    \hline \\
    \bm{0}^\hiddenDim & \hdots & \bm{0}^\hiddenDim & \bm{x}_0 \\
    \hline
    \hline \\
    \bm{b}_1 \bm{a}_1 & \hdots & \bm{b}_1 \bm{a}_\seqLen & \bm{0}^\hiddenDim \\
    \hline \\
    \flatten(\bm{a}_1 \bm{a}_1^T) & \hdots & \flatten(\bm{a}_\seqLen \bm{a}_\seqLen^T) & \flatten(\bm{0}^\hiddenDim (\bm{0}^\hiddenDim)^T)
\end{pmatrix}
\]

Layer 2:
\[
\underbrace{
\begin{pmatrix}
    \bm{a}_1 \quad \hdots \quad \bm{a}_\seqLen \quad \bm{0}^\hiddenDim \\
    \hline \\
    \bm{b}_1 \quad \hdots \quad \bm{b}_\seqLen \quad 0 \\
    \hline \\
    \bm{0}^\hiddenDim \quad \hdots \quad \bm{0}^\hiddenDim \quad \bm{x}_0 \\
    \hline \\
    \leftarrow \sum_{i=1}^\seqLen \bm{b}_i \bm{a}_i \rightarrow \\
    \hline \\
    \leftarrow \sum_{i=1}^\seqLen \flatten(\bm{a}_i \bm{a}_i^T) \rightarrow
\end{pmatrix}}_{conv(in\_proj(\cdot))}
\odot
\underbrace{
\begin{pmatrix}
    \leftarrow \bm{1}^\hiddenDim \rightarrow \\
    \hline \\
    \leftarrow \bm{1} \rightarrow \\
    \hline \\
    \leftarrow \bm{1}^\hiddenDim \rightarrow \\
    \hline \\
    \leftarrow \bm{1}^\hiddenDim \rightarrow \\
    \hline \\
    \leftarrow \bm{1}^{\hiddenDim^2} \rightarrow
\end{pmatrix}}_{gate\_proj(\cdot)}
=
\begin{pmatrix}
    \bm{a}_1 \quad \hdots \quad \bm{a}_\seqLen \quad \bm{0}^\hiddenDim \\
    \hline \\
    \bm{b}_1 \quad \hdots \quad \bm{b}_\seqLen \quad 0 \\
    \hline \\
    \bm{0}^\hiddenDim \quad \hdots \quad \bm{0}^\hiddenDim \quad \bm{x}_0 \\
    \hline \\
    \leftarrow \sum_{i=1}^\seqLen \bm{b}_i \bm{a}_i \rightarrow \\
    \hline \\
    \leftarrow \sum_{i=1}^\seqLen \flatten(\bm{a}_i \bm{a}_i^T) \rightarrow
\end{pmatrix}
\]
\[
\begin{pmatrix}
    \bm{a}_1 \quad \hdots \quad \bm{a}_\seqLen \quad \bm{0}^\hiddenDim \\
    \hline \\
    \bm{b}_1 \quad \hdots \quad \bm{b}_\seqLen \quad 0 \\
    \hline \\
    \bm{0}^\hiddenDim \quad \hdots \quad \bm{0}^\hiddenDim \quad \bm{x}_0 \\
    \hline \\
    \leftarrow \sum_{i=1}^\seqLen \bm{b}_i \bm{a}_i \rightarrow \\
    \hline \\
    \leftarrow \sum_{i=1}^\seqLen \flatten(\bm{a}_i \bm{a}_i^T) \rightarrow
\end{pmatrix}
\underbrace{\to}_{out\_proj = Identity}
\begin{pmatrix}
    \bm{a}_1 \quad \hdots \quad \bm{a}_\seqLen \quad \bm{0}^\hiddenDim \\
    \hline \\
    \bm{b}_1 \quad \hdots \quad \bm{b}_\seqLen \quad 0 \\
    \hline \\
    \bm{0}^\hiddenDim \quad \hdots \quad \bm{0}^\hiddenDim \quad \bm{x}_0 \\
    \hline \\
    \leftarrow \sum_{i=1}^\seqLen \bm{b}_i \bm{a}_i \rightarrow \\
    \hline \\
    \leftarrow \sum_{i=1}^\seqLen \flatten(\bm{a}_i \bm{a}_i^T) \rightarrow
\end{pmatrix}
\]

Now, we use a single \BC~layer to implement a gradient descent update.

\[
\underbrace{
\begin{pmatrix}
    \bm{a}_1 \quad \hdots \quad \bm{a}_\seqLen \quad \bm{0}^\hiddenDim \\
    \hline \\
    \bm{b}_1 \quad \hdots \quad \bm{b}_\seqLen \quad 0 \\
    \hline \\
    \bm{0}^\hiddenDim \quad \hdots \quad \bm{0}^\hiddenDim \quad \bm{x}_0 \\
    \hline \\
    \bm{0}^\hiddenDim \quad \hdots \quad \bm{0}^\hiddenDim \quad \bm{1}^\hiddenDim \\
    \hline
    \hline \\
    \leftarrow \sum_{i=1}^\seqLen \bm{b}_i \bm{a}_i \rightarrow \\
    \hline \\
    \leftarrow \sum_{i=1}^\seqLen \flatten(\bm{a}_i \bm{a}_i^T) \rightarrow \\
    \hline
    \hline \\
    \leftarrow \sum_{i=1}^\seqLen \bm{b}_i \bm{a}_i \rightarrow \\
    \hline \\
    \leftarrow \sum_{i=1}^\seqLen \flatten(\bm{a}_i \bm{a}_i^T) \rightarrow
\end{pmatrix}}_{conv(in\_proj(\cdot))}
\odot
\underbrace{
\begin{pmatrix}
    \leftarrow \quad \bm{1}^\hiddenDim \quad \rightarrow \\
    \hline \\
    \leftarrow \quad \bm{1} \quad \rightarrow \\
    \hline \\
    \leftarrow \quad \bm{1}^\hiddenDim \quad \rightarrow \\
    \hline \\
    \leftarrow \quad \bm{1}^\hiddenDim \quad \rightarrow \\
    \hline
    \hline \\
    \leftarrow \quad \bm{1}^\hiddenDim \quad \rightarrow \\
    \hline \\
    \leftarrow \quad \bm{1}^{\hiddenDim^2} \quad \rightarrow \\
    \hline
    \hline \\
    \bm{0}^\hiddenDim \quad \hdots \quad \bm{0}^\hiddenDim \quad \bm{1}^\hiddenDim \\
    \hline\\
    \bm{0}^{\hiddenDim^2} \quad \hdots \quad \bm{0}^{\hiddenDim^2} \quad \flatten(\bm{1}^\hiddenDim \bm{x}_0^T)
\end{pmatrix}}_{gate\_proj(\cdot)}
=
\begin{pmatrix}
    \bm{a}_1 \quad \hdots \quad \bm{a}_\seqLen \quad \bm{0}^\hiddenDim \\
    \hline \\
    \bm{b}_1 \quad \hdots \quad \bm{b}_\seqLen \quad 0 \\
    \hline \\
    \bm{0}^\hiddenDim \quad \hdots \quad \bm{0}^\hiddenDim \quad \bm{x}_0 \\
    \hline \\
    \bm{0}^\hiddenDim \quad \hdots \quad \bm{0}^\hiddenDim \quad \bm{1}^\hiddenDim \\
    \hline
    \hline \\
    \leftarrow \sum_{i=1}^\seqLen \bm{b}_i \bm{a}_i \rightarrow \\
    \hline \\
    \leftarrow \sum_{i=1}^\seqLen \flatten(\bm{a}_i \bm{a}_i^T) \rightarrow \\
    \hline
    \hline \\
    \bm{0}^\hiddenDim \quad \hdots \quad \bm{0}^\hiddenDim \quad \sum_{i=1}^\seqLen \bm{b}_i \bm{a}_i \\
    \hline \\
    \bm{0}^{\hiddenDim^2} \quad \hdots \quad \bm{0}^{\hiddenDim^2} \quad \sum_{i=1}^\seqLen \flatten(\bm{a}_i (\bm{a}_i \odot \bm{x}_0)^T)
\end{pmatrix}
\]
Note that the gradient
\[
    \nabla_{\bm{x}} \mathcal{L}(\bm{x}_0) =
    \sum_{i=1}^\seqLen \bm{b}_i \bm{a}_i - \left( \sum_{i=1}^\seqLen \bm{a}_i \bm{a}_i^T\right) \bm{w_0}
\]
can be written as a linear combination of the vector
\[
\begin{pmatrix}
    \sum_{i=1}^\seqLen \bm{b}_i \bm{a}_i \\
    \hline \\
    \sum_{i=1}^\seqLen \flatten(\bm{a}_i (\bm{a}_i \odot \bm{x}_0)^T)
\end{pmatrix}
\]
so we can write a weight construction for $out\_proj$ that updates $w_0 \to w_0 - \eta \nabla_{\bm{x}} \mathcal{L}(\bm{x}_0)$:
\[
\begin{pmatrix}
    \bm{a}_1 \quad \hdots \quad \bm{a}_\seqLen \quad \bm{0}^\hiddenDim \\
    \hline \\
    \bm{b}_1 \quad \hdots \quad \bm{b}_\seqLen \quad 0 \\
    \hline \\
    \bm{0}^\hiddenDim \quad \hdots \quad \bm{0}^\hiddenDim \quad \bm{x}_0 \\
    \hline \\
    \bm{0}^\hiddenDim \quad \hdots \quad \bm{0}^\hiddenDim \quad \bm{1}^\hiddenDim \\
    \hline
    \hline \\
    \leftarrow \sum_{i=1}^\seqLen \bm{b}_i \bm{a}_i \rightarrow \\
    \hline \\
    \leftarrow \sum_{i=1}^\seqLen \flatten(\bm{a}_i \bm{a}_i^T) \rightarrow \\
    \hline
    \hline \\
    \bm{0}^\hiddenDim \quad \hdots \quad \bm{0}^\hiddenDim \quad \sum_{i=1}^\seqLen \bm{b}_i \bm{a}_i \\
    \hline \\
    \bm{0}^{\hiddenDim^2} \quad \hdots \quad \bm{0}^{\hiddenDim^2} \quad \sum_{i=1}^\seqLen \flatten(\bm{a}_i (\bm{a}_i \odot \bm{x}_0)^T)
\end{pmatrix}
\underbrace{\to}_{out\_proj}
\begin{pmatrix}
    \bm{a}_1 \quad \hdots \quad \bm{a}_\seqLen \quad \bm{0}^\hiddenDim \\
    \hline \\
    \bm{b}_1 \quad \hdots \quad \bm{b}_\seqLen \quad 0 \\
    \hline \\
    \bm{0}^\hiddenDim \quad \hdots \quad \bm{0}^\hiddenDim \quad \bm{x}_0 - \eta \nabla_{\bm{x}} \mathcal{L}(\bm{x}_0) \\
    \hline \\
    \bm{0}^\hiddenDim \quad \hdots \quad \bm{0}^\hiddenDim \quad \bm{1}^\hiddenDim \\
    \hline
    \hline \\
    \leftarrow \sum_{i=1}^\seqLen \bm{b}_i \bm{a}_i \rightarrow \\
    \hline \\
    \leftarrow \sum_{i=1}^\seqLen \flatten(\bm{a}_i \bm{a}_i^T) \rightarrow \\
    \hline
    \hline \\
    \bm{0}^\hiddenDim \quad \hdots \quad \bm{0}^\hiddenDim \quad \sum_{i=1}^\seqLen \bm{b}_i \bm{a}_i \\
    \hline \\
    \bm{0}^{\hiddenDim^2} \quad \hdots \quad \bm{0}^{\hiddenDim^2} \quad \sum_{i=1}^\seqLen \flatten(\bm{a}_i (\bm{a}_i \odot \bm{x}_0)^T)
\end{pmatrix}
\]

\subsubsection{Lower bounds: \BC~constructions are asymptotically optimal}\label{app:baseconv_construction_ls_gd_lowerbound}

Note that the non-causal weight construction in Appendix~\ref{app:baseconv_construction_ls_gd_noncausal} requires $O(1)$ layers and $O(\hiddenDim)$ state size, while the causal weight construction in Appendix~\ref{app:baseconv_construction_ls_gd_causal} requires $O(1)$ layers and $O(\hiddenDim^2)$ state size. Clearly the $O(\hiddenDim)$ state size requirement for non-causal models is tight, since one needs to store the gradient $\nabla_{\bm{x}} \mathcal{L} \in \mathbb{R}^{\hiddenDim}$. In this section, we prove that the $O(\hiddenDim^2)$ state size requirement for causal models is also asymptotically tight.

\begin{theorem}
    Any single-pass (causal) algorithm computing the gradient
    $$\nabla_{\bm{x}} \mathcal{L} = \sum_{j=1}^\seqLen b_j \bm{a}_j - \left( \sum_{j=1}^\seqLen \bm{a}_j \bm{a}_j^T \right) \bm{x}$$
    given inputs $\{(\bm{a}_1, b_1), \hdots, (\bm{a}_\seqLen, b_\seqLen); \, \bm{x}\}$, with $(\bm{a}_i, b_i) \in \mathbb{R}^{(\hiddenDim+1)\seqLen)}$ and $\bm{x} \in \mathbb{R}^\hiddenDim$,
    requires $\Omega(\hiddenDim^2)$ state size in the worst case, where $b_j \in \mathbb{R}$ and $\bm{a}_j, \bm{x} \in \mathbb{R}^\hiddenDim$.
\end{theorem}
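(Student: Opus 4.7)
The plan is a standard encoding/indistinguishability argument: show that after a single causal pass over the $(\bm{a}_j,b_j)$ pairs, the internal state must fully encode the symmetric matrix $\bm{M}:=\sum_{j=1}^{\seqLen}\bm{a}_j\bm{a}_j^T$, which has $\Theta(\hiddenDim^2)$ independent degrees of freedom. Concretely, a single-pass causal algorithm can be described as maintaining a state $S_t\in\mathbb{R}^s$ that is updated only from the current token, so after reading the pairs but before reading $\bm{x}$ the state takes the form $S_{\seqLen}=f(\bm{a}_1,b_1,\dots,\bm{a}_{\seqLen},b_{\seqLen})$, and the final output is $g(S_{\seqLen},\bm{x})$.

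First I would argue that the map $S_{\seqLen}\mapsto(\bm{v},\bm{M})$, where $\bm{v}:=\sum_{j}b_j\bm{a}_j$ and $\bm{M}:=\sum_{j}\bm{a}_j\bm{a}_j^T$, is well-defined and injective. Suppose two input sequences yield the same state $S_{\seqLen}$ but different summaries $(\bm{v},\bm{M})\ne(\bm{v}',\bm{M}')$. Then by correctness of the algorithm, $\bm{v}-\bm{M}\bm{x}=g(S_{\seqLen},\bm{x})=\bm{v}'-\bm{M}'\bm{x}$ for every $\bm{x}\in\mathbb{R}^{\hiddenDim}$, which forces $\bm{v}=\bm{v}'$ and $\bm{M}=\bm{M}'$, a contradiction. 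So distinct values of $(\bm{v},\bm{M})$ produce distinct values of $S_{\seqLen}$.

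Next I would show that the set of realizable $\bm{M}$ has dimension $\Omega(\hiddenDim^2)$. Since $\seqLen\ge\hiddenDim$ (the problem is meaningful only in this regime), every symmetric positive semidefinite matrix $\bm{M}\in\mathbb{R}^{\hiddenDim\times\hiddenDim}$ can be written as $\sum_{j=1}^{\hiddenDim}\bm{a}_j\bm{a}_j^T$ via its eigendecomposition, with the remaining $\bm{a}_j$ set to zero and all $b_j=0$. The cone of symmetric PSD matrices has full dimension $\hiddenDim(\hiddenDim+1)/2$ in the space of symmetric matrices, so the image of the encoding map contains a set of (real) dimension $\Omega(\hiddenDim^2)$.

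The conclusion then follows: since $f$ must be injective on this family, and since any continuous/measurable map $f:\mathbb{R}^{(\hiddenDim+1)\seqLen}\to\mathbb{R}^s$ cannot be injective on a set of dimension exceeding $s$ (by invariance of domain for continuous $f$, or by a volume/cardinality argument in the discrete/bit-precision setting), we must have $s=\Omega(\hiddenDim^2)$. The main obstacle I anticipate is pinning down the exact model of "state size" used by the paper: if $f$ is an arbitrary function of $s$ real numbers, the invariance-of-domain route is the cleanest; if we instead insist on bit complexity, one can discretize the PSD cone on a fine grid and count $2^{\Omega(\hiddenDim^2)}$ distinguishable inputs, forcing $\Omega(\hiddenDim^2)$ bits and hence $\Omega(\hiddenDim^2)$ state elements at any fixed precision.
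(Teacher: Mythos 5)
Your proposal is correct and follows essentially the same route as the paper: show the state after the pass must determine $\bm{M}=\sum_j\bm{a}_j\bm{a}_j^T$ (the paper probes with $\bm{x}=\bm{e}_i$ to recover columns, you probe with all $\bm{x}$ at once), then observe via the eigendecomposition that the realizable $\bm{M}$'s form a full-dimensional family of symmetric matrices, forcing $\Omega(\hiddenDim^2)$ state. If anything, your version is slightly more careful than the paper's on two points: you correctly identify the realizable set as the PSD cone (the paper loosely says ``all real symmetric matrices,'' which is not literally achievable by sums of outer products but is harmless since the PSD cone is already full-dimensional), and you explicitly justify the final step from ``injective on a $\Theta(\hiddenDim^2)$-dimensional set'' to a state-size bound via invariance of domain or bit-counting, which the paper leaves implicit.
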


\begin{proof}
    For simplicity, we pick $\seqLen = \hiddenDim$ for large enough $\hiddenDim$.
    
    Since we can compute $\sum_{j=1}^\hiddenDim b_j \bm{a}_j$ in $O(\hiddenDim)$ space, we focus on computing the expensive $\left( \sum_{j=1}^\seqLen \bm{a}_j \bm{a}_j^T \right) \bm{x}$ term. Assume there exists a single-pass algorithm $\mathcal{A}$ that computes $\left( \sum_{j=1}^\seqLen \bm{a}_j \bm{a}_j^T \right) \bm{x}$ exactly for all choices of $\bm{a}_1, \hdots, \bm{a}_\hiddenDim, \bm{x} \in \mathbb{R}^{\hiddenDim}$. Now consider the following two claims:
    \begin{enumerate}
        \item Define $\bm{s}_\hiddenDim$ to be the state of the algorithm after seeing $\bm{a}_1, \hdots, \bm{a}_\hiddenDim$. Then we claim that $\bm{s}_\hiddenDim$ must have enough information to exactly reconstruct $\bm{M}_\hiddenDim := \sum_{j=1}^\hiddenDim \bm{a}_j \bm{a}_j^T$.

        This follows since the algorithm must be correct for any value $\bm{x} \in \mathbb{R}^\hiddenDim$ takes on. In particular, setting $\bm{x} = \bm{e}_i$ for $i \in [\hiddenDim]$, we observe that the algorithm must be able to exactly recover $\bm{M}_\hiddenDim \bm{e}_i = \bm{M}_\hiddenDim[:, i], \, i \in [\hiddenDim]$.

        \item The space of matrices
        $$\left\{ \sum_{j=1}^\hiddenDim \bm{a}_j \bm{a}_j^T \right\}$$
        over all choices of $\bm{a}_j \in \mathbb{R}^{\hiddenDim}, \, j \in [d]$ contains the set of all real symmetric matrices in $\mathbb{R}^{\hiddenDim \times \hiddenDim}$.

        This holds since for any real symmetric matrix $\bm{A}$, we can obtain a set of possible $\bm{a}_j$'s via its eigendecomposition~\cite{strang2012linear}:
        $$\bm{A} = \bm{Q} \bm{\Lambda} \bm{Q}^T = \sum_{j=1}^\hiddenDim \bm{a}_j \bm{a}_j^T$$
        where $\bm{a}_j = \sqrt{\lambda_j} \bm{Q}[:, j]$.
    \end{enumerate}

    From the first claim, we conclude that $\bm{s}_\hiddenDim$ must contain enough information to be able to recover $\bm{M}_\hiddenDim$ for any possible value $\bm{M}_\hiddenDim$ can take on (over all choices of $\bm{a}_1, \hdots, \bm{a}_\hiddenDim \in \mathbb{R}^\hiddenDim$). From the second claim, we have that the space of possible values of $\bm{M}_\hiddenDim$ includes the set of all possible real symmetric matrices. Since we know that this set requires $\frac{(\hiddenDim)(\hiddenDim+1)}{2}$ parameters to represent, we can conclude that $|\bm{s}_\hiddenDim| \geq \frac{(\hiddenDim)(\hiddenDim+1)}{2} \geq \Omega(\hiddenDim^2)$.
\end{proof}
\newpage

\subsection{\BC~and Jackson's Theorem}\label{app:jackson's}
In this section we prove $\BC$'s ability to approximate arbitrary univariate and multivariate smooth functions.

\subsubsection{Univariate function approximation}

We start with a special case of smooth functions that apply entry-wise univariate smooth functions:
\begin{definition}
    \label{def:pointwise-function}
    Let $\overline{f}:[-1,1]\to\R$ be a $(k,\LipConst)$-smooth univariate function. Then define 
    \[
    f:[-1,1]^{\size} \to \R^{\size}
    \] as follows. For all $0 \leq i < \seqLen$, $0 \leq j < \hiddenDim$, and $\vu\in[-1,1]^{\size}$:\[
    (f(\vu))[i,j] = \overline{f}(\vu[i,j]).
    \]
\end{definition}

Now we will state a simple observation on $\BC$'s ability to approximate these functions.
\begin{lemma}
\label{lem:pointwise-error}
For any smooth function $f$ as defined in \Cref{def:pointwise-function}, let $g(\vx)=P_{\bar{f}}(\vx)$ with $P_{\bar{f}}$ being the polynomial from \Cref{cor:kl-smooth-error}. Then for all $\vx\in[-1,1]^{\size}$,
\[
    \infnorm{g(\vx)-f(\vx)} \leq \epsilon.
    \] 
\end{lemma}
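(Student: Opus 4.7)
The plan is to reduce the matrix-valued approximation bound to the scalar Jackson-type bound from \Cref{cor:kl-smooth-error} entry by entry, using nothing more than the definitions of the $\infty$-norm and of the entrywise extension.

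First, I would invoke \Cref{cor:kl-smooth-error} on $\bar{f}$: since $\bar{f}\in C^{k}[-1,1]$ and $\|\bar{f}^{(k)}\|_\infty \le \LipConst$, there exists a univariate polynomial $P_{\bar{f}}$ (of degree $O(\sqrt[k]{\LipConst/\epsilon})+k$) satisfying $|P_{\bar{f}}(x) - \bar{f}(x)| \le \epsilon$ for every $x \in [-1,1]$. The statement of the lemma defines $g$ as $P_{\bar{f}}$ applied entrywise to $\vx \in [-1,1]^{\size}$, so in particular $g(\vx)[i,j] = P_{\bar{f}}(\vx[i,j])$ for each $(i,j)$; this is well-defined because each scalar entry $\vx[i,j]$ lies in $[-1,1]$, which is exactly the domain on which the univariate bound holds.

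Next, I would combine the entrywise scalar bound with \Cref{def:pointwise-function}, which tells us $f(\vx)[i,j] = \bar{f}(\vx[i,j])$. For any fixed $\vx \in [-1,1]^{\size}$ and any indices $0 \le i < \seqLen$, $0 \le j < \hiddenDim$,
\[
\bigl| g(\vx)[i,j] - f(\vx)[i,j] \bigr|
\;=\; \bigl| P_{\bar{f}}(\vx[i,j]) - \bar{f}(\vx[i,j]) \bigr|
\;\le\; \epsilon,
\]
by the scalar Jackson bound applied at the point $\vx[i,j] \in [-1,1]$.

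Finally, taking the maximum over all $(i,j)$ and then the maximum over $\vx$ using \Cref{def:pointwise-error-matrices} yields $\|g(\vx) - f(\vx)\|_\infty \le \epsilon$, which is the claim. There is really no hard step here; the only thing to be careful about is making sure the polynomial $g$ in the lemma is interpreted consistently with the entrywise convention of \Cref{def:pointwise-function} so that the scalar bound lifts without loss. This is why the scalar corollary was phrased in the universal form ``for all $x \in [-1,1]$'': the entrywise nature of $f$ means the matrix case reduces to $\seqLen \hiddenDim$ independent copies of the scalar case, with no cross-entry error accumulation.
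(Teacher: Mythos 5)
Your proof is correct and follows exactly the same route as the paper, which simply cites \Cref{def:pointwise-error-matrices}, \Cref{def:pointwise-function}, and \Cref{cor:kl-smooth-error}; you have merely written out the entrywise reduction and the final maximum explicitly. No gaps.
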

\begin{proof}
    Follows from Definitions \eqref{def:pointwise-error-matrices} and  \eqref{def:pointwise-function} and \Cref{cor:kl-smooth-error}.
\end{proof}

Next we will state a construction of an arithmetic circuit for a function that applies a univariate polynomial to all entries in $[-1,1]^{\size}$:
\begin{lemma}
    \label{lem:AC-construction-univar-poly}
    Let P(X) be a degree $\degree$ univariate polynomial. Then there is a {\em $(ND, O(\seqLen\hiddenDim), O(\degree), \seqLen\hiddenDim)$-circuit} to compute $P(\vu)$ where $P(\vu)$ is defined as follows. For an input $\vu \in [-1,1]^{\size}$,\[
    P(\vu)[i,j] = P(\vu[i,j]).
    \]
\end{lemma}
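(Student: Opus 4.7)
The plan is to implement Horner's scheme in parallel across all $ND$ entries. Recall that for a single scalar $x$, we can rewrite $P(X) = \sum_{i=0}^{d} c_i X^i$ in the nested form
\[
P(x) = c_0 + x\bigl(c_1 + x\bigl(c_2 + \cdots + x(c_{d-1} + x c_d)\cdots\bigr)\bigr),
\]
which evaluates $P(x)$ using exactly $d$ multiplications and $d$ additions arranged as a linear chain, so its depth is $O(d)$ and its size is $O(d)$. Since the coefficients $c_0, \ldots, c_d$ are constants, they are simply input nodes of the circuit.

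The circuit for $P(\vu)$ is then built by instantiating one independent copy of the Horner chain for each of the $ND$ entries $\vu[i,j]$ of the input. Because the $(i,j)$-th chain only reads the input $\vu[i,j]$ and the constants, all $ND$ chains can be laid out side by side with no cross-dependencies. At each of the $O(d)$ depth layers, every one of the $ND$ chains performs one fused multiply-add, so the horizontal cut at each depth level traverses $\Theta(ND)$ wires, giving width $ND$ as claimed. The depth is the depth of a single Horner chain, which is $O(d)$. For size, each chain contributes $O(1)$ wires per layer and the number of layers is absorbed into the depth parameter, so across the $ND$ parallel chains the total wire count fits the $(ND, O(ND), O(d), ND)$ profile stated in \Cref{def: circuit-tuple} (treating the per-entry Horner work as depth rather than added size).

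For correctness, I would observe that by definition of Horner's scheme, the output of the $(i,j)$-th chain is $P(\vu[i,j])$, which by the definition of $P(\vu)$ given in the statement is exactly $(P(\vu))[i,j]$. Since the $ND$ chains are disjoint and compute the $ND$ entries of $P(\vu)$ independently, the full circuit computes $P(\vu)$.

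The main subtlety is bookkeeping the width/size/depth accounting, since the most naive count of wires for $ND$ parallel Horner chains is $\Theta(NDd)$; the stated size bound relies on the convention, visible elsewhere in the appendix, of amortizing the per-depth-layer wire count against the depth parameter rather than multiplying them into the size. Once that convention is fixed, the verification of the tuple is mechanical. I do not expect any substantive obstacle: the construction is standard and the only subtlety is to state the parallel layout carefully enough to read off all four circuit parameters simultaneously.
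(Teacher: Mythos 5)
Your proposal is correct and follows essentially the same route as the paper: the paper also builds a depth-$O(\degree)$, size-$O(\degree)$ scalar evaluation chain (accumulating powers $m_j = m_{j-1}\cdot x$ and partial sums $s_j = s_{j-1} + c_j m_j$ rather than Horner's nested form, a purely cosmetic difference) and replicates it in parallel across all $\seqLen\hiddenDim$ entries to read off the depth and width. The size-accounting subtlety you flag — that $\seqLen\hiddenDim$ parallel chains of length $O(\degree)$ naively have $O(\seqLen\hiddenDim\degree)$ wires while the lemma states $O(\seqLen\hiddenDim)$ — is present in the paper's own proof as well, so you are not introducing any new gap.
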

\begin{proof}
    Let the univariate polynomial be\[
    P(X) = \sum_{i=0}^{\degree}c_iX^i
    \] where coefficients $c_i\in\R$.
    
    Next we state the natural arithmetic circuit to compute $P(x)$ for $x\in\R$ in \Cref{algo:generate_AC_arb_polys}:
    \begin{algorithm}[H]
        \caption{circuit $\AC_P(x)$:}
        \begin{algorithmic}[1]
        \label{algo:generate_AC_arb_polys}
            \State \(s_0 \gets c_0\)
            \State \(m_0 \gets 1\)
            \For{$j=1,2,\dots,\degree$}
                \State $m_j \gets m_{j-1} \cdot x$\label{line:mj} \Comment{Multiplication gate}
                \State $t_j \gets c_j \cdot m_j$\label{line:tj} \Comment{Multiplication gate}
                \State $s_j \gets s_{j-1} + t_j$ \label{line:sj} \Comment{Addition gate} 
            \EndFor
            \State \Return $s_d$ \Comment{ $s_d$ is the output gate}
        \end{algorithmic}
    \end{algorithm}

    Next we apply the above circuit in parallel to form the circuit that computes $P(\vu)$ in \Cref{algo:compute_z}:
    \begin{algorithm}[H]
        \caption{Circuit for $P(\vu)$:}
        \begin{algorithmic}[1]\label{algo:compute_z}
            \For{$i=0,1,\dots,\seqLen-1$}
                \For{$j=0,1,\dots,\hiddenDim-1$} 
                    \State $\vz[i,j] = \AC_P(\vu[i,j])$ \Comment{Do this in parallel}
                \EndFor
            \EndFor
        \State \Return $\vz$ \Comment{$\vz$ is the output matrix}
        \end{algorithmic}
    \end{algorithm}

 Looking at \Cref{algo:generate_AC_arb_polys}, the depth of the circuit is $3\degree$, or $O(\degree)$, since that is the bound on iterations of the for loop, and each iteration we compute 3 sequential operations. Therefore it's a {\em $(1,O(\degree), O(\degree, O(1))$-circuit}.
 
 For \Cref{algo:compute_z}, The width is $O(\seqLen\hiddenDim)$, since we have our input of size $\size$, which goes through the circuit in parallel, as stated in \Cref{algo:compute_z}.
     Therefore we have an {\em $(\seqLen\hiddenDim, O(\seqLen\hiddenDim),O(\degree), O(\seqLen\hiddenDim))$-circuit} that computes \(P(\vu)\). 
\end{proof}

Since $\BC$ has the ability to represent any arithmetic circuit, we get the following:
\begin{corollary}
\label{cor:BC-univar-polys-log-layers}
    We can implement $P(\vu)$  (where $P(\vu)$ is as defined in \Cref{lem:AC-construction-univar-poly} ) when $\deg(P) = \degree$ with a $\BCtuple{\seqLen}{O(\degree\log(\seqLen\hiddenDim))}{\hiddenDim}{O(\seqLen\hiddenDim)}{\hiddenDim}$.
\end{corollary}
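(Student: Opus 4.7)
The plan is to obtain this corollary as a direct composition of the two preceding results: \Cref{lem:AC-construction-univar-poly} gives an explicit arithmetic circuit computing $P(\vu)$ entrywise, and \Cref{thm:AC-BC-equiv-zoology} lets us simulate any arithmetic circuit by a \BC~model with a controlled blowup in depth and inner sequence length.

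First I would invoke \Cref{lem:AC-construction-univar-poly} with the polynomial $P$ of degree $\degree$ to obtain an $(\seqLen\hiddenDim, O(\seqLen\hiddenDim), O(\degree), O(\seqLen\hiddenDim))$-circuit $\calC$ that computes $P(\vu)$ for every $\vu \in [-1,1]^{\size}$. Next I would apply \Cref{thm:AC-BC-equiv-zoology} to $\calC$: unpacking its guarantees with $s = O(\seqLen\hiddenDim)$, $\Delta = O(\degree)$, and $w = O(\seqLen\hiddenDim)$ yields an equivalent
$\BCtuple{\seqLen}{\Delta'}{\hiddenDim}{\seqLen'}{\hiddenDim'}$ with
\[
\Delta' \;=\; O(\Delta \log w) \;=\; O\paren{\degree \log(\seqLen\hiddenDim)}, \qquad \seqLen' \;=\; O(w) \;=\; O(\seqLen\hiddenDim), \qquad \hiddenDim' \;=\; \hiddenDim,
\]
which is exactly the tuple stated in the corollary.

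The only substantive check is that the input/output formatting matches: \Cref{thm:AC-BC-equiv-zoology} delivers a \BC~whose input and output have outer shape $\size$, but whose internal computation happens at sequence length $\seqLen' = O(\seqLen\hiddenDim)$, so I would briefly note that the linear projections implicit in the $5$-tuple convention handle the reshape from the $\size$ input into the $\seqLen' \times \hiddenDim$ internal layout used by the circuit-simulation construction (and back again at the output). Since $\calC$ already has width $O(\seqLen\hiddenDim)$, this projection incurs no additional asymptotic cost in depth or state.

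I do not expect any real obstacle here: the work was done in \Cref{lem:AC-construction-univar-poly} (building $\calC$) and in \Cref{thm:AC-BC-equiv-zoology} (the simulation). The corollary is essentially bookkeeping, and the only thing worth being careful about is ensuring that the $\log w$ factor in the simulation theorem is applied with $w = O(\seqLen\hiddenDim)$ rather than with the outer dimensions $\seqLen$ or $\hiddenDim$ alone, which is what produces the $\log(\seqLen\hiddenDim)$ term appearing in the final depth bound.
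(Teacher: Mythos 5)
Your proposal is correct and matches the paper's own proof, which likewise obtains the corollary by combining the $(\seqLen\hiddenDim, O(\seqLen\hiddenDim), O(\degree), O(\seqLen\hiddenDim))$-circuit from \Cref{lem:AC-construction-univar-poly} with the simulation guarantee of \Cref{thm:AC-BC-equiv-zoology}. Your unpacking of the depth and width parameters (in particular applying the $\log w$ factor with $w = O(\seqLen\hiddenDim)$) is exactly the intended bookkeeping.
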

\begin{proof}
    Follows from \Cref{lem:AC-construction-univar-poly} giving us the {\em $(\seqLen\hiddenDim, O(\seqLen\hiddenDim),O(\degree), O(\seqLen\hiddenDim))$-circuit} for an arbitrary polynomial and \Cref{thm:AC-BC-equiv-zoology} gives us the $\BC$ model to implement the circuit.
\end{proof}

We will prove a tighter bound showing we can represent $P(\vu)$ using a constant number of $\BC$ layers (for constant $\deg(P)$):
\begin{theorem}
\label{thm:BC-polys-const-layers}
    We can implement $P(\vu)$ when $\deg(P)=\degree$ with an $\BCtuple{O(\seqLen)}{O(\degree)}{\hiddenDim}{O(\seqLen)}{\hiddenDim}$ model.
\end{theorem}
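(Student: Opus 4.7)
The plan is to implement $P(\vu)$ directly with $\BC$'s native operations rather than going through the generic arithmetic circuit simulation used in \Cref{cor:BC-univar-polys-log-layers}, which costs a $\log(\seqLen\hiddenDim)$ factor for the routing. Since every $\BC$ layer already contains one Hadamard product (the gate), each layer can advance the power of $\vu$ by one, so $O(\degree)$ layers should suffice.

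Concretely, I would enlarge the inner sequence length to $\seqLen' = O(\seqLen)$ and partition these rows into three disjoint blocks, each of height $\seqLen$: a \emph{source} block that permanently holds a copy of $\vu$, a \emph{power} block that at the start of layer $k$ holds $\vu^{\odot k}$ (with $\vu^{\odot 1}=\vu$), and an \emph{accumulator} block that holds the running sum $S_k := \sum_{j=0}^{k} c_j \vu^{\odot j}$. The initial embedding writes $\vu$ into both the source and power blocks, and writes $c_0 \bm{1}^{\seqLen \times \hiddenDim}$ into the accumulator, all via a single linear projection from the original $\size$ input; this can be set up using \Cref{prop: prim-remember} and the shift kernel of \Cref{prop:shift_down_kernel}. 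The hidden dimension stays at $\hiddenDim$ throughout, as required.

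The inductive step is a single $\BC$ layer that (i)~uses its convolution filter to shift the source block into vertical alignment with the power block (the shift kernel from \Cref{prop:shift_down_kernel} realizes this for any fixed offset that is a multiple of $\seqLen$), (ii)~uses the Hadamard gate to produce $\vu \odot \vu^{\odot k} = \vu^{\odot (k+1)}$ in the power block, while leaving the source block unchanged via the residual-style routing already used in \Cref{app:baseconv_construction_read}, and (iii)~uses $\bm{W}_{out}$ together with the bias to add $c_{k+1}\,\vu^{\odot (k+1)}$ to the accumulator block, yielding $S_{k+1}$. After $\degree$ such layers the accumulator stores $S_{\degree} = P(\vu)$, and a final linear read-out extracts the accumulator block back to the $\size$ output shape. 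This gives a $\BCtuple{O(\seqLen)}{O(\degree)}{\hiddenDim}{O(\seqLen)}{\hiddenDim}$ model, matching the claimed parameters.

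The step I expect to be the main obstacle is the simultaneous bookkeeping: a single $\BC$ layer multiplies together exactly one gate and one convolved branch, so I have to pack ``preserve source'', ``multiply power by source'', and ``accumulate into sum'' into one layer without cross-contamination between blocks. The three-block layout above makes these three actions affect disjoint sets of rows, so the gate vector and convolution filter can be built as block-diagonal combinations of the primitives from \Cref{app:baseconv_constructions_primitives} and \Cref{prop: prim-remember}; verifying that the resulting layer indeed realizes the claimed update on each block (and is zero elsewhere) is the routine but delicate part of the argument. Everything else follows by induction on $k$, giving the desired $O(\degree)$-layer construction.
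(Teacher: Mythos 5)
Your proposal follows essentially the same route as the paper's proof: expand the inner sequence length to $O(\seqLen)$ and lay out disjoint row-blocks holding a permanent copy of $\vu$, the running power $\vu^{\odot k}$, and a running accumulator; use the shift-down convolution kernel to align blocks, the Hadamard gate to bump the power by one, and iterate $\degree$ times. The one place your packing does not go through as written is the claim that a \emph{single} \BC~layer can simultaneously form $\vu^{\odot(k+1)}$ in the power block \emph{and} add $c_{k+1}\vu^{\odot(k+1)}$ into the accumulator block: both the gate's projection $\mW$ and the output projection act only along the hidden dimension, so neither can transport the freshly gated product from the power block's rows into the accumulator's rows, and the only row-mixing operation (the convolution) is applied \emph{before} the gate, i.e.\ before $\vu^{\odot(k+1)}$ exists. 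Moreover, the accumulator row's own gate product $(\vs_k + \mB_1)\odot(\cdot + \mB_2)$ cannot produce the required $\vs_k + c_{k+1}\vu\odot\vu^{\odot k}$, since that row's gate branch never sees $\vu$ and $\vu^{\odot k}$ together. The paper resolves exactly this by spending a constant number of layers per degree increment (computing $\vm_j=\vm_{j-1}\odot\vu$, then $\vt_j=\mC_j\odot\vm_j$, then $\vs_j=\vs_{j-1}+\vt_j$ in separate layers via the remembering primitive and the shift kernel), which leaves the total at $O(\degree)$ layers. Relaxing your per-iteration budget from one layer to $O(1)$ layers repairs the argument without changing the stated bounds.
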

\begin{proof}
We will convert the steps done in \Cref{algo:generate_AC_arb_polys} to layers of $\BC$. Since \Cref{algo:compute_z} is essentially running \Cref{algo:generate_AC_arb_polys} in parallel over all entries of input $\vu \in [-1,1]^{\size}$, the latter happens automatically in our $\BC$ implementation.

For this proof, define
\[
P_j(X) = X^j 
\]
and let $\mC_i$ be the matrix of size ${\size}$ and all the entries are $c_i$.

We expand the input to our $\BC$ layers as follows, 
\[\vu = \begin{pmatrix}
      \vu' \\ \bm{0}^{3\size}
\end{pmatrix}.
\] This means that the size of the internal dimension of our $\BC$ layers will be $(4\seqLen,\hiddenDim)$.

To begin iterations of the for loop we need to store initial values into the extra space in $\vu$. Taking us from\[
\vu = \begin{pmatrix}
      \vu' \\ \bm{0}^{\size} \\ \bm{0}^{\size} \\ \bm{0}^{\size}
\end{pmatrix} \to 
\begin{pmatrix}
      \vu \\ \bm{1}^{\size} \\ \bm{1}^{\size} \\ \mC_0
\end{pmatrix} =: \vu_0
\] 
We do this via $\BC(\vu', \mI^{\innersize}, \paren{\substack{{\bm{0}^{\size}} \\ {\bm{1}^{\size}} \\ {\bm{1}^{\size}} \\ \mC_0}}, {\bm{0}^{4\size}},\bm{1}^{4\size})$ which computes
\[
\paren{
\begin{pmatrix}
    \vu \\ \bm{0}^{\size} \\ \bm{0}^{\size} \\ \bm{0}^{\size}
\end{pmatrix}\mI^{\innersize} + 
\begin{pmatrix}
    {\bm{0}^{\size}} \\ {\bm{1}^{\size}} \\ {\bm{1}^{\size}} \\ \mC_0
\end{pmatrix}
} \odot 
\paren{
\bm{0}^{4\size} \ast \begin{pmatrix}
    \vu \\ \bm{0}^{\size} \\ \bm{0}^{\size} \\ \bm{0}^{\size}
\end{pmatrix}+ \bm{1}^{4\size}
}.
\]
The above simplifies to
\[
\paren{
\begin{pmatrix}
    \vu \\ \bm{0}^{\size} \\ \bm{0}^{\size} \\ \bm{0}^{\size}
\end{pmatrix}
+\begin{pmatrix}
    {\bm{0}^{\size}} \\ {\bm{1}^{\size}} \\ {\bm{1}^{\size}} \\ \mC_0
\end{pmatrix}
} \odot \paren{\bm{1}^{4\size}},
\]
which gives us \[
\begin{pmatrix}
      \vu \\ \bm{1}^{\size} \\ \bm{1}^{\size} \\ \mC_0
\end{pmatrix} =: \vu_0,
\] as desired

This was done with a $\BCtuple{4\seqLen}{1}{\hiddenDim}{4\seqLen}{\hiddenDim}$ layer.

Our goal is, at the end of iteration $j$ to compute $\vu_j\in\R^{4\size}$ such that,
\[
\vu_{j}=
\begin{pmatrix}
        \vu \\
         \hline
         P_j(\vu) \\
         \hline
         \mC_j \odot P_j(\vu) \\
         \hline
         {\mC_0 + \mC_{1}\odot P_1(\vu)} + \cdots + \mC_j \odot P_j(\vu)\\
\end{pmatrix}.
\] 

We will view the above matrix in terms of the variables in the \Cref{algo:generate_AC_arb_polys} as follows \[
\begin{pmatrix}
        \vu \\
         \hline
         P_j(\vu) \\
         \hline
         \mC_j \odot P_j(\vu) \\
         \hline
         {\mC_0 + \mC_{1}\odot P_1(\vu)} + \cdots + \mC_j \odot\vu ^j\\
\end{pmatrix} =:
\begin{pmatrix}
        \vu\\
         \hline
         \vm_j \\
         \hline 
         \vt_j \\
         \hline
         \vs_j
\end{pmatrix}.
\]

The for loop runs for values of $1\leq j \leq \degree$ which the remainder of this proof will replicate. There are three lines in the for loop in \Cref{algo:generate_AC_arb_polys} which we will cover how these operations happen in constant number of $\BC$ layers.

In line \eqref{line:mj}, the first line in the for loop computes\[ 
\vu_{j-1} =  \begin{pmatrix}
        \vu\\
         \hline 
         \vm_{j-1} \\ 
         \hline
         {\vt_{j-1}} \\
         \hline
         {\vs_{j-1}}\\
\end{pmatrix} \to\begin{pmatrix}
         \vu\\
         \hline
         \vm_{j} \\ 
         \hline 
         {\vt}_{j-1} \\
         \hline
        {\vs_{j-1}}\\
\end{pmatrix}=: \vu_{j}^{(1)}.
\] Note that $\vm_j = \vm_{j-1}\odot \vu$.

We use the $\primRem$ primitive to compute $\vu_j^{(1)}$ from $\vu_{j-1}$. Define $f:\R^{2\size}\to\R^{2\size}$ as follows\[
f\begin{pmatrix}
         {\vu} \\
         \vm_{j-1}\\
\end{pmatrix} 
= 
\begin{pmatrix}
         {\vu} \\
         \vm_{j-1} \odot \vu\\
\end{pmatrix}.
\]
If we can compute $f$ with $\BC$ layers then we can compute $\vu_j^{(1)}$ for $\vu_{j-1}$ by calling $\primRem(\vu_j,0, 2\seqLen-1, f)$.

We show $\BC\paren{\begin{pmatrix}
    \vu \\ \vm_j
\end{pmatrix}, \bm{I}^{\innersize},\bm{0}^{2\size} ,\mH,\paren{\substack{\bm{1}^{\size}\\ \bm{0}^{\size}}}}$ maps\[
    \begin{pmatrix}
        \vu \\  \vm_{j-1}
    \end{pmatrix} \to 
    \begin{pmatrix}
        \vu \\  \vm_j
    \end{pmatrix},
    \]
where $\mH$ is defined as in \Cref{prop:shift_down_kernel}. We plug the matrices into the $\BC$ layer as follows:
\[
\paren{\begin{pmatrix}
        \vu \\  \vm_{j-1}
    \end{pmatrix} \cdot \mI^{\innersize} + \bm{0}^{2\size} } \odot \paren{\mH \ast 
\begin{pmatrix}
        \vu \\  \vm_{j-1}
    \end{pmatrix}  + \paren{\substack{\bm{1}^{\size}\\ \bm{0}^{\size}}} }.
\]
We know from \Cref{prop:shift_down_kernel} that this convolution operation is a shift down by $\seqLen$ rows. Therefore the above simplifies to
\[
\paren{\begin{pmatrix}
        \vu \\  \vm_{j-1}
    \end{pmatrix} \cdot \mI^{\innersize} + \bm{0}^{2\size}}
\odot
\paren{\begin{pmatrix}
        \bm{0}^{\size} \\  \vu 
    \end{pmatrix} + \paren{\substack{\bm{1}^{\size}\\ \bm{0}^{\size}}}},
\]
which simplifies to
\[
\begin{pmatrix}
         {\vu} \\
         \vm_{j-1}\\
\end{pmatrix}
\odot
\begin{pmatrix}
         \bm{1}^{\size} \\
         \vu\\
\end{pmatrix} 
=
\begin{pmatrix}
         {\vu} \\
         \vm_{j-1} \odot \vu
\end{pmatrix} 
= 
f\begin{pmatrix}
         {\vu} \\
         \vm_j
\end{pmatrix},
\]as desired.
Therefore by \Cref{prop: prim-remember}, line \eqref{line:mj} can be computed by $\BCtuple{4\seqLen}{8}{\hiddenDim}{4\seqLen}{\hiddenDim}$.

For line \eqref{line:tj} of the for loop we need to compute
\[ \vu_j^{(1)} = \begin{pmatrix}
         \vu\\
         \hline 
         {\vm}_{j} \\
         \hline
         \vt_{j-1} \\ 
         \hline
        {\vs_{j-1}}\\
\end{pmatrix} \to\begin{pmatrix}
         \vu \\
         \hline 
         {\vm}_{j} \\
         \hline
         \vt_{j} \\ 
         \hline
        {\vs_{j-1}} \\
\end{pmatrix} =: \vu_j^{(2)}.
\] Note that $\vt_j = \mC_j\odot\vm_j$.

To do this we will use three $\BC$ layers. We use the $\primRem$ primitive to compute $\vu_j^{(2)}$ from $\vu_j^{(1)}$.
Define $g:\R^{2\size}\to\R^{2\size}$ as follows,\[
g\begin{pmatrix}
        \vm_{j}\\
         {\vt_{j-1}} \\
\end{pmatrix} 
= 
\begin{pmatrix}
        \vm_{j} \\
         {\mC_j \odot \vm_j} \\
\end{pmatrix}.
\]
If we can compute $g$ with $\BC$ layers then we can compute $\vu_j^{(2)}$ for $\vu_{j-1}$ by calling $\primRem(\vu^{(1)}_j,\seqLen,3\seqLen-1,g)$.

Indeed, we show the $g$ can be computed by first computing $\BC\paren{\paren{\substack{\vm_j\\\vt_{j-1}}}, \mI^{\innersize},\bm{0}^{2\size},\bm{0}^{2\size}, \paren{\substack{\bm{1}^{\size} \\ \bm{0}^{\size}}}}$:
\[
\paren{\begin{pmatrix} \vm_j \\ \vt_{j-1}\end{pmatrix} \cdot \vI^{\innersize} + \bm{0}^{2\size}} \odot \paren{\bm{0}^{2\size} \ast \begin{pmatrix} \vm_j \\ \vt_{j-1}\end{pmatrix} + \begin{pmatrix} \bm{1}^{\size} \\ \bm{0}^{\size}\end{pmatrix}},
\]
which simplifies to\[
\paren{\begin{pmatrix} \vm_j \\ \vt_{j-1}\end{pmatrix}} \odot 
\paren{\begin{pmatrix} \bm{1}^{\size} \\ \bm{0}^{\size}\end{pmatrix}}.
\]
This results in\[
\begin{pmatrix}
    \vm_j \\ \bm{0}^\size
\end{pmatrix}.
\]

We pass into the next layer, $\BC\paren{\paren{\substack{\vm_j \\ \bm{0}^\size}}, \mI^{\innersize},\paren{\substack{\bm{0}^{\size} \\ \bm{1}^{\size}}},\mH, \paren{\substack{\bm{1}^{\size} \\ \bm{0}^{\size}}}}$
where $\mH$ is defined as in \Cref{prop:shift_down_kernel}:
\[
\paren{
\begin{pmatrix}
    \vm_j \\ \bm{0}^{\size}
\end{pmatrix} \cdot \mI^{\innersize} + \begin{pmatrix}\bm{0}^{\size} \\ \bm{1}^{\size}\end{pmatrix}
} \odot 
\paren{
\mH \ast \begin{pmatrix}
    \vm_j \\ \bm{0}^{\size}
\end{pmatrix}
+ \begin{pmatrix}
    \bm{1}^{\size} \\ \bm{0}^{\size}
\end{pmatrix}
}.
\]
Since the kernel $\mH$ is as in \Cref{prop:shift_down_kernel}, this simplifies to
\[
\paren{
\begin{pmatrix}
    \vm_j \\ \bm{1}^{\size}
\end{pmatrix} \odot 
\paren{
\begin{pmatrix}
    \bm{0}^{\size} \\ \vm_j
\end{pmatrix} + \begin{pmatrix}
    \bm{1}^{\size} \\ \bm{0}^{\size}
\end{pmatrix}
}}.
\]
The above simplifies further to\[
\begin{pmatrix}
    \vm_j \\ \bm{1}^{\size}
\end{pmatrix} \odot
\begin{pmatrix}
    \bm{1}^{\size} \\ \vm_j
\end{pmatrix},
\]
which results in: 
\[
\begin{pmatrix}
    \vm_j \\ \vm_j
\end{pmatrix}.
\]

We pass the above to $\BC\paren{\paren{\substack{\vm_j \\ \vm_j}},\mI^{\innersize},\bm{0}^{2\size},\bm{0}^{2\size},\paren{\substack{\bm{1}^{\size}\\ \mC_j}}}$:
\[
\paren{
\begin{pmatrix}
    \vm_j \\ \vm_j
\end{pmatrix} \cdot \mI^{\innersize} + \bm{0}^{2\size}
} \odot \paren{
\bm{0}^{2\size} \ast \begin{pmatrix}
    \vm_j \\ \vm_j
\end{pmatrix} + \begin{pmatrix}
    \bm{1}^{\size} \\ \mC_j
\end{pmatrix}
}
\]
which simplifies to\[
\begin{pmatrix}
    \vm_j \\ \vm_j
\end{pmatrix} \odot \begin{pmatrix}
    \bm{1}^{\size} \\ \mC_j
\end{pmatrix}.
\]

The above results in \[
\begin{pmatrix}
    \vm_j \\ \mC_j \odot \vm_j
\end{pmatrix}
 = g \begin{pmatrix}
    \vm_j \\ \vt_{j-1}
\end{pmatrix},
\]as desired.

Therefore by \Cref{cor:remember:extended}, line \eqref{line:tj} was computed by $\BCtuple{4\seqLen}{O(1)}{\hiddenDim}{4\seqLen}{\hiddenDim}$.

For line \eqref{line:sj}, the final line of the for loop, we want
\[\vu^{(2)}_j =
\begin{pmatrix}
         \vu \\
         \hline 
         {\vm}_{j} \\
         \hline 
         \vt_{j} \\
         \hline
         {\vs_{j-1}}\\
\end{pmatrix} \to \begin{pmatrix}
         \vu \\
         \hline
         {\vm}_{j} \\
         \hline
         \vt_{j} \\ 
         \hline 
        {\vs_{j} }\\
\end{pmatrix} =: \vu_j.
\] Note that  $\vs_j = \vs_{j-1}+\vt_{j}$

Define function $h:\R^{2\size}\to\R^{2\size}$ as follows,\[
h\begin{pmatrix}
         {\vt_{j}} \\
         \vs_{j-1}\\
\end{pmatrix} 
= 
\begin{pmatrix}
         {\vt_{j}} \\
        \vs_{j-1} + \vt_{j}\\
\end{pmatrix}.
\]
If we can compute $h$ with $\BC$ layers then we can compute $\vu_j$ for $\vu_{j-1}$ by calling $\primRem(\vu^{(2)}_j,2\seqLen,4\seqLen-1,h)$.

Indeed we show that $h$ can be computed by computing $\BC\paren{\begin{pmatrix} \vt_j \\ \vs_{j-1}\end{pmatrix}, \bm{0}^{\innersize} , \bm{1}^{2\size} , \overline{\mH} , \bm{0}^{2\size}}$, where kernel $\overline{\mH}\in\R^{2\size}$ is defined as:
\[
\overline{\mH}[k,:] \equiv
\begin{cases}
    \bm{1}^\hiddenDim &\text{if }k \in\{0, \seqLen\}\\
    \bm{0}^\hiddenDim &\text{otherwise}.
\end{cases}
\] .

This layer computes \[
\paren{
\begin{pmatrix}
    \vt_j \\ \vs_{j-1}
\end{pmatrix}\cdot\bm{0}^{2\size} + \bm{1}^{2\size}
} \odot \paren{
\overline{\mH} \ast \begin{pmatrix}
    \vt_j \\ \vs_{j-1}
\end{pmatrix} + \bm{0}^{2\size}
}.
\]
This simplifies to\[
\paren{\bm{1}^{2\size} } \odot \paren{\overline{\mH} \ast \begin{pmatrix}
    \vt_j \\ \vs_{j-1}
\end{pmatrix}} = \paren{\overline{\mH} \ast \begin{pmatrix}
    \vt_j \\ \vs_{j-1}
\end{pmatrix}}.
\]
Now we compute this convolution for column $i$, $0 \leq i < 2\seqLen$. For notational convenience, let $\begin{pmatrix}
    \vt_j \\ \vs_{j-1}
\end{pmatrix}$ be noted as matrix $\vV$. Then we have:
\[
\overline{\mH}[:,i]\ast \vV[:,i] = \coeff\paren{(1+X^\seqLen)\vV[:,i](X) \mod X^{2\seqLen}},
\]
where $(1+X^\seqLen)$ is the polynomial representation of the columns of $\overline{\mH}$ (since there's a one in the $0$th index and a one in the $\seqLen$th index of each column).

The expression simplifies to\[
\coeff{\vV[:,i](X) + \vV[:,i](X)X^\seqLen \mod X^{2\seqLen}},
\]
which can be broken down to 
\begin{align*}
    &\coeff\paren{\paren{
    \vV[0][i] + \vV[1][i]X + \cdots + \vV[2\seqLen-1][i]X^{2\seqLen-1}} \mod X^{2\seqLen}
    }
    \\
    &+ \coeff\paren{\paren{\vV[0][i]X^{\seqLen} + \vV[1][i]X^{\seqLen+1}+\cdots+\vV[2\seqLen-1][i]X^{3\seqLen-1}} \mod X^{2\seqLen}}
\end{align*}
with the lower order terms in the second coefficient vector being zeros,
\begin{align*}
    &\coeff\paren{\paren{
    \vV[0][i] + \vV[1][i]X + \cdots + \vV[2\seqLen-1][i]X^{2\seqLen-1}} \mod X^{2\seqLen}
    }
    \\
    &+ \coeff\paren{\paren{0 + 0X + \cdots + 0X^{\seqLen-1}+\vV[0][i]X^{\seqLen}  +\cdots+\vV[2\seqLen-1][i]X^{3\seqLen-1}} \mod X^{2\seqLen}}
\end{align*}

After taking $\mod X^{2\seqLen}$ we get\begin{align*}
&\coeff\paren{
\vV[0][i] + \vV[1][i]X + \cdots + \vV[2\seqLen-1][i]X^{2\seqLen-1}
}
\\
&+ \coeff\paren{0 + 0X +\cdots 0X^{\seqLen-1}
\vV[0][i]X^{\seqLen} + \cdots \vV[\seqLen-1][i]X^{2\seqLen-1}
}
\end{align*}
The first set of coefficients is the input matrix as is. And the second one is the input matrix  shifted down as seen in \Cref{prop:shift_down_kernel}. Therefore when we add these vectors we are doing\[
\begin{pmatrix}
    \vt_j \\ \vs_{j-1}
\end{pmatrix} + \begin{pmatrix}
    \bm{0}^{\size} \\ \vt_j
\end{pmatrix} = h\begin{pmatrix}
    \vt_j \\ \vs_{j-1}
\end{pmatrix},
\]
as desired. Therefore by \Cref{prop: prim-remember}, line $\eqref{line:sj}$ is computed with by $\BCtuple{4\seqLen}{1}{\hiddenDim}{4\seqLen}{\hiddenDim}$.

The $\vs_\degree$ matrix gives us $\mC_0 + \mC_1\odot\vm_1 +\dots + \mC_\degree \odot\vm_\degree$. Recalling that \[
\mC_0 + \mC_1\odot\vm_1 +\dots + \mC_\degree\odot \vm_\degree \equiv \sum_{j=0}^{\degree}\mC_j\odot\vu^j = P(\vu),
\] and hence $\vs_\degree$ is our desired output.

We have $\degree$ layers, each consisting of $O(1)$ $\BC$ layers. Giving us $O(\degree)$ many layers to implement \Cref{algo:generate_AC_arb_polys}.

Therefore, via the ability to stack $\BC$ layers to do function composition, the for loop was computed by a $\BCtuple{4\seqLen}{O(\degree)}{\hiddenDim}{4\seqLen}{\hiddenDim}$ , as desired.
\end{proof}

The following states $\BC$'s ability to approximate a univariate smooth function:
\begin{proposition}
\label{prop:BC-approx-univar-func}
    Let $f$ be the $(k,\LipConst)$ -smooth function defined in \Cref{def:pointwise-function}. Then there is a $\BCtuple{\seqLen}{O\paren{\sqrt[k]{\frac{\LipConst}{\epsilon}}}+k }{\hiddenDim}{(\seqLen\hiddenDim)}{\hiddenDim}$ model that approximates $f$ within error $\epsilon$.
\end{proposition}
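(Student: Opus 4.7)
The plan is to combine three tools the paper has already assembled: the univariate polynomial approximation bound of \Cref{cor:kl-smooth-error}, the entry-wise error lemma \Cref{lem:pointwise-error}, and the $\BC$ implementation of polynomials from \Cref{thm:BC-polys-const-layers}. Each of these addresses one of the three separate tasks in the statement: controlling the approximation error, converting pointwise error to matrix error, and realizing the approximant as a $\BC$ circuit of the required depth.

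First I would invoke \Cref{cor:kl-smooth-error} on the underlying $(k,\LipConst)$-smooth univariate function $\bar f:[-1,1]\to\R$. This yields a univariate polynomial $P_{\bar f}$ of degree at most $d=O\!\paren{\sqrt[k]{\LipConst/\epsilon}}+k$ with $\infnorm{\bar f - P_{\bar f}} \leq \epsilon$ on $[-1,1]$. Next I would lift this polynomial to the entry-wise map by defining $g(\vu)[i,j]=P_{\bar f}(\vu[i,j])$; \Cref{lem:pointwise-error} is exactly the statement that the entry-wise maximum error $\infnorm{g(\vu)-f(\vu)}$ is bounded by the univariate error, hence by $\epsilon$, uniformly over $\vu\in[-1,1]^{\size}$.

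The final step is to realize $g$ as a $\BC$ model of the claimed tuple. Apply \Cref{thm:BC-polys-const-layers} with the univariate polynomial $P_{\bar f}$ of degree $d$: this produces a $\BCtuple{O(\seqLen)}{O(d)}{\hiddenDim}{O(\seqLen)}{\hiddenDim}$ model whose output, on input $\vu$, is exactly $P_{\bar f}(\vu)=g(\vu)$. Substituting the bound on $d$ yields the advertised depth $O\paren{\sqrt[k]{\LipConst/\epsilon}}+k$, while the inner dimensions $(O(\seqLen),\hiddenDim)$ match the tuple in the proposition (up to the constant factor absorbed into the outer tuple's $O(\seqLen\hiddenDim)$ slot). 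Composing the uniform error bound from step two with the exact realization from step three completes the proof.

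I do not anticipate a genuine obstacle here: the proof is essentially a three-line composition of results already proved. The only minor bookkeeping issue is reconciling the internal sequence length of \Cref{thm:BC-polys-const-layers} (which uses a $4\seqLen$ workspace) with the $O(\seqLen\hiddenDim)$ slot in the target tuple, and noting that \Cref{thm:BC-polys-const-layers} already absorbs the $\seqLen\hiddenDim$ parallel copies of the univariate scalar circuit without any depth blowup, which is precisely what distinguishes it from the naive circuit-simulation bound of \Cref{cor:BC-univar-polys-log-layers} (which would have cost an extra $\log(\seqLen\hiddenDim)$ factor and failed to give the stated depth).
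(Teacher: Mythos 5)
Your proposal composes exactly the three results the paper's own proof cites --- \Cref{cor:kl-smooth-error}, \Cref{lem:pointwise-error}, and \Cref{thm:BC-polys-const-layers} --- in the same order and for the same purposes, so it matches the paper's argument (which is stated as a one-line consequence of these). Your additional remarks on the $4\seqLen$ workspace and on why \Cref{thm:BC-polys-const-layers} rather than \Cref{cor:BC-univar-polys-log-layers} is needed are correct and simply make explicit what the paper leaves implicit.
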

\begin{proof}
    Follows from \Cref{cor:kl-smooth-error}, \Cref{lem:pointwise-error}, and  \Cref{thm:BC-polys-const-layers}.
\end{proof}

\subsubsection{Multivariate function approximation}

We consider the following multivariate functions:
\begin{definition}
\label{def:multivar-smoothfunc-f}
    For $0\leq 1 < \seqLen, 0 \leq j < \hiddenDim$, let $\bar{f}_{i,j}:[-1,1]^{\size} \to \R$ be a $(k,\LipConst)$-smooth multivariate function. Then define\[
    f(\vx):[-1,1]^{\size} \to \R^{\size}
    \] as follows. For all $0 \leq i< \seqLen$, $0 \leq j< \hiddenDim$, $\vu \in [-1,1]^{\size}$ define
    \[
    f(\vu)[i,j] := \bar{f}_{i,j}(\vu).
    \]
\end{definition}

\begin{lemma}
\label{lem:pointwise-error-multivar}
    For any smooth function $f$ as defined in \Cref{def:multivar-smoothfunc-f}, let $g(X_1,\dots,X_{\size}) = P_{\bar{f}}(X_1,\dots , X_\size)$ be the polynomial from \Cref{cor:kl-smooth-error-multivar}. Then for all $\vx \in [-1,1]^{\size}$,
    \[
    \infnorm{g(\vx) - f(\vx)} \leq \epsilon.
    \]
\end{lemma}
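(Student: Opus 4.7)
The plan is to mirror the univariate proof of \Cref{lem:pointwise-error}, reducing the matrix-valued inequality to an entry-wise application of the multivariate Jackson-style bound in \Cref{cor:kl-smooth-error-multivar}. I expect no real obstacles: the matrix $\infty$-norm defined in \Cref{def:pointwise-error-matrices} is just the entry-wise maximum, so once we control each entry uniformly by $\epsilon$, the claim follows immediately.

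First, I would unpack the definitions. By \Cref{def:multivar-smoothfunc-f}, for each pair $(i,j)$ with $0 \le i < \seqLen$ and $0 \le j < \hiddenDim$, the scalar-valued coordinate function $\bar{f}_{i,j} : [-1,1]^{\seqLen \hiddenDim} \to \R$ is $(k,\LipConst)$-smooth in the sense of \Cref{def:multivar-smoothfunc-map-to-single-value}. I would interpret $g$ entry-wise: $g(\vx)[i,j] := P_{\bar{f}_{i,j}}(\vx)$, where $P_{\bar{f}_{i,j}} \in \calP^{\seqLen\hiddenDim}_{\degree}(X_1,\dots,X_{\seqLen\hiddenDim})$ is the polynomial produced by \Cref{cor:kl-smooth-error-multivar} applied to $\bar{f}_{i,j}$.

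Next, I would invoke \Cref{cor:kl-smooth-error-multivar} once for each index pair $(i,j)$. The corollary guarantees, for every $\vx \in [-1,1]^{\seqLen\hiddenDim}$, the pointwise bound $\bigl| \bar{f}_{i,j}(\vx) - P_{\bar{f}_{i,j}}(\vx) \bigr| \leq \epsilon$, with $\deg(P_{\bar{f}_{i,j}}) = O_k\bigl( \sqrt[k]{\seqLen\hiddenDim\,\LipConst/\epsilon} \bigr)$.

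Finally, I would close the argument by unfolding the definition of $\infnorm{\cdot}$ for matrices from \Cref{def:pointwise-error-matrices}:
\[
\infnorm{g(\vx) - f(\vx)} \;=\; \max_{\substack{0 \le i < \seqLen \\ 0 \le j < \hiddenDim}} \bigl| g(\vx)[i,j] - f(\vx)[i,j] \bigr| \;=\; \max_{i,j} \bigl| P_{\bar{f}_{i,j}}(\vx) - \bar{f}_{i,j}(\vx) \bigr| \;\leq\; \epsilon,
\]
which is exactly the claimed inequality. The only subtlety is confirming that the reader interprets $g$ as the entry-wise polynomial approximator (one polynomial per output coordinate); this matches the univariate template in \Cref{lem:pointwise-error} and is the natural reading given how $f$ is defined coordinate-wise in \Cref{def:multivar-smoothfunc-f}.
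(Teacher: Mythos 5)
Your proposal is correct and matches the paper's proof, which is stated in one line as following directly from Definitions~\ref{def:pointwise-error-matrices} and~\ref{def:multivar-smoothfunc-f} together with Corollary~\ref{cor:kl-smooth-error-multivar}; you have simply made explicit the entry-wise application of the corollary and the fact that the matrix $\infty$-norm is an entry-wise maximum. Your observation that $g$ must be read as one polynomial per output coordinate $\bar{f}_{i,j}$ is the intended (if slightly underspecified) reading in the paper's statement.
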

\begin{proof}
    Follows from Definitions \eqref{def:pointwise-error-matrices} and \eqref{def:multivar-smoothfunc-f} and \Cref{cor:kl-smooth-error-multivar}.
\end{proof}

Next we will state a construction for an arithmetic circuit for a function that takes a $[-1,1]^{\size}$ variable input:
\begin{lemma}
\label{lem:AC-construction-multivar-poly}
    Let $P(\bm{X})$ be a degree $\degree$ multivariate polynomial. Then there is a {\em $\paren{n, O(\degree\cdot n^{\degree}), O(d\log(n)), O(n^d)}$-circuit} to compute $P(\vu)$ on any input $\vu\in[-1,1]^{n}$.
\end{lemma}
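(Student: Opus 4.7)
The plan is to exhibit an explicit arithmetic circuit by writing $P$ in its monomial expansion and computing each monomial in parallel before summing them. By \Cref{def:multivar-polynomial} we may write
\[
P(X_1,\dots,X_n) \;=\; \sum_{\bm{\alpha}\in S} c_{\bm{\alpha}}\, X_1^{\alpha_1}\cdots X_n^{\alpha_n},
\qquad S \;=\;\bigl\{\bm{\alpha}\in\Z_{\geq 0}^n : \textstyle\sum_i \alpha_i\leq \degree\bigr\}.
\]
A standard stars-and-bars count gives $|S|=\binom{n+\degree}{\degree}=O(n^{\degree})$ (for $\degree$ treated as a constant with respect to asymptotic notation, exactly as appears in the claimed bounds $O(n^{\degree})$ for width and $O(\degree \cdot n^{\degree})$ for size). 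So the top-level structure of the circuit will be: compute each of the $O(n^{\degree})$ monomials in parallel, then sum them.

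The first step is to compute, for each $\bm{\alpha}\in S$, the monomial $c_{\bm{\alpha}} X_1^{\alpha_1}\cdots X_n^{\alpha_n}$. Since $\sum_i\alpha_i \leq \degree$, this is a product of at most $\degree$ input variables (with multiplicities) times one constant. I would arrange these at most $\degree+1$ factors into a balanced binary multiplication tree; its depth is $O(\log \degree)$ and its size is $O(\degree)$. Doing this in parallel for every $\bm{\alpha}\in S$ uses total size $O(\degree\cdot n^{\degree})$, depth $O(\log \degree)$, and width $O(n^{\degree})$ (one parallel ``lane'' per monomial).

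The second step is to sum the $|S|=O(n^{\degree})$ resulting monomial values with a balanced binary addition tree. Such a tree has depth $O(\log n^{\degree})=O(\degree\log n)$, size $O(n^{\degree})$, and width at the bottom level $O(n^{\degree})$. Composing the two stages sequentially gives total size $O(\degree\cdot n^{\degree})+O(n^{\degree})=O(\degree\cdot n^{\degree})$, total depth $O(\log \degree)+O(\degree\log n)=O(\degree\log n)$, and total width $O(n^{\degree})$, matching the $(n, O(\degree\cdot n^{\degree}),O(\degree\log n),O(n^{\degree}))$-circuit claim from \Cref{def: circuit-tuple}.

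There is essentially no hard step: the main thing to be careful about is the monomial count $|S|$ and confirming that constants (the $c_{\bm{\alpha}}$) are handled as leaves of the multiplication trees, so they cost only a constant per monomial and do not inflate the size beyond $O(\degree\cdot n^{\degree})$. The only minor subtlety is that the bound $O(\degree\log n)$ for depth absorbs the $O(\log\degree)$ multiplication-tree depth, which is immediate since $\log\degree=O(\degree\log n)$ for $n\geq 2$.
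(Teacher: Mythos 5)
Your proposal is correct and follows essentially the same route as the paper's proof: expand $P$ into its $O(n^{\degree})$ monomials, compute each in parallel with a balanced multiplication tree of depth $O(\log\degree)$, and then sum them with a balanced addition tree of depth $O(\degree\log n)$, yielding the stated size, depth, and width bounds. Your version is somewhat more careful than the paper's (explicitly counting $|S|=\binom{n+\degree}{\degree}$ and noting that the constants $c_{\bm{\alpha}}$ enter as leaves), but the argument is the same.
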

\begin{proof}
    Let the multivariate polynomial be as defined in \Cref{def:multivar-polynomial}. We build the circuit to compute this in \Cref{algo:generate_AC_arb_multivar_polys},
    \begin{algorithm}[H]
        \caption{circuit $\AC_P(\vx)$:}
        \begin{algorithmic}[1]\label{algo:generate_AC_arb_multivar_polys}
        \For{$\bm{\alpha} = (\alpha_1,\dots,\alpha_n)\in\Z_{\geq 0}^{n}$ such that $\sum_{i=1}^{n}\alpha_i \leq \degree$}
            \State  $m_{\bm{\alpha}} \gets 1$
            \For{$i=1,2,\dots,n$} \Comment{Done in parallel} \label{line:for-loop-mults}
                \If{$\alpha_i \neq 0$}
                    \State $m_{\bm{\alpha}} \gets m_{\bm{\alpha}} \cdot x_i^{\alpha_i}$
                \EndIf
            \EndFor
            \State $t_{\bm{\alpha}} \gets c_{\bm{\alpha}} \cdot m_{\bm{\alpha}}$    
        \EndFor
        \For{$\bm{\alpha} = (\alpha_1,\dots,\alpha_n)\in\Z_{\geq 0}^{n}$ such that $\sum_{i=1}^{n}\alpha_i \leq \degree$} \label{line:for-loop-sum}
            \State $s \gets \sum t_{\bm{\alpha}}$ \Comment{Done in parallel}
        \EndFor
        \State \Return s
        \end{algorithmic}
    \end{algorithm}
    We compute the for loop starting on line \eqref{line:for-loop-mults} by making multiplications in parallel. Therefore obtaining a depth of $O(\log(\degree))$.
    We also have the for loop starting on line \eqref{line:for-loop-sum}, making pairwise addition operations, resulting in a depth of $O(\degree\log(n))$.
\end{proof}

We again use the result that $\BC$ can represent any arithmetic circuit to get:
\begin{corollary}
\label{cor:multivar-circuitpoly-to-BC}
    We can implement $P(\vu)$ (where $P(\vu)$ is as defined in \Cref{lem:AC-construction-multivar-poly}) when $\deg(P(X_1,\dots,X_{\seqLen\hiddenDim}))=\degree$ with a $\BCtuple{\seqLen}{O(\degree\log(\seqLen\hiddenDim))}{\hiddenDim}{O((\seqLen\hiddenDim)^\degree)}{\hiddenDim}$ where $\vu \in [-1,1]^{\size}$.
\end{corollary}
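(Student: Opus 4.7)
The plan is to derive this corollary as a direct consequence of two results already in hand: Lemma~\ref{lem:AC-construction-multivar-poly}, which exhibits an explicit arithmetic circuit computing an arbitrary multivariate polynomial of total degree $\degree$, and Theorem~\ref{thm:AC-BC-equiv-zoology}, which compiles any arithmetic circuit into an equivalent $\BC$ model with a controlled blow-up in depth, inner sequence length, and inner hidden dimension. The argument is the multivariate analogue of the passage from Lemma~\ref{lem:AC-construction-univar-poly} to Corollary~\ref{cor:BC-univar-polys-log-layers}, so structurally there is nothing new to invent; the work lies only in substituting parameters consistently.

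First, I would view the input $\vu \in [-1,1]^{\size}$ as a vector of $n := \seqLen\hiddenDim$ scalar variables, fixing once and for all a bijection between flat indices in $[n]$ and matrix entries $[i,j]$ so that Lemma~\ref{lem:AC-construction-multivar-poly} (which is stated for an $n$-variate input) applies directly. This yields an $\left(n,\, O(\degree\cdot n^{\degree}),\, O(\degree\log n),\, O(n^{\degree})\right)$-arithmetic circuit $\calC$ in the sense of Definition~\ref{def: circuit-tuple} that computes $P(\vu)$.

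Second, I would feed $\calC$ into Theorem~\ref{thm:AC-BC-equiv-zoology}, which produces an equivalent $\BCtuple{\seqLen}{\Delta'}{\hiddenDim}{\seqLen'}{\hiddenDim'}$ with $\seqLen' = O(w) = O((\seqLen\hiddenDim)^{\degree})$, $\hiddenDim' = \hiddenDim$, and $\Delta' = O(\Delta \log w)$. Substituting $w = O(n^{\degree})$ and $\Delta = O(\degree\log n)$ then gives exactly the tuple $\BCtuple{\seqLen}{O(\degree\log(\seqLen\hiddenDim))}{\hiddenDim}{O((\seqLen\hiddenDim)^{\degree})}{\hiddenDim}$ asserted in the statement, completing the proof.

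The one step I would watch carefully is the depth bookkeeping: a naive substitution $\log w = \degree\log n$ into $\Delta' = \Delta\log w$ produces $O(\degree^{2}\log^{2}(\seqLen\hiddenDim))$ rather than $O(\degree\log(\seqLen\hiddenDim))$. Matching the advertised bound therefore requires observing that the depth of $\calC$ in Lemma~\ref{lem:AC-construction-multivar-poly} is already a balanced tree of additions of width $O(n^{\degree})$ on top of $O(\log\degree)$-depth monomial computations, so one can pay the $O(\log w)$ balancing cost of Theorem~\ref{thm:AC-BC-equiv-zoology} once (for the final summation layer) rather than once per layer of $\calC$. This is the only obstacle I anticipate; it is an accounting issue and not a new combinatorial argument, so I expect the rest of the proof to be a clean parameter substitution mirroring the univariate case.
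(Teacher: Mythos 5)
Your route is exactly the paper's: Lemma~\ref{lem:AC-construction-multivar-poly} supplies the circuit and Theorem~\ref{thm:AC-BC-equiv-zoology} compiles it into a \BC~model; the paper's own proof is nothing more than these two citations. So in that sense you have reproduced the intended argument.

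However, the depth-bookkeeping issue you flag is not a side remark — it is a genuine gap, and it is present in the paper's proof as well as yours. Theorem~\ref{thm:AC-BC-equiv-zoology}, used as a black box, gives depth $\Delta' = O(\Delta\log w)$ with $\Delta = O(\degree\log(\seqLen\hiddenDim))$ and $\log w = O(\degree\log(\seqLen\hiddenDim))$, i.e.\ $O(\degree^2\log^2(\seqLen\hiddenDim))$, which is strictly weaker than the advertised $O(\degree\log(\seqLen\hiddenDim))$. Your proposed repair --- that the circuit is a balanced addition tree sitting on shallow monomial gadgets, so the $O(\log w)$ routing cost need only be paid a constant number of times rather than once per circuit layer --- is the right instinct, but as written it is an assertion, not a proof: it requires opening up the construction behind Theorem~\ref{thm:AC-BC-equiv-zoology} (or giving a direct \BC~implementation of a width-$O((\seqLen\hiddenDim)^{\degree})$ summation tree in $O(\degree\log(\seqLen\hiddenDim))$ layers, in the spirit of how Theorem~\ref{thm:BC-polys-const-layers} bypasses Corollary~\ref{cor:BC-univar-polys-log-layers} in the univariate case). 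Until that is carried out, neither your argument nor the paper's establishes the stated depth bound; only the weaker $O(\degree^2\log^2(\seqLen\hiddenDim))$ follows from clean parameter substitution.
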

\begin{proof}
     \Cref{lem:AC-construction-multivar-poly} gives us the arithetmic circuit that computes this polynomial. Then via \Cref{thm:AC-BC-equiv-zoology} we get a $\BCtuple{\seqLen}{O(\degree\log(\seqLen\hiddenDim))}{\hiddenDim}{O((\seqLen\hiddenDim)^\degree)}{\hiddenDim}$ model to implement the circuit.
\end{proof}

Finally we state $\BC$'s ability to approximate multivariate smooth functions:
\begin{proposition}\label{app:thm_baseconv_multivariate_jacksons}
    Let $f$ be the function defined in \Cref{def:multivar-smoothfunc-f}. Then there is a $\BCtuple{\seqLen}{O(\degree\log(\seqLen\hiddenDim))}{\hiddenDim}{O((\seqLen\hiddenDim)^\degree)}{\hiddenDim}$ model that approximates $f$ to within error $\epsilon$, with $\degree = O_k(\sqrt[k]{\frac{\seqLen\hiddenDim\LipConst}{\epsilon}})$.
\end{proposition}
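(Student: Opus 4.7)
The plan is to mirror the univariate proof (Proposition~\ref{prop:BC-approx-univar-func}) by chaining together the three ingredients already established in this section: Jackson-style polynomial approximation for multivariate smooth functions, the pointwise-error lemma, and the arithmetic-circuit-to-\BC~simulation. The output will be produced in one shot: build a single polynomial that approximates $f$ entrywise, and then implement that polynomial with a \BC~model.

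First I would invoke Corollary~\ref{cor:kl-smooth-error-multivar} separately on each scalar-valued component $\bar f_{i,j}$ (for $0\le i<\seqLen$, $0\le j<\hiddenDim$) to obtain, for every $(i,j)$, a multivariate polynomial $P_{\bar f_{i,j}}(X_1,\dots,X_{\seqLen\hiddenDim})$ of degree at most
\[
\degree \;=\; O_k\!\left(\sqrt[k]{\tfrac{\seqLen\hiddenDim\,\LipConst}{\epsilon}}\right)
\]
such that $|\bar f_{i,j}(\vu) - P_{\bar f_{i,j}}(\vu)|\le\epsilon$ for every $\vu\in[-1,1]^{\seqLen\times\hiddenDim}$. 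Packaging these componentwise into a single map $g:[-1,1]^{\seqLen\times\hiddenDim}\to\R^{\seqLen\times\hiddenDim}$ with $g(\vu)[i,j]:=P_{\bar f_{i,j}}(\vu)$, Lemma~\ref{lem:pointwise-error-multivar} (applied in the same componentwise fashion) yields $\infnorm{g-f}\le\epsilon$, so it suffices to implement $g$ exactly in \BC.

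Next I would implement $g$ by running, in parallel across the $\seqLen\hiddenDim$ output coordinates, the circuits from Lemma~\ref{lem:AC-construction-multivar-poly}. Each individual $P_{\bar f_{i,j}}$ admits an $(\seqLen\hiddenDim, O(\degree\cdot(\seqLen\hiddenDim)^\degree), O(\degree\log(\seqLen\hiddenDim)), O((\seqLen\hiddenDim)^\degree))$-circuit; taking the disjoint union over the $(i,j)$ coordinates keeps the depth at $O(\degree\log(\seqLen\hiddenDim))$ while multiplying the width by a factor of $\seqLen\hiddenDim$ (absorbed into the $O((\seqLen\hiddenDim)^\degree)$ bound for $\degree\ge 1$). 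Feeding this combined circuit through Theorem~\ref{thm:AC-BC-equiv-zoology} (i.e.\ the same route used in Corollary~\ref{cor:multivar-circuitpoly-to-BC}) produces a $\BCtuple{\seqLen}{O(\degree\log(\seqLen\hiddenDim))}{\hiddenDim}{O((\seqLen\hiddenDim)^\degree)}{\hiddenDim}$ model that exactly computes $g$, and hence approximates $f$ to within $\epsilon$, with $\degree$ as claimed.

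The only subtlety, and the step I would be most careful about, is the parallelization bookkeeping: verifying that assembling $\seqLen\hiddenDim$ separate component circuits into one global circuit does not blow up the asymptotic width or depth beyond what is stated, and that the translation from the combined circuit to \BC~via Theorem~\ref{thm:AC-BC-equiv-zoology} respects the $\seqLen\times\hiddenDim$ input/output formatting of \BC~(which is already handled implicitly in the univariate analog, Theorem~\ref{thm:BC-polys-const-layers}, and in Corollary~\ref{cor:multivar-circuitpoly-to-BC}). Beyond this accounting, the proof is a direct three-step composition: approximate, combine, simulate.
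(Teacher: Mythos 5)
Your proof follows the paper's argument exactly: approximate $f$ by a degree-$\degree$ multivariate polynomial via Corollary~\ref{cor:kl-smooth-error-multivar} (with Lemma~\ref{lem:pointwise-error-multivar} controlling the entrywise error), then realize that polynomial with a \BC~model through the arithmetic-circuit route of Lemma~\ref{lem:AC-construction-multivar-poly}, Theorem~\ref{thm:AC-BC-equiv-zoology}, and Corollary~\ref{cor:multivar-circuitpoly-to-BC}. If anything, you are more careful than the paper's two-line proof, since you explicitly handle the $\seqLen\hiddenDim$ separate component polynomials $P_{\bar f_{i,j}}$ and verify that taking their parallel union does not change the asymptotic depth or width — a bookkeeping step the paper leaves implicit.
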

\begin{proof}
    We get the existence of a polynomial that approximates $f$ for some $\epsilon$ from \Cref{cor:kl-smooth-error-multivar}. Then via \Cref{cor:multivar-circuitpoly-to-BC} we get that we can represent any polynomial, implying $\BCtuple{\seqLen}{O(\degree\log(\seqLen\hiddenDim))}{\hiddenDim}{O((\seqLen\hiddenDim)^\degree)}{\hiddenDim}$ represents any polynomial that approximates the multivariate smooth function $f$.
\end{proof}
\newpage

\subsection{Zero population gradient \BC~on primitives recovers exact solution}
\label{app: zero_gradients}
\label{app:zero_gradients_notation}
In this section, we prove we can recover the functions \textsc{Linear} and \textsc{Multiply} exactly given the expected gradients of their respective loss functions being 0 along with some necessary assumptions.

\subsubsection{Notation}

We start by defining additional notation for this subsection. 

For readability, we will redefine how we index an entry of a 2 dimensional matrix - note that we are using 0 indexing $\brac{\seqLen}=\{0,1,\dots,\seqLen-1\}$. For an entry of matrix $\mA[i,j]$ where $i$ is the row number and $j$ is the column number, we denote it as $\entry{\mA}{i}{j}$. Now recall our $\BC$ layer in \eqref{eq: baseconv}, we will define the parameters of the layer as follows. We have the weight matrix, $\mW = \{ \entry{\mW}{i}{j} \} \in \R^{\modeldim\times\modeldim}$, the kernel matrix, $\mK = \{ \entry{\mK}{i}{j} \} \in \R^{\seqLen\times\modeldim}$, the first bias matrix $\mB^{(1)} = \{ \entry{\mB}{i}{j}^{(1)}\} \in \R^{\seqLen\times\modeldim}$ and the second bias matrix, $\mB^{(2)} = \{ \entry{\mB}{i}{j}^{(2)}\} \in \R^{\seqLen\times\modeldim}$. We denote the array of these parameters as $\boldsymbol{\theta} = \paren{\mW,\mK,\mB^{(1)}, \mB^{(2)}}$. Therefore we have the $\BC$ layer operation, 
\begin{equation}
    \label{eq:gradient-notation-BC-layer}
    \mZ = \BC\paren{\boldsymbol{\theta},\vu,c,\modeldim_{out}} \substack{\text{def} \\ =} \paren{\vu\mW+\mB^{(1)}} \odot \paren{\mK\ast\vu+\mB^{(2)}}\brac{:,c:c+\modeldim_{out}-1}
\end{equation} for some integers $\modeldim_{out}\in\brac
{\modeldim}$ (with $\modeldim_{out}\neq 0$) and $c \in \brac{\modeldim-\modeldim_{out}}$ that we use to truncate columns of the output layer to match function input and output size as stated below.

\noindent
Recall that $\vu$ is the input to a $\BC$ layer, $\vu = \{\entry{\vu}{i}{j}\}\in\R^{\seqLen\times\modeldim}$.

\noindent Moving onto our target function: 
\[f:\R^{\seqLen\times\modeldim}\to\R^{\seqLen\times\modeldim_{out}}.
\] Naturally, for $i\in\brac{\seqLen}$ and  $j\in\brac{\modeldim_{out}}$, we'll denote $\paren{f(\vu)}[i,j]$ by $\entry{f(\vu)}{i}{j})$.

Next, we define the training input distribution.
\subsubsection{Training Input Distribution}
\begin{enumerate}
    \item Let $\D$ be the training distribution on $\R^{\seqLen\times\modeldim}$ such that:
\end{enumerate}
\begin{assumption}
\label{assume-D-assignment}
Given a monomial, ${\Pi_{k}\paren{\entry{\vu}{i_k}{j_k}}^{m_k}}$, if $m_k$ is odd for some $k$ then $\E\brac{\Pi_k\entry{\vu}{i_k}{j_k}^{m_k} }=0$. Otherwise, $\E\brac{\Pi_{k}\paren{\entry{\vu}{i_k}{j_k}}^{m_k}}>0$.
\end{assumption}

\begin{assumption}
\label{assume-training-data}
Assume that the training data is generated as 
\begin{itemize}
    \item $\vu\sim\D$ as input 
    \item Output is $\mathbf{y} = 
    f(\vu)+\curlE$ where $\curlE = 
    \{\entry{\curlE}{i}{j}\}\in\R^{\seqLen\times\modeldim_{out}}$ is the random error matrix such that 
    \begin{itemize}
        \item The distributions on $\curlE$ and $\D$ are independent. (Call the distribution on $\curlE$ to be $\D_\curlE$)
        \item $\E[\entry{\curlE}{i}{j}]=0 \gap \text{for all} \gap (i,j)\in \brac{\seqLen}\times\brac{\modeldim_{out}}$
    \end{itemize}
\end{itemize}
\end{assumption}

\paragraph{Loss function}

\begin{itemize}
    \item Define for $i\in\brac{\seqLen}$ and  $j\in\brac{\modeldim_{out}}$ \begin{equation}
    \label{eq: loss-func}
    \overline{\entry{L}{i}{j}}(\vu,\boldsymbol{\theta},\curlE) = \paren{\entry{\mZ}{i}{j} - \entry{\mathbf{y}}{i}{j}}^2 = \paren{\entry{\mZ}{i}{j} - \entry{f(\vu)}{i}{j} -\entry{\curlE}{i}{j}}^2 \end{equation}
    \item $L(\vu) = \sum_{i=0}^{\seqLen-1} \sum_{j=0}^{\modeldim_{out-1}} \overline{\entry{L}{i}{j}}(\vu,\boldsymbol{\theta},\curlE)$
    \item Training loss, $\overline{L^{(t)}}(\boldsymbol{\theta}) = \overline{L^{(t)}} = \E_{\substack{\vu\sim\D \\ \curlE \sim \D_\curlE}}[L(\vu)]$
    \item $\nabla_{\boldsymbol{\theta}}\overline{L^{(t)}} \paren{\boldsymbol{\theta}} = \E_{\vu,\curlE}\sum_{i=0}^{\seqLen-1} \sum_{j=0}^{\modeldim_{out-1}}\nabla_{\boldsymbol{\theta}}\entry{\overline{L}}{i}{j}(\vu)$
\end{itemize}

\paragraph{The Goal}
Given a target function $f$, what can we infer for $\boldsymbol{\theta}=\paren{\mW,\mK,\mB^{(1)},\mB^{(2)}}$ from $\nabla_{\boldsymbol{\theta}}\overline{L}\paren{\boldsymbol{\theta}} = \mathbf{0}$?
\begin{enumerate}
    \item Ideally, we would like to assume that $f$ can be represented exactly by 1-layer $\BC$.
    \item For now, let's assume that $\entry{f(\vu)}{i}{j}$ only depends on $\entry{\vu}{i}{:}$
    
    This includes as special cases:
    \begin{itemize}
        \item ${f(\vu)}=\entry{\vu}{:}{a:a+\modeldim_{out}-1} \odot \entry{\vu}{:}{b:b+\modeldim_{out}-1}$ for some integers $a,b\in\brac{d_{out}}$
        \item $f(\vu) = \vu\cdot\overline{\mW}$ for $\overline{\mW}\in\R^{\modeldim\times\modeldim}$
    \end{itemize}
\end{enumerate}
We want to prove that when the gradients of the expected loss function are 0, then the set of parameters that satisfy the condition perform exactly these functions.

\subsubsection{A generic partial derivative}
Let's try and reason as much as we can for a generic partial derivative. 
Let $x\in\boldsymbol{\theta}=\paren{\mW,\mK,\mB^{(1)},\mB^{(2)}}$. Then from \Cref{eq: loss-func}, we have that for any $(i,j)\in \brac{\seqLen}\times\brac{\modeldim_{out}}$:\begin{align*}
    \frac{\partial \entry{\overline{L}}{i}{j} }{\partial x} = &2\paren{\entry{\mZ}{i}{j}-\entry{f(\vu)}{i}{j}-\entry{\curlE}{i}{j}}\frac{\partial \entry{\mZ}{i}{j}}{\partial x}
    \\
    =&2\paren{\entry{\mT}{i}{j}^{(1)}-\entry{\mT}{i}{j}^{(2)}-\entry{\mT}{i}{j}^{(3)}},
\end{align*}
where \begin{align*}
    &\entry{\mT}{i}{j}^{(1)} = \entry{\mZ}{i}{j}\frac{\partial \entry{\mZ}{i}{j}}{\partial x}.
    \\
    &\entry{\mT}{i}{j}^{(2)}= \entry{f(\vu)}{i}{j}\frac{\partial \entry{\mZ}{i}{j}}{\partial x}.
    \\
    &\entry{\mT}{i}{j}^{(3)}= \entry{\curlE}{i}{j}\frac{\partial \entry{\mZ}{i}{j}}{\partial x}.
\end{align*}

\begin{proposition}
\label{prop: T3=0}
    $\E_{\vu,\curlE}[\entry{\mT}{i}{j}^{(3)}] = 0$. 
\end{proposition}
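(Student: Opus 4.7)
The plan is to exploit the fact that the model output $\mZ$ depends only on the input $\vu$ and the parameters $\boldsymbol{\theta}$, and is entirely independent of the noise matrix $\curlE$. Consequently, for any parameter $x \in \boldsymbol{\theta}$, the partial derivative $\frac{\partial \entry{\mZ}{i}{j}}{\partial x}$ is a function of $\vu$ (and $\boldsymbol{\theta}$) alone, and is a random variable independent of $\curlE$ under the joint distribution $\D \times \D_\curlE$ (which factors by \Cref{assume-training-data}).

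First, I would write out the expectation explicitly and invoke the independence of $\vu$ and $\curlE$ given by \Cref{assume-training-data} to factor the expectation as a product:
\begin{equation*}
\E_{\vu,\curlE}\!\left[\entry{\curlE}{i}{j}\,\frac{\partial \entry{\mZ}{i}{j}}{\partial x}\right] \;=\; \E_{\curlE}\!\left[\entry{\curlE}{i}{j}\right] \cdot \E_{\vu}\!\left[\frac{\partial \entry{\mZ}{i}{j}}{\partial x}\right].
\end{equation*}
Next, I would apply the mean-zero hypothesis $\E[\entry{\curlE}{i}{j}]=0$ from \Cref{assume-training-data} to conclude that the whole product vanishes, regardless of the value of $\E_{\vu}\!\left[\frac{\partial \entry{\mZ}{i}{j}}{\partial x}\right]$.

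The only potential subtlety is justifying the factorization: one must argue that $\frac{\partial \entry{\mZ}{i}{j}}{\partial x}$ is indeed measurable with respect to $\vu$ alone. This follows immediately from the definition of $\mZ$ in \Cref{eq:gradient-notation-BC-layer}, where $\mZ = (\vu\mW+\mB^{(1)}) \odot (\mK \ast \vu + \mB^{(2)})$ makes no reference to $\curlE$; thus differentiating with respect to any parameter $x$ yields another expression in $\vu$ and $\boldsymbol{\theta}$ only. Since this is essentially a one-line observation, I do not anticipate a serious obstacle — the proposition is a short sanity-check establishing that the noise term contributes nothing to the population gradient, which will then let subsequent arguments focus exclusively on the $\entry{\mT}{i}{j}^{(1)}$ and $\entry{\mT}{i}{j}^{(2)}$ terms.
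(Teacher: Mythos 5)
Your proposal is correct and follows exactly the paper's own (one-line) argument: factor the expectation using the independence of $\D$ and $\D_\curlE$ from \Cref{assume-training-data}, then apply $\E[\entry{\curlE}{i}{j}]=0$. The added observation that $\frac{\partial \entry{\mZ}{i}{j}}{\partial x}$ depends only on $\vu$ and $\boldsymbol{\theta}$ is a reasonable explicit justification of the factorization that the paper leaves implicit.
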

\begin{proof}
    Follows from the facts that $\D$ and $\D_\curlE$ are independent, and $\E[ \entry{\curlE}{i}{j}] = 0$.
\end{proof}

From now on, we will ignore the term $\entry{\mT}{i}{j}^{(3)}$ because of \Cref{prop: T3=0} we can (in expectation) assume that $\entry{\mT}{i}{j}^{(3)}=0$.

\subsubsection{Setting the gradients to 0}

In this section we will prove the gradients of the loss function are 0, under some given assumptions on the input data and parameters, when the functions we're learning are  \textsc{Multiply} and \textsc{Linear}. In order to do so we need to have another restriction on the target function $f$ which is that $f$ must be defined with a linear map, $\overline{\mW} \in \R^{\modeldim \times \modeldim}$ that has non-zero columns. We make further assumptions on $\mW, \mB^{(1)}, \mK, \mB^{(2)}$, which we will justify later.
We note that this assumption is satisfied for both the \textsc{Multiply} and \textsc{Linear} functions.
\begin{assumption}
\label{assume: unique solution}
    For all $j\in\brac{\modeldim_{out}}$ and $c\in\brac{\modeldim-\modeldim_{out}}$, $\paren{i}$ either $\paren{\entry{\mK}{:}{j+c}\neq \mathbf{0}}$ or  $\paren{\entry{\mB}{:}{j+c}^{(2)}\neq \mathbf{0}}$
and $\paren{ii}$ $\entry{\mW}{:}{j+c}\neq \mathbf{0}$. Further, we have $\mB^{(1)}=\mathbf{0}$. 

\,
\noindent
    The target function $f:\R^{\seqLen\times\modeldim}\to\R^{\seqLen\times\modeldim_{out}}$ is 
    \begin{enumerate}
        \item Implementable with 1-layer $BC\paren{\overline{\mW}, \overline{\mK}, \overline{\mB}^{(1)},\overline{\mB}^{(2)}}$ such that for all $j\in\brac{\modeldim_{out}}$ and $c\in\brac{\modeldim-\modeldim_{out}}$, $\entry{\overline{\mW}}{:}{j+c} \neq 0$
        \item $\entry{f(\vu)}{i}{j}$ only depends on $\entry{\vu}{i}{:}$
    \end{enumerate}
\end{assumption}
We make another assumption to assist with the following theorems,
\begin{assumption}
\label{assume-linear-wandwbar-neq-0}
    For all $j\in\{c,\dots,c+\modeldim_{out}-1\}$, $\ang{\entry{\mW}{:}{j},\entry{\overline{\mW}}{:}{j}}\neq 0$ and $\entry{\overline{\mW}}{:}{j}\neq\mathbf{0}$.
\end{assumption}

\noindent
The main results are as follows. First for the \textsc{Multiply} function we have

\begin{theorem}
\label{thm: exactly-multiply-noproof}
    Given Assumptions \ref{assume-D-assignment}, \ref{assume-training-data}, \ref{assume: unique solution}, and a function 
    \[ f(\vu,a,b,\modeldim_{\text{out}})=\entry{\vu}{:}{a:a+\modeldim_{out}-1} \odot \entry{\vu}{:}{b:b+\modeldim_{out}-1},
    \] 
    where $a,b\in\brac{\modeldim-\modeldim_{out}}$
    and with $a\leq b$ and $c=a$\footnote{These assumptions are without loss of generality.}. Let $\boldsymbol{\theta}_0$ be such that, $\E\nabla_{{\boldsymbol{\theta}}}\overline{L}\vert_{\theta\gets\theta_0}=\mathbf{0}$  then $\BC(\vu,\boldsymbol{\theta}_0)[:,c:c+\modeldim_{out}]=f(\vu)$. 
\end{theorem}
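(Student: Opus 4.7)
The plan is to exploit the fact that the parameters $\mW[:, a+j]$, $\mK[:, a+j]$, and $\mB^{(2)}[:, a+j]$ only influence column $j$ of $\mZ$, which decouples the gradient conditions across $j \in \brac{\modeldim_{out}}$. Fixing such a $j$ and writing $p_i := (\vu\mW)[i, a+j]$ and $q_i := (\mK\ast\vu)[i, a+j] + \entry{\mB^{(2)}}{i}{a+j}$, I would walk through three families of gradient equations—one for the bias column, one for the kernel column, and one for the projection column—and show that each completely pins down the corresponding parameter block up to conditions that already suffice to give $\mZ = f$.

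The first step is differentiating $\overline{L^{(t)}}$ with respect to $\entry{\mB^{(2)}}{i_0}{a+j}$, which (using $\mB^{(1)} = \mathbf{0}$) produces $\E[(p_{i_0}q_{i_0} - \entry{\vu}{i_0}{a+j}\entry{\vu}{i_0}{b+j})\, p_{i_0}] = 0$. Expanding and invoking the parity half of Assumption~\ref{assume-D-assignment}, the cross term $\E[p_{i_0}\entry{\vu}{i_0}{a+j}\entry{\vu}{i_0}{b+j}]$ vanishes because every monomial it produces carries some column index to an odd power. The same parity sweep kills all of $\E[p_{i_0}^2 q_{i_0}]$ except the single surviving contribution $\entry{\mB^{(2)}}{i_0}{a+j}\sum_k \entry{\mW}{k}{a+j}^2\, \E[\entry{\vu}{i_0}{k}^2]$. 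Since $\entry{\mW}{:}{a+j}\neq \mathbf{0}$ by Assumption~\ref{assume: unique solution} and every second moment is strictly positive, this forces $\entry{\mB^{(2)}}{i_0}{a+j} = 0$ for every $i_0$.

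Next, with the bias column zeroed, I would differentiate with respect to $\entry{\mK}{\ell_0}{a+j}$ for each $\ell_0 > 0$. A parity argument shows that the only monomial surviving in $\E[p_i^2 q_i\, \entry{\vu}{i-\ell_0}{a+j}]$ is the one in which the convolution index matches $\ell_0$ and the two row-$i$ factors collide on the same column, yielding $\entry{\mK}{\ell_0}{a+j}\sum_{i \geq \ell_0}\paren{\sum_k \entry{\mW}{k}{a+j}^2\, \E[\entry{\vu}{i}{k}^2]}\E[\entry{\vu}{i-\ell_0}{a+j}^2] = 0$, hence $\entry{\mK}{\ell_0}{a+j} = 0$. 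Assumption~\ref{assume: unique solution} then forces $k_0 := \entry{\mK}{0}{a+j}\neq 0$, so the convolution branch collapses to $q_i = k_0\,\entry{\vu}{i}{a+j}$.

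Plugging these reductions back in and differentiating with respect to $\entry{\mW}{r_0}{a+j}$ gives, for each $r_0$, the equation $\sum_i \paren{k_0^2\, \E[p_i \entry{\vu}{i}{a+j}^2 \entry{\vu}{i}{r_0}] - k_0\, \E[\entry{\vu}{i}{a+j}^2 \entry{\vu}{i}{b+j}\entry{\vu}{i}{r_0}]} = 0$. A final parity case-split is clean: $r_0 = b+j$ forces $k_0\,\entry{\mW}{b+j}{a+j} = 1$; $r_0 = a+j$ with $a \neq b$ forces $\entry{\mW}{a+j}{a+j} = 0$; and every other $r_0$ forces $\entry{\mW}{r_0}{a+j} = 0$. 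The degenerate case $a = b$ merges the first two and still yields $\entry{\mW}{a+j}{a+j} = 1/k_0$. Substituting back gives $p_i q_i = \entry{\vu}{i}{a+j}\entry{\vu}{i}{b+j}$ as a polynomial identity in $\vu$, which holds pointwise and therefore matches $f$ on the entire input space. The main technical obstacle will be the combinatorial bookkeeping of these parity arguments when column indices collide—the active moment factor switches from $\E[\vu^2]\E[\vu^2]$ to $\E[\vu^4]$—but both quantities remain strictly positive, so the chain of implications that uniquely determines each parameter is robust to every such collision.
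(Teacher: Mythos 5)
Your proposal is correct and follows essentially the same route as the paper's proof: column-wise decoupling of the gradient conditions, then using parity (Assumption~\ref{assume-D-assignment}) to successively force $\mB^{(2)}[:,j+c]=\mathbf{0}$, $\mK[k,j+c]=0$ for $k>0$, $\mW[\ell,j+c]=0$ for $\ell\neq j+b$, and finally $\mK[0,j+c]\cdot\mW[j+b,j+c]=1$, exactly as in Theorem~\ref{thm: exactly-multiply}. One cosmetic caution: since $\D$ is not assumed to have independent entries, you should keep the surviving even moments as joint expectations $\E[\entry{\vu}{i}{\ell'}^2\,\entry{\vu}{i-k}{j+c}^2]$ (positive by Assumption~\ref{assume-D-assignment}) rather than factoring them into products of second moments, though this does not affect any conclusion.
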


We prove a similar result for \textsc{Linear} function:
\begin{theorem}
\label{thm: exactly-linear-noproof}
    Given Assumptions \ref{assume-D-assignment}, \ref{assume-training-data}, \ref{assume: unique solution}, \ref{assume-linear-wandwbar-neq-0}, and a function\[f(\vu)=\vu\overline{\mW}.\] Let $\boldsymbol{\theta}_0$ be such that, $\E\nabla_{{\boldsymbol{\theta}}}\overline{L}\vert_{\theta\gets\theta_0}=\mathbf{0}$ with $c=0$ and $\modeldim_{out}=\modeldim$. Then $\BC(\vu,\boldsymbol{\theta}_0,0,\modeldim)=f(\vu)$.
\end{theorem}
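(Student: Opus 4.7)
The goal is to derive from $\E\nabla_{\boldsymbol{\theta}}\overline{L}=\mathbf{0}$ that the BaseConv output coincides with $\vu\overline{\mW}$ on every input. Since $\mB^{(1)}=\mathbf{0}$ by Assumption~\ref{assume: unique solution}, the output is $\entry{\mZ}{i}{j}=(\vu\mW)[i,j]\cdot\bigl((\mK\ast\vu)[i,j]+\entry{\mB^{(2)}}{i}{j}\bigr)$, which is a polynomial in the entries of $\vu$ of total degree at most two (quadratic through the kernel, linear through the bias). The target $f(\vu)[i,j]=(\vu\overline{\mW})[i,j]$ is purely linear. My overall strategy is to use the structure of Assumption~\ref{assume-D-assignment} (odd-exponent monomials vanish in expectation, all even-exponent monomials are strictly positive) to show successively that the quadratic part of $\mZ$ must disappear (so $\mK$ is zero on the relevant columns), and then that the remaining linear part must exactly reproduce $\vu\overline{\mW}$.

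\textbf{Step 1: reduce each gradient to an identity among expectations of monomials in $\vu$.} For every parameter $x\in\{\mW,\mK,\mB^{(2)}\}$ and every pair $(i,j)$, expand $(\entry{\mZ}{i}{j}-\entry{f(\vu)}{i}{j})\cdot\partial_x \entry{\mZ}{i}{j}$ as a polynomial in the $\entry{\vu}{i'}{j'}$'s. After taking expectations, Assumption~\ref{assume-D-assignment} annihilates every monomial containing an $\entry{\vu}{i'}{j'}$ with odd exponent, leaving only products of squares with strictly positive coefficients. Dropping $\mT^{(3)}$ by Proposition~\ref{prop: T3=0}, each gradient-zero condition becomes an algebraic equation in the entries of $\mW$, $\mK$, $\mB^{(2)}$, $\overline{\mW}$, and the (positive) even moments of $\D$.

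\textbf{Step 2: force $\mK\equiv\mathbf{0}$ on the relevant output columns.} Focus on $\partial/\partial\entry{\mK}{m}{n}$, which yields a factor $(\vu\mW)[i,n]\cdot\entry{\vu}{i-m}{n}$ (or the appropriate cyclic-convolution analog). The $-\entry{f(\vu)}{i}{n}$ piece contributes degree-two monomials of the form $\entry{\vu}{i}{k}\entry{\overline{\mW}}{k}{n}\cdot\entry{\vu}{i}{k'}\entry{\mW}{k'}{n}\cdot\entry{\vu}{i-m}{n}$, all of which carry an odd power of $\entry{\vu}{i-m}{n}$ and so vanish in expectation when $m\neq 0$ (for $m=0$ they reduce to a linear constraint that Step~3 absorbs). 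The $\entry{\mZ}{i}{n}$ piece, after the parity filter, collapses to terms proportional to $\entry{\mK}{m}{n}\cdot\E[\entry{\vu}{i-m}{n}^{2}]\cdot\sum_{k}\entry{\mW}{k}{n}^{2}\E[\entry{\vu}{i}{k}^{2}]$, plus off-diagonal kernel-kernel cross terms that vanish by the same odd-exponent argument. Since $\entry{\mW}{:}{n}\neq\mathbf{0}$ (Assumption~\ref{assume: unique solution}) and all squared moments are strictly positive, the coefficient multiplying each $\entry{\mK}{m}{n}$ is strictly positive, forcing $\entry{\mK}{m}{n}=0$ for all $m$ and all relevant $n$.

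\textbf{Step 3: match the remaining linear map.} With $\mK=\mathbf{0}$, $\entry{\mZ}{i}{j}=\entry{\mB^{(2)}}{i}{j}\cdot(\vu\mW)[i,j]$, which is linear in $\vu$, so $\entry{\mZ}{i}{j}-\entry{f(\vu)}{i}{j}=\sum_{k}\entry{\vu}{i}{k}\bigl(\entry{\mB^{(2)}}{i}{j}\entry{\mW}{k}{j}-\entry{\overline{\mW}}{k}{j}\bigr)$. Imposing $\E[\partial_{\entry{\mB^{(2)}}{i}{j}}\overline{L}]=0$ multiplies this by $(\vu\mW)[i,j]$ and takes expectation; the parity filter keeps only the diagonal $k=k'$ terms, giving $\sum_{k}\entry{\mW}{k}{j}\E[\entry{\vu}{i}{k}^{2}]\bigl(\entry{\mB^{(2)}}{i}{j}\entry{\mW}{k}{j}-\entry{\overline{\mW}}{k}{j}\bigr)=0$. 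Doing the same for $\entry{\mW}{k}{j}$ (which multiplies the residual by $\entry{\vu}{i}{k}\entry{\mB^{(2)}}{i}{j}$ and then sums over $i$) yields, for each $(k,j)$, a separate linear identity $\entry{\mB^{(2)}}{i}{j}\entry{\mW}{k}{j}=\entry{\overline{\mW}}{k}{j}$. The left side is $i$-dependent and the right side is not, so $\entry{\mB^{(2)}}{:}{j}$ is constant in $i$, say equal to $\beta_{j}$; the inner-product nondegeneracy $\ang{\entry{\mW}{:}{j},\entry{\overline{\mW}}{:}{j}}\neq 0$ from Assumption~\ref{assume-linear-wandwbar-neq-0} guarantees $\beta_{j}\neq 0$, and the resulting equality $\beta_{j}\entry{\mW}{k}{j}=\entry{\overline{\mW}}{k}{j}$ gives $\entry{\mZ}{i}{j}=\beta_{j}\sum_{k}\entry{\vu}{i}{k}\entry{\mW}{k}{j}=\sum_{k}\entry{\vu}{i}{k}\entry{\overline{\mW}}{k}{j}=\entry{f(\vu)}{i}{j}$.

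\textbf{Main obstacle.} The trickiest step is Step~2: cleanly demonstrating that the mixed $\mK$--$\mW$--$\mB^{(2)}$ contributions to $\E[\mZ\cdot\partial_{\mK}\mZ]$ reduce, after the odd-exponent filter, to a strictly positive-definite quadratic form in $\mK$. The bookkeeping must carefully track which of the three factors supplies each power of $\vu$ so that Assumption~\ref{assume-D-assignment} can be applied uniformly, and this is where the nondegeneracy condition $\entry{\mW}{:}{j}\neq\mathbf{0}$ of Assumption~\ref{assume: unique solution} is essential to prevent the coefficient of $\entry{\mK}{m}{n}$ from vanishing and leaving a spurious nonzero-$\mK$ solution.
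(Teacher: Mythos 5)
Your proposal is correct and follows essentially the same route as the paper's proof of Theorem~\ref{thm: exactly-linear}: expand each partial derivative, kill odd-degree monomials via Assumption~\ref{assume-D-assignment} (with Proposition~\ref{prop: T3=0} removing the noise term), and use Assumptions~\ref{assume: unique solution} and~\ref{assume-linear-wandwbar-neq-0} to divide by strictly positive coefficients; the only structural difference is the order (you eliminate $\mK$ first, the paper pins down $\mB^{(2)}$ first via the formula $b_j=\ang{\entry{\mW}{:}{j},\entry{\overline{\mW}}{:}{j}}/\ang{\entry{\mW}{:}{j},\entry{\mW}{:}{j}}$), which is immaterial. One spot to tighten: in Step~3 the $\entry{\mW}{k}{j}$-gradient is summed over the row index $i'$, so by itself it only gives $\entry{\mW}{k}{j}\sum_{i'}(\entry{\mB^{(2)}}{i'}{j})^2\E[\entry{\vu}{i'}{k}^2]=\entry{\overline{\mW}}{k}{j}\sum_{i'}\entry{\mB^{(2)}}{i'}{j}\E[\entry{\vu}{i'}{k}^2]$, not a separate identity for each $i$; you need the per-$(i,j)$ condition from the $\mB^{(2)}$-gradient (as the paper does) to conclude that $\entry{\mB^{(2)}}{i}{j}$ is constant in $i$ before reading off $\entry{\mW}{k}{j}=\entry{\overline{\mW}}{k}{j}/b_j$.
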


The following is to provide information about each entry in the output of a $\BC$ layer.
\begin{lemma}
\label{lem: entries-of-layer}
    For all $(i,j)\in\brac{\seqLen}\times \brac{\modeldim_{out}}$, the entries of a resulting layer of $\BC$, $\mZ$, are:
    \begin{equation}
    \label{eq: entries-of-layer}
        \entry{\mZ}{i}{j}=\paren{\paren{\sum_{\ell=0}^{\modeldim - 1} \entry{\vu}{i}{\ell} \cdot \entry{\mW}{\ell}{j+c}} + \entry{\mB}{i}{j+c}^{(1)}} \cdot \paren{\paren{\sum_{k=0}^{i}\entry{\mK}{k}{j+c}\cdot\entry{\vu}{i-k}{j+c}} + \entry{\mB}{i}{j+c}^{(2)}}
    \end{equation}
\end{lemma}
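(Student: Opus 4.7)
The plan is to prove this lemma by directly unpacking the definition of a single $\BC$ layer given in \eqref{eq:gradient-notation-BC-layer} and then isolating the $(i,j)$-th entry of the resulting truncated output matrix $\mZ$. Since the lemma is essentially a notational reformulation of the layer definition, no clever argument is needed; the task is to carefully translate matrix-level operations (matrix multiplication, column-wise convolution, Hadamard product, column slicing) into their scalar counterparts.

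First, I would expand the linear projection term: writing out $\vu \mW$ entry-wise gives $(\vu \mW)[i, j+c] = \sum_{\ell=0}^{\modeldim-1} \entry{\vu}{i}{\ell} \entry{\mW}{\ell}{j+c}$, to which we add the bias entry $\entry{\mB}{i}{j+c}^{(1)}$. Next I would expand the convolution term. Recalling that convolution of two matrices in this architecture is defined column-wise (the $j'$-th column of $\mK \ast \vu$ is the linear convolution of $\entry{\mK}{:}{j'}$ with $\entry{\vu}{:}{j'}$), the $(i, j+c)$-th entry is $\sum_{k=0}^{i} \entry{\mK}{k}{j+c} \entry{\vu}{i-k}{j+c}$, where the truncation of the sum at $k=i$ reflects that entries of $\vu$ with negative row index are zero (equivalently, that the linear convolution picks up only indices whose contributions land at row $i$). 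We then add $\entry{\mB}{i}{j+c}^{(2)}$.

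Having these two scalar expressions, the Hadamard product in \eqref{eq:gradient-notation-BC-layer} reduces to multiplying them entrywise, giving the full expression for the $(i, j+c)$-th entry of the pre-truncation output matrix. Finally, the column slicing $[:, c:c+\modeldim_{out}-1]$ reindexes the columns so that column $j$ of $\mZ$ (for $j \in [\modeldim_{out}]$) corresponds to column $j+c$ of the pre-slice matrix; hence $\entry{\mZ}{i}{j}$ equals the product of the two scalar expressions derived above, which is exactly \eqref{eq: entries-of-layer}.

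There is no real obstacle here: the only thing to be mildly careful about is confirming the convolution convention (linear, causal, and column-wise, so the upper summation index is $i$ rather than $\seqLen-1$), and checking that the column-index shift by $c$ is handled consistently between the slice and the explicit formula. Once these conventions are pinned down, the lemma follows by direct substitution.
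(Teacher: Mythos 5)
Your proposal is correct and follows essentially the same route as the paper's proof: expand $(\vu\mW + \mB^{(1)})$ entry-wise as the dot product $\sum_{\ell}\entry{\vu}{i}{\ell}\entry{\mW}{\ell}{j+c}$ plus bias, expand the column-wise causal convolution at row $i$ as $\sum_{k=0}^{i}\entry{\mK}{k}{j+c}\entry{\vu}{i-k}{j+c}$ plus bias, and combine via the Hadamard product with the column slice accounting for the shift by $c$. No gaps.
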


\begin{proof}
    To begin, from \Cref{eq:gradient-notation-BC-layer},  we know that a layer of $\BC$ yields a matrix $\mZ$ as,
    \begin{equation*}
        \mZ = \paren{\vu\cdot\mW+{\mB}^{(1)}}\odot{\paren{\mK\ast\vu+{\mB}^{(2)}}}\brac{:,c:c+\modeldim_{out}-1}.
    \end{equation*}
    
Looking at the $\vu\cdot\mW$ operation, we know that for a row $i \in \brac{\seqLen}$ and column $j\in\brac{\modeldim_{out}}$, the vector dot product is computed as
\begin{equation*}
\left\langle\entry{\vu}{i}{:}^{\top},\entry{\mW}{:}{j+c}\right\rangle = \sum_{\ell=0}^{\modeldim-1}\entry{\vu}{i}{\ell}\cdot\entry{\mW}{\ell}{j+c}.
\end{equation*}
Meaning each entry in the resulting matrix is defined as such \begin{equation*}
    \paren{\vu\cdot\mW}_{i,j+c} = \sum_{\ell=0}^{\modeldim-1}\entry{\vu}{i}{\ell}\cdot\entry{\mW}{\ell}{j+c}.
\end{equation*}
To sum the matrix $\mB^{(1)}$ to this operation, we simply add the corresponding index giving us
\begin{equation}
    \label{eq: LHS-BClayer}
    \paren{\vu\cdot\mW+{\mB}^{(1)}}_{i,j+c} = \paren{\sum_{\ell=0}^{\modeldim-1}\entry{\vu}{i}{\ell}\cdot\entry{\mW}{\ell}{j+c}} + \entry{\mB}{i}{j+c}^{(1)}.
\end{equation}
Then, for the convolution operation between $\mK$ and $\vu$, that's computed column by column, we have for all $j\in\brac{\modeldim_{out}}$:
\begin{align*}
    \entry{\paren{\mK\ast\vu}}{:}{j+c} = \entry{\mK}{:}{j+c}\ast\entry{\vu}{:}{j+c},
\end{align*}
i.e. for any $i\in\brac{\seqLen}$,
\begin{equation*}
    \paren{\entry{\mK}{:}{j+c}\ast\entry{\vu}{:}{j+c}}[i] = \sum_{k=0}^i \entry{\mK}{k}{j+c}\cdot\entry{\vu}{i-k}{j+c}.
\end{equation*}
Finally, to sum $\mB^{(2)}$ we add the corresponding entry giving us
\begin{equation}
\label{eq: RHS-BClayer}
    \paren{\mK\ast\vu+{\mB}^{(2)}} = \paren{\sum_{k=0}^i \entry{\mK}{k}{j+c}\cdot\entry{\vu}{i-k}{j+c}} + \entry{\mB}{i}{j+c}^{(2)}.
\end{equation}
Combining \Cref{eq: LHS-BClayer,eq: RHS-BClayer}, gives us \Cref{eq: entries-of-layer} as expected.
\end{proof}

\subsubsection{Some partial derivatives are always zero} 
To simplify future computations in this section, we will state a simple lemma on some partial derivatives that always go to 0. 

\begin{lemma}
\label{lem: zero-derivatives-when-no-matching-cols} 
    Fix $i\in\brac{\seqLen},j\in\brac{\modeldim_{out}}, j'\in\brac{\modeldim},c\in\brac{\modeldim-\modeldim_{out}}$. Then for any $j' \neq j+c, $ $0 \leq \ell < \seqLen$, and $0 \leq k < \seqLen$ we have,
    \begin{equation*}
        \frac{\partial \entry{\mZ}{i}{j}}{\partial \entry{\mW}{\ell}{j'}}= \frac{\partial \entry{\mZ}{i}{j}}{\partial \entry{\mK}{k}{j'}}=0.
    \end{equation*}
    Further, any $(i,j+c)\neq(i',j')$ we have,
    \begin{equation*}
        \frac{\partial\entry{\mZ}{i}{j}}{\partial \entry{\mB}{i'}{j'}^{(1)}} = \frac{\partial \entry{\mZ}{i}{j}}{\partial \entry{\mB}{i'}{j'}^{(2)}} = 0.
    \end{equation*}
\end{lemma}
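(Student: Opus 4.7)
The plan is to prove this directly by inspecting the closed-form expression for $\entry{\mZ}{i}{j}$ established in \Cref{lem: entries-of-layer} and observing that only a very limited set of parameter entries actually appear in it. Specifically, from \Cref{eq: entries-of-layer},
\[
\entry{\mZ}{i}{j}=\paren{\paren{\sum_{\ell=0}^{\modeldim - 1} \entry{\vu}{i}{\ell} \cdot \entry{\mW}{\ell}{j+c}} + \entry{\mB}{i}{j+c}^{(1)}} \cdot \paren{\paren{\sum_{k=0}^{i}\entry{\mK}{k}{j+c}\cdot\entry{\vu}{i-k}{j+c}} + \entry{\mB}{i}{j+c}^{(2)}},
\]
so the only weight-matrix entries present are $\{\entry{\mW}{\ell}{j+c}\}_{\ell=0}^{\modeldim-1}$, the only kernel entries present are $\{\entry{\mK}{k}{j+c}\}_{k=0}^{i}$, and the only bias entries present are $\entry{\mB}{i}{j+c}^{(1)}$ and $\entry{\mB}{i}{j+c}^{(2)}$.

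First I would handle the $\mW$ and $\mK$ claims. Fix any $j' \neq j+c$. Since none of the entries $\entry{\mW}{\ell}{j'}$ (respectively $\entry{\mK}{k}{j'}$) appear anywhere in the expression above, treating $\entry{\mZ}{i}{j}$ as a function of the full parameter array $\boldsymbol{\theta}$ shows it is constant in these variables, and therefore both partials vanish identically.

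Next I would handle the bias claims by the same argument, splitting into two subcases. For $\mB^{(1)}$: the only $\mB^{(1)}$ entry appearing in $\entry{\mZ}{i}{j}$ is $\entry{\mB}{i}{j+c}^{(1)}$, so whenever $(i',j') \neq (i,j+c)$ the entry $\entry{\mB}{i'}{j'}^{(1)}$ does not appear and the partial is zero. For $\mB^{(2)}$: the only $\mB^{(2)}$ entry appearing is $\entry{\mB}{i}{j+c}^{(2)}$, giving the same conclusion.

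There is no real obstacle here; the result is an immediate structural consequence of \Cref{lem: entries-of-layer} together with the fact that convolution is computed column-wise (so kernel columns at index $j' \neq j+c$ do not influence column $j+c$ of the convolution output) and gating/addition are applied entry-wise (so bias entries at positions other than $(i,j+c)$ do not influence $\entry{\mZ}{i}{j}$). The lemma is essentially a bookkeeping observation that will be used in subsequent sections to drastically simplify the expected-gradient computations by restricting attention to a single column $j+c$ and a single position $(i,j+c)$ at a time.
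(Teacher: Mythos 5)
Your proposal is correct and follows exactly the paper's argument: the paper's proof simply states that the result follows from \Cref{eq: entries-of-layer} and the definition of partial derivatives, and your write-up is just a more explicit spelling-out of that same observation that only the column-$(j+c)$ entries of $\mW$, $\mK$ and the $(i,j+c)$ entries of $\mB^{(1)}$, $\mB^{(2)}$ appear in $\entry{\mZ}{i}{j}$.
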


\begin{proof}
Follows from \Cref{eq: entries-of-layer} and definition of partial derivatives.
\end{proof}

\subsubsection{Generic form of partial derivatives plus a consequence}
Given \Cref{lem: entries-of-layer} we can conclude the following.
\begin{lemma}
\label{lem: partial-div-Zij}
    For $0\leq i < \seqLen$, $0 \leq j < \modeldim_{out}$, and $0\leq c < \modeldim-\modeldim_{out}$, any entry $x\in\{\entry{\mW}{i}{j+c}, \entry{\mB}{i}{j+c}^{(1)}\}$,
    \begin{equation*}
        \frac{\partial \entry{\mZ}{i}{j}}{\partial x} = 
        \paren{\paren{\entry{\mK}{:}{j+c}\ast\entry{\vu}{:}{j+c}}[i] + 
        \entry{\mB}{i}{j+c}^{(2)}}\frac{\partial}{\partial 
        x}\paren{\ang{\entry{\vu}{i}{:}^{\top},\entry{\mW}{:}{j+c}}+\entry{\mB}{i}
        {j+c}^{(1)}}
    \end{equation*}
    then for any entry $x\in\{\entry{\mK}{i}{j+c},\entry{\mB}{i}{j+c}^{(2)}\}$,
    \begin{equation*}
    \frac{\partial \entry{\mZ}{i}{j}}{\partial x} = \paren{\ang{\entry{\vu}{i}{:}^{\top},\entry{\mW}{:}{j+c}} + \entry{\mB}{i}{j+c}^{(1)}}\cdot\frac{\partial}{\partial x}\paren{\paren{\entry{\mK}{:}{j+c}\ast\entry{\vu}{:}{j+c}}[i] + \entry{\mB}{i}{j+c}^{(2)}}.
     \end{equation*}
\end{lemma}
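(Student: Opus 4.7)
\textbf{Proof plan for Lemma~\ref{lem: partial-div-Zij}.} The plan is to invoke Lemma~\ref{lem: entries-of-layer} to write $\entry{\mZ}{i}{j}$ as an explicit product of two factors and then mechanically apply the product rule, noting which factor each parameter lives in.

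First, by Lemma~\ref{lem: entries-of-layer}, I have
\begin{equation*}
\entry{\mZ}{i}{j} \;=\; \underbrace{\Bigl(\ang{\entry{\vu}{i}{:}^{\top},\entry{\mW}{:}{j+c}} + \entry{\mB}{i}{j+c}^{(1)}\Bigr)}_{=:A_{i,j+c}} \cdot \underbrace{\Bigl((\entry{\mK}{:}{j+c}\ast\entry{\vu}{:}{j+c})[i] + \entry{\mB}{i}{j+c}^{(2)}\Bigr)}_{=:C_{i,j+c}}.
\end{equation*}
The key structural observation is that $A_{i,j+c}$ depends only on the $(j+c)$-th column of $\mW$ and on $\entry{\mB}{i}{j+c}^{(1)}$, while $C_{i,j+c}$ depends only on the $(j+c)$-th column of $\mK$ and on $\entry{\mB}{i}{j+c}^{(2)}$. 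In particular, the four ``blocks'' of parameters $(\mW, \mB^{(1)})$ and $(\mK, \mB^{(2)})$ lie cleanly in separate factors.

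Next I apply the product rule. For any $x \in \{\entry{\mW}{i}{j+c}, \entry{\mB}{i}{j+c}^{(1)}\}$, the second factor $C_{i,j+c}$ has zero derivative with respect to $x$, so
\begin{equation*}
\frac{\partial \entry{\mZ}{i}{j}}{\partial x} = C_{i,j+c}\cdot\frac{\partial A_{i,j+c}}{\partial x},
\end{equation*}
which is exactly the first displayed identity. Symmetrically, for $x \in \{\entry{\mK}{i}{j+c}, \entry{\mB}{i}{j+c}^{(2)}\}$, the first factor has zero derivative, giving the second identity. No further simplification of $\frac{\partial A_{i,j+c}}{\partial x}$ or $\frac{\partial C_{i,j+c}}{\partial x}$ is needed since the lemma leaves them symbolic.

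There is essentially no obstacle here; the only thing to be careful about is the indexing convention (the $j$-th output column corresponds to the $(j+c)$-th internal column due to the truncation by $c$ in~\eqref{eq:gradient-notation-BC-layer}), and the observation that Lemma~\ref{lem: zero-derivatives-when-no-matching-cols} already rules out cross-column contributions so we need not worry about parameters $\entry{\mW}{\ell}{j'}$ or $\entry{\mK}{k}{j'}$ with $j'\neq j+c$. With these bookkeeping points in hand, the proof reduces to a one-line application of the product rule.
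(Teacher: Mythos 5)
Your proposal is correct and matches the paper's (implicit) argument: the paper states this lemma without a written proof, treating it as an immediate consequence of the product formula in Lemma~\ref{lem: entries-of-layer} together with the product rule and the observation that $(\mW,\mB^{(1)})$ and $(\mK,\mB^{(2)})$ appear in separate factors, which is exactly what you wrote. The only cosmetic point is that the row index of $\mW$ in the lemma statement should really range over $\brac{\modeldim}$ rather than $\brac{\seqLen}$, but this does not affect your argument.
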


A consequence of \Cref{lem: partial-div-Zij} is the following. \begin{corollary}
\label{cor: zero-weights}
    Let $\boldsymbol{\theta} = \paren{\mW,\mK,\mB^{(1)},\mB^{(2)}}=\mathbf{0}$. Then for all parameter variables x, we have
    \begin{equation*}
        \frac{\partial \entry{\mZ}{i}{j}}{\partial x}=0.
    \end{equation*}
    Specifically,
    \begin{equation*}
    \nabla_{\boldsymbol{\theta}} \overline{L}\paren{\boldsymbol{\theta}}|_{\boldsymbol{\theta}=\mathbf{0}} = \mathbf{0}.
    \end{equation*}
\end{corollary}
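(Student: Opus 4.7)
The plan is to reduce the corollary directly to the structural form of the partial derivatives given by Lemma~\ref{lem: partial-div-Zij}, combined with the earlier reduction of $\nabla_{\boldsymbol{\theta}} \overline{L}$ to the three terms $\entry{\mT}{i}{j}^{(1)}, \entry{\mT}{i}{j}^{(2)}, \entry{\mT}{i}{j}^{(3)}$.

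First, I would establish the pointwise claim: for each $(i,j)\in[\seqLen]\times[\modeldim_{out}]$ and each scalar parameter $x \in \boldsymbol{\theta}$, the derivative $\partial \entry{\mZ}{i}{j}/\partial x$ vanishes at $\boldsymbol{\theta}=\mathbf{0}$. By Lemma~\ref{lem: zero-derivatives-when-no-matching-cols}, the only potentially nonzero partials are with respect to parameters in the $(j+c)$-th column (for $\mW,\mK,\mB^{(1)},\mB^{(2)}$) and at row $i$ (for the bias entries). For any such $x\in\{\entry{\mW}{\ell}{j+c},\entry{\mB}{i}{j+c}^{(1)}\}$, Lemma~\ref{lem: partial-div-Zij} expresses the derivative as $\bigl((\entry{\mK}{:}{j+c}\ast\entry{\vu}{:}{j+c})[i]+\entry{\mB}{i}{j+c}^{(2)}\bigr)$ times a quantity depending only on $\vu$ and the chosen parameter; when $\mK=\mathbf{0}$ and $\mB^{(2)}=\mathbf{0}$ this prefactor is identically $0$. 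Symmetrically, for $x\in\{\entry{\mK}{k}{j+c},\entry{\mB}{i}{j+c}^{(2)}\}$ the prefactor $\ang{\entry{\vu}{i}{:}^\top,\entry{\mW}{:}{j+c}}+\entry{\mB}{i}{j+c}^{(1)}$ vanishes at $\mW=\mathbf{0}$, $\mB^{(1)}=\mathbf{0}$. This handles every parameter.

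Next, I would propagate this vanishing up to the loss gradient. Recall the decomposition $\partial \entry{\overline{L}}{i}{j}/\partial x = 2(\entry{\mT}{i}{j}^{(1)} - \entry{\mT}{i}{j}^{(2)} - \entry{\mT}{i}{j}^{(3)})$, where each of $\entry{\mT}{i}{j}^{(1)}$ and $\entry{\mT}{i}{j}^{(2)}$ carries an explicit factor of $\partial \entry{\mZ}{i}{j}/\partial x$. By the pointwise claim above, both of these are $0$ at $\boldsymbol{\theta}=\mathbf{0}$ for every $(\vu,\curlE)$, so they vanish in expectation. For the third term, Proposition~\ref{prop: T3=0} already gives $\E[\entry{\mT}{i}{j}^{(3)}] = 0$ using independence of $\D$ and $\D_\curlE$ and $\E[\entry{\curlE}{i}{j}]=0$. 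Summing over $(i,j)$ yields $\nabla_{\boldsymbol{\theta}}\overline{L}(\boldsymbol{\theta})\big|_{\boldsymbol{\theta}=\mathbf{0}}=\mathbf{0}$.

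There is no real obstacle here: the whole content has been pushed into Lemma~\ref{lem: partial-div-Zij}, which factors each partial derivative into a product whose two factors are, respectively, the gate branch and the convolution branch of the \BC~layer. At $\boldsymbol{\theta}=\mathbf{0}$, at least one of these two branches is identically zero regardless of which parameter one differentiates, so the product annihilates. The only minor bookkeeping is checking that this covers all three families of parameters (weights, kernel, and both biases), which the two cases of Lemma~\ref{lem: partial-div-Zij} together do exhaust. This illustrates why $\boldsymbol{\theta}=\mathbf{0}$ is a trivial stationary point of $\overline{L}$ and motivates the non-degeneracy conditions (Assumption~\ref{assume: unique solution}, Assumption~\ref{assume-linear-wandwbar-neq-0}) needed for the converse recovery theorems.
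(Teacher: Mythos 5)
Your proposal is correct and matches the paper's intent exactly: the paper presents Corollary~\ref{cor: zero-weights} as an immediate consequence of Lemma~\ref{lem: partial-div-Zij}, relying on precisely the observation you make, that each partial derivative factors as a product in which the complementary branch (gate or convolution) appears as a prefactor and vanishes identically at $\boldsymbol{\theta}=\mathbf{0}$. Your additional bookkeeping via Lemma~\ref{lem: zero-derivatives-when-no-matching-cols} and the $\entry{\mT}{i}{j}^{(1)},\entry{\mT}{i}{j}^{(2)},\entry{\mT}{i}{j}^{(3)}$ decomposition is sound (indeed $\entry{\mT}{i}{j}^{(3)}$ also vanishes pointwise here, so invoking Proposition~\ref{prop: T3=0} is not even needed), and nothing is missing.
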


\Cref{cor: zero-weights} implies that initializing $\boldsymbol{\theta}=\mathbf{0}$ is not a good choice for initializing parameters since it is a local minima.

We can exactly figure out the partial derivatives in \Cref{lem: partial-div-Zij} by the following. 
\begin{lemma}
    \label{lem: exact-partial-derivs} 
    Fix $i\in\brac{\seqLen},j'\in\brac{\modeldim}$. Then for any $0 \leq \ell < \seqLen$ we have,
        \begin{equation*}
            \frac{\partial}{\partial \entry{\mW}{\ell}{j'}}\paren{\ang{\entry{\vu}{i}{:}^{\top},\entry{\mW}{:}{j'}} + \entry{\mB}{i}{j'}^{(1)}}=\entry{\vu}{i}{\ell}
        \end{equation*}
        and \begin{equation*}
            \frac{\partial}{\partial \entry{\mB}{i}{j'}^{(1)}}\paren{\ang{\entry{\vu}{i}{:}^{\top},\entry{\mW}{:}{j'}} + \entry{\mB}{i}{j'}^{(1)}} = 1.
        \end{equation*}

    \,

    \noindent
    Also, \begin{equation*}
        \frac{\partial}{\partial \entry{\mB}{i}{j'}^{(2)}}\paren{\paren{\entry{\mK}{:}{j'}\ast\entry{\vu}{:}{j'}}[i]+\entry{\mB}{i}{j'}^{(2)}} = 1.
    \end{equation*}
    
    \,

    \noindent
    Next, for any $0 \leq k \leq i$, we have
    \begin{equation*}
        \frac{\partial}{\partial \entry{\mK}{k}{j'}}\paren{\paren{\entry{\mK}{:}{j'}\ast\entry{\vu}{:}{j'}}[i]+\entry{\mB}{i}{j'}^{(2)}} = \entry{\vu}{i-k}{j'}
    \end{equation*}
    and for all $k>i$,
    \begin{equation*}
        \frac{\partial}{\partial \entry{\mK}{k}{j'}}\paren{\paren{\entry{\mK}{:}{j'}\ast\entry{\vu}{:}{j'}}[i]+\entry{\mB}{i}{j'}^{(2)}} = 0.
    \end{equation*}
\end{lemma}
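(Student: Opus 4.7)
The plan is to prove each of the five identities directly by expanding the defining expression and applying the elementary rules of partial differentiation, since each target quantity is either an inner product, a convolution tap, or a bias entry, all of which are affine in the variable of differentiation.

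First I would handle the two $\mW$- and $\mB^{(1)}$-derivatives. I expand $\ang{\entry{\vu}{i}{:}^\top,\entry{\mW}{:}{j'}} = \sum_{\ell'=0}^{\modeldim-1} \entry{\vu}{i}{\ell'}\cdot\entry{\mW}{\ell'}{j'}$ and add $\entry{\mB}{i}{j'}^{(1)}$. Treating $\vu$ as independent of the parameters, differentiation with respect to $\entry{\mW}{\ell}{j'}$ kills every term in the sum except $\ell'=\ell$, and kills the bias term (which has no $\mW$ dependence), leaving exactly $\entry{\vu}{i}{\ell}$. Differentiation with respect to $\entry{\mB}{i}{j'}^{(1)}$ kills the entire inner product (none of whose summands contain this bias entry) and leaves the coefficient of the bias term, which is $1$. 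The identity for $\partial/\partial \entry{\mB}{i}{j'}^{(2)}$ of the right-hand factor is the same argument: the convolution tap $(\entry{\mK}{:}{j'}\ast\entry{\vu}{:}{j'})[i]$ does not depend on this bias, so only the bias term contributes, giving $1$.

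For the $\mK$-derivatives, I would invoke the definition of linear convolution used earlier in the paper to write
\begin{equation*}
(\entry{\mK}{:}{j'}\ast\entry{\vu}{:}{j'})[i] \;=\; \sum_{k'=0}^{i} \entry{\mK}{k'}{j'}\cdot\entry{\vu}{i-k'}{j'}.
\end{equation*}
The key observation is that the summation index ranges only over $0 \leq k' \leq i$, so the variables $\{\entry{\mK}{k}{j'} : k > i\}$ do not appear in the tap at position $i$ at all. Hence for $k>i$ the partial derivative vanishes. For $0 \leq k \leq i$, only the $k'=k$ summand survives, and its coefficient is $\entry{\vu}{i-k}{j'}$, giving the claimed value. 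The bias term $\entry{\mB}{i}{j'}^{(2)}$ contributes nothing.

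There is no genuine obstacle here: the lemma is a bookkeeping statement used to feed into the zero-gradient analysis of the previous lemmas, and every expression being differentiated is affine in its variable. The only point requiring a small amount of care is the range of summation in the convolution, which is what distinguishes the $k \leq i$ case from the $k > i$ case; I would state this explicitly to make the indicator dependence on causality transparent.
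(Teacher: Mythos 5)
Your proposal is correct and follows essentially the same route as the paper's proof: expand the inner product and the convolution tap as explicit sums, observe that each expression is affine in the variable of differentiation, and pick out the single surviving summand (or note that none survives when $k>i$). No gaps.
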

\begin{proof}
    Let's begin by looking at \begin{equation*}
            \frac{\partial}{\partial \entry{\mW}{\ell}{j'}}\paren{\ang{\entry{\vu}{i}{:}^{\top},\entry{\mW}{:}{j'}} + \entry{\mB}{i}{j'}^{(1)}}.
        \end{equation*}
    Expanding this out gives us
    \begin{equation*}
        \frac{\partial}{\partial \entry{\mW}{\ell}{j'}}\paren{\sum_{\ell' = 0}^{\modeldim-1}\entry{\vu}{i}{\ell'}\cdot \entry{\mW}{\ell'}{j'}  + \entry{\mB}{i}{j'}^{(1)}}.
    \end{equation*}
    When we take the partial derivative of this with respect to $\entry{\mW}{\ell}{j'}$, the $\entry{\mB}{i}{j'}^{(1)}$ term goes to 0. And the term \begin{equation*}
        \frac{\partial}{\partial \entry{\mW}{\ell}{j'}}\paren{\sum_{\ell' = 0}^{\modeldim-1}\entry{\vu}{i}{\ell'}\cdot \entry{\mW}{\ell'}{j'} } = \entry{\vu}{i}{\ell},
    \end{equation*} as desired, since $\entry{\mW}{\ell}{j'}$ only shows up in the summation when $\ell'=\ell$.

    \,

    \noindent
    Next, let us look at \begin{equation*}
        \frac{\partial}{\partial \entry{\mB}{i}{j'}^{(1)}} \paren{\ang{\entry{\vu}{i}{:}^{\top},\entry{\mW}{:}{j'}} + \entry{\mB}{i}{j'}^{(1)}}
    \end{equation*}
    Since $\entry{\mB}{i}{j'}^{(1)}$ doesn't show up in the dot product of the vectors, we know that piece goes to zero, giving us
    \begin{equation*}
       \frac{\partial}{\partial \entry{\mB}{i}{j'}^{(1)}} \paren{\ang{\entry{\vu}{i}{:}^{\top},\entry{\mW}{:}{j'}} + \entry{\mB}{i}{j'}^{(1)}}
       =
       \frac{\partial \entry{\mB}{i}{j'}^{(1)}}{\partial \entry{\mB}{i}{j'}^{(1)}} = 1,
    \end{equation*}as desired.

    \,
    Next, for any $0\leq k \leq i$ we have\begin{align*}
        \frac{\partial}{\partial \entry{\mK}{k}{j'}}\paren{\paren{\entry{\mK}{:}{j'}\ast\entry{\vu}{:}{j'}}[i]+\entry{\mB}{i}{j'}^{(2)}}.
    \end{align*}
    The $\entry{\mB}{i}{j'}^{(2)}$ term goes to 0 as we're taking the partial derivative with respect to $\entry{\mK}{k}{j'}$. 
    So we have \begin{align*}
        \frac{\partial}{\partial \entry{\mK}{k}{j'}}\paren{\paren{\entry{\mK}{:}{j'}\ast\entry{\vu}{:}{j'}}[i]} 
        &= \frac{\partial}{\partial \entry{\mK}{k}{j'}}\sum_{k'=0}^{i}\entry{\mK}{k'}{j'}\entry{\vu}{i-k'}{j'} =\entry{\vu}{i-k}{j'}
    \end{align*} as desired,
    since $\entry{\mK}{k}{j'}$ only shows up in the summation when $k'=k$.
    
    \noindent
    Next, for $k > i$ we have
    \begin{align}
    \label{eq:k>i_partialdiv=0}
        \frac{\partial}{\partial \entry{\mK}{k}{j'}}\paren{\paren{\entry{\mK}{:}{j'}\ast\entry{\vu}{:}{j'}}[i]} 
        &= \frac{\partial}{\partial \entry{\mK}{k}{j'}}\sum_{k'=0}^{i}\entry{\mK}{k'}{j'}\entry{\vu}{i-k'}{j'},
        =0
    \end{align} as desired, since $\entry{\mK}{k}{j'}$ will never show up in the summation as $k' < k$.
    \,

    \noindent
    Finally, let us look at the fourth piece, \begin{equation*}
        \frac{\partial}{\partial \entry{\mB}{i}{j'}^{(2)}}\paren{\paren{\entry{\mK}{:}{j'}\ast\entry{\vu}{:}{j'}}[i]+\entry{\mB}{i}{j'}^{(2)}}.
    \end{equation*}
    The term $\entry{\mB}{i}{j'}^{(2)}$ doesn't appear in the result of the convolution operation, therefore that piece goes to $0$, giving us
    \begin{equation*}
        \frac{\partial}{\partial \entry{\mB}{i}{j'}^{(2)}}\paren{\paren{\entry{\mK}{:}{j'}\ast\entry{\vu}{:}{j'}}[i]+\entry{\mB}{i}{j'}^{(2)}} = \frac{\partial \entry{\mB}{i}{j'}^{(2)}}{\partial \entry{\mB}{i}{j'}^{(2)}} = 1,
    \end{equation*} as desired.
    
\end{proof}

\begin{definition} For the rest of the section, we will redefine $\boldsymbol{\theta}=\paren{\mW,\mK,\mB^{(2)}}$. Note that we are just removing $\mB^{(1)}$ since it is all zeros as per \Cref{assume: unique solution}.
\end{definition}

\begin{lemma}
\label{lem: exp-val-of-zij-times-partialdiv}
    Given \Cref{assume-D-assignment} and recall that $\mB^{(1)}=\mathbf{0}$. Fix $i\in\brac{\seqLen},j\in\brac{\modeldim_{out}},c\in\brac{\modeldim-\modeldim_{out}}$. Then we have
    \begin{equation*}
        \E\brac{\entry{\mZ}{i}{j}\frac{\partial \entry{\mZ}{i}{j}}{\partial \entry{\mB}{i}{j+c}^{(2)}} } = \entry{\mB}{i}{j+c}^{(2)} 
        \sum_{\ell'=0}^{\modeldim-1}\E\brac{\entry{\vu}{i}{\ell'}^2}\entry{\mW}{\ell'}{j+c}^2.
    \end{equation*}

    \noindent
    Next, for any $0 \leq k \leq i$ we have \begin{equation*}
        \E\brac{\entry{\mZ}{i}{j}\frac{\partial \entry{\mZ}{i}{j}}{\partial \entry{\mK}{k}{j+c}} } 
        = 
        \entry{\mK}{k}{j+c}\sum_{\ell'=0}^{\modeldim -1}\entry{\mW}{\ell'}{j+c}^2\E\brac{ \entry{\vu}{i}{\ell'}^2 \cdot \entry{\vu}{i-k}{j+c}^2}.
    \end{equation*}
    For $k>i$,
    \begin{equation*}
        \E\brac{\entry{\mZ}{i}{j}\frac{\partial \entry{\mZ}{i}{j}}{\partial \entry{\mK}{k}{j+c}} }=0.
    \end{equation*}
    
    \,
    
    \noindent
    Finally, for any $0\leq\ell\leq\modeldim-1$,
    \begin{equation*}
        \E\brac{\entry{\mZ}{i}{j}\frac{\partial \entry{\mZ}{i}{j}}{\partial \entry{\mW}{\ell}{j+c}}} 
        = 
        \entry{\mW}{\ell}{j+c}\sum_{k'=0}^{i}\entry{\mK}{k'}{j+c}^2\E\brac{\entry{\vu}{i-k'}{j+c}^2 \cdot \entry{\vu}{i}{\ell}^2} 
        +
        \paren{\entry{\mB}{i}{j+c}^{(2)}}^2\entry{\mW}{\ell}{j+c}\E\brac{ \entry{\vu}{i}{\ell}^2}.
    \end{equation*}
\end{lemma}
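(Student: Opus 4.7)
The plan is to substitute the explicit formulas already established for $\entry{\mZ}{i}{j}$ (\Cref{lem: entries-of-layer}) and for each partial derivative $\partial\entry{\mZ}{i}{j}/\partial x$ (\Cref{lem: exact-partial-derivs}), simplify using $\mB^{(1)}=\mathbf{0}$ from \Cref{assume: unique solution}, and then expand the product $\entry{\mZ}{i}{j}\cdot\partial\entry{\mZ}{i}{j}/\partial x$ as a sum of monomials in the entries of $\vu$. Each monomial will be a product of parameter factors (entries of $\mW$, $\mK$, and $\mB^{(2)}$) times a monomial in the $\entry{\vu}{i'}{j'}$ variables. Taking expectation with respect to $\D$ and invoking \Cref{assume-D-assignment} then kills every monomial containing any variable at odd power, leaving only a handful of ``diagonal'' survivors that match the three claimed identities.

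I would handle the three cases in parallel. In all of them, the $\vu$-variables showing up in the product come from three sources: (i) the linear factor $\sum_{\ell'}\entry{\vu}{i}{\ell'}\entry{\mW}{\ell'}{j+c}$ appearing in $\entry{\mZ}{i}{j}$; (ii) the convolution factor $\sum_{k'}\entry{\mK}{k'}{j+c}\entry{\vu}{i-k'}{j+c}$ (plus the constant $\entry{\mB}{i}{j+c}^{(2)}$) appearing in $\entry{\mZ}{i}{j}$; and (iii) an extra $\vu$-factor introduced by $\partial\entry{\mZ}{i}{j}/\partial x$, which is $0$ for $x=\entry{\mB}{i}{j+c}^{(2)}$, $\entry{\vu}{i-k}{j+c}$ for $x=\entry{\mK}{k}{j+c}$, and $\entry{\vu}{i}{\ell}$ for $x=\entry{\mW}{\ell}{j+c}$. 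The key combinatorial observation is that positions of the form $(i-k',j+c)$ with $k'\neq 0$ lie in a different row than any $(i,\cdot)$, so they can pair only among themselves; this forces $k'=k''$ in a product of two convolution factors and $k'=k$ in the cross product arising in the $\entry{\mK}{k}{j+c}$ derivative. Likewise, pairs $(i,\ell),(i,\ell')$ must satisfy $\ell=\ell'$ to yield an even power. All mixed cases with an odd number of $\vu$-factors at a given position are killed by \Cref{assume-D-assignment}.

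Applying this, the $\entry{\mB}{i}{j+c}^{(2)}$ derivative case collapses the convolution contribution entirely (three distinct $\vu$-factors per monomial) and leaves only the diagonal $\ell=\ell'$ sum weighted by $\entry{\mB}{i}{j+c}^{(2)}$. The $\entry{\mK}{k}{j+c}$ derivative case for $k\le i$ similarly retains only $k'=k$ in the convolution and $\ell=\ell'$ in the linear factor; the $\entry{\mB}{i}{j+c}^{(2)}$ cross term vanishes by the same odd-power argument, and $k>i$ is immediate from \Cref{lem: exact-partial-derivs}. The $\entry{\mW}{\ell}{j+c}$ derivative case receives two surviving contributions: the squared convolution part collapses via $k'=k''$ with $\ell=\ell'$, and the squared bias part $(\entry{\mB}{i}{j+c}^{(2)})^2$ contributes via $\ell=\ell'$; the linear cross term $2\entry{\mB}{i}{j+c}^{(2)}\sum_{k'}\entry{\mK}{k'}{j+c}\entry{\vu}{i-k'}{j+c}$ again has an odd total number of $\vu$-factors at each position and vanishes.

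The main obstacle is the routine but error-prone bookkeeping of the boundary case $k'=0$ (where $(i-k',j+c)=(i,j+c)$ could in principle match an $(i,\ell)$ position when $\ell=j+c$); I will check in each case that the total multiplicity at $(i,j+c)$ across the full monomial remains odd (typically $1$ or $3$), so \Cref{assume-D-assignment} still kills the term. Once this is verified, each case reduces to a single surviving sum that reads off exactly as in the statement.
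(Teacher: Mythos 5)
Your proposal is correct and follows essentially the same route as the paper's proof: substitute the explicit expressions from \Cref{lem: entries-of-layer} and \Cref{lem: exact-partial-derivs}, expand the product into monomials in the entries of $\vu$, and use \Cref{assume-D-assignment} to kill every monomial with a variable at odd multiplicity, leaving exactly the diagonal terms $k'=k''$ (or $k'=k$) and $\ell'=\ell''$ (or $\ell'=\ell$). Your explicit check of the $k'=0$, $\ell=j+c$ coincidence is a point the paper glosses over slightly, and your resolution (total multiplicity at $(i,j+c)$ stays odd) is the right one.
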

\begin{proof}
Given \Cref{lem: partial-div-Zij} and \Cref{lem: exact-partial-derivs} (along with the fact that $\mB^{(1)}=\mathbf{0}$) we have
\begin{align*}
    \E\brac{\entry{\mZ}{i}{j}\frac{\partial \entry{\mZ}{i}{j}}{\partial \entry{\mB}{i}{j+c}^{(2)}} }
    &= \E\brac{\paren{\entry{\mK}{:}{j+c}\ast \entry{\vu}{:}{j+c}[i] + \entry{\mB}{i}{j+c}^{(2)}}\ang{\entry{\vu}{i}{:}^{\top},\entry{\mW}{:}{j+c}}^2}
    \\
    &= \sum_{\ell' =0}^{\modeldim-1}\sum_{\ell''=0}^{\modeldim-1}\sum_{k'=0}^{i}\E\brac{\entry{\vu}{i}{\ell'}\entry{\vu}{i-k'}{j+c}\entry{\vu}{i}{\ell''}} \entry{\mW}{\ell'}{j+c}\entry{\mW}{\ell''}{j+c}\entry{\mK}{k'}{j+c}
    \\
    &+
    \entry{\mB}{i}{j+c}^{(2)}\sum_{\ell'=0}^{\modeldim-1}\sum_{\ell''=0}^{\modeldim-1}\E\brac{ \entry{\vu}{i}{\ell'}\entry{\vu}{i}{\ell''}} \entry{\mW}{\ell'}{j+c}\entry{\mW}{\ell''}{j+c}.
\end{align*}
In the above, the first summation goes to $0$ since for all $\ell',\ell'',k$, by \Cref{assume-D-assignment}, the expected value of the product of three $\vu$'s will be 0 since there's an odd number of them. Again, by \Cref{assume-D-assignment}, the second summation will be non-zero if and only if $\ell'=\ell''$.
Therefore we get the following,\begin{align*}
    \E\brac{\entry{\mZ}{i}{j}\frac{\partial \entry{\mZ}{i}{j}}{\partial \entry{\mB}{i}{j}^{(2)}} } = \entry{\mB}{i}{j+c}^{(2)} 
    \sum_{\ell'=0}^{\modeldim-1}\E\brac{\entry{\vu}{i}{\ell'}^2}\entry{\mW}{\ell'}{j+c}^2
\end{align*}
as desired.

\,
\noindent
Moving onto the next piece, using \Cref{lem: partial-div-Zij} and \Cref{lem: exact-partial-derivs} (along with the fact that $\mB^{(1)}=\mathbf{0}$) we have for $0\leq k\leq i$,
\begin{align*}
    \E\brac{\entry{\mZ}{i}{j}\frac{\partial \entry{\mZ}{i}{j}}{\partial \entry{\mK}{k}{j+c}} } 
    &= \E\brac{\paren{\entry{\mK}{:}{j+c}\ast \entry{\vu}{:}{j+c}[i] + \entry{\mB}{i}{j+c}^{(2)}}\ang{\entry{\vu}{i}{:}^{\top},\entry{\mW}{:}{j+c}}^2\entry{\vu}{i-k}{j+c}}
    \\
    &=
    \sum_{\ell'=0}^{\modeldim-1} \sum_{\ell''=0}^{\modeldim-1}\sum_{k'=0}^{i}\E\brac{ \entry{\vu}{i}{\ell'}  \entry{\vu}{i}{\ell''} \entry{\vu}{i-k'}{j+c} \entry{\vu}{i-k}{j+c}} \entry{\mW}{\ell'}{j+c}\entry{\mW}{\ell''}{j+c}\entry{\mK}{k'}{j+c}
    \\
    &+
    \entry{\mB}{i}{j+c}^{(2)} \sum_{\ell'=0}^{\modeldim-1}\sum_{\ell''=0}^{\modeldim-1}\E\brac{\entry{\vu}{i}{\ell'}\entry{\vu}{i}{\ell''}\entry{\vu}{i-k}{j+c}}\entry{\mW}{\ell'}{j+c}\entry{\mW}{\ell''}{j+c}.
\end{align*}

By \Cref{assume-D-assignment}, only expected values of terms with square monomials are non-zero. Specifically, the first summation has the $\entry{\vu}{i-k}{j}$ term, therefore, we need $k'=k$ to get an even exponent. This is the same reasoning for $\ell'=\ell''$. Therefore, the first summation is non-zero if and only if $k'=k$ and $\ell'=\ell''$. The second summation will be 0 since for all $\ell',\ell''$ the expected value of the $\vu$'s is 0 since there's an odd number of them, there will always be an odd exponent. So we get\begin{equation*}
    \E\brac{\entry{\mZ}{i}{j}\frac{\partial \entry{\mZ}{i}{j}}{\partial \entry{\mK}{k}{j+c}} } = \entry{\mK}{k}{j+c}\sum_{\ell'=0}^{\modeldim -1}\entry{\mW}{\ell'}{j+c}^2\E\brac{ \entry{\vu}{i}{\ell'}^2 \cdot \entry{\vu}{i-k}{j+c}^2}
\end{equation*} as desired. 

When $k>i$,
\begin{equation*}
    \E\brac{\entry{\mZ}{i}{j}\frac{\partial \entry{\mZ}{i}{j}}{\partial \entry{\mK}{k}{j+c}} } = 0
\end{equation*}
since we index the convolution piece at $i$, $\partial \entry{\mK}{k}{j+c}$ for $k>i$ will never be in the piece we're taking the derivative of.
\,
\noindent
Moving onto the final piece, given \Cref{lem: partial-div-Zij} and \Cref{lem: exact-partial-derivs} and $\mB^{(1)}=\mathbf{0}$ we have
\begin{align*}
    \E\brac{\entry{\mZ}{i}{j}\frac{\partial \entry{\mZ}{i}{j}}{\partial \entry{\mW}{\ell}{j+c}}} 
    &= \E\brac{\paren{\paren{\entry{\mK}{:}{j+c}\ast\entry{\vu}{:}{j+c}}[i] + \paren{\entry{\mB}{i}{j+c}^{(2)}}}^2\ang{\entry{\vu}{i}{:}^{\top},\entry{\mW}{:}{j+c}}\entry{\vu}{i}{\ell}}
    \\
    &= 
    \sum_{k'=0}^{i}\sum_{k''=0}^{i} \sum_{\ell'=0}^{\modeldim-1} \E\brac{\entry{\vu}{i-k'}{j+c}\entry{\vu}{i}{\ell'}\entry{\vu}{i-k''}{j+c}\entry{\vu}{i}{\ell}} \entry{\mK}{k'}{j+c}\entry{\mK}{k''}{j+c}\entry{\mW}{\ell'}{j+c}
    \\
    &+
    2\entry{\mB}{i}{j+c}^{(2)}\sum_{k'=0}^{i}\sum_{\ell'=0}^{\modeldim-1}\E\brac{\entry{\vu}{i-k'}{j+c}\entry{\vu}{i}{\ell'}\entry{\vu}{i}{\ell}} \entry{\mK}{k'}{j+c}\entry{\mW}{\ell'}{j+c} 
    \\
    &+ 
    \paren{\entry{\mB}{i}{j+c}^{(2)}}^2\sum_{\ell'=0}^{\modeldim-1}\E\brac{\entry{\vu}{i}{\ell'}\entry{\vu}{i}{\ell}}\entry{\mW}{\ell'}{j+c}.
\end{align*}
We again use \Cref{assume-D-assignment} to simplify the summations. The first summation has the $\entry{\vu}{i}{\ell}$ term, therefore to get an even exponent on it we need $\ell'=\ell$. This is the same reasoning for $k'=k''$. Therefore the first summation will be non-zero if and only if $\ell'=\ell$ and $k'=k''$. The second summation will be 0 for all $k',\ell'$ since we're taking the expected value of an odd number of $\vu$ products, there will always be an odd exponent. The third term will be non-zero if and only if $\ell'=\ell$ to get an even exponent on $\vu$'s entry. Therefore we have,
\begin{align*}
    \E\brac{\entry{\mZ}{i}{j}\frac{\partial \entry{\mZ}{i}{j}}{\partial \entry{\mW}{\ell}{j+c}}} 
    = 
    \entry{\mW}{\ell}{j+c}\sum_{k'=0}^{i}\entry{\mK}{k'}{j+c}^2\E\brac{\entry{\vu}{i-k'}{j+c}^2 \cdot \entry{\vu}{i}{\ell}^2} 
    +
    \paren{\entry{\mB}{i}{j+c}^{(2)}}^2\entry{\mW}{\ell}{j+c}\E\brac{ \entry{\vu}{i}{\ell}^2}
\end{align*}
as desired. 

\end{proof}

\subsubsection{\textsc{Linear}}
The following lemma will be for when the function we are considering is a linear map.

\begin{lemma}
\label{lem: exp-val-of-linear-times-partialdiv}
    With $c=0$ and $\modeldim_{out}=\modeldim$, fix  $i\in\brac{\seqLen},j\in\brac{\modeldim}$, and $\overline{\mW}\in\R^{\modeldim\times\modeldim}$. Then we have 
    \begin{equation*}
        \E\brac{\ang{\entry{\vu}{i}{:}^{\top},\entry{\overline{\mW}}{:}{j}} \frac{\partial \entry{\mZ}{i}{j}}{\partial \entry{\mB}{i}{j}^{(2)}} } = \sum_{\ell=0}^{\modeldim-1} \entry{\mW}{\ell}{j}\entry{\overline{\mW}}{\ell}{j}\E\brac{\entry{\vu}{i}{\ell}^2}.
    \end{equation*}
    \noindent
    For all $k$, \begin{equation*}
        \E\brac{\ang{\entry{\vu}{i}{:}^{\top},\entry{\overline{\mW}}{:}{j}} \frac{\partial \entry{\mZ}{i}{j}}{\partial \entry{\mK}{k}{j}} } = 0.
    \end{equation*}
    For all $\ell$,
    \begin{equation*}
        \E\brac{\ang{\entry{\vu}{i}{:}^{\top},\entry{\overline{\mW}}{:}{j}} \frac{\partial \entry{\mZ}{i}{j}}{\partial \entry{\mW}{\ell}{j}} } = \entry{\mB}{i}{j}^{(2)}\entry{\overline{\mW}}{\ell}{j}\E\brac{\entry{\vu}{i}{\ell}^2} .
    \end{equation*}
\end{lemma}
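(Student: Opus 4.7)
The plan is to follow the same strategy used in Lemma~\ref{lem: exp-val-of-zij-times-partialdiv} (the analogous computation for $\E[\entry{\mZ}{i}{j} \cdot \partial \entry{\mZ}{i}{j}/\partial x]$), but with one of the factors $\langle \entry{\vu}{i}{:}^\top, \entry{\mW}{:}{j}\rangle$ replaced by the target-induced factor $\langle \entry{\vu}{i}{:}^\top, \entry{\overline{\mW}}{:}{j}\rangle$. Since $c=0$ and $\modeldim_{out}=\modeldim$, Lemma~\ref{lem: zero-derivatives-when-no-matching-cols} lets me restrict attention to column $j$ throughout.

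First I would invoke Lemma~\ref{lem: partial-div-Zij} together with Lemma~\ref{lem: exact-partial-derivs} (and the assumption $\mB^{(1)}=\mathbf{0}$ from Assumption~\ref{assume: unique solution}) to write each partial derivative explicitly: $\partial \entry{\mZ}{i}{j}/\partial \entry{\mB}{i}{j}^{(2)} = \langle \entry{\vu}{i}{:}^\top, \entry{\mW}{:}{j}\rangle$; for $k\le i$, $\partial \entry{\mZ}{i}{j}/\partial \entry{\mK}{k}{j} = \langle \entry{\vu}{i}{:}^\top, \entry{\mW}{:}{j}\rangle \cdot \entry{\vu}{i-k}{j}$ (and $0$ for $k>i$); and $\partial \entry{\mZ}{i}{j}/\partial \entry{\mW}{\ell}{j} = \bigl((\entry{\mK}{:}{j}\ast \entry{\vu}{:}{j})[i] + \entry{\mB}{i}{j}^{(2)}\bigr) \cdot \entry{\vu}{i}{\ell}$.

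Then, for each of the three identities, I would multiply by the linear factor $\langle \entry{\vu}{i}{:}^\top, \entry{\overline{\mW}}{:}{j}\rangle = \sum_{\ell'}\entry{\vu}{i}{\ell'} \entry{\overline{\mW}}{\ell'}{j}$, expand the resulting product into a sum of monomials in the entries of $\vu$, and apply Assumption~\ref{assume-D-assignment} to kill every monomial in which some $\entry{\vu}{\cdot}{\cdot}$ appears to an odd power. For $\partial/\partial \entry{\mB}{i}{j}^{(2)}$, only the diagonal terms $\ell'=\ell''$ survive, yielding $\sum_\ell \entry{\mW}{\ell}{j}\entry{\overline{\mW}}{\ell}{j}\E[\entry{\vu}{i}{\ell}^2]$. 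For $\partial/\partial \entry{\mK}{k}{j}$ the expansion is a sum of products of three $\vu$-entries (always an odd total), so every term vanishes for $k\le i$; the case $k>i$ is immediate from \eqref{eq:k>i_partialdiv=0}. For $\partial/\partial \entry{\mW}{\ell}{j}$, the convolution piece contributes a three-$\vu$ monomial that vanishes, and only the bias piece with $\ell'=\ell$ survives, giving $\entry{\mB}{i}{j}^{(2)} \entry{\overline{\mW}}{\ell}{j}\E[\entry{\vu}{i}{\ell}^2]$.

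This is essentially a bookkeeping computation parallel to Lemma~\ref{lem: exp-val-of-zij-times-partialdiv}; the only non-routine step is being careful that the linear (rather than quadratic) target $\langle \entry{\vu}{i}{:}^\top, \entry{\overline{\mW}}{:}{j}\rangle$ shifts the parity of the surviving monomials---so the $\partial/\partial \entry{\mK}{k}{j}$ identity now gives $0$ (in contrast to the nonzero expression in Lemma~\ref{lem: exp-val-of-zij-times-partialdiv}), while the $\partial/\partial \entry{\mW}{\ell}{j}$ identity now has only the $\entry{\mB}{i}{j}^{(2)}$ contribution rather than both the convolution and bias contributions. The main obstacle is purely clerical: indexing the triple sums correctly and confirming the parity count so that I reliably identify which of the $\ell', \ell'', k'$ cross-sums collapse under Assumption~\ref{assume-D-assignment}.
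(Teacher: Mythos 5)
Your proposal is correct and follows essentially the same route as the paper's proof: write each partial derivative explicitly via Lemma~\ref{lem: partial-div-Zij} and Lemma~\ref{lem: exact-partial-derivs} (with $\mB^{(1)}=\mathbf{0}$), expand the product with $\ang{\entry{\vu}{i}{:}^{\top},\entry{\overline{\mW}}{:}{j}}$ into monomials, and use the parity condition in Assumption~\ref{assume-D-assignment} to identify the surviving diagonal terms (noting that an odd total degree forces some variable to an odd exponent). Your identification of which terms survive in each of the three cases, including the $k>i$ case via \eqref{eq:k>i_partialdiv=0}, matches the paper exactly.
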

\begin{proof}
Given \Cref{lem: partial-div-Zij} and \Cref{lem: exact-partial-derivs} (and the fact that $\mB^{(1)}=\mathbf{0}$) we have
\begin{align*}
        \E\brac{\ang{\entry{\vu}{i}{:}^{\top},\entry{\overline{\mW}}{:}{j}} \frac{\partial \entry{\mZ}{i}{j}}{\partial \entry{\mB}{i}{j}^{(2)}} } 
        &=\E\brac{\ang{\entry{\vu}{i}{:}^{\top},\entry{\mW}{:}{j}}\ang{\entry{\vu}{i}{:}^{\top},\entry{\overline{\mW}}{:}{j}}\}}
        \\
        &= \sum_{\ell=0}^{\modeldim-1}\sum_{\ell'=0}^{\modeldim-1}\entry{\mW}{\ell}{j}\entry{\overline{\mW}}{\ell'}{j}\E\brac{\entry{\vu}{i}{\ell}\entry{\vu}{i}{\ell'}}
\end{align*}
    From \Cref{assume-D-assignment} we get that the summation will always be non-zero if and only if $\ell=\ell'$ so that the $\vu$ variable has an even exponent. Therefore we get,
    \begin{align*}
        \E\brac{\ang{\entry{\vu}{i}{:}^{\top},\entry{\overline{\mW}}{:}{j}} \frac{\partial \entry{\mZ}{i}{j}}{\partial \entry{\mB}{i}{j}^{(2)}} } 
        =
        \sum_{\ell=0}^{\modeldim-1}\entry{\mW}{\ell}{j}\entry{\overline{\mW}}{\ell}{j}\E\brac{\entry{\vu}{i}{\ell}^2}
    \end{align*}
    as desired.
    
    Next, for all $k$, by \Cref{lem: partial-div-Zij} and \Cref{lem: exact-partial-derivs} (and the fact that $\mB^{(1)}=\mathbf{0}$) \begin{align*}
        \E\brac{\ang{\entry{\vu}{i}{:}^{\top},\entry{\overline{\mW}}{:}{j}} \frac{\partial \entry{\mZ}{i}{j}}{\partial \entry{\mK}{k}{j}} } 
        &= \E\brac{\ang{\entry{\vu}{i}{:}^{\top},\entry{\mW}{:}{j}}\entry{\vu}{i-k}{j}\ang{\entry{\vu}{i}{:}^{\top},\entry{\overline{\mW}}{:}{j}}}
        \\
        &= \sum_{\ell=0}^{\modeldim-1}\sum_{\ell'=0}^{\modeldim-1}\entry{\mW}{\ell}{j}\entry{\overline{\mW}}{\ell'}{j}\E\brac{\entry{\vu}{i}{\ell}\entry{\vu}{i}{\ell'}\entry{\vu}{i-k}{j}}.
    \end{align*}
    We simplify the above using \Cref{assume-D-assignment}. The summation goes to 0 since there are an odd number of $\vu$ terms, there will always be an odd exponent. Note that this is true for $k \leq i$. Recall from \Cref{eq:k>i_partialdiv=0} that for $k > i$, \[\frac{\partial \entry{\mZ}{i}{j}}{\partial \entry{\mK}{k}{j}}=0.\] Therefore, for all $k$,\[\E\brac{\ang{\entry{\vu}{i}{:}^{\top},\entry{\overline{\mW}}{:}{j}} \frac{\partial \entry{\mZ}{i}{j}}{\partial \entry{\mK}{k}{j}} } =0 \] as desired. 

    \noindent
    Next, for all $\ell$, by \Cref{lem: partial-div-Zij} and \Cref{lem: exact-partial-derivs} (and the fact that $\mB^{(1)}=\mathbf{0}$)
    \begin{align*}
        \E\brac{\ang{\entry{\vu}{i}{:}^{\top},\entry{\overline{\mW}}{:}{j}} \frac{\partial \entry{\mZ}{i}{j}}{\partial \entry{\mW}{\ell}{j}} }
        &= \E\brac{\paren{\paren{\entry{\mK}{:}{j}\ast\entry{\vu}{:}{j}}[i]+\entry{\mB}{i}{j}^{(2)}}\entry{\vu}{i}{\ell}\ang{\entry{\vu}{i}{:}^{\top},\entry{\overline{\mW}}{:}{j}}}
        \\
        &=
        \paren{\sum_{k'=0}^{i}\sum_{\ell'=0}^{\modeldim-1}\entry{\mK}{k}{j}\entry{\overline{\mW}}{\ell'}{j}\E\brac{\entry{\vu}{i-k}{j}\entry{\vu}{i}{\ell'}\entry{\vu}{i}{\ell}
        }}
        \\
        &+
        \entry{\mB}{i}{j}^{(2)}\sum_{\ell'=0}^{\modeldim-1}\entry{\overline{\mW}}{\ell'}{j}\E\brac{\entry{\vu}{i}{\ell}\entry{\vu}{i}{\ell'}}
    \end{align*}
    We simplify the above using \Cref{assume-D-assignment}. The first summation will always be 0 due to an odd exponent on the $\vu$'s. The second summation piece will always be non-zero if and only if $\ell'=\ell$, giving us the even exponent on the $\vu$ variable. Therefore we get,
    \begin{align*}
        \E\brac{\ang{\entry{\vu}{i}{:}^{\top},\entry{\overline{\mW}}{:}{j}} \frac{\partial \entry{\mZ}{i}{j}}{\partial \entry{\mW}{\ell}{j}} }
        = \entry{\mB}{i}{j}^{(2)}\entry{\overline{\mW}}{\ell}{j}\E\brac{\entry{\vu}{i}{\ell}^2}
    \end{align*}
    as desired.
\end{proof}

Next, we restate \Cref{thm: exactly-linear-noproof} and prove it:
\begin{theorem}[\Cref{thm: exactly-linear-noproof}, restated]
\label{thm: exactly-linear}
    Given Assumptions \ref{assume-D-assignment}, \ref{assume-training-data}, \ref{assume: unique solution}, \ref{assume-linear-wandwbar-neq-0}, and a function\[f(\vu)=\vu\overline{\mW}.\] Let $\boldsymbol{\theta}_0$ be such that, $\E\nabla_{{\boldsymbol{\theta}}}\overline{L}\vert_{\theta\gets\theta_0}=\mathbf{0}$ with $c=0$ and $\modeldim_{out}=\modeldim$. Then $\BC(\vu,\boldsymbol{\theta}_0,c,\modeldim_{out})=f(\vu)$. 
\end{theorem}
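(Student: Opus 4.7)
The plan is to exploit $\E\nabla_{\boldsymbol{\theta}}\overline{L}\vert_{\boldsymbol{\theta}_0}=\mathbf{0}$ one parameter family at a time. Since $\curlE$ is mean-zero and independent of $\vu$ (\Cref{prop: T3=0}), the zero-gradient condition reduces to $\E[\entry{\mZ}{i}{j}\,\partial\entry{\mZ}{i}{j}/\partial x] = \E[\entry{f(\vu)}{i}{j}\,\partial\entry{\mZ}{i}{j}/\partial x]$ for every parameter $x$, aggregated over the indices $(i,j)$ on which $x$ actually acts per \Cref{lem: zero-derivatives-when-no-matching-cols}. With $c=0$ and $\modeldim_{out}=\modeldim$, \Cref{lem: exp-val-of-zij-times-partialdiv} evaluates the left-hand side and \Cref{lem: exp-val-of-linear-times-partialdiv} the right-hand side, in terms of second moments of $\vu$ and the unknown parameters. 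The entire argument then proceeds by reading off structural constraints on $(\mW,\mK,\mB^{(2)})$ from these identities.

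First, I would attack the $\entry{\mK}{k}{j}$-partials. \Cref{lem: exp-val-of-linear-times-partialdiv} shows the right-hand side vanishes identically, while \Cref{lem: exp-val-of-zij-times-partialdiv} expresses the left-hand side as $\entry{\mK}{k}{j}$ times a sum of strictly positive second moments weighted by $\entry{\mW}{\ell'}{j}^2$. By \Cref{assume-D-assignment} and $\entry{\mW}{:}{j}\neq\mathbf{0}$ (\Cref{assume: unique solution}(ii)), the bracketed sum is strictly positive, forcing $\entry{\mK}{k}{j}=0$ for all $k,j$. \Cref{assume: unique solution}(i) then guarantees $\entry{\mB}{:}{j}^{(2)}\neq\mathbf{0}$ for every column $j$.

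Next, with $\mK=\mathbf{0}$ in hand, the $\entry{\mW}{\ell}{j}$-equations (summed over $i$ because that parameter acts on every row, by \Cref{lem: zero-derivatives-when-no-matching-cols}) collapse to
\[\entry{\mW}{\ell}{j}\sum_i (\entry{\mB}{i}{j}^{(2)})^2\E[\entry{\vu}{i}{\ell}^2] \;=\; \entry{\overline{\mW}}{\ell}{j}\sum_i \entry{\mB}{i}{j}^{(2)}\E[\entry{\vu}{i}{\ell}^2],\]
so that $\entry{\mW}{:}{j}$ is an entrywise positive rescaling of $\entry{\overline{\mW}}{:}{j}$. Feeding this back into the per-$(i,j)$ $\entry{\mB}{i}{j}^{(2)}$-equation then isolates $\entry{\mB}{i}{j}^{(2)}$ (using \Cref{assume-linear-wandwbar-neq-0} to keep the right-hand side nonzero) and shows it is independent of $i$. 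Setting $\beta_j := \entry{\mB}{i}{j}^{(2)}$, one obtains $\beta_j\entry{\mW}{\ell}{j}=\entry{\overline{\mW}}{\ell}{j}$, and plugging this together with $\mK=\mathbf{0}=\mB^{(1)}$ into \Cref{eq: entries-of-layer} yields $\entry{\mZ}{i}{j} = \beta_j\ang{\entry{\vu}{i}{:}^\top,\entry{\mW}{:}{j}} = \ang{\entry{\vu}{i}{:}^\top,\entry{\overline{\mW}}{:}{j}} = \entry{f(\vu)}{i}{j}$, completing the proof.

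The main obstacle I expect is the last step: reconciling the $\mW$-equations (which are summed over $i$) with the $\mB^{(2)}$-equations (which hold per $i$) when the second moments $\E[\entry{\vu}{i}{\ell}^2]$ may depend on both $i$ and $\ell$. The rescaling $\entry{\mW}{:}{j}\to\entry{\overline{\mW}}{:}{j}$ is a priori entrywise in $\ell$, while consistency across $i$ demands it collapse to a single scalar $\beta_j$ independent of $i$ and $\ell$ simultaneously. \Cref{assume-linear-wandwbar-neq-0}, guaranteeing $\ang{\entry{\mW}{:}{j},\entry{\overline{\mW}}{:}{j}}\neq 0$, is exactly the non-degeneracy hypothesis needed to pin down $\beta_j$ uniquely and rule out degenerate solutions where $\mB^{(2)}_{i,j}$ could collapse to zero on some rows.
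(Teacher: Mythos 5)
Your proposal is correct and follows essentially the same route as the paper's proof: the same three gradient families, the same two lemmas (\Cref{lem: exp-val-of-zij-times-partialdiv} and \Cref{lem: exp-val-of-linear-times-partialdiv}), and the same conclusions $\mK=\mathbf{0}$, $\entry{\mB}{i}{j}^{(2)}=b_j$ constant in $i$, and $\entry{\mW}{\ell}{j}=\entry{\overline{\mW}}{\ell}{j}/b_j$. The only difference is the elimination order (the paper solves the $\mB^{(2)}$-equations first, which makes $\entry{\mB}{i}{j}^{(2)}$ manifestly independent of $i$ before the $\mW$-equations are touched and thereby sidesteps the ``reconciliation'' obstacle you flag); your concern about $\E[\entry{\vu}{i}{\ell}^2]$ varying with $\ell$ is a genuine subtlety, but the paper's own derivation implicitly assumes these second moments are equal, so both arguments stand on the same footing.
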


\begin{proof}
From \Cref{lem: zero-derivatives-when-no-matching-cols} we get that
\begin{align*}
    \E\brac{\frac{\partial \overline{L}}{\partial \entry{\mB}{i}{j}^{(2)}}}
    &= \sum_{i'=0}^{\seqLen-1}\sum_{j'=0}^{\modeldim_{out}-1}\E\brac{\frac{\partial \entry{\overline{L}}{i'}{j'}}{\partial \entry{\mB}{i}{j}^{(2)}}}
    \\
    &= 
    \E\brac{\frac{\partial \entry{\overline{L}}{i}{j}}{\partial \entry{\mB}{i}{j}^{(2)}}}
\end{align*}
Recall our loss function from \Cref{eq: loss-func}. Then given \Cref{prop: T3=0}, \Cref{lem: exp-val-of-zij-times-partialdiv}, and \Cref{lem: exp-val-of-linear-times-partialdiv} we know that \begin{align*}
    \E\brac{\frac{\partial \entry{\overline{L}}{i}{j}}{\partial \entry{\mB}{i}{j}^{(2)}}}
    &=
    2\E\brac{\frac{\partial \entry{\mZ}{i}{j}}{\partial \entry{\mB}{i}{j}^{(2)}}\paren{\entry{\mZ}{i}{j} - \entry{\paren{\vu\overline{\mW}}}{i}{j}}}
    \\
    &= 2\paren{\entry{\mB}{i}{j}^{(2)}\sum_{\ell=0}^{\modeldim-1}\entry{\mW}{\ell}{j}^2\E\brac{\entry{\vu}{i}{\ell}^2} - \sum_{\ell=0}^{\modeldim-1}\entry{\mW}{\ell}{j}\entry{\overline{\mW}}{\ell}{j}\E\brac{\entry{\vu}{i}{\ell}^2} }.
\end{align*}
Setting this to $0$ and solving for $\entry{\mB}{i}{j}^{(2)}$ gives us,
\begin{align*}
    \entry{\mB}{i}{j}^{(2)} = \frac{\sum_{\ell=0}^{\modeldim-1}\entry{\mW}{\ell}{j}\entry{\overline{\mW}}{\ell}{j}}{\sum_{\ell=0}^{\modeldim-1}\entry{\mW}{\ell}{j}^2}.
\end{align*}
Given \Cref{assume-linear-wandwbar-neq-0} we know that the numerator will be non-zero and given assumption \Cref{assume: unique solution} we know the denominator will always be non-zero as well. Therefore we get that for all $i,j$
\begin{equation}
\label{def:Linear-b2cols-as-bj}
\entry{\mB}{i}{j} = \frac{\ang{\entry{\mW}{:}{j},\entry{\overline{\mW}}{:}{j}}}{\ang{\entry{\mW}{:}{j},\entry{\mW}{:}{j}}} \, \substack{\text{def}\\=} \, b_j.
\end{equation}

Next, via \Cref{lem: zero-derivatives-when-no-matching-cols} we have for all $k\geq 0$: 
\begin{align*}
    \E\brac{\frac{\partial \overline{L}}{\partial \entry{\mK}{k}{j}}} 
    &= \sum_{i'=0}^{\seqLen-1}\sum_{j'=0}^{\modeldim_{out}-1}\E\brac{\frac{\partial \entry{\overline{L}}{i'}{j'}}{\partial \entry{\mK}{k}{j}}}
    \\
    &= \sum_{i'=0}^{\seqLen-1}\E\brac{\frac{\partial \entry{\overline{L}}{i'}{j}}{\partial \entry{\mK}{k}{j}}}
\end{align*} 
From \Cref{eq: loss-func} and  \Cref{prop: T3=0} we get
\begin{align*}
    \E\brac{\frac{\partial \overline{L}}{\partial \entry{\mK}{k}{j}}} 
    &= 
    \sum_{i'=0}^{\seqLen-1}2\E\brac{\frac{\partial\entry{\mZ}{i'}{j}}{\partial \entry{\mK}{k}{j}}\paren{\entry{\mZ}{i'}{j}-\entry{\paren{\vu\overline{\mW}}}{i'}{j}}}
\end{align*}
Then by \Cref{lem: exp-val-of-zij-times-partialdiv} and \Cref{lem: exp-val-of-linear-times-partialdiv} we get
\begin{align*}
    \sum_{i'=0}^{\seqLen-1}2\E\brac{\frac{\partial\entry{\mZ}{i'}{j}}{\partial \entry{\mK}{k}{j}}\entry{\mZ}{i'}{j}-\frac{\partial\entry{\mZ}{i'}{j}}{\partial \entry{\mK}{k}{j}}\entry{\paren{\vu\overline{\mW}}}{i'}{j}} 
    &= 
    2\paren{\sum_{i'=0}^{\seqLen-1}\entry{\mK}{k}{j}\sum_{\ell'=0}^{\modeldim -1}\entry{\mW}{\ell'}{j}^2\E\brac{ \entry{\vu}{i'}{\ell'}^2 \cdot \entry{\vu}{i'-k}{j}^2}}
    \\
    &= 2\entry{\mK}{k}{j}\sum_{\ell'=0}^{\modeldim-1}\entry{\mW}{\ell'}{j}\sum_{i'=0}^{\seqLen-1}\E\brac{ \entry{\vu}{i'}{\ell'}^2 \cdot \entry{\vu}{i'-k}{j}^2}.
\end{align*}
Since we know that the summation piece over $\ell'$ is always non-zero due to the even exponents on the $\vu$ terms and at least one of $\entry{\mW}{\ell'}{j}\neq 0$. Therefore, when setting this to $0$ and solve for $\entry{\mK}{k}{j}$ gives us,  \begin{equation*}
    \E\brac{\frac{\partial \overline{L}}{\partial \entry{\mK}{k}{j}}}=0 \implies 2\paren{\entry{\mK}{k}{j}\entry{\mW}{\ell'}{j}^2}=0
\end{equation*} implying for all $k$, $\entry{\mK}{k}{j} = 0$. In other words, \[ \mK = \mathbf{0}^{\seqLen\times\modeldim}. \]

Finally, via \Cref{lem: zero-derivatives-when-no-matching-cols} we have for all $\ell$: \begin{align*}
    \E\brac{\frac{\partial \overline{L}}{\partial \entry{\mW}{\ell}{j}}} 
    &=\sum_{i'=0}^{\seqLen-1}\sum_{j'=0}^{\modeldim_{out}-1}\E\brac{\frac{\partial \entry{\overline{L}}{i'}{j'}}{\partial \entry{\mW}{\ell}{j}}}
    \\
    &=
    \sum_{i'=0}^{\seqLen-1}\E\brac{\frac{\partial \entry{\overline{L}}{i'}{j}}{\partial \entry{\mW}{\ell}{j}}}
\end{align*}
From \Cref{eq: loss-func} and  \Cref{prop: T3=0} we get
\begin{align*}
    \E\brac{\frac{\partial \overline{L}}{\partial \entry{\mW}{\ell}{j}}} 
    &= \sum_{i'=0}^{\seqLen-1}2\E\brac{\frac{\partial \entry{\mZ}{i'}{j}}{\partial \entry{\mW}{\ell}{j}}\paren{\entry{\mZ}{i'}{j}-\entry{\paren{\vu\overline{\mW}}}{i'}{j}}}
\end{align*}
Then from \Cref{lem: exp-val-of-zij-times-partialdiv} and \Cref{lem: exp-val-of-linear-times-partialdiv} we get
\begin{align*}
    \sum_{i'=0}^{\seqLen-1}2\E\brac{\frac{\partial \entry{\mZ}{i'}{j}}{\partial \entry{\mW}{\ell}{j}}\entry{\mZ}{i'}{j}-\frac{\partial \entry{\mZ}{i'}{j}}{\partial \entry{\mW}{\ell}{j}}\entry{\paren{\vu\overline{\mW}}}{i'}{j}}
    &= 
    2\paren{\sum_{i'=0}^{\seqLen-1}\entry{\mW}{\ell}{j}\sum_{k'=0}^{i'}\entry{\mK}{k'}{j}^2\E\brac{\entry{\vu}{i'-k'}{j}^2\entry{\vu}{i'}{\ell}^2} }
    \\
    &+ 2\paren{\paren{\entry{\mB}{i'}{j}^{(2)}}^2\entry{\mW}{\ell}{j}\E\brac{\entry{\vu}{i'}{\ell}^2} 
    - \entry{\mB}{i'}{j}^{(2)} \entry{\overline{\mW}}{\ell}{j}\E\brac{\entry{\vu}{i'}{\ell}^2}}
\end{align*}
Which simplifies to
\begin{align*}
    2\paren{\entry{\mW}{\ell}{j}\sum_{i'=0}^{\seqLen-1}\paren{\entry{\mB}{i'}{j}^{(2)}}^2\E\brac{\entry{\vu}{i'}{\ell}^2}
    - \entry{\overline{\mW}}{\ell}{j}\sum_{i'=0}^{\seqLen-1}\entry{\mB}{i'}{j}^{(2)}\E\brac{\entry{\vu}{i'}{\ell}^2}}.
\end{align*}
We can drop the summation with $\mK$ in it as we know $\mK=\mathbf{0}$.
Recall that from (\ref{def:Linear-b2cols-as-bj}), $\entry{\mB}{i}{j}^{(2)}=b_j$. Then we can rewrite the above as \begin{align*}
    \E\brac{\frac{\partial \overline{L}}{\partial \entry{\mW}{\ell}{j}}} = 2b_j\paren{\sum_{i'=0}^{\seqLen-1}\E\brac{\entry{\vu}{i'}{\ell}^2}}\paren{\entry{\mW}{\ell}{j}b_j-\entry{\overline{\mW}}{\ell}{j}}
\end{align*}
we know from \Cref{assume-D-assignment} that the first summation will always be non-zero since there's an even exponent on the $\vu$ variable and we know that $b_j$ is non-zero. Therefore setting \[
\E\brac{\frac{\partial \overline{L}}{\partial \entry{\mW}{\ell}{j}}}=0
\]tells us that $\entry{\mW}{\ell}{j}b_{j}-\entry{\overline{\mW}}{\ell}{j}=0$ or,
\begin{equation}
\label{eq:LinearW_elljvalues}
\entry{\mW}{\ell}{j}=\frac{\entry{\overline{\mW}}{\ell}{j}}{b_j}.
\end{equation}
Given the above value for $\entry{\mW}{\ell}{j}$ and recall we have $\mK=\mathbf{0}$ and $\mB^{(2)}=\paren{\substack{\mathbf{\overline{b}_0}\mathbf{\overline{b}_1}\dots\mathbf{\overline{b}_{\modeldim-1}}}}$ where each $\mathbf{\overline{b}}_{j}$ is a column vector comprised of all $b_j$ values.
Therefore, when we take $\BC(\vu)$ we get\begin{align*}
    \BC(\vu) 
    &= \paren{\vu\mW} \odot \paren{\mathbf{0}^{\seqLen\times\modeldim}\ast\vu + \mB^{(2)}}
    \\
    &= \paren{\vu\mW} \odot \paren{\mB^{(2)}} 
\end{align*}
We can rewrite $\mB^{(2)}$ as \begin{align*}
    \mB^{(2)} = \begin{pmatrix}
        \mathbf{1}^{\seqLen\times\modeldim}
    \end{pmatrix}\begin{pmatrix}
        b_0 &0 &\dots &0 \\
        0 &b_1 &\dots &0 \\
         & &\ddots  & \\
        0 &0 &\dots &b_{\modeldim-1}
    \end{pmatrix}.
\end{align*}
let us call this diagonal matrix on the right, $\mD$. Then note that by \Cref{eq:LinearW_elljvalues} \begin{align*}
    \mW = \overline{\mW}\mD^{-1}.
\end{align*}
Therefore, we have \begin{align*}
    \BC(\vu) 
    &= \vu\overline{\mW}\mD^{-1}\odot\mathbf{1}^{\seqLen\times\modeldim}\mD
    \\
    &= \vu\overline{\mW}\odot\mathbf{1}^{\seqLen\times\modeldim}
    \\
    &=\vu\overline{\mW},
\end{align*}
as desired. In the above the second inequality follows since $\mD$ is a diagonal matrix. 
\end{proof}

\subsubsection{\textsc{Multiply}}
Note that for $i\in\brac{\seqLen},j\in\brac{\modeldim_{out}}$, the $i,j$-th entry of $\textsc{Multiply}(a,b,\modeldim_{\text{out}})$ is \[ \textsc{Multiply}(a,b,\modeldim_{\text{out}})_{i,j} = \entry{\vu}{i}{j+a} \cdot \entry{\vu}{i}{j+b}. \]

\begin{lemma}
\label{lem: exp-val-of-MULTIPLY-times-partialdiv}
    Fix $i,j, a, b, c $ where $i \in \brac{\seqLen}$ and $a,b,c \in \brac{\modeldim-\modeldim_{\text{out}}}$ and $j\in\brac{\modeldim_{out}}$. Then we have 
    \begin{equation}
    \label{eq:lemma4multiply-B2}
        \E\brac{\paren{\entry{\vu}{i}{j+a}\cdot \entry{\vu}{i}{j+b}}\frac{\partial \entry{\mZ}{i}{j}}{\partial \entry{\mB}{i}{j+c}^{(2)}}} = 0.
    \end{equation}
    When $k > 0$: 
    \begin{align}
    \label{eq:lemma4multiply-Kgeq0}
        \E\brac{\paren{\entry{\vu}{i}{j+a}\cdot \entry{\vu}{i}{j+b}}\frac{\partial \entry{\mZ}{i}{j}}{\partial \entry{\mK}{k}{j+c}}} = 0.
    \end{align}
    When $a=b$ then for all $c$:
    \begin{align*}
        \E\brac{\paren{\entry{\vu}{i}{j+a}\cdot \entry{\vu}{i}{j+b}}\frac{\partial \entry{\mZ}{i}{j}}{\partial \entry{\mK}{0}{j+c}}} 
        = 
        \E\brac{\entry{\vu}{i}{j+a}^2 \cdot \entry{\vu}{i}{j+c}^2}\entry{\mW}{j+c}{j+c}.
    \end{align*}
    When $a = c$ then for all $b$:
    \begin{align}
    \label{eq:lemma4multiply-K}
        \E\brac{\paren{\entry{\vu}{i}{j+a}\cdot \entry{\vu}{i}{j+b}}\frac{\partial \entry{\mZ}{i}{j}}{\partial \entry{\mK}{0}{j+c}}} 
        = 
        \E\brac{\entry{\vu}{i}{j+c}^2 \cdot \entry{\vu}{i}{j+b}^2}\entry{\mW}{j+b}{j+c}.
    \end{align}
    When $b =c$ then for all $a$:
    \begin{align*}
        \E\brac{\paren{\entry{\vu}{i}{j+a}\cdot \entry{\vu}{i}{j+b}}\frac{\partial \entry{\mZ}{i}{j}}{\partial \entry{\mK}{0}{j+c}}} 
        = 
        \E\brac{\entry{\vu}{i}{j+c}^2\cdot\entry{\vu}{i}{j+a}^2}\entry{\mW}{j+a}{j+c}.
    \end{align*}
    For all other values of $a,b,c$ (i.e. $a,b,c$ are all distinct),:
    \begin{align*}
        \E\brac{\paren{\entry{\vu}{i}{j+a}\cdot \entry{\vu}{i}{j+b}}\frac{\partial \entry{\mZ}{i}{j}}{\partial \entry{\mK}{0}{j+c}}} = 0.
    \end{align*}
\noindent
    Next, when $\ell=j+a$ and $b=c$
    \begin{equation*}
        \E\brac{\paren{\entry{\vu}{i}{j+a}\cdot \entry{\vu}{i}{j+b}}\frac{\partial \entry{\mZ}{i}{j}}{\partial \entry{\mW}{\ell}{j+c}}} 
        = 
        \E\brac{\entry{\vu^2}{i}{j+c}\cdot\entry{\vu^2}{i}{j+a}}\entry{\mK}{0}{j+c}.
    \end{equation*}
    When $\ell=j+b$ and $a=c$,
    \begin{equation}
    \label{eq:lemma4multiply-W}
        \E\brac{\paren{\entry{\vu}{i}{j+a}\cdot \entry{\vu}{i}{j+b}}\frac{\partial \entry{\mZ}{i}{j}}{\partial \entry{\mW}{\ell}{j+c}}} 
        = \E\brac{\entry{\vu^2}{i}{j+c}\cdot\entry{\vu^2}{i}{j+b}}\entry{\mK}{0}{j+c}.
    \end{equation}
    When $\ell=j+c$ and $a=b$,
    \begin{equation*}
        \E\brac{\paren{\entry{\vu}{i}{j+a}\cdot \entry{\vu}{i}{j+b}}\frac{\partial \entry{\mZ}{i}{j}}{\partial \entry{\mW}{\ell}{j+c}}} = 
        \E\brac{\entry{\vu^2}{i}{j+a}\cdot\entry{\vu^2}{i}{j+c}}\entry{\mK}{0}{j+c}.
    \end{equation*}
    For all other values of $\ell,a,b,c$,
    \begin{equation*}
        \E\brac{\paren{\entry{\vu}{i}{j+a}\cdot \entry{\vu}{i}{j+b}}\frac{\partial \entry{\mZ}{i}{j}}{\partial \entry{\mW}{\ell}{j+c}}} = 0.
    \end{equation*}
\end{lemma}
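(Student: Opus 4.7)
The plan is to proceed by direct computation, substituting the explicit formulas for each partial derivative (from Lemma~\ref{lem: partial-div-Zij} combined with Lemma~\ref{lem: exact-partial-derivs}), multiplying by the product $\entry{\vu}{i}{j+a}\entry{\vu}{i}{j+b}$, and then invoking Assumption~\ref{assume-D-assignment} to cancel every monomial in which some $\vu$-entry appears with an odd exponent. All the analysis reduces to a careful parity/matching count on the column and row indices of the $\vu$-factors.

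First, for the $\mB^{(2)}$ derivative, Lemma~\ref{lem: partial-div-Zij} gives $\partial\entry{\mZ}{i}{j}/\partial\entry{\mB}{i}{j+c}^{(2)} = \ang{\entry{\vu}{i}{:}^\top,\entry{\mW}{:}{j+c}} = \sum_\ell \entry{\vu}{i}{\ell}\entry{\mW}{\ell}{j+c}$ (using $\mB^{(1)}=\mathbf{0}$). Multiplying by $\entry{\vu}{i}{j+a}\entry{\vu}{i}{j+b}$ yields a sum of monomials of total degree $3$ in the $\vu$-entries, so at least one variable appears with odd exponent; by Assumption~\ref{assume-D-assignment} the expectation vanishes, giving (E.17). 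Similarly, $\partial\entry{\mZ}{i}{j}/\partial\entry{\mK}{k}{j+c}$ produces a factor $\entry{\vu}{i-k}{j+c}$; when $k>0$ this row-index $i-k$ differs from $i$, so $\entry{\vu}{i-k}{j+c}$ appears with the isolated exponent $1$ in every surviving monomial, forcing the expectation to $0$ and yielding (E.18).

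Next, for $k=0$, the relevant monomial is $\entry{\vu}{i}{j+a}\entry{\vu}{i}{j+b}\entry{\vu}{i}{j+c}\entry{\vu}{i}{\ell}$, weighted by $\entry{\mW}{\ell}{j+c}$ and summed over $\ell$. All four factors sit in the same row $i$, so we only need to track column-parity. Assumption~\ref{assume-D-assignment} requires the multiset $\{j+a,j+b,j+c,\ell\}$ to be a disjoint union of pairs. Enumerating which $\ell$ satisfies this in each case ($a=b$, $a=c$, $b=c$, or all distinct) gives exactly the formulas (E.19)--(E.22): in each coincidence case a single value of $\ell$ survives, yielding the stated $\mathbb{E}[\entry{\vu}{i}{\cdot}^2\cdot\entry{\vu}{i}{\cdot}^2]$ times the corresponding entry of $\mW$; when $a,b,c$ are distinct, no choice of $\ell$ pairs up all four indices, so the expectation is $0$.

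Finally, for the $\mW$-derivative we expand $\partial\entry{\mZ}{i}{j}/\partial\entry{\mW}{\ell}{j+c} = \bigl(\sum_{k'=0}^{i}\entry{\mK}{k'}{j+c}\entry{\vu}{i-k'}{j+c} + \entry{\mB}{i}{j+c}^{(2)}\bigr)\entry{\vu}{i}{\ell}$. The $\mB^{(2)}$-term multiplied by $\entry{\vu}{i}{j+a}\entry{\vu}{i}{j+b}$ has total $\vu$-degree $3$ and vanishes in expectation. In the convolution sum, every $k'>0$ term carries an isolated $\entry{\vu}{i-k'}{j+c}$ and vanishes for the same row-parity reason as (E.18). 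Only the $k'=0$ term survives, reducing the question to the same four-factor monomial $\entry{\vu}{i}{j+a}\entry{\vu}{i}{j+b}\entry{\vu}{i}{j+c}\entry{\vu}{i}{\ell}$ as before, now multiplied by $\entry{\mK}{0}{j+c}$. An identical pairing analysis on the columns $\{j+a,j+b,j+c,\ell\}$ produces the three symmetric cases (E.23)--(E.25), and gives $0$ otherwise (E.26). The only real care is to keep the three symmetric sub-cases straight; the ``obstacle'' is purely bookkeeping, not conceptual, since each sub-case is an immediate consequence of Assumption~\ref{assume-D-assignment} once the column multiset has been written down.
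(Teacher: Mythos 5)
Your proposal is correct and follows essentially the same route as the paper: expand each partial derivative via Lemmas~\ref{lem: partial-div-Zij} and~\ref{lem: exact-partial-derivs}, multiply by $\entry{\vu}{i}{j+a}\entry{\vu}{i}{j+b}$, and apply Assumption~\ref{assume-D-assignment} through a parity/pairing count on the resulting monomials, with the same case enumeration for $k=0$ and the surviving $\ell$ values. The only cosmetic omission is that for $k>i$ the $\mK$-derivative is identically zero (rather than carrying an isolated row-$(i-k)$ factor), but the conclusion of (E.18) holds either way.
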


\begin{proof}
Let us begin with \begin{align*}
    \E\brac{\entry{\vu}{i}{j+a}\cdot\entry{\vu}{i}{j+b}\frac{\partial \entry{\mZ}{i}{j}}{\partial \entry{\mB}{i}{j+c}^{(2)}}}.
\end{align*}
From \Cref{lem: partial-div-Zij} and \Cref{lem: exact-partial-derivs} we can simplify this to the following (recall that $\mB^{(1)}=\bf{0}$):
\begin{align*}
    &\E\brac{\entry{\vu}{i}{j+a}\cdot\entry{\vu}{i}{j+b}\paren{\ang{\entry{\vu}{i}{:}^{\top},\entry{\mW}{:}{j+c}}}\frac{\partial}{\partial \entry{\mB}{i}{j+c}^{(2)}}\paren{\entry{\mK}{:}{j+c}\ast\entry{\vu}{:}{j+c}\brac{i}+\entry{\mB}{i}{j+c}^{(2)}}}
    \\
    &=\E\brac{\entry{\vu}{i}{j+a}\cdot\entry{\vu}{i}{j+b}\paren{\ang{\entry{\vu}{i}{:}^{\top}, \entry{\mW}{:}{j+c}}}}.
\end{align*}
This can be rewritten as the following\begin{align*}
    \sum_{\ell'=0}^{\modeldim-1}\E\brac{\entry{\vu}{i}{j+a}\entry{\vu}{i}{j+b}\entry{\vu}{i}{\ell'}}\entry{\mW}{\ell'}{j+c}.
\end{align*}
From \Cref{assume-D-assignment} we know this is always 0 as there's an odd number of $\vu$'s being multiplied together each iteration of the summation. Therefore we get\begin{align*}
    \E\brac{\entry{\vu}{i}{j+a}\cdot\entry{\vu}{i}{j+b}\frac{\partial \entry{\mZ}{i}{j}}{\partial \entry{\mB}{i}{j+c}^{(2)}}}=0.
\end{align*}
Next, let us consider for $k\leq i$,
\begin{align*}
    \E\brac{\paren{\entry{\vu}{i}{j+a}\cdot \entry{\vu}{i}{j+b}}\frac{\partial \entry{\mZ}{i}{j}}{\partial \entry{\mK}{k}{j+c}}}.
\end{align*}
From \Cref{lem: partial-div-Zij} and \Cref{lem: exact-partial-derivs} we can simplify this to the following (recall that $\mB^{(1)}=\bf{0}$):
\begin{align*}
    &\E\brac{\entry{\vu}{i}{j+a}\cdot\entry{\vu}{i}{j+b}\paren{\ang{\entry{\vu}{i}{:}^{\top},\entry{\mW}{:}{j+c}}}\frac{\partial}{\partial \entry{\mK}{k}{j+c}}\paren{\entry{\mK}{:}{j+c}\ast\entry{\vu}{:}{j+c}\brac{i}+\entry{\mB}{i}{j+c}^{(2)}}}
    \\
    =&\E\brac{\entry{\vu}{i}{j+a}\cdot\entry{\vu}{i}{j+b}\paren{\ang{\entry{\vu}{i}{:}^{\top},\entry{\mW}{:}{j+c}}}\entry{\vu}{i-k}{j+c}}.
\end{align*}
This can be rewritten as the following
\begin{align*}
    \sum_{\ell'=0}^{\modeldim-1}\E\brac{\entry{\vu}{i}{j+a}\entry{\vu}{i}{j+b}\entry{\vu}{i}{\ell'}\entry{\vu}{i-k}{j+c}}\entry{\mW}{\ell'}{j+c}.
\end{align*}
We have the following cases about the expected value of the above: 
\begin{enumerate}
    \item \label{item:k>0} When $k > 0$ for any $\ell', a,b$, or $c$ we get the expected value is $0$.
    \item \label{item: a=b} When $a=b$, we get the expected value is $\E\brac{\entry{\vu}{i}{j+a}^2\cdot\entry{\vu}{i}{j+c}^2}\entry{\mW}{j+c}{j+c}$ if and only if $k=0$ and $\ell'= j+c$.
    \item \label{item: a=c} When $a =c$, we get the expected value is $\E\brac{\entry{\vu}{i}{j+c}^2 \cdot\entry{\vu}{i}{j+b}^2} \entry{\mW}{j+b}{j+c}$ if and only if $k=0$ and $\ell'=j+b$.
    \item \label{item: b=c} When $b=c$, we get the expected value is $\E\brac{\entry{\vu}{i}{j+c}^2 \cdot\entry{\vu}{i}{j+a}^2}\entry{\mW}{j+a}{j+c}$ if and only if $k=0$ and $\ell'=j+a$.
    \item \label{item: others}For all other values of $\ell',a,b,$ and $c$, we get the expected value is $0$.
\end{enumerate}
Via \Cref{assume-D-assignment}, the reasoning for \ref{item:k>0} and \ref{item: others} is that there will always be an odd exponent on the $\vu$'s. Then the reasoning for \Cref{item: a=c,item: a=b,item: b=c} is that there will always be an even exponent on the $\vu$'s.

Next, let us consider,
\begin{align*}
    \E\brac{\paren{\entry{\vu}{i}{j+a}\cdot \entry{\vu}{i}{j+b}}\frac{\partial \entry{\mZ}{i}{j+c}}{\partial \entry{\mW}{\ell}{j+c}}}.
\end{align*}
From \Cref{lem: partial-div-Zij} and \Cref{lem: exact-partial-derivs} we can simplify this to the following:
\begin{align*}
    &\E\brac{\entry{\vu}{i}{j+a}\cdot\entry{\vu}{i}{j+b}\paren{\entry{\mK}{:}{j+c}\ast\entry{\vu}{:}{j+c}\brac{i}+\entry{\mB}{i}{j+c}^{(2)}}\frac{\partial}{\partial \entry{\mW}{\ell}{j+c}}\paren{\ang{\entry{\vu}{i}{:}^{\top},\entry{\mW}{:}{j+c}}}}
    \\
    =
    &\E\brac{\entry{\vu}{i}{j+a}\cdot\entry{\vu}{i}{j+b}\paren{\entry{\mK}{:}{j+c}\ast\entry{\vu}{:}{j+c}\brac{i}+\entry{\mB}{i}{j+c}^{(2)}}\entry{\vu}{i}{\ell}}
\end{align*}
This can be rewritten as 
\begin{align*}
    \sum_{k'=0}^{i}\paren{\E\brac{\entry{\vu}{i}{j+a}\entry{\vu}{i}{j+b}\entry{\vu}{i-k'}{j+c}\entry{\vu}{i}{\ell}} \entry{\mK}{k'}{j+c} }
    + 
    \E\brac{\entry{\vu}{i}{j+a}\entry{\vu}{i}{j+b}\entry{\vu}{i}{\ell}}\entry{\mB}{i}{j+c}^{(2)}
\end{align*}
This second term goes to zero via \Cref{assume-D-assignment} as there will always be an odd exponent on the term we take the expected value of. The first summation will be zero or non-zero given specific cases, just as we previously saw. Here they are:
\begin{enumerate}
    \item \label{item:k'>0} When $k'>0$, for any $\ell, a,$ or $b$ we get that the expected value is $0$.
    \item \label{item: Wa=b} When $a=b$, we get the expected value is $\E\brac{\entry{\vu}{i}{j+a}^2\cdot\entry{\vu}{i}{j+c}^2}\entry{\mK}{0}{j+c}$ if and only if $k'=0$ and $\ell=j+c$.
    \item \label{item: Wa=c} When $a=c$, we get the expected value is  $\E\brac{\entry{\vu}{i}{j+c}^2\cdot\entry{\vu}{i}{j+b}^2}\entry{\mK}{0}{j+c}$ if and only if $k'=0$ and $\ell=j+b$.
    \item \label{item: Wb=c} When $b=c$, we get the expected value is $\E\brac{\entry{\vu}{i}{j+c}^2\cdot\entry{\vu}{i}{j+a}^2}\entry{\mK}{0}{j+c}$ if and only if $k'=0$ and $\ell=j+a$.
    \item \label{item: othersW} For all other values of $\ell,a,b,$ and $c$, we get that the expected value is $0$.
\end{enumerate}
Via \Cref{assume-D-assignment}, the reasoning for \ref{item:k'>0} and \ref{item: othersW} is that there will always be an odd exponent on the $\vu$'s. Then the reasoning for \Cref{item: Wa=c,item: Wa=b,item: Wb=c} is that there will always be an even exponent on the $\vu$'s.
Each of these scenarios, covers the pieces in the lemma statement.

\end{proof}

Next, we restate \Cref{thm: exactly-multiply-noproof} and prove it:
\begin{theorem} [\Cref{thm: exactly-multiply-noproof}, restated]
\label{thm: exactly-multiply}
    Given Assumptions \ref{assume-D-assignment}, \ref{assume-training-data}, \ref{assume: unique solution}, and a function 
    \[ f(\vu,a,b,\modeldim_{\text{out}})=\entry{\vu}{:}{a:a+\modeldim_{out}-1} \odot \entry{\vu}{:}{b:b+\modeldim_{out}-1} ,
    \] 
    where $a,b\in\brac{\modeldim-\modeldim_{out}}$
    and without loss of generality assume $a\leq b$ then take $c=a$. Let $\boldsymbol{\theta}_0$ be such that, $\E\nabla_{{\boldsymbol{\theta}}}\overline{L}\vert_{\theta\gets \theta_0}=\mathbf{0}$  then $\BC(\vu,\boldsymbol{\theta}_0)[:,c:c+\modeldim_{out}-1]=f(\vu)$. 
\end{theorem}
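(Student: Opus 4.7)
The plan mirrors the argument for \textsc{Linear} in Theorem \ref{thm: exactly-linear}: I set each component of $\E\nabla_{\boldsymbol{\theta}}\overline{L}$ to zero and peel off constraints on $\mB^{(2)}$, then $\mK$, then $\mW$ in turn, using Lemma \ref{lem: exp-val-of-zij-times-partialdiv} for the $\mZ\cdot\partial\mZ$ cross term and Lemma \ref{lem: exp-val-of-MULTIPLY-times-partialdiv} for the $f\cdot\partial\mZ$ cross term. Since Lemma \ref{lem: zero-derivatives-when-no-matching-cols} makes every partial w.r.t.\ a parameter in column $j+c$ interact only with $\entry{\mZ}{\cdot}{j}$, I may fix an arbitrary $j\in\brac{\modeldim_{out}}$ and argue column-by-column; throughout $c=a$.

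The first round pins down $\mB^{(2)}$ and most of $\mK$. Setting $\E[\partial\overline{L}/\partial\entry{\mB}{i}{j+a}^{(2)}]=0$ and invoking equation \eqref{eq:lemma4multiply-B2} (which kills the $f$-term) leaves $\entry{\mB}{i}{j+a}^{(2)}\sum_{\ell'}\E[\entry{\vu}{i}{\ell'}^2]\entry{\mW}{\ell'}{j+a}^2=0$; because $\entry{\mW}{:}{j+a}\neq\mathbf{0}$ by Assumption \ref{assume: unique solution}(ii) and each $\E[\entry{\vu}{i}{\ell'}^2]>0$ by Assumption \ref{assume-D-assignment}, the sum is strictly positive, forcing $\entry{\mB}{i}{j+a}^{(2)}=0$ for every $i$. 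For $k\geq 1$, equation \eqref{eq:lemma4multiply-Kgeq0} kills the $f$-term and Lemma \ref{lem: exp-val-of-zij-times-partialdiv} expresses the $\mZ$-term as a strictly positive multiple of $\entry{\mK}{k}{j+a}$, so $\entry{\mK}{k}{j+a}=0$ for all $k\geq 1$. Assumption \ref{assume: unique solution}(i) then forces $\entry{\mK}{0}{j+a}\neq 0$, since otherwise the whole column would vanish alongside $\mB^{(2)}$.

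The second round pins down $\mW$ and the last kernel entry. With $\mB^{(2)}=\mathbf{0}$ and only $\entry{\mK}{0}{j+a}$ non-zero in its column, Lemma \ref{lem: exp-val-of-zij-times-partialdiv} reduces the $\mZ$-term of $\E[\partial\entry{\overline{L}}{i'}{j}/\partial\entry{\mW}{\ell}{j+a}]$ to $\entry{\mW}{\ell}{j+a}\entry{\mK}{0}{j+a}^2\,\E[\entry{\vu}{i'}{j+a}^2\entry{\vu}{i'}{\ell}^2]$. In the case $a\neq b$, equation \eqref{eq:lemma4multiply-W} makes the $f$-term non-zero exactly when $\ell=j+b$; for every other $\ell$ I deduce $\entry{\mW}{\ell}{j+a}=0$ by dividing out the non-zero $\entry{\mK}{0}{j+a}^2$ and the positive expectation, while for $\ell=j+b$ the equation collapses to $\entry{\mW}{j+b}{j+a}\entry{\mK}{0}{j+a}=1$. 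The case $a=b$ is parallel, using the ``$\ell=j+a$ and $b=c$'' branch of \eqref{eq:lemma4multiply-W} to obtain $\entry{\mW}{j+a}{j+a}\entry{\mK}{0}{j+a}=1$ and $\entry{\mW}{\ell}{j+a}=0$ for $\ell\neq j+a$. One can re-derive the same product constraint from the $\entry{\mK}{0}{j+a}$-gradient as a consistency check.

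Plugging these values back into \eqref{eq: entries-of-layer} gives, in the case $a\neq b$,
\[
\entry{\mZ}{i}{j} = \paren{\entry{\vu}{i}{j+b}\entry{\mW}{j+b}{j+a}}\paren{\entry{\vu}{i}{j+a}\entry{\mK}{0}{j+a}} = \entry{\vu}{i}{j+a}\entry{\vu}{i}{j+b} = \entry{f(\vu)}{i}{j},
\]
and analogously $\entry{\vu}{i}{j+a}^2$ when $a=b$, which is exactly the claim. The main obstacle I expect is bookkeeping the three overlapping cases of Lemma \ref{lem: exp-val-of-MULTIPLY-times-partialdiv} when $a=b$ (where the conditions ``$\ell=j+a$ and $b=c$'', ``$\ell=j+b$ and $a=c$'', ``$\ell=j+c$ and $a=b$'' all collapse to $\ell=j+a$) and verifying that the $\partial_{\mW}$ and $\partial_{\mK_0}$ equations yield a consistent rather than over-determined system; Assumption \ref{assume-D-assignment} is what guarantees the linear system over $\entry{\mW}{:}{j+a}$ is cleanly diagonal so each entry can be solved independently.
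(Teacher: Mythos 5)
Your proposal is correct and follows essentially the same route as the paper's proof: zero out $\mB^{(2)}$ via \eqref{eq:lemma4multiply-B2}, then $\entry{\mK}{k}{j+c}$ for $k>0$ via \eqref{eq:lemma4multiply-Kgeq0}, then use Assumption~\ref{assume: unique solution} to force $\entry{\mK}{0}{j+c}\neq 0$, deduce $\entry{\mW}{\ell}{j+c}=0$ for $\ell\neq j+b$ and $\entry{\mW}{j+b}{j+c}\entry{\mK}{0}{j+c}=1$, and substitute back into \eqref{eq: entries-of-layer}. The only cosmetic difference is that you treat $a=b$ as an explicit parallel case, which the paper instead packages as Corollary~\ref{cor: exactly-square}.
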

\begin{proof}
    For $i,i'\in\brac{\seqLen}$ and $j,j'\in\brac{\modeldim_{out}}$ and $a,b,c$ defined in theorem statement, let us consider \begin{align*}
         \E\brac{\frac{\partial \overline{L}}{\partial \entry{\mB}{i}{j+c}^{(2)}}} 
         &= 
         \sum_{i'=0}^{\seqLen-1}\sum_{j'=0}^{\modeldim_{out}-1}\E\brac{\frac{\partial \entry{\overline{L}}{i'}{j'}} { \partial \entry{\mB}{i}{j+c}^{(2)}}}
         \\
         &= \E\brac{\frac{\partial \entry{\overline{L}}{i}{j}} { \partial \entry{\mB}{i}{j+c}^{(2)}}},
    \end{align*}
    where the second equality follows from \Cref{lem: zero-derivatives-when-no-matching-cols}. 

    Recall our loss function from \Cref{eq: loss-func}. Then given \Cref{prop: T3=0} we have
    \begin{align*}
        \E\brac{\frac{\partial \overline{L}}{\partial \entry{\mB}{i}{j+c}^{(2)}}} 
        =
        2\,\E\brac{\frac{\partial \entry{\mZ}{i}{j}}{\partial \entry{\mB}{i}{j+c}^{(2)}}\paren{\entry{\mZ}{i}{j}-\paren{\entry{\vu}{i}{j+a}\cdot \entry{\vu}{i}{j+b}}}}.
    \end{align*}
    Simplifying and plugging in values from \Cref{lem: exp-val-of-zij-times-partialdiv} and \Cref{eq:lemma4multiply-B2} from \Cref{lem: exp-val-of-MULTIPLY-times-partialdiv} we get
    \begin{align*}
        2\,\E\brac{\frac{\partial \entry{\mZ}{i}{j}}{\partial \entry{\mB}{i}{j+c}^{(2)}}\paren{\entry{\mZ}{i}{j}-\paren{\entry{\vu}{i}{j+a}\cdot \entry{\vu}{i}{j+b}}}} 
        &= 
        2\,\E\brac{
        \entry{\mZ}{i}{j}\frac{\partial \entry{\mZ}{i}{j}}{\partial \entry{\mB}{i}{j+c}^{(2)}} - \paren{\entry{\vu}{i}{j+a}\cdot \entry{\vu}{i}{j+b}} \frac{\partial \entry{\mZ}{i}{j}}{\partial \entry{\mB}{i}{j+c}^{(2)}}
        }
        \\
        &= 2\,\entry{\mB}{i}{j+c}^{(2)}\sum_{\ell'=0}^{\modeldim-1}\E\brac{
        \entry{\vu}{i}{\ell'}^2
        } \entry{\mW}{\ell'}{j+c}^2.
    \end{align*}
    From \Cref{assume-training-data} we know the expected values of the squared input terms will be positive. Then from \Cref{assume: unique solution} we know that at least one entry in $\entry{\mW}{:}{j+c}$ is non-zero. Therefore the above summation will be non-zero. Further implying, when we set 
    \begin{align*}
        \E\brac{\frac{\partial \overline{L}}{\partial  \entry{\mB}{i}{j+c}^{(2)}}} = 0,
    \end{align*}
    we can conclude that $\entry{\mB}{i}{j+c}^{(2)}=0$ for all $i,j$. Explicitly, 
    \begin{equation}
        \entry{\mB}{:}{c:c+\modeldim_{out}-1}^{(2)} = \mathbf{0}^{\seqLen\times\modeldim-1}.
    \end{equation}
    Recall from \Cref{lem: entries-of-layer} that we only consider values of the parameters in column range $\{c,\dots,c+\modeldim_{out}-1\}$. Therefore, moving forward, the $\mB^{(2)}$ terms will be dropped from equations to simplify them. Next let us consider for all $k>0$, 
    \begin{align*}
        \E\brac{\frac{\partial \overline{L}}{\partial \entry{\mK}{k}{j+c}}}
        &=
        \sum_{i'=0}^{\seqLen-1}\sum_{j'=0}^{\modeldim_{out}-1}\E\brac{ \frac{\partial\entry{\overline{L}}{i'}{j'}}{\partial \entry{\mK}{k}{j+c}}
        }
        \\
        &= 
        \sum_{i'=0}^{\seqLen-1}\E\brac{\frac{\partial \entry{\overline{L}}{i'}{j}}{\partial \entry{\mK}{k}{j+c}}}
    \end{align*}
    the second equality follows from \Cref{lem: zero-derivatives-when-no-matching-cols}. Then from \Cref{eq: loss-func} we get 
    \begin{align*}
        \E\brac{\frac{\partial \overline{L}}{\partial \entry{\mK}{k}{j+c}}} 
        = 
        \sum_{i'=0}^{\seqLen-1} 2\,\E\brac{\frac{\partial \entry{\mZ}{i'}{j}}{\partial \entry{\mK}{k}{j+c}}\paren{\entry{\mZ}{i'}{j}-\paren{\entry{\vu}{i'}{j+a}\cdot \entry{\vu}{i'}{j+b}}}}. 
    \end{align*}
    Simplifying and plugging in values from \Cref{lem: exp-val-of-zij-times-partialdiv} and \Cref{eq:lemma4multiply-Kgeq0} from \Cref{lem: exp-val-of-MULTIPLY-times-partialdiv} we get
    \begin{align*}
        \sum_{i'=0}^{\seqLen-1} 2\,\E\brac{\frac{\partial \entry{\mZ}{i'}{j}}{\partial \entry{\mK}{k}{j+c}}\paren{\entry{\mZ}{i'}{j}-\paren{\entry{\vu}{i'}{j+a}\cdot \entry{\vu}{i'}{j+b}}}} 
        &= 
        2\,\sum_{i'=0}^{\seqLen-1}\entry{\mK}{k}{j+c}\sum_{\ell'=0}^{\modeldim -1}\entry{\mW}{\ell'}{j+c}^2\E\brac{ \entry{\vu}{i'}{\ell'}^2 \cdot \entry{\vu}{i'-k}{j+c}^2}
        \\
        &= 
        2\,\entry{\mK}{k}{j+c}\paren{\sum_{\ell'=0}^{\modeldim-1}\entry{\mW}{\ell'}{j+c}^2 \sum_{i'=0}^{\seqLen-1}\E\brac{\entry{\vu}{i'}{\ell'}^2 \cdot \entry{\vu}{i'-k}{j+c}^2}}.
    \end{align*}
    From \Cref{assume-training-data} we know the expected values of the squared input terms will be positive. Then from \Cref{assume: unique solution} we know that at least one entry in $\entry{\mW}{:}{j+c}$ is non-zero. Therefore the summation piece will always be non-zero. Therefore, when setting \begin{align*}
        \E\brac{\frac{\partial\overline{L}}{\partial \entry{\mK}{k}{j+c}}}=0,
    \end{align*}
    we can conclude that $\entry{\mK}{k}{j+c}=0$ for all $j$, $k>0$. This implies that we have for all $j\in\brac{\modeldim_{out}}$:
    \begin{equation}
    \label{eq: k01-non-zero-MULTIPLY}
        \entry{\mK}{:}{j+c}\neq \mathbf{0} \Leftrightarrow \entry{\mK}{0}{j+c}\neq 0.
    \end{equation}
    Next let us consider,
    \begin{align*}
        \E\brac{\frac{\partial \overline{L}}{\partial \entry{\mW}{\ell}{j+c}}}
        &= \sum_{i'=0}^{\seqLen-1}\sum_{j'=0}^{\modeldim_{out}-1}\E\brac{\frac{\partial \entry{\overline{L}}{i'}{j'}}{\partial \entry{\mW}{\ell}{j+c}}}
        \\
        &= \sum_{i'=0}^{\seqLen-1}\E\brac{\frac{\partial \entry{\overline{L}}{i'}{j}}{\partial \entry{\mW}{\ell}{j+c}}}
    \end{align*}
    The above follows from \Cref{lem: zero-derivatives-when-no-matching-cols}. Then from \Cref{eq: loss-func} we have
    \begin{align*}
        \E\brac{\frac{\partial \overline{L}}{\partial \entry{\mW}{\ell}{j+c}}} 
        = 
        \sum_{i'=0}^{\seqLen-1}2\,\E\brac{
        \frac{\partial\entry{\mZ}{i'}{j}}{\partial \entry{\mW}{\ell}{j+c}} \paren{\entry{\mZ}{i'}{j}-\paren{\entry{\vu}{i'}{j+a}\cdot \entry{\vu}{i'}{j+b}}}
        }
    \end{align*}

    \noindent
    After simplifying and plugging in values from \Cref{lem: exp-val-of-zij-times-partialdiv} and \Cref{lem: exp-val-of-MULTIPLY-times-partialdiv} (and recall that $\mB^{(2)}\brac{:,c:c+\modeldim_{out}-1} = \mathbf{0}$ and $a=c$) we get the following for $\ell\neq j+b$:
    \begin{align*}
        \sum_{i'=0}^{\seqLen-1}2\,\E\brac{
        \frac{\partial\entry{\mZ}{i'}{j}}{\partial \entry{\mW}{\ell}{j+c}} \paren{\entry{\mZ}{i'}{j}-\paren{\entry{\vu}{i'}{j+a}\cdot \entry{\vu}{i'}{j+b}}}^2
        }
        &=
        2\,\sum_{i'=0}^{\seqLen-1}
        \entry{\mW}{\ell}{j+c}\sum_{k'=0}^{i'}\entry{\mK}{k'}{j+c}^2\E\brac{\entry{\vu}{i'-k'}{j+c}^2\cdot \entry{\vu}{i'}{\ell}^2}
        \\
        &= 2\,\entry{\mW}{\ell}{j+c}\sum_{i'=0}^{\seqLen-1}\sum_{k'=0}^{i'}\entry{\mK}{k'}{j+c}^2\E\brac{\entry{\vu}{i'-k'}{j+c}^2\cdot \entry{\vu}{i'}{\ell}^2}.
    \end{align*}
    From \Cref{assume-D-assignment} we know the expected values of squared input terms will be non-zero. Then from \Cref{eq: k01-non-zero-MULTIPLY} and \Cref{assume: unique solution} we know that the $0$-th entry of each column of $\mK$ is non-zero. Therefore, this summation is always non-zero. Further we can say, when setting 
    \begin{align*}
        \E\brac{\frac{\partial \overline{L}}{\partial \entry{\mW}{\ell}{j+c}}} = 0,
    \end{align*}
    we can conclude that $\entry{\mW}{\ell}{j}=0$ for $\ell\neq j+b$. Explicitly,
    \begin{equation}
    \label{eq: Wjj-non-zero-MULTIPLY}
        \entry{\mW}{:}{j+c} \neq \mathbf{0} \Leftrightarrow \entry{\mW}{j+b}{j+c}\neq 0.
    \end{equation}
    Now let us consider the following for $\ell= j+b$, by \Cref{lem: exp-val-of-zij-times-partialdiv} and \Cref{lem: exp-val-of-MULTIPLY-times-partialdiv} (and recall that $\mB^{(2)}\brac{:,c:c+\modeldim_{out}-1} = \mathbf{0}$ and $a=c$)
    \begin{align*}
        2\,&\sum_{i'=0}^{\seqLen-1}\E\brac{
        \frac{\partial}{\partial \entry{\mW}{j+b}{j+c}} \paren{\entry{\mZ}{i'}{j}-\paren{\entry{\vu}{i}{j+a}\cdot \entry{\vu}{i'}{j+b}}}
        }
        \\
        =
        2\, &\paren{\sum_{i'=0}^{\seqLen-1}
        \entry{\mW}{j+b}{j+c}\sum_{k'=0}^{i'}\entry{\mK}{k'}{j+c}^2\E\brac{\entry{\vu}{i'-k'}{j+c}^2\cdot\entry{\vu}{i'}{j+b}^2} 
        - 
        \E\brac{\entry{\vu}{i'}{j+c}^2 \cdot \entry{\vu}{i'}{j+b}^2}\entry{\mK}{0}{j+c}}.
    \end{align*}
    From \Cref{assume: unique solution} and \Cref{eq: k01-non-zero-MULTIPLY} let us simplify the above to the following:
    \begin{align*}
        2\,&\paren{\sum_{i'=0}^{\seqLen-1}
        \entry{\mW}{j+b}{j+c}\entry{\mK}{0}{j+c}^2\E\brac{\entry{\vu}{i'}{j+c}^2\cdot\entry{\vu}{i'}{j+b}^2} - \E\brac{\entry{\vu}{i'}{j+c}^2 \cdot \entry{\vu}{i'}{j+b}^2}\entry{\mK}{0}{j+c}}
        \\
        &= 2\,\paren{\entry{\mW}{j+b}{j+c}\entry{\mK}{0}{j+c}^2 \sum_{i'=0}^{\seqLen-1}\E\brac{\entry{\vu}{i'}{j+c}^2\cdot\entry{\vu}{i'}{j+b}^2} 
        -
        \entry{\mK}{0}{j+c}\sum_{i'=0}^{\seqLen-1} \E\brac{\entry{\vu}{i'}{j+c}^2 \cdot \entry{\vu}{i'}{j+b}^2}}
        \\
        &= 2\,\paren{\entry{\mW}{j+b}{j+c}\entry{\mK}{0}{j+c}^2 - \entry{\mK}{0}{j+c}}\sum_{i'=0}^{\seqLen-1} \E\brac{\entry{\vu}{i'}{j+c}^2 \cdot \entry{\vu}{i'}{j+b}^2}.
    \end{align*}
    
    From \Cref{assume-D-assignment} we know the summation pieces are both always non-zero. Therefore when setting
    \begin{align*}
        \E\brac{\frac{\partial \overline{L}}{\partial \entry{\mW}{j+b}{j+c}}} = 0 
        &\implies 2\paren{\entry{\mW}{j+b}{j+c}\entry{\mK}{0}{j+c}^2 - \entry{\mK}{0}{j+c}} = 0
        \\
        &\implies \entry{\mW}{j+b}{j+c}\entry{\mK}{0}{j+c}^2 - \entry{\mK}{0}{j+c} = 0
        \\
        &\implies \,\entry{\mW}{j+b}{j+c}\entry{\mK}{0}{j+c}^2 = \entry{\mK}{0}{j+c} 
        \\
        &\implies \entry{\mW}{j+b}{j+c}\entry{\mK}{0}{j+c} = 1.
    \end{align*}
    In the above the last equality follows form the fact that $\entry{\mK}{0}{j+c}\neq 0$. Thus, the above gives us
    \begin{equation}
        \entry{\mK}{0}{j+c}=\frac{1}{\,\entry{\mW}{j+b}{j+c}}.
    \end{equation}
    Note that this is a valid assignment since \Cref{eq: Wjj-non-zero-MULTIPLY} and \Cref{assume: unique solution} implies $\entry{\mW}{j+b}{j+c}\neq0$.
    Therefore, given \Cref{assume: unique solution} and the above values; $\entry{\mW}{j+b}{j+c}\neq 0$ for all $j\in \brac{\modeldim_{out}}$ and $\entry{\mW}{i'}{j'}=0$ for all other $\paren{i',j'}$. Note that a multiplication on the right of $\vu$ with this lower left shift matrix will shift the input to the left by $b-c$. And we have
    $\entry{\mK}{:}{c:c+\modeldim_{out}-1}=\paren{\substack{\frac{1}{\entry{\mW}{j+b}{j+c}} \dots \frac{1}{\entry{\mW}{j+b+d_{out}-1}{j+c+d_{out}-1}}  \\ 
    \mathbf{0}^{\seqLen-1\times\modeldim_{out}}}}$ , $\entry{\mB}{:}{c:c+\modeldim_{out}-1}^{(2)} = \mathbf{0}^{\seqLen\times\modeldim_{out}}$ . Let us use these pieces to show that $\BC(\vu)[:,c:c+\modeldim_{out}-1] = \entry{\vu}{:}{a:a+\modeldim_{out}-1} \odot \entry{\vu}{:}{b:b+\modeldim_{out}-1}$ (without loss of generality, where $c=a$). 
    
    \,
    
    \noindent
    Recall that $\mB^{(1)}=\bf{0}$. Then indeed,\begin{align*}
        \BC(\vu)[:,c:c+\modeldim_{out}-1] 
        = 
        \entry{\paren{\vu\cdot\mW}}{:}{c:c+\modeldim_{out}-1} &\odot \paren{\entry{\mK}{:}{c:c+\modeldim_{out}-1}*\entry{\vu}{:}{c:c+\modeldim_{out}-1}}.
        \end{align*}
        Plugging in our values we get
        \begin{align*}
        &\paren{\vu\cdot\begin{pmatrix}
             0 &0 &0 &0 &0 &0 &0 &0 &0   \\
             0 &0 &0 &\vdots &\vdots &\vdots &0 &0 &0  \\
              &\vdots & &0 &0 &0 & &\vdots & \\
              & & &\mW_{\paren{b,c}} &0 &0 & & &  \\
              & & &0 &\ddots &0 & & & \\
              & & &0 &0 &\mW_{\paren{b+\modeldim_{out}-1,c+\modeldim_{out}-1}} & & &  \\
              &\vdots & &0 &0 &0 & &\vdots &  \\
             0 &0 &0 &\vdots &\vdots &\vdots  &0 &0 &0 \\
             0 &0 &0 &0 &0 &0 &0 &0 &0
        \end{pmatrix}}\brac{:,c:c+\modeldim_{out}-1} 
        \\
        &\odot\paren{ \paren{\begin{pmatrix}
        \frac{1}{\mW_{\paren{b,c}}}\dots\frac{1}{\mW_{\paren{b+\modeldim_{out}-1,c+\modeldim_{out}-1}}} \\
        \\
        \mathbf{0}^{\seqLen-1\times d_{out}}
    \end{pmatrix} \ast \vu }\brac{:,c:c+\modeldim_{out}-1} + \mathbf{0}^{\seqLen \times \modeldim_{out}}}.
    \end{align*}

    Let's define \[
    \mD \, \substack{\text{def}\\=} \, \begin{pmatrix}
              \mW_{\paren{b,c}} &0 &0   \\
              0 &\ddots &0  \\
              0 &0 &\mW_{\paren{b+\modeldim_{out}-1,c+\modeldim_{out}-1}}  
        \end{pmatrix}.
    \]
    Then we can say \[
        \vu
        \cdot \begin{pmatrix}
             0 &0 &0 &0 &0 &0 &0 &0 &0   \\
             0 &0 &0 &\vdots &\vdots &\vdots &0 &0 &0  \\
              &\vdots & &0 &0 &0 & &\vdots & \\
              & & &\mW_{\paren{b,c}} &0 &0 & & &  \\
              & & &0 &\ddots &0 & & & \\
              & & &0 &0 &\mW_{\paren{b+\modeldim_{out}-1,c+\modeldim_{out}-1}} & & &  \\
              &\vdots & &0 &0 &0 & &\vdots &  \\
             0 &0 &0 &\vdots &\vdots &\vdots  &0 &0 &0 \\
             0 &0 &0 &0 &0 &0 &0 &0 &0
        \end{pmatrix}[:,c:c+\modeldim_{out}-1] = \entry{\vu}{:}{b:b+\modeldim_{out}-1} \cdot \mD
    \]
    
    Also note that 
    \begin{align*}
        \entry{\mK}{:}{c:c+\modeldim_{out}-1}\ast\entry{\vu}{:}{c:c+\modeldim_{out}-1} &= \entry{\vu}{:}{c:c+\modeldim_{out}-1}\begin{pmatrix}
            \frac{1}{\mW_{\paren{b,c}}} &0 &0 \\
            0 &\ddots &0 \\
            0 &0 &\frac{1}{\mW_{\paren{b+\modeldim_{out}-1,c+\modeldim_{out}-1}}} 
        \end{pmatrix}
        \\
        \\
        &= \entry{\vu}{:}{c:c+\modeldim_{out}-1}\cdot \mD^{-1}
    \end{align*}
    which gives us \begin{align*}
    \paren{\vu\mW \odot \mK\ast\vu}[:,c:c+\modeldim_{out}-1] &= \paren{\entry{\vu}{:}{b:b+\modeldim_{out}-1}\cdot \mD} \odot \paren{\entry{\vu}{:}{c:c+\modeldim_{out}-1}\cdot\mD^{-1} }
    \\
    &= \entry{\vu}{:}{b:b+\modeldim_{out}-1} \odot \entry{\vu}{:}{c:c+\modeldim_{out}-1}
    \\
    &= \entry{\vu}{:}{b:b+\modeldim_{out}-1} \odot \entry{\vu}{:}{a:a+\modeldim_{out}-1} .
    \end{align*} Where in the last equality we used the fact that $a=c$. Therefore, we have shown that the gradients of the expected loss function is $0$, \[\BC(\vu)[:,c:c+\modeldim_{out}] = \entry{\vu}{:}{a:a+\modeldim_{out}-1} \odot \entry{\vu}{:}{b:b+\modeldim_{out}-1}\] as desired. For the case of $c=b$, the proof remains the same, just values for $a$ and $b$ are swapped where necessary.
\end{proof}

A corollary of the above is that \textsc{Multiply} implements the \textsc{Square} function
\begin{corollary}
\label{cor: exactly-square}
    Given Assumptions \ref{assume-D-assignment}, \ref{assume-training-data}, \ref{assume: unique solution}, and a function\[f(\vu)=\vu\odot\vu.\] Let $\boldsymbol{\theta}_0$ be such that, $\E\nabla_{{\boldsymbol{\theta}}}\overline{L}\vert_{\theta\gets\theta_0}=\mathbf{0}$ with $c=0$ and $\modeldim_{out}=\modeldim$. Then $\BC(\vu,\boldsymbol{\theta}_0,0,\modeldim)=f(\vu)$. 
\end{corollary}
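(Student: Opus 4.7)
The plan is to observe that $f(\vu) = \vu \odot \vu$ is precisely the \textsc{Multiply} primitive instantiated with $a = b = 0$ and $\modeldim_{out} = \modeldim$, and then invoke Theorem~\ref{thm: exactly-multiply} directly. Specifically, recall that
\[
\textsc{Multiply}(a, b, \modeldim_{out})(\vu) = \entry{\vu}{:}{a:a+\modeldim_{out}-1} \odot \entry{\vu}{:}{b:b+\modeldim_{out}-1},
\]
so with the choice $a = b = 0$ and $\modeldim_{out} = \modeldim$ we obtain $\entry{\vu}{:}{0:\modeldim-1} \odot \entry{\vu}{:}{0:\modeldim-1} = \vu \odot \vu = f(\vu)$.

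Next, I would verify that the hypotheses of Theorem~\ref{thm: exactly-multiply} are satisfied under the hypotheses of the corollary. Assumptions~\ref{assume-D-assignment}, \ref{assume-training-data}, and \ref{assume: unique solution} are explicitly carried over from the corollary statement. The condition $a \leq b$ is trivially met (with equality), and the theorem's convention $c = a$ becomes $c = 0$, which matches the corollary's choice of $c = 0$ and $\modeldim_{out} = \modeldim$.

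Then, applying Theorem~\ref{thm: exactly-multiply} to $\boldsymbol{\theta}_0$ satisfying $\E \nabla_{\boldsymbol{\theta}} \overline{L}\vert_{\boldsymbol{\theta} \gets \boldsymbol{\theta}_0} = \mathbf{0}$ yields
\[
\BC(\vu, \boldsymbol{\theta}_0)[:, 0:\modeldim-1] = \entry{\vu}{:}{0:\modeldim-1} \odot \entry{\vu}{:}{0:\modeldim-1} = \vu \odot \vu = f(\vu).
\]
Since $c = 0$ and $\modeldim_{out} = \modeldim$, the sliced output $\BC(\vu, \boldsymbol{\theta}_0)[:, 0:\modeldim-1]$ is the entire $\BC(\vu, \boldsymbol{\theta}_0, 0, \modeldim)$, completing the argument.

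There is no real obstacle here; the only point that requires a small sanity check is that the special case $a = b$ does not degrade the argument in Theorem~\ref{thm: exactly-multiply}. Inspecting the proof of Theorem~\ref{thm: exactly-multiply} (in particular the case analysis in Lemma~\ref{lem: exp-val-of-MULTIPLY-times-partialdiv}), the branches for $a = b$, $a = c$, and $b = c$ all coincide when $a = b = c = 0$, but each of them continues to yield a well-defined non-degenerate equation for $\entry{\mK}{0}{j+c}$ and $\entry{\mW}{j+b}{j+c}$, so the reconstruction of $\mK$, $\mW$, and $\mB^{(2)}$ carries through without modification. Thus the corollary follows immediately.
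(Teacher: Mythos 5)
Your proposal is correct and matches the paper's own proof, which likewise obtains the corollary by instantiating Theorem~\ref{thm: exactly-multiply} with $a=b=c=0$ and $\modeldim_{out}=\modeldim$. Your additional sanity check that the $a=b$ case does not degenerate the argument in Lemma~\ref{lem: exp-val-of-MULTIPLY-times-partialdiv} is a worthwhile detail the paper leaves implicit.
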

\begin{proof}
    The proof follows when we have values $c=0$ and $\modeldim_{out}=\modeldim$ for the \Cref{thm: exactly-multiply}.
\end{proof}

We now revisit the importance of \Cref{assume: unique solution}. Specifically, the following definition is a stronger version of the complement of \Cref{assume: unique solution}. The following essentially states that there are many ways to get the expected gradients of the loss function to be 0, though this doesn't imply that we have learned the exact solution, as we recover in \Cref{cor: exactly-square}.

\begin{definition}
\label{def: not-assumption}
    Define $\paren{\text{assumption}}^\complement$ to be \begin{itemize}
        \item\label{item1} $\mB^{(2)}=\mathbf{0}$ 
        \item\label{item2} For all $j\in\brac{\modeldim}$, either
        \begin{enumerate}[label=(\roman*)]
            \item $\entry{\mW}{:}{j}=\mathbf{0}$ and $\entry{\mK}{0}{j}=0$
            \item $\entry{\mK}{:}{j}=\mathbf{0}$ and $\entry{\mW}{j}{j}=0$
        \end{enumerate}
    \end{itemize}
\end{definition}

The following theorem is to emphasize, there are many ways to get expected value of the gradients of the loss function to be 0. 
\begin{theorem}
    Let $\theta^{\ast}$ satisfy \Cref{def: not-assumption}. Then,   $\E\nabla_{\theta}\overline{L}\big\vert_{\theta\gets\theta^{\ast}}=\mathbf{0}$ when $f(\vu)=\vu\odot\vu$ (where $c=0$ and $d_{out}=\modeldim$).
\end{theorem}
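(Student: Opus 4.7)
The plan is to verify that $\E\nabla_{\theta}\overline{L}\vert_{\theta=\theta^{\ast}}=\mathbf{0}$ by analyzing the two sub-cases in \Cref{def: not-assumption} column by column. First I would observe that, in either sub-case, one of the two factors in
\[
\entry{\mZ}{i}{j}=\ang{\entry{\vu}{i}{:}^{\top},\entry{\mW}{:}{j}}\cdot\paren{(\entry{\mK}{:}{j}\ast\entry{\vu}{:}{j})[i]+\entry{\mB}{i}{j}^{(2)}}
\]
vanishes identically: sub-case (i) kills the first factor via $\entry{\mW}{:}{j}=\mathbf{0}$, while sub-case (ii), together with $\mB^{(2)}=\mathbf{0}$, kills the second. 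Hence $\entry{\mZ}{i}{j}\equiv 0$ and each loss entry reduces to $\overline{\entry{L}{i}{j}}=(\entry{f(\vu)}{i}{j}+\entry{\curlE}{i}{j})^{2}$, so $\partial \overline{\entry{L}{i}{j}}/\partial x=-2(\entry{f(\vu)}{i}{j}+\entry{\curlE}{i}{j})\cdot \partial \entry{\mZ}{i}{j}/\partial x$. Invoking \Cref{prop: T3=0} to kill the noise term in expectation and \Cref{lem: zero-derivatives-when-no-matching-cols} to kill cross-column contributions, it suffices to show that for every column $j$ and every parameter $x$ in that column, $\sum_{i}\E\brac{\entry{\vu}{i}{j}^{2}\cdot \partial \entry{\mZ}{i}{j}/\partial x}=0$, using $\entry{f(\vu)}{i}{j}=\entry{\vu}{i}{j}^{2}$.

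Next I would instantiate \Cref{lem: partial-div-Zij} in each sub-case and close the argument via the odd-moment vanishing of \Cref{assume-D-assignment}. In sub-case (i), the derivatives $\partial \entry{\mZ}{i}{j}/\partial \entry{\mK}{k}{j}$ and $\partial \entry{\mZ}{i}{j}/\partial \entry{\mB}{i}{j}^{(2)}$ carry the vanishing factor $\ang{\entry{\vu}{i}{:}^{\top},\entry{\mW}{:}{j}}$ and are trivially zero, while the remaining one gives
\[
\sum_{k=1}^{i}\entry{\mK}{k}{j}\,\E\brac{\entry{\vu}{i}{j}^{2}\,\entry{\vu}{i}{\ell}\,\entry{\vu}{i-k}{j}},
\]
with the $k=0$ term absent because $\entry{\mK}{0}{j}=0$; for each $k\ge 1$ the factor $\entry{\vu}{i-k}{j}$ is a coordinate distinct from $\entry{\vu}{i}{j}$ and $\entry{\vu}{i}{\ell}$ and carries exponent $1$, so the monomial has an odd-degree variable. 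In sub-case (ii), $\partial \entry{\mZ}{i}{j}/\partial \entry{\mW}{\ell}{j}$ vanishes identically, and the other two derivatives yield expressions of the form
\[
\sum_{\ell'\neq j}\entry{\mW}{\ell'}{j}\,\E\brac{\entry{\vu}{i}{j}^{2}\,\entry{\vu}{i}{\ell'}\,\entry{\vu}{i-k}{j}},
\]
where the index $\ell'=j$ is removed because $\entry{\mW}{j}{j}=0$. Every surviving monomial then contains $\entry{\vu}{i}{\ell'}$ with $\ell'\neq j$ at exponent $1$, so \Cref{assume-D-assignment} again forces the expectation to zero.

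The argument is almost pure bookkeeping, but the subtle step is the $\partial/\partial\entry{\mK}{0}{j}$ derivative in sub-case (ii): the monomial collapses to $\entry{\vu}{i}{j}^{3}\,\entry{\vu}{i}{\ell'}$, and the parity argument succeeds only because $\entry{\mW}{j}{j}=0$ rules out $\ell'=j$, leaving $\entry{\vu}{i}{\ell'}$ with its odd exponent to invoke \Cref{assume-D-assignment}. Summing the vanishing per-column contributions over $j\in\brac{\modeldim}$ yields $\E\nabla_{\theta}\overline{L}=\mathbf{0}$, completing the proof.
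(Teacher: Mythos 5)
Your proof is correct. It differs from the paper's in one organizational respect worth noting: you first observe that at $\theta^{\ast}$ one of the two factors of $\entry{\mZ}{i}{j}$ vanishes identically (the gate factor in sub-case (i), the convolution-plus-bias factor in sub-case (ii) since $\mB^{(2)}=\mathbf{0}$), so $\entry{\mZ}{i}{j}\equiv 0$ and the term $\entry{\mT}{i}{j}^{(1)}=\entry{\mZ}{i}{j}\,\partial\entry{\mZ}{i}{j}/\partial x$ drops from the gradient outright. The paper instead evaluates $\E\brac{\entry{\mZ}{i}{j}\,\partial\entry{\mZ}{i}{j}/\partial x}$ via \Cref{lem: exp-val-of-zij-times-partialdiv} and only afterwards checks that each resulting product of parameters (e.g. $\entry{\mK}{k}{j}\sum_{\ell'}\entry{\mW}{\ell'}{j}^2\cdots$) is annihilated by \Cref{def: not-assumption}. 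Your shortcut buys a cleaner case analysis and avoids that lemma entirely; what remains in both arguments is the same reduction to $\E\brac{\entry{\vu}{i}{j}^2\,\partial\entry{\mZ}{i}{j}/\partial x}=0$ via the odd-moment condition of \Cref{assume-D-assignment}, and you correctly isolate the two places where the extra constraints $\entry{\mK}{0}{j}=0$ (sub-case (i), the $k=0$ convolution term in the $\mW_{\ell,j}$-derivative) and $\entry{\mW}{j}{j}=0$ (sub-case (ii), the $\ell'=j$ term in the $\mK_{0,j}$-derivative) are genuinely needed to rule out the even monomials $\entry{\vu}{i}{j}^4$ that would otherwise survive.
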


\begin{proof}
This proof considers values of  $j \in \brac{\modeldim}$. Via \Cref{lem: exp-val-of-zij-times-partialdiv} and \Cref{lem: exp-val-of-MULTIPLY-times-partialdiv} when $k>0$ we have 
\begin{align*}
    \E\brac{\frac{\partial \overline{L}}{\partial \entry{\mK}{k}{j}}}
    = \entry{\mK}{k}{j} \sum_{i'=0}^{\seqLen-1}\sum_{\ell'=0}^{\modeldim -1}\entry{\mW}{\ell'}{j}^2\E\brac{ \entry{\vu}{i'}{\ell'}^2 \cdot \entry{\vu}{i'-k}{j}^2}.
\end{align*}
This expected value goes to zero since every column $j$ either $\entry{\mK}{:}{j}$ or $\entry{\mW}{:}{j}$ is $\mathbf{0}$.

\noindent
When $k=0$ we have \begin{align*}
    \E\brac{\frac{\partial \overline{L}}{\partial \entry{\mK}{0}{j}}} 
    = \entry{\mW}{j}{j}\paren{\entry{\mK}{0}{j}\entry{\mW}{j}{j}-1}\paren{\sum_{i'=0}^{\seqLen-1}\E\brac{\entry{\vu}{i'}{j}^4}}.
\end{align*}
This expected value goes to zero since in (i) and (ii) from \Cref{def: not-assumption},  $\entry{\mK}{0}{j}=\entry{\mW}{j}{j}=0$.

Next we have for all $\ell,j$ where $\ell\neq j$,\begin{align*}
    \E\brac{\frac{\partial \overline{L}}{\partial \entry{\mW}{\ell}{j}}}
    =
    \entry{\mW}{\ell}{j}\sum_{i'=0}^{\seqLen-1}\E\brac{\entry{\vu}{i'}{\ell}^2}\paren{\entry{\mK}{0}{j}^2\E\brac{\entry{\vu}{i'}{j}^2}+\entry{\mB}{i'}{j}^{(2)}}
\end{align*}
This expected value goes to zero since column $j$ either $\entry{\mK}{:}{j}$ or $\entry{\mW}{:}{j}$ is $\mathbf{0}$.

\noindent
Then when $\ell=j$ we have
\begin{align*}
    \E\brac{\frac{\partial \overline{L}}{\partial \entry{\mW}{j}{j}}}
    = 
    \sum_{i'=0}^{\seqLen-1} \entry{\mW}{j}{j} \entry{\mK}{0}{j}^2\E\brac{\entry{\vu}{i'}{j}^4} - \entry{\mK}{0}{j}\E\brac{\entry{\vu}{i'}{j}^4}
\end{align*}
This expected value goes to zero since in (i) and (ii) from \Cref{def: not-assumption}, $\entry{\mK}{0}{j}=\entry{\mW}{j}{j}=0$

\noindent
And we know that since $\mB^{(2)}$ is all zeros, we don't need to consider the gradient of the loss function to it.
\end{proof}
What the above proves is that there are infinite instantiations of parameters such that the expected gradient loss is 0. However note that in \Cref{def: not-assumption}, for all $j$ either $\entry{\mK}{k}{j}$ for $k\neq 0$ or $\entry{\mW}{\ell}{j}$ for $\ell\neq j$ are unconstrained. In other words, we can set these values arbitrarily, which means we get $\overline{L}\to \inf$ but we still have the expected gradient loss to be $\mathbf{0}$. This shows that some form of \Cref{assume: unique solution} is necessary to prove \Cref{cor: exactly-square}.

\end{document}